\definecolor{darkblue}{rgb}{0.0, 0.0, 0.55}
\newtheorem{theorem}{Theorem}[section]
\newtheorem{lemma}[theorem]{Lemma}
\newtheorem{remark}[theorem]{Remark}
\newtheorem{assumption}{Assumption}
\providecommand{\lin}[1]{\ensuremath{\left\langle #1 \right\rangle}}
\providecommand{\abs}[1]{\left\lvert#1\right\rvert}
\providecommand{\norm}[1]{\left\lVert#1\right\rVert}
\providecommand{\R}{\mathbb{R}} %
\providecommand{\E}{{\mathbb E}}
\providecommand{\E}[1]{{\mathbb E}\left.#1\right. }        %
\providecommand{\Eb}[1]{{\mathbb E}\left[#1\right] }       %
\providecommand{\EEb}[2]{{\mathbb E}_{#1}\left[#2\right] } %
\providecommand{\0}{\mathbf{0}}
\providecommand{\1}{\mathbf{1}}
\renewcommand{\aa}{\mathbf{a}}
\providecommand{\bb}{\mathbf{b}}
\providecommand{\dd}{\mathbf{d}}
\providecommand{\ee}{\mathbf{e}}
\let\ggg\gg
\renewcommand{\gg}{\mathbf{g}}
\providecommand{\mm}{\mathbf{m}}
\providecommand{\pp}{\mathbf{p}}
\providecommand{\qq}{\mathbf{q}}
\renewcommand{\ss}{\mathbf{s}}
\providecommand{\vv}{\mathbf{v}}
\providecommand{\xx}{\mathbf{x}}
\providecommand{\yy}{\mathbf{y}}
\providecommand{\zz}{\mathbf{z}}
\providecommand{\mG}{\mathbf{G}}
\providecommand{\mI}{\mathbf{I}}
\providecommand{\mM}{\mathbf{M}}
\providecommand{\mW}{\mathbf{W}}
\providecommand{\mX}{\mathbf{X}}
\providecommand{\mZ}{\mathbf{Z}}
\providecommand{\cD}{\mathcal{D}}
\providecommand{\cN}{\mathcal{N}}
\providecommand{\cO}{\mathcal{O}}
\providecommand{\cS}{\mathcal{S}}
\providecommand{\cT}{\mathcal{T}}
\providecommand{\cU}{\mathcal{U}}
\providecommand{\cV}{\mathcal{V}}
\newenvironment{talign}
{\align}
{\endalign}
\newenvironment{talign*}
{\csname align*\endcsname}
{\endalign}
\newcommand{\mycaptionof}[2]{\captionof{#1}{#2}}
\newcommand*{\algrule}[1][\algorithmicindent]{\makebox[#1][l]{\hspace*{.5em}\thealgruleextra\vrule height \thealgruleheight depth \thealgruledepth}}%
\newcommand*{\thealgruleextra}{}
\newcommand*{\thealgruleheight}{.75\baselineskip}
\newcommand*{\thealgruledepth}{.25\baselineskip}
\def\ALG@printindent{%
	\ifnum \theALG@nested>0%
	\ifx\ALG@text\ALG@x@notext%
	\else
		\unskip
		\addvspace{-1pt}%
		\ALG@printindent@tempcnta=1
		\loop
		\algrule[\csname ALG@ind@\the\ALG@printindent@tempcnta\endcsname]%
		\advance \ALG@printindent@tempcnta 1
		\ifnum \ALG@printindent@tempcnta<\numexpr\theALG@nested+1\relax%
			\repeat
		\fi
	\fi
}%
\patchcmd{\ALG@doentity}{\noindent\hskip\ALG@tlm}{\ALG@printindent}{}{\errmessage{failed to patch}}
\newbox\statebox
\newcommand{\myState}[1]{%
	\setbox\statebox=\vbox{#1}%
	\edef\thealgruleheight{\dimexpr \the\ht\statebox+1pt\relax}%
	\edef\thealgruledepth{\dimexpr \the\dp\statebox+1pt\relax}%
	\ifdim\thealgruleheight<.75\baselineskip
		\def\thealgruleheight{\dimexpr .75\baselineskip+1pt\relax}%
	\fi
	\ifdim\thealgruledepth<.25\baselineskip
		\def\thealgruledepth{\dimexpr .25\baselineskip+1pt\relax}%
	\fi
	\State #1%
	\def\thealgruleheight{\dimexpr .75\baselineskip+1pt\relax}%
	\def\thealgruledepth{\dimexpr .25\baselineskip+1pt\relax}%
}
\newcommand{\iid}{i.i.d.\ }
\newcommand{\qg}{quasi-global momentum\xspace} 
\newcommand{\algoptsgdm}{QG-DSGDm\xspace} 
\newcommand{\algoptsgdmn}{QG-DSGDm-N\xspace} 
\newcommand{\algopadam}{QG-DAdam\xspace} 
\newcommand{\salgoptsgdm}{QG-SGDm\xspace} 
\newcommand{\salgoptsgdmn}{QG-SGDm-N\xspace}
\icmltitlerunning{Quasi-Global Momentum}
\begin{document}

\twocolumn[
	\icmltitle{Quasi-Global Momentum: \\ Accelerating Decentralized Deep Learning on Heterogeneous Data}

	\begin{icmlauthorlist}
		\icmlauthor{Tao Lin}{to}
		\icmlauthor{Sai Praneeth Karimireddy}{to}
		\icmlauthor{Sebastian U. Stich}{to}
		\icmlauthor{Martin Jaggi}{to}
	\end{icmlauthorlist}

	\icmlaffiliation{to}{EPFL, Lausanne, Switzerland}

	\icmlcorrespondingauthor{Tao Lin}{tao.lin@epfl.ch}

	\icmlkeywords{Machine Learning, ICML}

	\vskip 0.3in
]

\printAffiliationsAndNotice{} %

\begin{abstract}
	Decentralized training of deep learning models
	is a key element for enabling data privacy and on-device learning over networks.
	In realistic learning scenarios,
	the presence of heterogeneity across different clients' local datasets
	poses an optimization challenge and may severely deteriorate the generalization performance.\\
	In this paper, we investigate and identify the limitation of several decentralized optimization algorithms
	for different degrees of data heterogeneity.
	We propose a novel momentum-based method
	to mitigate this decentralized training difficulty.
	We show in extensive empirical experiments
	on various CV/NLP datasets (CIFAR-10, ImageNet, and AG News)
	and several network topologies (Ring and Social Network) that
	our method is much more robust to the heterogeneity of clients' data than other existing methods,
	by a significant improvement in test performance ($1\% \!-\! 20\%$).
	Our code is publicly available.\footnote{Code: \scalebox{0.95}{\mbox{\url{github.com/epfml/quasi-global-momentum}}}}
\end{abstract}

\section{Introduction}
\begin{figure*}[!t]
	\centering
	\vspace{-1em}
	\subfigure[CIFAR-10, $n\!=\!16$, $\alpha=10$.]{
		\includegraphics[width=.315\textwidth,]{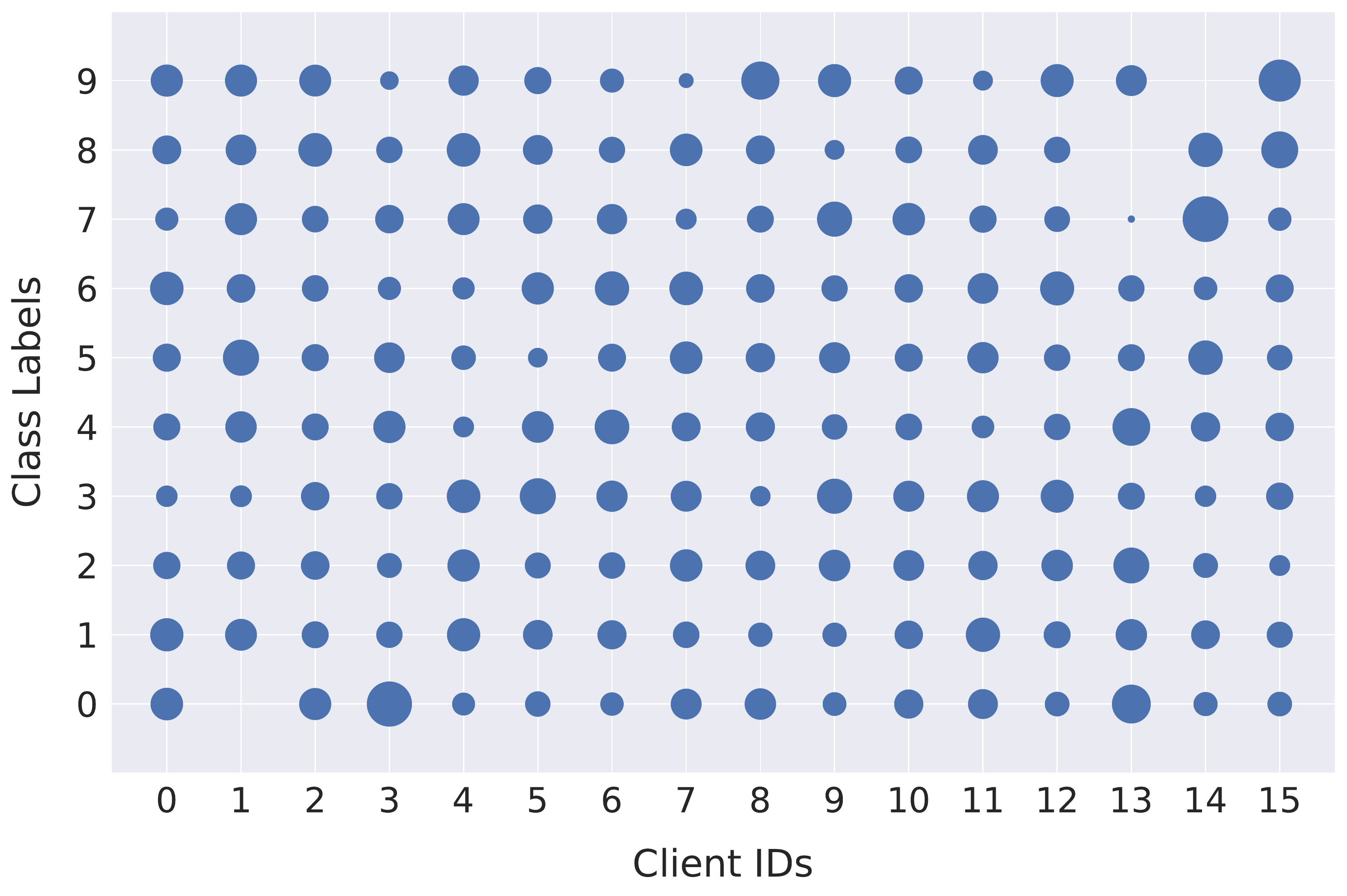}
		\label{fig:cifar10_non_iid_dirichlet_n16}
	}
	\subfigure[CIFAR-10, $n\!=\!16$, $\alpha=1$.]{
		\includegraphics[width=.315\textwidth,]{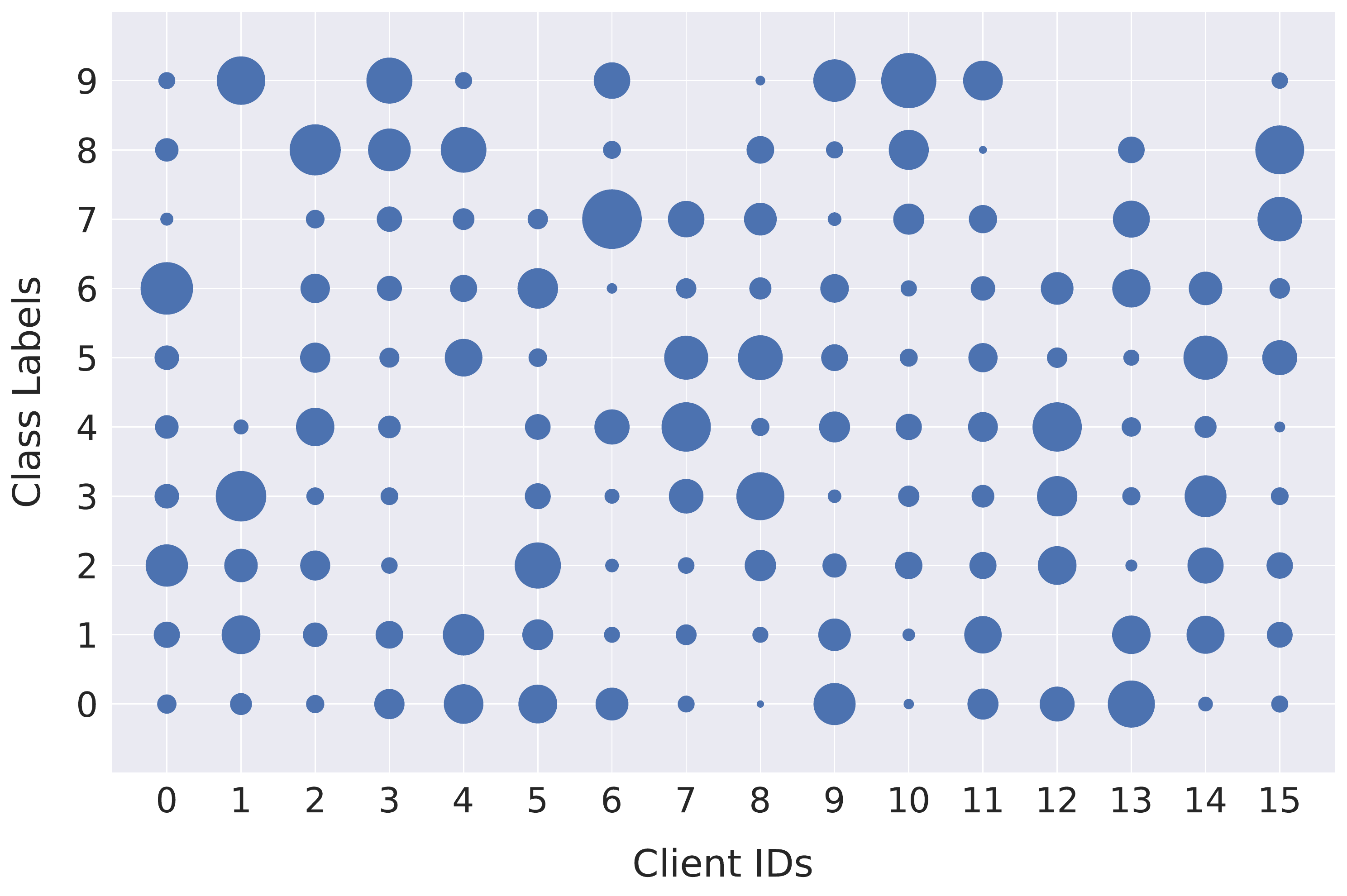}
		\label{fig:cifar10_non_iid_dirichlet_n16}
	}
	\subfigure[CIFAR-10, $n\!=\!16$, $\alpha=0.1$.]{
		\includegraphics[width=.315\textwidth,]{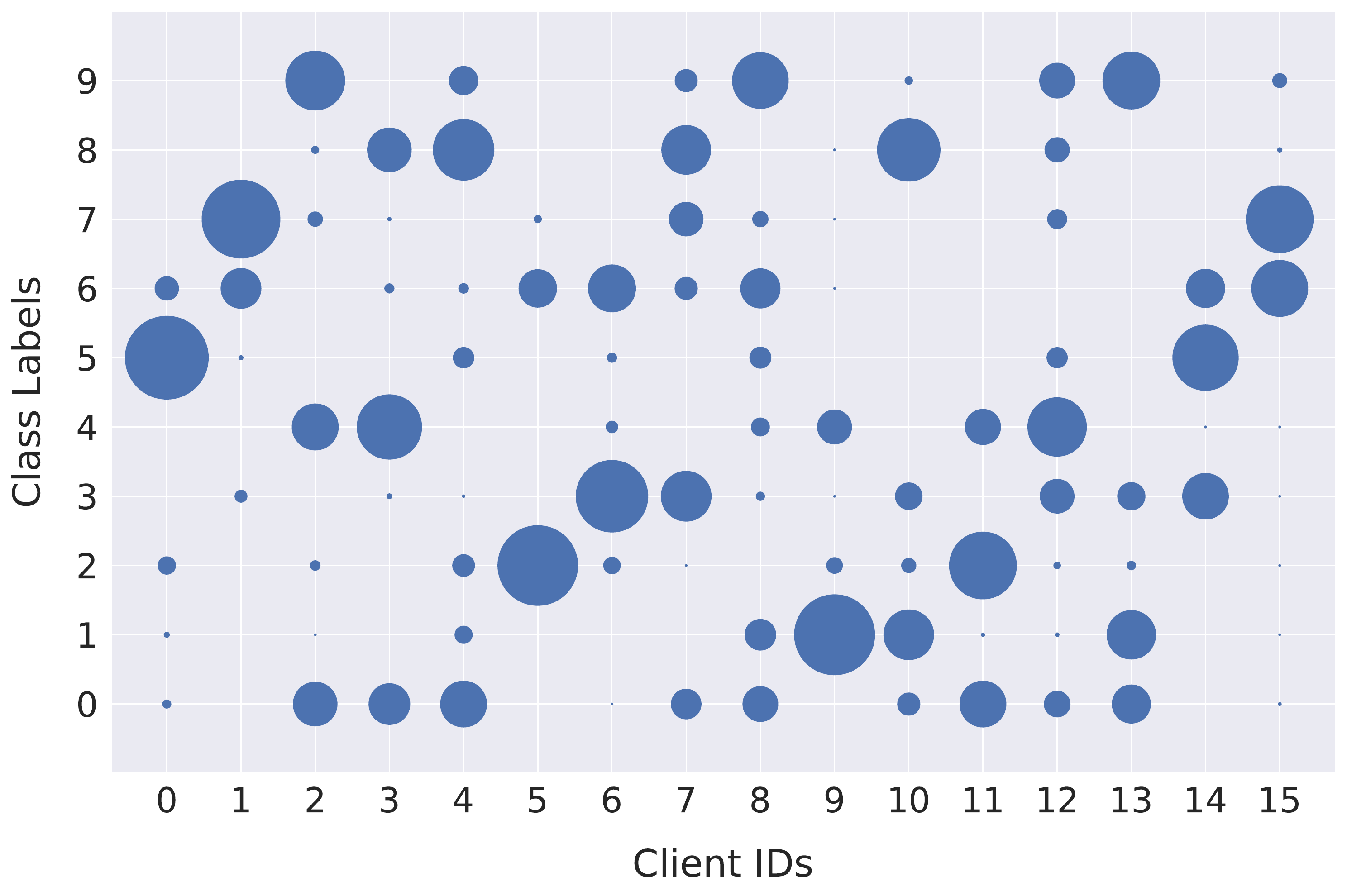}
		\label{fig:cifar10_non_iid_dirichlet_n16}
	}
	\vspace{-0.5em}
	\resizebox{.9\textwidth}{!}{%
		\begin{tabular}{ccccccc}
			\toprule
			\multirow{2}{*}{Methods} & \multicolumn{3}{c}{Ring ($n\!=\!16$)} & \multicolumn{3}{c}{Social Network ($n\!=\!32$)} \\ \cmidrule(lr){2-4} \cmidrule(lr){5-7}
			                    & $\alpha=10$               & $\alpha=1$                & $\alpha=0.1$              & $\alpha=10$               & $\alpha=1$                & $\alpha=0.1$              \\ \midrule
			DSGD                & $89.90 \pm 0.26$          & $88.88 \pm 0.26$          & $74.55 \pm 2.07$          & $89.95 \pm 0.23$          & $88.41 \pm 0.27$          & $77.56 \pm 1.65$          \\
			DSGDm-N             & $91.47 \pm 0.23$          & $89.98 \pm 0.10$          & $77.48 \pm 2.67$          & $91.17 \pm 0.11$          & $89.96 \pm 0.35$          & $80.59 \pm 2.32$          \\
			\algoptsgdmn (ours) & $\textbf{91.90} \pm 0.17$ & $\textbf{91.28} \pm 0.38$ & $\textbf{82.20} \pm 1.27$ & $\textbf{91.51} \pm 0.02$ & $\textbf{91.00} \pm 0.24$ & $\textbf{85.19} \pm 0.98$ \\
			\bottomrule
		\end{tabular}%
	}
	\caption{\small
		\textbf{Illustrating the challenge of heterogeneous data in decentralized deep learning},
		for training ResNet-EvoNorm-20 on CIFAR-10.
		The Dirichlet distribution $\alpha$ values control different non-\iid degrees~\citep{yurochkin2019bayesian,hsu2019measuring,he2020fedml};
		the smaller $\alpha$ is, the more likely the clients hold examples from only one class.
		\textbf{The inline figures illustrate the \# of samples per class allocated to each client (indicated by dot sizes)},
		for the case of $\alpha \!=\! 10, 1, 0.1$.
		The test top-1 accuracy results in the table are averaged over three random seeds,
		with learning rate tuning for each setting.
		The performance upper bound
		(i.e.\ centralized training without local data re-shuffling)
		for $n\!=\!16$ and $32$ nodes are $92.95 \pm 0.13$ and $92.88 \pm 0.07$ respectively.
		Following prior work, the evaluated DSGD methods maintain local momentum buffer (without synchronization) for each worker;
		other experimental setup refers to Section~\ref{sec:exp_setup}. \looseness=-1
	}
	\label{fig:resnet20_cifar10_motivation_example}
	\vspace{-1.em}
\end{figure*}

Decentralized machine learning methods---%
allowing %
communications in a peer-to-peer fashion on an underlying communication network topology (without a central coordinator)---%
have emerged as an important paradigm
in large-scale machine learning~\citep{lian2017can,%
	lian2018asynchronous,koloskova2019choco,koloskova2020unified}.
Decentralized Stochastic Gradient Descent (DSGD) methods offer
(1) scalability to large datasets and systems in large data-centers~\citep{lian2017can,assran2019stochastic,koloskova2020decentralized},
as well as (2) privacy-preserving learning for the emerging EdgeAI applications~\citep{kairouz2019advances,koloskova2020decentralized},
where the training data remains distributed over a large number of clients
(e.g.\ mobile phones, sensors, or hospitals)
and is kept locally (never transmitted during training).

A key challenge---in particular in the second scenario---is the large heterogeneity (non-i.i.d.-ness)
in the data present on the different clients~\citep{zhao2018federated,kairouz2019advances,hsieh2020non}.
Heterogeneous data (e.g.\ as illustrated in Figure~\ref{fig:resnet20_cifar10_motivation_example})
causes very diverse optimization objectives on each client,
which results in slow and unstable global convergence, as well as poor generalization performance
(shown in the inline table of Figure~\ref{fig:resnet20_cifar10_motivation_example}).
Addressing these optimization difficulties %
is essential to realize reliable decentralized deep learning applications. %
Although such challenges have been theoretically pointed out in~\citep{shi2015extra,%
	lee2015distributed,tang2018d,koloskova2020unified},
the empirical performance of different DSGD methods remains
poorly understood.
To the best of our knowledge,
there currently exists no efficient, effective, and robust optimization algorithm yet for decentralized deep learning on heterogeneous data.\looseness=-1

In the meantime, SGD with momentum acceleration (SGDm)
remains the current workhorse for the state-of-the-art (SOTA)
centralized deep learning training~\citep{he2016deep,goyal2017accurate,he2019bag}.
For decentralized deep learning,
the currently used training recipes (i.e.\ DSGDm)
maintain a local momentum buffer on each worker~\citep{assran2019stochastic,koloskova2020decentralized,nadiradze2020swarmsgd,singh2020squarm,kong2021consensus}
while only communicating the model parameters to the neighbors.
However, these attempts in prior work mainly consider homogeneous decentralized data---%
and there is no evidence that local momentum enhances generalization performance of decentralized deep learning on heterogeneous data.

As our first contribution, we investigate how DSGD and DSGDm are impacted by the degree of data heterogeneity  and the choice of the network topology.
We find that heterogeneous data hinders the local momentum acceleration in DSGDm.
We further show that using a high-quality shared momentum buffer (e.g.\ synchronizing the momentum buffer globally) improves the optimization and generalization performance of DSGDm.
However, such a global communication significantly increases
the communication cost and violates the decentralized learning setup.

We instead propose Quasi-Global (QG) momentum,
a simple, yet effective, method that
mitigates the difficulties for decentralized learning on heterogeneous data.
Our approach is based on locally approximating the global optimization direction without introducing extra communication overhead.
We demonstrate in extensive empirical results that
QG momentum
can stabilize the optimization trajectory, and that
it can accelerate decentralized learning achieving much better generalization performance under high data heterogeneity than previous methods.
\begin{itemize}[nosep,leftmargin=12pt]
	\item We systematically examine the behavior of decentralized optimization algorithms
	      on standard deep learning benchmarks for various degrees of data heterogeneity.
	\item We propose a novel momentum-based decentralized optimization method---%
	      \algoptsgdm and \algoptsgdmn---to stabilize the local optimization.
	      We validate the effectiveness of our method on a spectrum of non-\iid degrees and network topologies---%
	      it is much more robust to the data heterogeneity than all other existing methods.
	\item We rigorously prove the convergence of our scheme.
	\item We additionally investigate different normalization methods
	      alternative to Batch Normalization
	      (BN)~\citep{ioffe2015batch} in CNNs,
	      due to its particular vulnerable to non-\iid local data and the caused severe quality loss.
\end{itemize}

\section{Related Work}
\textbf{Decentralized Deep Learning.}
The study of decentralized optimization algorithms dates back to~\citet{tsitsiklis1984problems},
relating to use gossip algorithms~\citep{kempe2003gossip,xiao2004fast,boyd2006randomized}
to compute aggregates (find consensus) among clients.
In the context of machine learning/deep learning,
combining SGD with gossip averaging~\citep{lian2017can,%
	lian2018asynchronous,assran2019stochastic,koloskova2020unified}
has gained a lot of attention recently
for the benefits of \emph{computational scalability}, \emph{communication efficiency}, \emph{data locality},
as well as the favorable leading term in the convergence rate
$\smash{\cO\big(\frac{1}{n\varepsilon^2}\big)}$~\citep{lian2017can,%
	scaman2017optimal,scaman2018optimal,%
	tang2018d,koloskova2019choco,koloskova2020decentralized,koloskova2020unified}
which is the same as in centralized mini-batch SGD~\citep{dekel2012optimal}.
A weak version of decentralized learning
also covers the recent emerging federated learning (FL) setting~\citep{konevcny2016federated,%
	mcmahan2017communication,kairouz2019advances,karimireddy2019scaffold,lin2020ensemble}
by using (centralized) star-shaped network topology and local updates.
Note that specializing our results to the FL setting is beyond the scope of our work. %
It is also non-trivial to adapt certain very recent techniques developed in FL for heterogeneous data~\citep{karimireddy2019scaffold,karimireddy2020mime,lin2020ensemble,wang2020tackling,das2020faster,haddadpour2021federated} to the gossip-based decentralized deep learning.\looseness=-1

A line of recent works on decentralized stochastic optimization, like D$^2$/Exact-diffusion~\cite{tang2018d,yuan2020influence,yuan2021removing}, and gradient tracking~\cite{pu2020distributed,pan2020d,lu2019gnsd}, proposes different techniques to theoretically eliminate the influence of data heterogeneity between nodes.
However, it remains unclear if these theoretically sound methods still endow with superior convergence and generalization properties in deep learning.

Other works focus on improving communication efficiency,
from the aspect of communication compression~\citep{tang2018communication,%
	koloskova2019choco,koloskova2020decentralized,lu2020moniqua,taheri2020quantized,%
	singh2020squarm,vogels2020powergossip,taheri2020quantized,nadiradze2020swarmsgd},
less frequent communication through multiple local updates~\citep{hendrikx2019accelerated,%
	koloskova2020unified,nadiradze2020swarmsgd},
or better communication topology design~\citep{nedic2018network,assran2019stochastic,%
	wang2019matcha,wang2020exploring,neglia2020decentralized,%
	nadiradze2020swarmsgd,kong2021consensus}.

\textbf{Mini-batch SGD with Momentum Acceleration.}
Momentum is a critical component for training the SOTA deep neural networks~\citep{sutskever2013importance,lucas2018aggregated}.
Despite various empirical successes, the current theoretical understanding
of momentum-based SGD methods remains limited~\citep{bottou2016optimization}.
A line of work on the serial (centralized) setting has aimed to develop a convergence analysis for different momentum methods as a special case~\citep{yan2018unified,gitman2019understanding}.
However, SGD is known to be optimal in the worst case for stochastic non-convex optimization~\citep{arjevani2019lower}.
\\
In distributed deep learning,
most prior works focus on homogeneous data (especially for numerical evaluations)
and incorporate momentum with a locally maintained buffer
(which has no synchronization)~\citep{lian2017can,assran2019stochastic,%
	lin2020dont,lin2020extrapolation,koloskova2020decentralized,singh2020squarm}.
\citet{yu2019linear} propose synchronizing the local momentum buffer periodically for better performance
at the cost of doubling the communication.
SlowMo~\citep{wang2020slowmo} instead proposes to
periodically perform a slow momentum update on the globally synchronized model parameters
(with additional All-Reduce communication cost),
for centralized or decentralized methods.
Parallel work~\citep{balu2020decentralized}
introduces DMSGD for decentralized learning\footnote{
	We detail the DMSGD algorithm and clarify the difference
	in Appendix~\ref{appendix:connection_between_our_and_DMSGD};
	we empirically compare with DMSGD in Table~\ref{tab:ablation_study_compare_with_other_momentum_SGDs}.
}---%
it constructs the acceleration momentum
from the mixture of the local momentum and consensus momentum.
Our proposed method has no extra communication overhead
and significantly outperforms all existing methods (in Section~\ref{sec:main_results}).

\begin{figure*}[!t]
	\centering
	\subfigure[\small
		w/o local momentum.
	]{
		\includegraphics[width=.275\textwidth,]{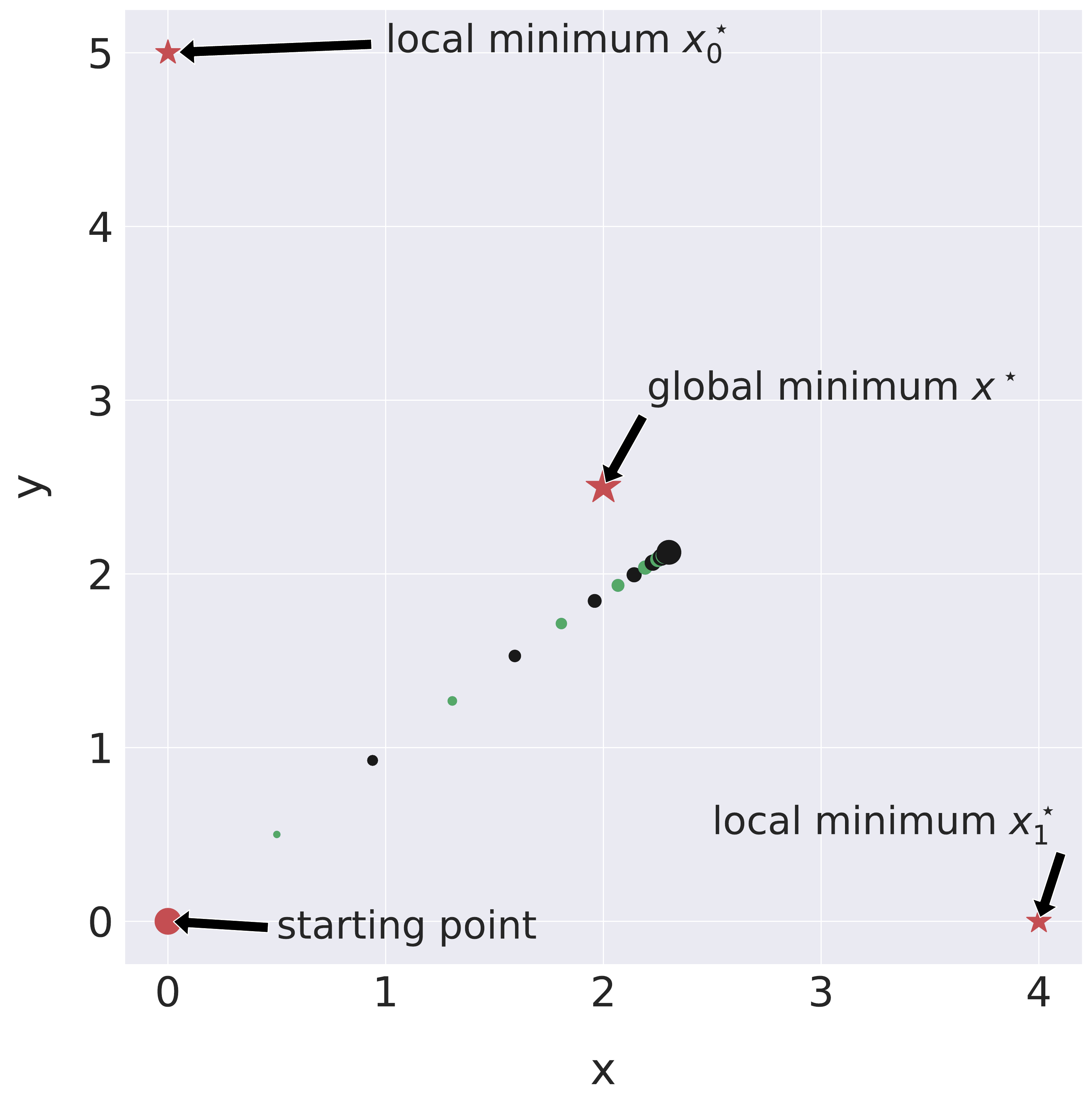}
		\label{fig:2d_illustration_procedure_wo_hb_momentum}
	}
	\subfigure[\small
		w/ local HeavyBall momentum.
	]{
		\includegraphics[width=.275\textwidth,]{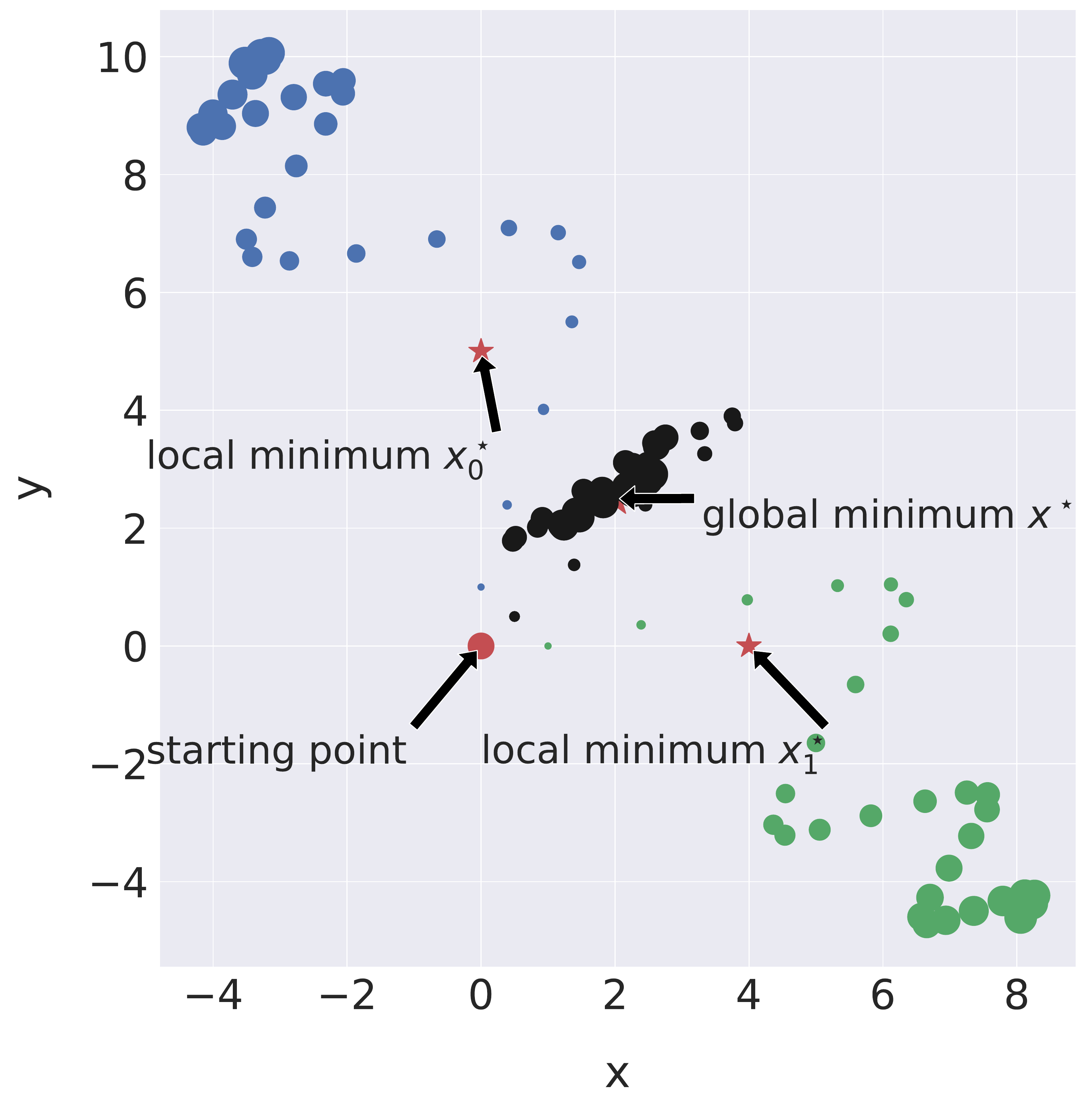}
		\label{fig:2d_illustration_procedure_w_hb_momentum}
	}
	\subfigure[\small
		Our scheme.
	]{
		\includegraphics[width=.275\textwidth,]{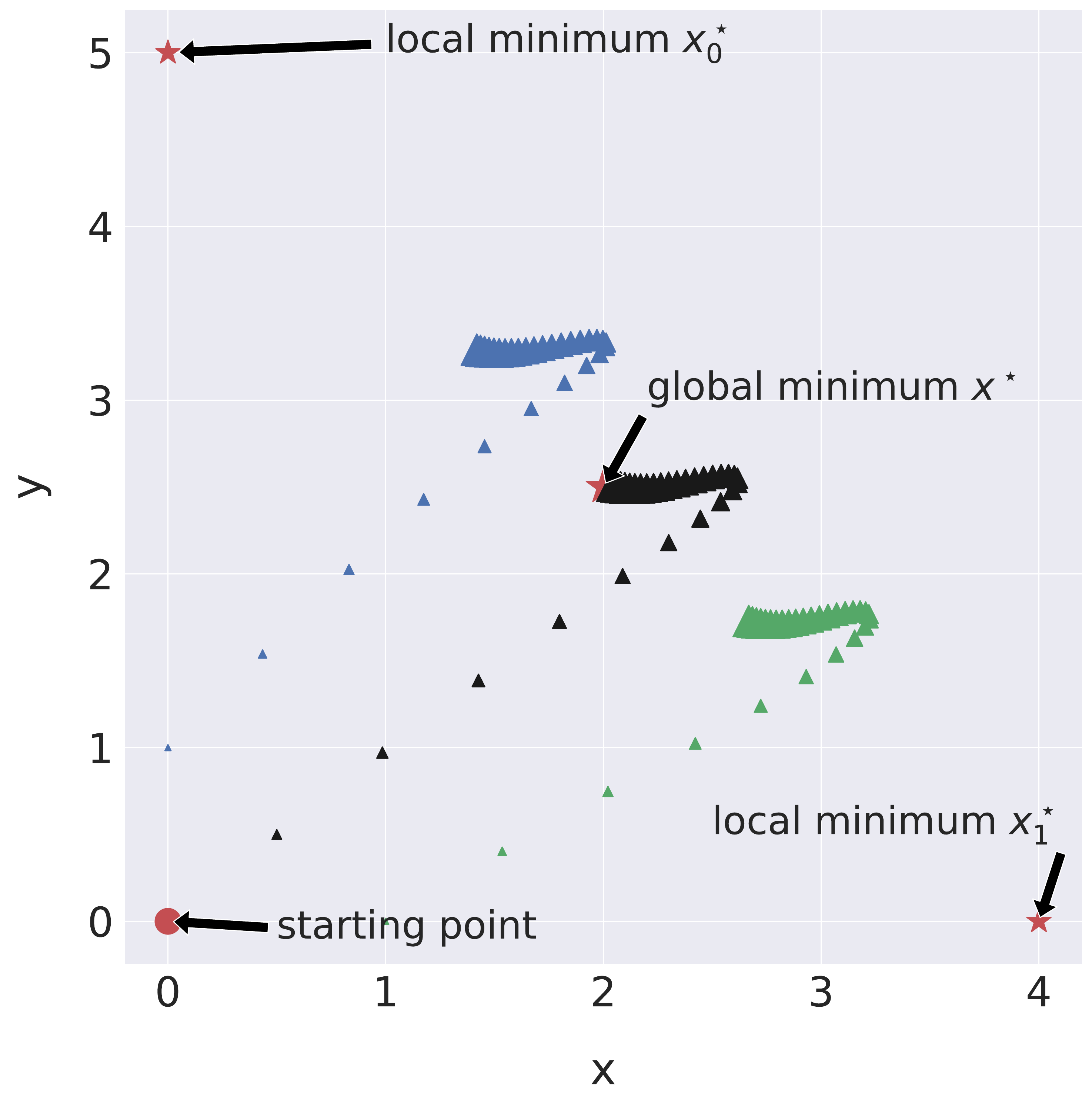}
		\label{fig:2d_illustration_procedure_our_scheme}
	}
	\vspace{-1em}
	\caption{\small
		\textbf{The ineffectiveness of local momentum acceleration under heterogeneous data setup}:\
		the local momentum buffer accumulates ``biased'' gradients,
		causing unstable and oscillation behaviors.
		The size of marker increase proportional to the number of update steps;
		colors blue and green indicate the local models of two workers (after performing local update),
		while black color indicates the synchronized global model.
		Uniform weight averaging is performed after each update step,
		and the new gradients are computed on the averaged model.
		We use the common $\beta \!=\! 0.9$ in this illustration.
		For additional results with different $\beta$ values refer to Appendix~\ref{appendix:more_on_2D_illustration}.
	}
	\vspace{-1em}
	\label{fig:2d_illustration_procedures}
\end{figure*}

\textbf{Batch Normalization in Distributed Learning.}
Batch Normalization (BN)~\citep{ioffe2015batch}
is an indispensable component in deep learning~\citep{santurkar2018does,luo2018towards}
and has been employed by default in most SOTA CNNs~\citep{he2016deep,huang2016densely,tan2019efficientnet}.
However, it often fails on distributed deep learning
with heterogeneous local data due to the discrepancies between local activation statistics~\citep[see %
	recent empirical examination for federated learning in][]{hsieh2020non,andreux2020siloed,li2021fedbn,diao2021heterofl}.
As a remedy, \citet{hsieh2020non} propose to replace BN with Group Normalization (GN)~\citep{wu2018group}
to address the issue of local BN statistics,
while~\citet{andreux2020siloed,li2021fedbn,diao2021heterofl}
modify the way of synchronizing the local BN weight/statistics for better generalization performance.
In the scope of decentralized learning, the effect of batch normalization has not been investigated yet.

\section{Method} \label{sec:method}
\subsection{Notation and Setting}
We consider sum-structured distributed optimization problems $f \colon \R^d \to \R$ of the form
\begin{align}
	f^\star := \textstyle \min_{\xx \in \R^d} \left[ f(\xx) := \frac{1}{n} \sum_{i=1}^n f_i(\xx) \right] \,,
\end{align}
where the components $f_i: \R^d \rightarrow \R$ are distributed among the $n$ nodes
and are given in stochastic form:
$
	f_i(\xx) := \EEb{\xi \sim \cD_i}{F_i (\xx, \xi)}
$,
where $\cD_i$ denotes the local data distribution on node $i \in [n]$.
In D(ecentralized)SGD,
each node $i$ maintains local parameters $\smash{\xx_i^{(t)}} \in \R^d$,
and updates them as:
\begin{small}%
	\begin{talign} \label{eq:d-sgd}
		\textstyle
		\xx_{i}^{(t+1)} = \sum_{j=1}^n w_{ij} \left(\xx_j^{(t)} - \eta \nabla F_j (\xx_j^{(t)},\xi_j^{(t)})\right) \,,
		\tag{DSGD}
	\end{talign}
\end{small}%
that is, by a stochastic gradient step based on a sample $\xi_i^{(i)}\sim \cD_i$,
followed by gossip averaging with neighboring nodes in the network topology
encoded by the mixing weights $w_{ij}$.

In this paper,
we denote DSGD with local HeavyBall momentum by DSGDm,
and DSGD with local Nesterov momentum by DSGDm-N;
the naming rule also applies to our method.
For the sake of simplicity,
we use HeavyBall momentum variants in Section~\ref{sec:method} and~\ref{sec:understanding} for analysis purposes. \looseness=-1

\subsection{\algoptsgdm Algorithm}
To motivate the algorithm design,
we first illustrate the impact of using different momentum buffers (local vs.\ global) on distributed training on heterogeneous data.

\paragraph{Heterogeneous data hinders local momentum acceleration---an example 2D optimization illustration.}
Figure~\ref{fig:2d_illustration_procedures} shows a toy 2D optimization example that simulates the biased local gradients caused by heterogeneous data. It depicts the optimization trajectories of
two agents ($n=2$) that start the optimization from the position $(0, 0)$
and receive
in every iteration a
gradient that points to the local minimum $(0, 5)$ and $(4, 0)$ respectively.
The gradient is given by the direction from the current model (position) to the local minimum, and scaled to a constant update magnitude.
Model synchronization (i.e.\ uniform averaging) is performed for every local model update step. \looseness=-1

Heterogeneous data strongly influences the effectiveness of the local momentum acceleration.
Though local momentum in Figure~\ref{fig:2d_illustration_procedure_w_hb_momentum}
assists the models to converge to the neighborhood of the global minimum
(better convergence than when excluding local momentum
in Figure~\ref{fig:2d_illustration_procedure_wo_hb_momentum}),
it also causes an unstable and oscillation optimization trajectory.
The problem gets even worse in decentralized deep learning,
where the learning relies on stochastic gradients from non-convex function
and only has limited communication.

\paragraph{Synchronizing the local momentum buffers boosts decentralized learning.} %
We here consider a hypothetical method,
which synchronizes the local momentum buffer as in~\citep{yu2019linear},
to use the global momentum buffer locally
(avoid using ill-conditioned local momentum buffer caused by heterogeneous data,
as shown by the poor performance in Figure~\ref{fig:resnet20_cifar10_motivation_example}).
We can witness from Table~\ref{tab:ablation_study_compare_with_other_momentum_SGDs} that
synchronizing the buffer per update step by global averaging
to some extent mitigates the issue caused by heterogeneity ($1\% \!-\! 5\%$ improvement comparing row 3 with row 7 in Table~\ref{tab:ablation_study_compare_with_other_momentum_SGDs}).
Despite its effectiveness,
the global synchronization fundamentally violates the realistic decentralized learning setup
and introduces extra communication overhead.\looseness=-1

\begin{algorithm}[!h]
	\begin{algorithmic}[1]
		\Procedure{worker-$i$}{}
		\For{$t \in \{ 1, \ldots, T \}$}
		\myState{sample $\xi_{i}^{(t)}$ and compute $\gg_{i}^{(t)} = \nabla F_i(\xx_{i}^{(t)}, \xi_{i}^{(t)})$}
		\myState{ \colorbox{green!30}{ $ \mm_i^{(t)} = \beta \mm_{i}^{(t-1)} + \gg_{i}^{(t)} $ } }
		\myState{ \colorbox{blue!30}{ $ \mm_i^{(t)} = \beta \hat \mm_{i}^{(t-1)} + \gg_{i}^{(t)} $ } }
		\myState{$ \xx_{i}^{(t+\frac{1}{2})} = \xx_{i}^{(t)} - \eta \mm_i^{(t)} $}
		\myState{$\xx_{i}^{(t+1)} = \sum_{j \in \cN_{i}^{(t)}} w_{ij} \xx_{j}^{(t + \frac{1}{2})}$}
		\myState{ \colorbox{blue!30}{ $ \dd_i^{(t)} = \frac{ \xx_{i}^{(t)} - \xx_{i}^{(t + 1)}  }{\eta} $ } }
		\myState{ \colorbox{blue!30}{ $ \hat \mm_{i}^{(t)} = \mu \hat \mm_{i}^{(t - 1)} + (1 - \mu) \dd_i^{(t)} $ } }
		\EndFor
		\myState{\Return $\xx_i^{(T)}$}
		\EndProcedure
	\end{algorithmic}

	\mycaptionof{algorithm}{\small
		Decentralized learning algorithms:
		\colorbox{blue!30}{\algoptsgdm} v.s.\
		\colorbox{green!30}{DSGDm};
		Colors indicate the two alternative algorithm variants.
		At initialization $\mm_{i}^{(0)} = \hat \mm_{i}^{(0)} := \0$.
	}
	\label{alg:hbsgdm_vs_algoptsgdm}
\end{algorithm}

\paragraph{Our proposal---\algoptsgdm.}
Motivated by the performance gain brought by employing a global momentum buffer,
we propose a \textbf{Q}uasi-\textbf{G}lobal (QG) momentum buffer---a communication-free approach to mimic the global optimization direction---to mitigate the difficulties for decentralized learning on heterogeneous data.
Integrating quasi-global momentum with local stochastic gradients
alleviates the drift in the local optimization direction,
and thus results in a stabilized training and high robustness to heterogeneous data.\looseness=-1

Algorithm~\ref{alg:hbsgdm_vs_algoptsgdm} highlights
the difference between DSGDm and \algoptsgdm.
Instead of using local gradients from heterogeneous data
to form the local momentum (line~4 for DSGDm),
which may significantly deflect from the global optimization direction,
for \algoptsgdm, we use the difference of two consecutive synchronized models (line~8)
\begin{align} \label{eq:movement_directions}
	\dd_i^{(t)} = \frac{1}{{\eta}} \left( \xx_{i}^{(t)} - \xx_{i}^{(t + 1)} \right) \,,
\end{align}
to update the momentum buffer (line 9) by
$\hat \mm_{i}^{(t)} = \mu \hat \mm_{i}^{(t - 1)} + (1 - \mu) \dd_i^{(t)}$.
We set $\mu = \beta$ for all our numerical experiments, without needing hyper-parameter tuning.\looseness=-1

The update scheme of~\algoptsgdm can be re-formulated in matrix form ($\mX=[\xx_1,\dots,\xx_n] \in \R^{d \times n}$, etc.) as follows
\begin{small}
	\begin{talign} \label{eq:our_scheme_matrix_form}
		\begin{split}
			\mX^{(t+1)} &= \mW \left( \mX^{(t)} - \eta \left( \beta \mM^{(t-1)} + \mG^{(t)} \right) \right) \\
			\mM^{(t)}   &= \mu \mM^{(t-1)} + (1 - \mu) \frac{ \mX^{(t)} - \mX^{(t + 1)} }{\eta} \,.
		\end{split}
	\end{talign}\vspace{-2mm}
\end{small}

\begin{figure*}[!t]
	\vspace{-1em}
	\centering
	\subfigure[\small
		$n\!=\!16$.
	]{
		\includegraphics[width=.31\textwidth,]{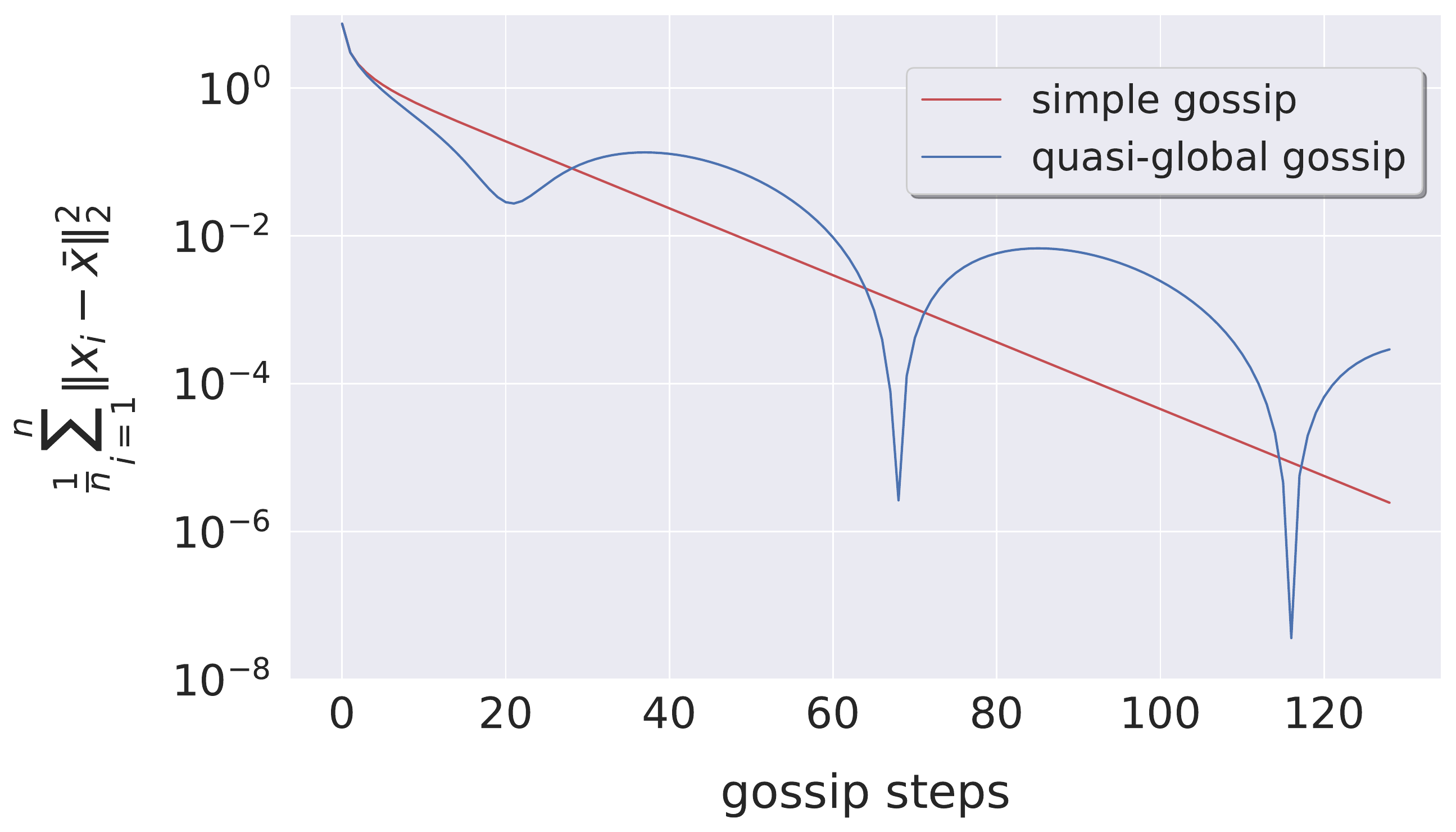}
		\label{fig:understanding_via_consensus_averaging_n16_128}
	}
	\hfill
	\subfigure[\small
		$n\!=\!32$.
	]{
		\includegraphics[width=.31\textwidth,]{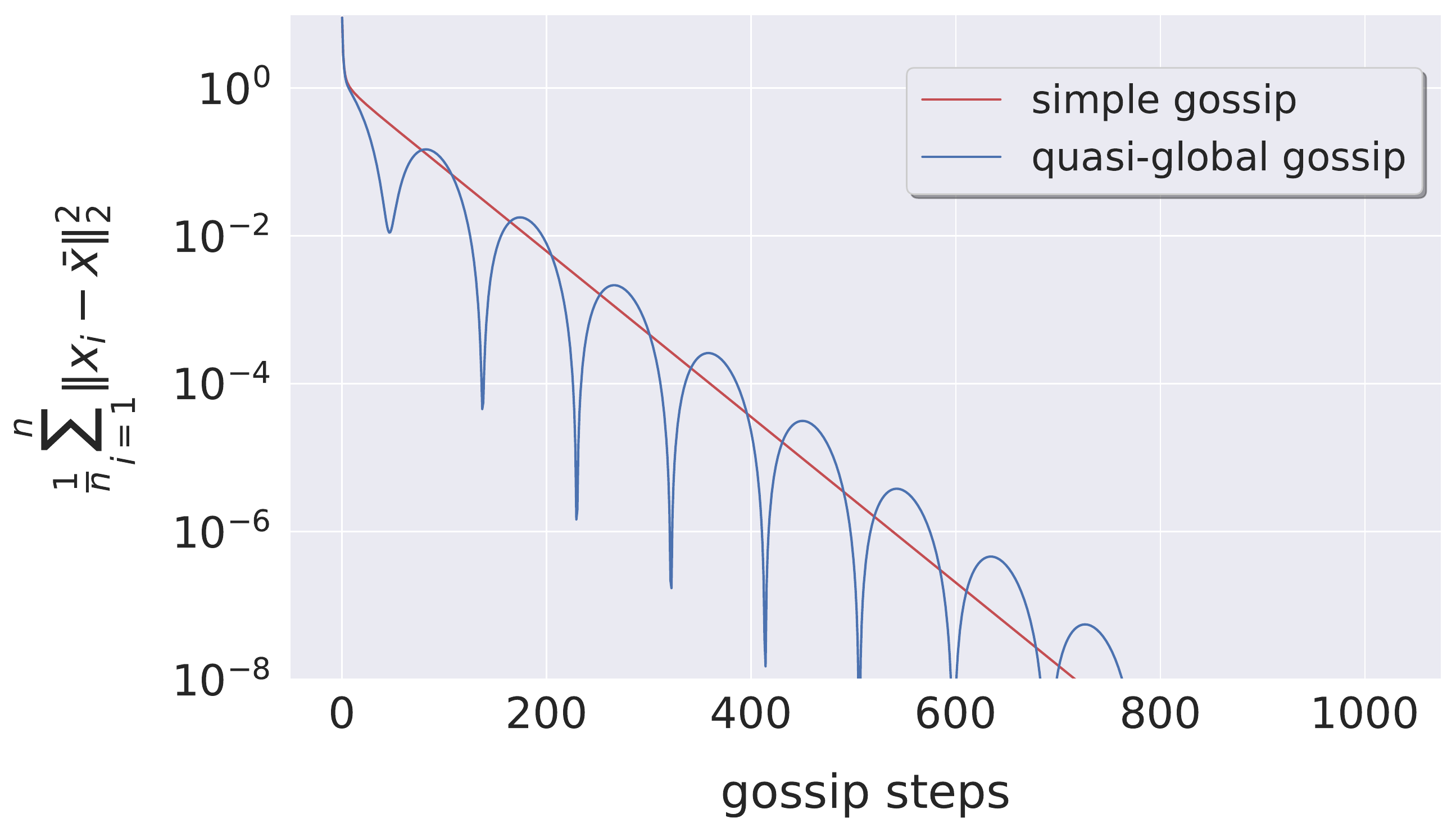}
		\label{fig:understanding_via_consensus_averaging_n32_1024}
	}
	\hfill
	\subfigure[\small
		$n\!=\!64$.
	]{
		\includegraphics[width=.31\textwidth,]{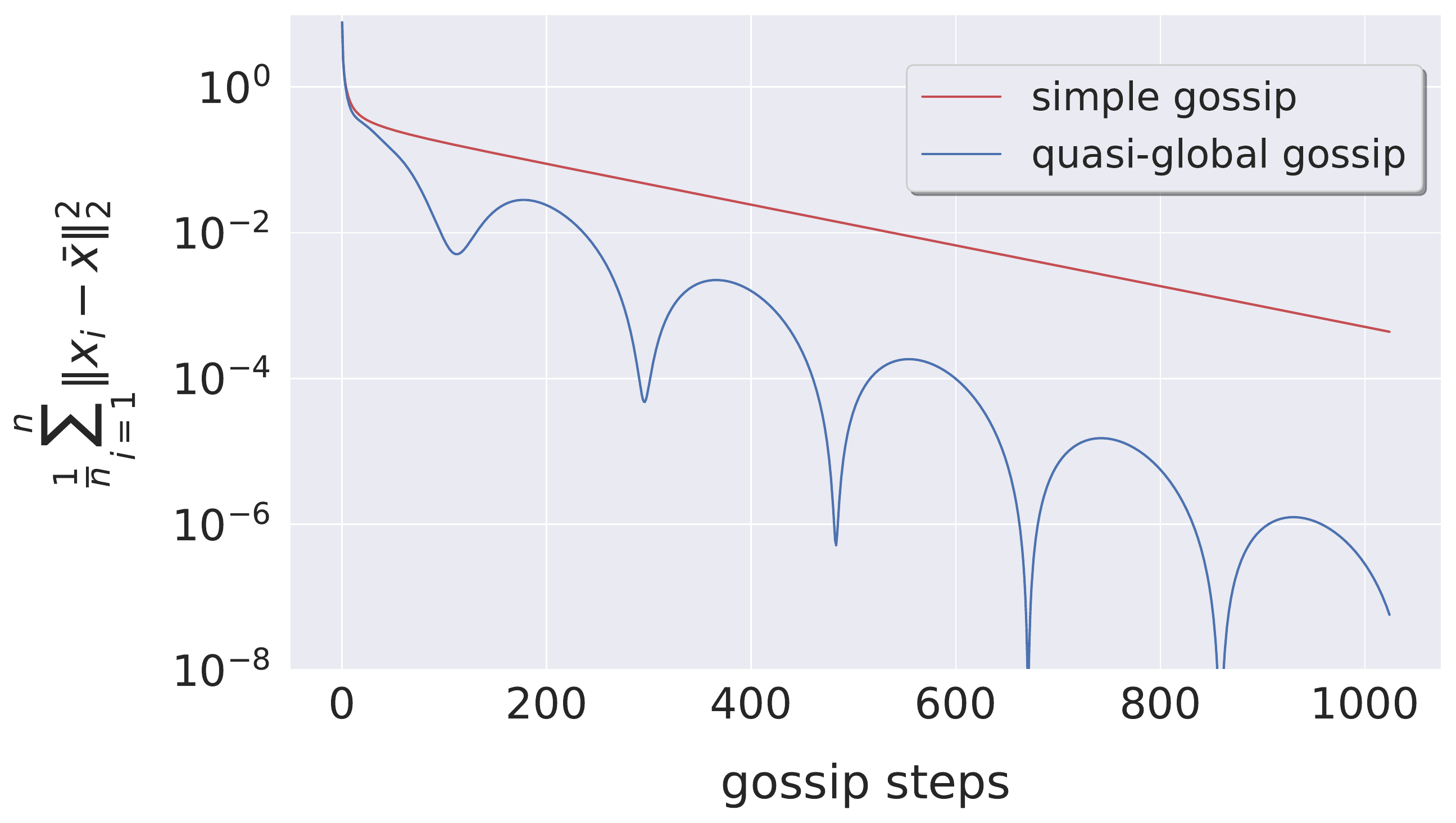}
		\label{fig:understanding_via_consensus_averaging_n64_1024}
	}
	\vspace{-1em}
	\caption{\small
		\textbf{Understanding \algoptsgdm through the distributed average consensus problem
			on a fixed ring topology}.
		\algoptsgdm without gradient update step \eqref{eq:update_without_stochastic_gradients}
		still presents faster convergence (to a relative high precision) than the standard gossip averaging.
		Appendix~\ref{appendix:more_on_consensus_averaging_problem}
		illustrates the results on other communication topologies and topology scales.
	}
	\vspace{-1em}
	\label{fig:understanding_via_consensus_averaging}
\end{figure*}

\subsection{Convergence Analysis} \label{sec:convergence_analysis}
We provide a convergence analysis for our novel \algoptsgdm for non-convex functions.
The proof details can be found in Appendix~\ref{appendix:convergence_rate_proofs}.

\begin{assumption} \label{asm:main_assumptions_for_convergence}
	We assume that the following hold:
	\begin{itemize}[nosep,leftmargin=12pt,itemsep=1pt]
		\item The function $f(\xx)$ we are minimizing is lower bounded from below by $f^\star$, and each node's loss $f_i$ is smooth satisfying	$\norm{\nabla f_i(\yy) \!-\! \nabla f_i(\xx)} \leq L\norm{\yy \!-\! \xx}$.
		\item The stochastic gradients within each node satisfy $\Eb{ g_i(\xx) } \!=\! \nabla f_i(\xx)$
		      and $\E\norm{g_i(\xx) \!-\! \nabla f_i(\xx)}^2 \leq \sigma^2$.
		      The variance across the workers is also bounded as $\frac{1}{n}\sum_{i=1}^n \norm{\nabla f_i(\xx) \!-\! \nabla f(\xx)}^2 \leq \zeta^2$.
		\item The mixing matrix is doubly stochastic: for the all ones vector $\1$,
		      we have $\mW \1 \!=\! \1$ and $\mW^\top\1 \!=\! \1$.
		\item Define $\bar \mZ \!=\! \mZ \frac{1}{n}\1 \1^\top$ for any matrix $\mZ \in  \R^{d \times n}$, then the mixing matrix satisfies $\E_\mW \norm{ \mZ \mW - \bar{\mZ} }_F^2 \leq (1 - \rho) \norm{\mZ - \bar{\mZ}}_F^2$.
	\end{itemize}
\end{assumption}

\begin{theorem}[Convergence of \algoptsgdm for non-convex functions]
	\label{thm:1}
	Given Assumption~\ref{asm:main_assumptions_for_convergence},
	the sequence of iterates generated by \eqref{eq:our_scheme_matrix_form}
	for step size $\eta \!=\!\smash{\cO\left(\sqrt{\frac{n}{\sigma^2 T}}\right)}$
	and momentum parameter $\frac{\beta}{1 - \beta} \leq \frac{\rho}{21}$ satisfies $\frac{1}{T}\sum_{t=0}^{T-1}\E\norm{\nabla f(\bar\xx^{t})}^2 \leq \epsilon$ in iterations
	\begin{talign*}
		T  = \cO\Big( \frac{L\sigma^2}{n \epsilon^2} +  \frac{L \tilde\zeta }{\rho \epsilon^{3/2}} + \frac{L}{\epsilon}(\frac{1}{\rho} + \frac{1}{(1-\mu)(1-\beta)^2})\Big)
		\,,
	\end{talign*}
	where $\tilde\zeta^2 := \zeta^2 + (1 + \frac{1-\beta}{1-\mu}) \sigma^2$.
\end{theorem}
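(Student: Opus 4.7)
The plan is to follow the standard two-track decentralized-momentum-SGD framework: derive a descent inequality on the averaged iterate $\bar\xx^{(t)} := \tfrac{1}{n}\mX^{(t)}\1$, separately bound the consensus errors $\mE_X^{(t)} := \tfrac{1}{n}\E\norm{\mX^{(t)}-\bar{\mX}^{(t)}}_F^2$ and $\mE_M^{(t)} := \tfrac{1}{n}\E\norm{\mM^{(t)}-\bar{\mM}^{(t)}}_F^2$, and combine them into a Lyapunov potential that telescopes cleanly over $T$ steps.

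First, I would average the recursion \eqref{eq:our_scheme_matrix_form} using the double stochasticity of $\mW$. Writing $\bar\mm^{(t)}$ and $\bar\gg^{(t)}$ analogously to $\bar\xx^{(t)}$, I obtain
\begin{align*}
\bar\xx^{(t+1)} &= \bar\xx^{(t)} - \eta\bigl(\beta\bar\mm^{(t-1)}+\bar\gg^{(t)}\bigr), \\
\bar\mm^{(t)} &= \bigl(\mu+(1-\mu)\beta\bigr)\bar\mm^{(t-1)} + (1-\mu)\bar\gg^{(t)},
\end{align*}
a heavy-ball-type recursion on $\bar\xx$ with effective momentum $\gamma := 1-(1-\mu)(1-\beta)$. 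Applying $L$-smoothness to $f(\bar\xx^{(t+1)})$, decomposing $\bar\gg^{(t)}$ into $\nabla f(\bar\xx^{(t)})$, the smoothness-bounded bias $\tfrac{1}{n}\sum_i\nabla f_i(\xx_i^{(t)})-\nabla f(\bar\xx^{(t)})$, and a zero-mean noise term produces a descent inequality whose residuals are $L^2\mE_X^{(t)}$, $\sigma^2/n$, and a momentum-magnitude term $\beta^2\E\norm{\bar\mm^{(t-1)}}^2$ to be tracked separately.

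Second, for the consensus errors, subtracting averages in \eqref{eq:our_scheme_matrix_form} and invoking the mixing contraction $\E_\mW\norm{\mZ\mW-\bar\mZ}_F^2 \leq (1-\rho)\norm{\mZ-\bar\mZ}_F^2$ together with Young's inequality yields
\begin{align*}
\mE_X^{(t+1)} \leq (1-\tfrac{\rho}{2})\mE_X^{(t)} + C\eta^2\bigl(\beta^2\mE_M^{(t-1)} + \sigma^2 + \zeta^2 + \E\norm{\nabla f(\bar\xx^{(t)})}^2\bigr),
\end{align*}
and a companion bound on $\mE_M^{(t)}$ is obtained by substituting $(\mX^{(t)}-\mX^{(t+1)})/\eta = \mX^{(t)}(\mI-\mW)/\eta + (\beta\mM^{(t-1)}+\mG^{(t)})\mW$ into the QG buffer update, exploiting $\mX^{(t)}(\mI-\mW) = (\mX^{(t)}-\bar\mX^{(t)})(\mI-\mW)$. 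Together these form a $2\times 2$ linear system in $(\mE_X^{(t)},\mE_M^{(t)})$ whose off-diagonal interaction must be controlled for uniform contraction---precisely where the hypothesis $\tfrac{\beta}{1-\beta}\leq\tfrac{\rho}{21}$ is used.

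Finally, I would assemble a Lyapunov potential of the form
\begin{align*}
\Phi^{(t)} := \E f(\bar\xx^{(t)}) - f^\star + c_1\eta\E\norm{\bar\mm^{(t-1)}}^2 + c_2\mE_X^{(t)} + c_3\mE_M^{(t-1)},
\end{align*}
choose the constants $c_1,c_2,c_3$ so that the descent and consensus recursions combine into $\Phi^{(t+1)} - \Phi^{(t)} \leq -\tfrac{\eta}{4}\E\norm{\nabla f(\bar\xx^{(t)})}^2 + (\text{step-size-controlled residuals})$, telescope over $t=0,\ldots,T-1$, and set $\eta$ as the minimum of an $\cO(1/L)$-type cap and $\cO(\sqrt{n/(\sigma^2 T)})$ to balance the stochastic and deterministic terms. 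The hybrid $\tilde\zeta^2 = \zeta^2 + (1+(1-\beta)/(1-\mu))\sigma^2$ arises naturally in the $\mE_M$ recursion, since both inter-worker dissimilarity and the stochastic noise---rescaled by the QG smoothing factor $(1-\beta)/(1-\mu)$---feed into the momentum consensus error. The main obstacle will be the $\mX$--$\mM$ coupling in the second step: unlike ordinary DSGDm, the QG buffer is driven by \emph{post-mixing} model differences, so the $1/\eta$ normalization in its definition would naively produce an $\mE_X^{(t)}/\eta^2$ term on the right-hand side of the $\mE_M$ recursion. Carefully choosing the Young splits to exploit both the $(\mI-\mW)$ projection (which kills the mean component of $\mX^{(t)}$) and the smallness of $\beta$ relative to $\rho$ is what turns the joint recursion into a genuine contraction; everything else is bookkeeping in the spirit of the Koloskova et al.\ (2020) unified framework.
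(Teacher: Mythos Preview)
Your proposal is correct in outline and tracks the same consensus quantities $\mE_X^{(t)}$ and $\mE_M^{(t)}$ as the paper, but the treatment of the averaged dynamics differs. You propose to apply smoothness directly to $f(\bar\xx^{(t+1)})$ and carry the momentum-magnitude $\E\norm{\bar\mm^{(t-1)}}^2$ as an extra Lyapunov term. The paper instead introduces a \emph{virtual SGD sequence} $\hat\xx^{(t+1)} = \hat\xx^{(t)} - \tfrac{\eta}{1-\beta}\bar\gg^{(t)}$ and controls the discrepancy $\ee^{(t)} := \hat\xx^{(t)} - \bar\xx^{(t)}$; the descent lemma is then written for $f(\hat\xx^{(t)})$, which involves only the plain stochastic gradient (no momentum cross-term), and the momentum's effect enters solely through $\norm{\ee^{(t)}}^2$. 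A short calculation shows $\ee^{(t+1)} = \gamma\,\ee^{(t)} - \tfrac{\eta\beta}{1-\beta}\bar\gg^{(t)}$ with the same contraction factor $\gamma = 1-(1-\mu)(1-\beta)$ as $\bar\mm^{(t)}$, so the two auxiliary sequences carry equivalent information, just differently parametrized.

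What the virtual-sequence route buys is a cleaner bookkeeping: the awkward cross-term $-\eta\beta\langle\nabla f(\bar\xx^{(t)}),\bar\mm^{(t-1)}\rangle$ that you would have to Young-split never appears, and the effective step-size $\tilde\eta = \eta/(1-\beta)$ emerges automatically. Your direct approach still closes---the Young split requires roughly $\beta \le 1/3$, which is implied by the hypothesis $\beta/(1-\beta)\le\rho/21$---so both arguments yield the stated rate. The consensus and momentum-consensus recursions (Lemmas~\ref{lem:consensus-change}--\ref{lem:consensus} in the paper) are handled exactly as you describe, including the use of $(\mI-\mW)\mX^{(t)} = (\mI-\mW)(\mX^{(t)}-\bar\mX^{(t)})$ to tame the $1/\eta$ amplification and the role of $\beta/(1-\beta)\le\rho/21$ in making the $2\times 2$ system contractive.
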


\begin{remark}
	The asymptotic number of iterations required,  $\cO\bigl(\frac{\sigma^2}{n \epsilon^2}\bigr)$ shows perfect linear speedup in the number of workers $n$, independent of the communication topology. This upper bound matches the convergence bounds of DSGD~\cite{lian2017can} and centralized mini-batch SGD~\cite{dekel2012optimal}, and is optimal \citep{arjevani2019lower}. This significantly improves over previous analyses of distributed momentum methods which need $\frac{L\sigma^2}{n (1-\beta)\epsilon^2}$ iterations, slowing down for larger values of $\beta$ \citep{yu2019linear,balu2020decentralized}. The second \emph{drift} term $\frac{1}{\rho \epsilon^{3/2}}$ arises due to the non-iid data distribution, and matches the tightest analysis of DSGD without momentum~\citep{koloskova2020unified}. Finally, our theorem imposes some constraint on the momentum parameter $\beta$ (but not on $\mu$). In practice however, \algoptsgdm performs well even when this constraint is violated.
\end{remark}

\subsection{Connection with Other Methods}
We bridge \qg with two recent works below.
The corresponding algorithm details are included
in Appendix~\ref{appendix:detailed_algorithm} for clarity.\looseness=-1
\vspace{-2mm}

\paragraph{Connection with MimeLite.}
MimeLite~\citep{karimireddy2020mime} was recently introduced 
for FL on heterogeneous data.
It shares a similar ingredient as ours:
a ``global'' movement direction $\dd$ is used locally
to alleviate the issue caused by heterogeneity.
The difference falls into the way of forming $\dd$
(c.f.\ line 8 in Algorithm~\ref{alg:hbsgdm_vs_algoptsgdm}):
in MimeLite, $\dd$ is the full batch gradients
computed on the previously synchronized model,
while the $\dd$ in our \algoptsgdm is the difference on two consecutive synchronized models.\looseness=-1
\\
MimeLite only addresses the FL setting,
which results in a computation and communication overhead (to form $\dd$),
and is non-trivial to extend to decentralized learning.

\vspace{-2mm}
\paragraph{Connection with SlowMo.}
SlowMo and its ``noaverage'' variant~\citep{wang2020slowmo} aim to improve generalization performance
in the homogeneous data-center training scenario,
while \algoptsgdm is targeting learning with data heterogeneity.
In terms of update scheme,
SlowMo variants update the slow momentum buffer through the model difference $\dd$
of $\tau \!\ggg\! 1$ local update (and synchronization) steps,
while \algoptsgdm only considers consecutive models (analogously $\tau \!=\! 1$)\footnote{
	We also study the variant of \algoptsgdm with $\tau \!>\! 1$
	in Appendix~\ref{appendix:multiple_step_algoptsgdmn}---we stick to $\tau \!=\! 1$ in the main paper
	for its superior performance and hyper-parameter ($\tau$) tuning free.
}.
Besides, in contrast to \algoptsgdm,
the slow momentum buffer in SlowMo will never interact with the local update---%
setting $\tau$ to 1 in SlowMo variants cannot recover \algoptsgdm.\looseness=-1
\\
SlowMo variants are orthogonal to \algoptsgdm;
combining these two algorithms may lead to a better generalization performance,
and we leave it for future work.

\vspace{1em}
\section{Understanding \algoptsgdm} \label{sec:understanding}
\subsection{Faster Convergence in Average Consensus}
We now consider the simpler averaging consensus problem (isolated from the learning part of \algoptsgdm):
we simplify~\eqref{eq:our_scheme_matrix_form} by removing gradients and step-size:\vspace{-1mm}
\begin{small}
	\begin{talign} \label{eq:update_without_stochastic_gradients}
		\begin{split}
			\mX^{(t+1)} &= \mW \left( \mX^{(t)} - \beta \mM^{(t-1)} \right) \\
			\mM^{(t)}   &= \mu \mM^{(t-1)} + (1 - \mu) \left( \mX^{(t)} - \mX^{(t + 1)} \right) \,,
		\end{split}
	\end{talign}
\end{small}%
and compare it with gossip averaging
$\mX^{(t+1)} = \mW \mX^{(t)}$.

Figure~\ref{fig:understanding_via_consensus_averaging}
depicts the advantages of~\eqref{eq:update_without_stochastic_gradients} over standard gossip averaging,
where \algoptsgdm can quickly converge to a critical consensus distance (e.g.\ $10^{-2}$).
It partially explains the performance gain of \algoptsgdm
from the aspect of improved decentralized communication (which leads to better optimization)---%
decentralized training can converge as fast as its centralized counterpart
once the consensus distance is lower than the critical one, as stated in~\citep{kong2021consensus}.\looseness=-1

\vspace{-1mm}
\subsection{\algoptsgdm (Single Worker Case) Recovers QHM}
Considering the single worker case,
\algoptsgdm can be further simplified to
(derivations in Appendix~\ref{appendix:hd_sgd_with_single_worker_sgdm}):
\begin{small}
	\begin{talign*}
		\begin{split}
			\hat \mm^{(t)} &= \hat \beta \hat \mm^{(t-1)} + \gg^{(t)} \\
			\xx^{(t+1)}
			&
			= \xx^{(t)} - \eta \left( ( 1 - \frac{\mu}{\hat \beta} ) \hat \mm^{(t)} + \frac{\mu}{\hat \beta} \gg^{(t)} \right)
			\,,
		\end{split}
	\end{talign*}
\end{small}%
where $\hat \beta := \mu + (1 - \mu) \beta$.
Thus, the single worker case of \algoptsgdm (i.e.\ \salgoptsgdm)
recovers Quasi-Hyperbolic Momentum
(QHM)~\citep{ma2018quasihyperbolic,gitman2019understanding}.
We illustrate its acceleration benefits as well as the performance gain
in Figure~\ref{fig:understanding_optsgd_on_n1_complete} and Figure~\ref{fig:understanding_optsgd_on_n1_and_wo_weight_decay} of Appendix~\ref{appendix:understanding_single_worker}.
We elaborate in Appendix~\ref{appendix:connection_for_single_worker_case}
that SGDm is only a special case of \salgoptsgdm/QHM (by setting $\mu \!=\! 0$).
Besides, it is non-trivial to adapt (centralized) QHM to (decentralized) \algoptsgdm
due to discrepant motivation.
\looseness=-1

\paragraph{Stabilized optimization trajectory.}
We study the optimization trajectory of Rosenbrock function~\cite{rosenbrock1960automatic} $f(x, y) = (y - x^2)^2 + 100 (x - 1)^2$ as in~\citep{lucas2018aggregated}
to better understand the performance gain of \salgoptsgdm (with zero stochastic noise).\footnote{
	We further study the optimization trajectory for more complicated non-convex function
	in Appendix~\ref{appendix:toy_problem_trajectory}.
}
Figure~\ref{fig:understanding_on_new_2d_toy_example_complete}
illustrates the effects of stabilization in \salgoptsgdm (much less oscillation than SGDm).\looseness=-1

\begin{figure}[!h]
	\centering
	\vspace{-0.5em}
	\includegraphics[width=.4\textwidth,]{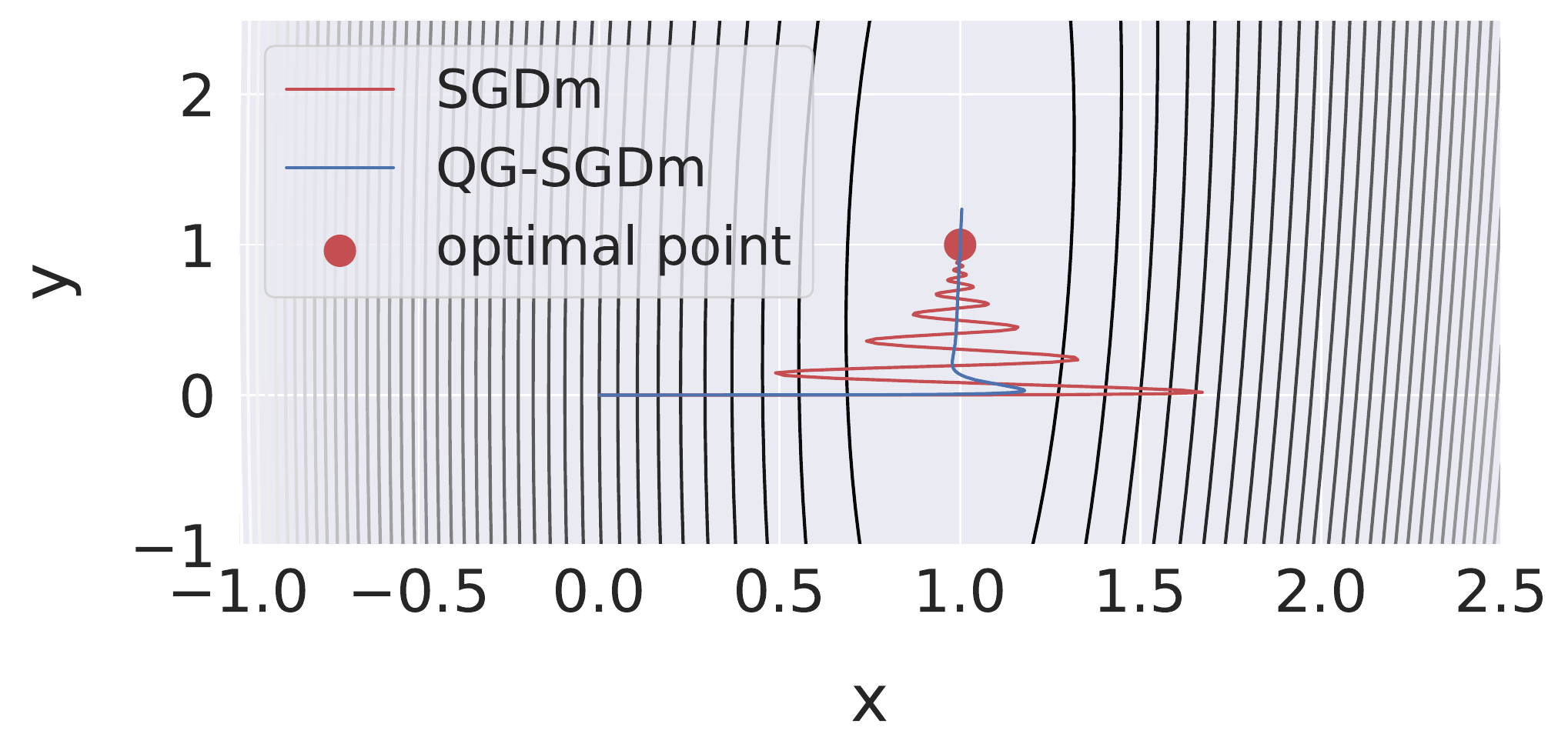}
	\vspace{-1em}
	\caption{\small
		\textbf{Understanding the optimization trajectory of \salgoptsgdm and SGDm}
		(i.e.\ single worker case)
		via a 2D toy function $f(x, y) = (y - x^2)^2 + 100 (x - 1)^2$.
		This function has a global minimum at $(x, y) = (1, 1)$.
		SGDm and \salgoptsgdm use $\beta = 0.9, \eta = 0.001$, with initial point $(0, 0)$.
		For additional trajectories with different initial points and/or $\beta$ values
		refer to Appendix~\ref{appendix:toy_problem_trajectory}.
	}
	\vspace{-1em}
	\label{fig:understanding_on_new_2d_toy_example}
\end{figure}

\begin{figure}[!h]
	\centering
	\subfigure[\small ResNet-BN-20.]{
		\includegraphics[width=0.225\textwidth,]{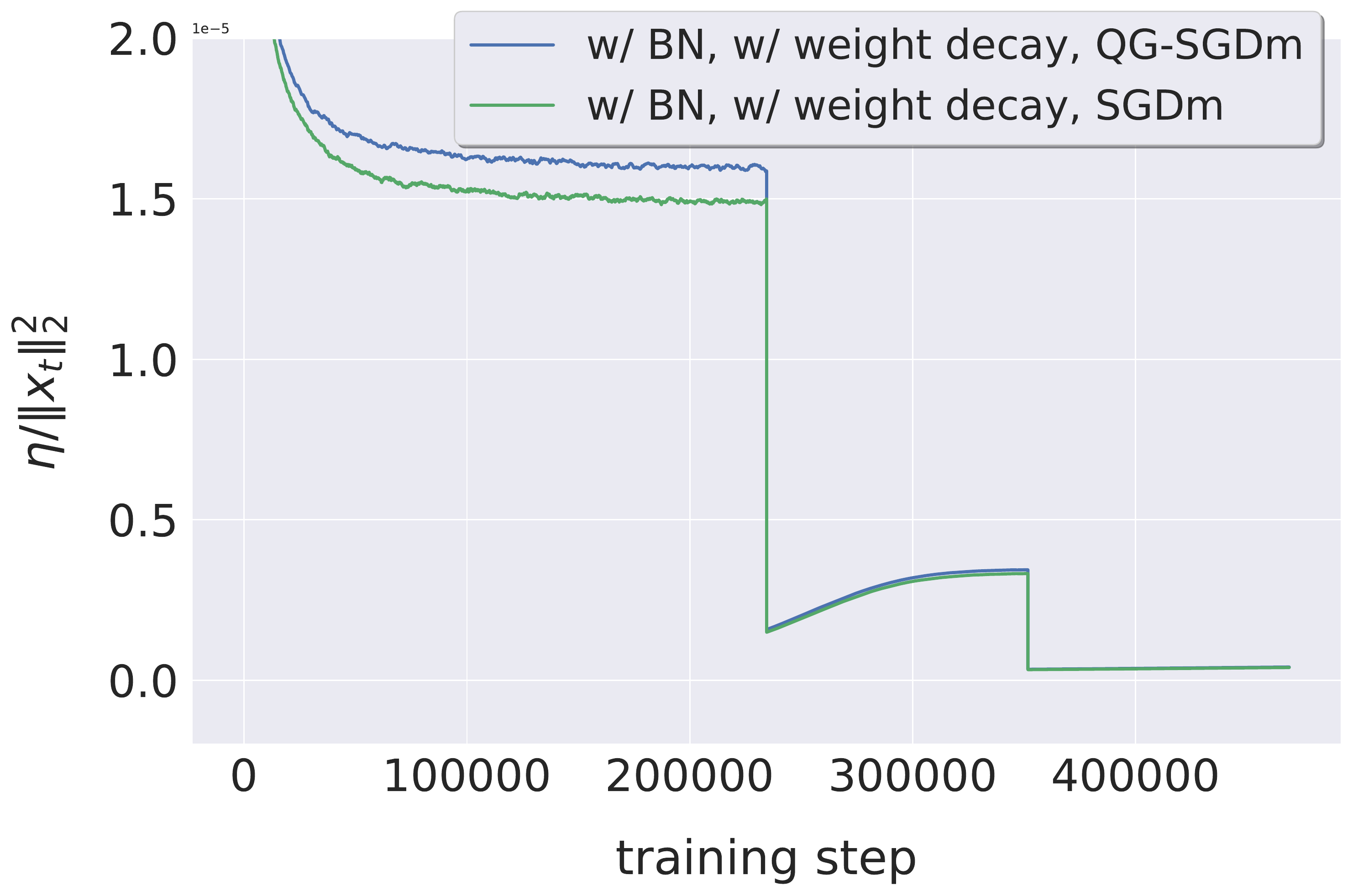}
		\label{fig:n1_resnet20_cifar10_understanding_effective_stepsize_heavyball_momentum_w_bn_w_wd}
	}
	\subfigure[\small ResNet-GN-20.]{
		\includegraphics[width=0.225\textwidth,]{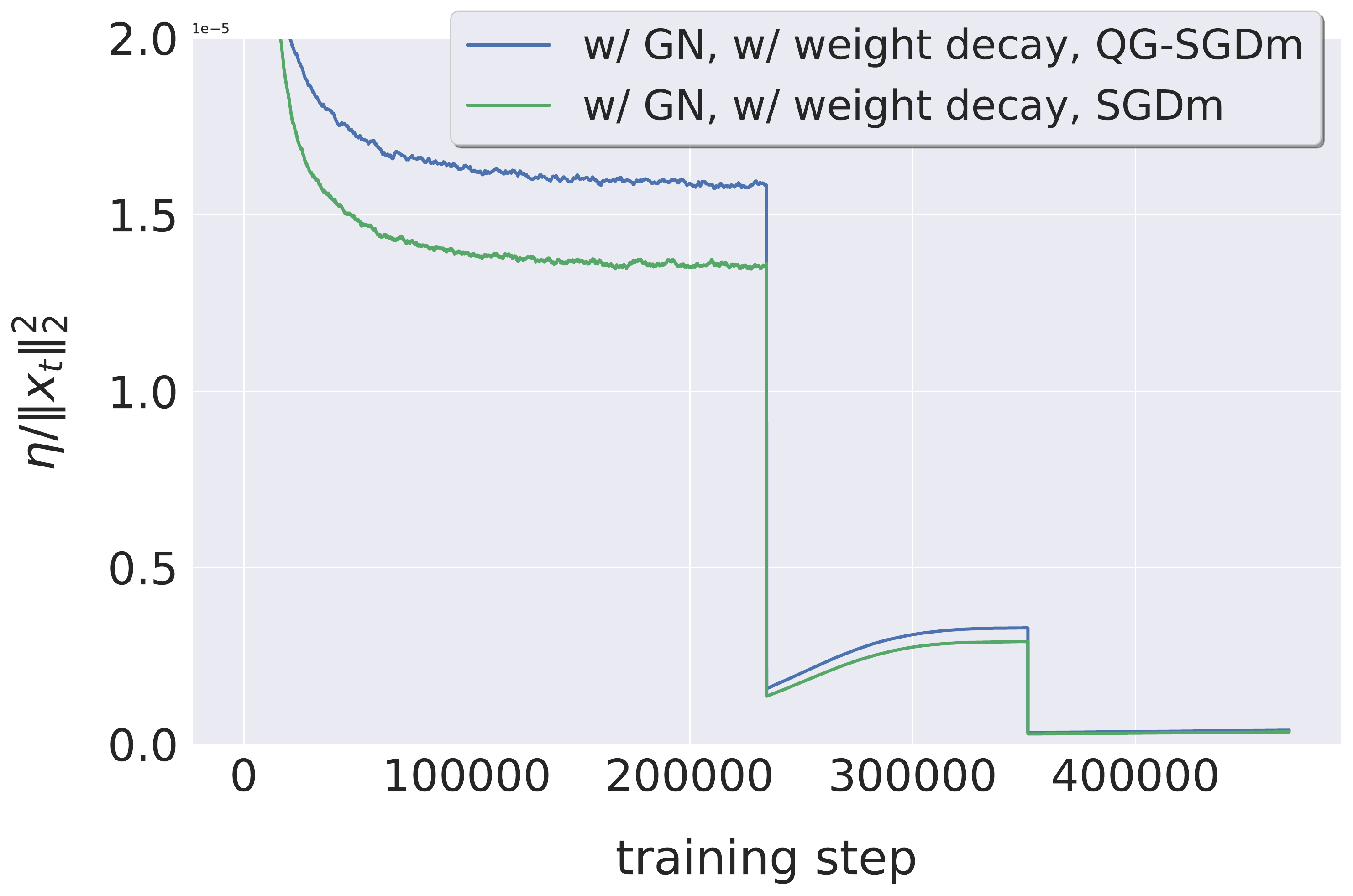}
		\label{fig:n1_resnet20_cifar10_understanding_effective_stepsize_heavyball_momentum_w_gn_w_wd}
	}
	\vspace{-1em}
	\caption{\small
		\textbf{The effective step-size $ \eta / \norm{ \xx_t }_2^2 $ of \salgoptsgdm and SGDm}
		(single worker case) on CIFAR-10.
		For the weight norm curves refer to
		Figure~\ref{fig:n1_resnet20_cifar10_understanding_from_deep_learning_aspect_complete}
		in Appendix~\ref{appendix:understanding_single_worker}.
	}
	\vspace{-1em}
	\label{fig:n1_resnet20_cifar10_understanding_from_deep_learning_aspect}
\end{figure}

\paragraph{Larger effective step-size.}
Recent works~\citep{hoffer2018norm,zhang2018three}
point out the larger effective step-size
(i.e.\ $ \eta / \norm{\xx_t}_2^2 $) brought by weight decay
provides the primary regularization effect for deep learning training.
Figure~\ref{fig:n1_resnet20_cifar10_understanding_from_deep_learning_aspect}
examines the effective step-size during the optimization procedure:
\salgoptsgdm illustrates a larger effective step-size than SGDm,
explaining the performance gain e.g.\ in
Figure~\ref{fig:understanding_optsgd_on_n1_complete}
and Figure~\ref{fig:understanding_optsgd_on_n1_and_wo_weight_decay}
of Appendix~\ref{appendix:understanding_single_worker}.

\section{Experiments}
\subsection{Setup} \label{sec:exp_setup}
\paragraph{Datasets and models.}
We empirically study the decentralized training behavior on both CV and NLP benchmarks,
on the architecture of ResNet~\citep{he2016deep}, VGG~\citep{simonyan2014very}
and DistilBERT~\citep{sanh2019distilbert}.
$\bullet$ Image classification (CV) benchmark:
we consider training CIFAR-10~\citep{krizhevsky2009learning},
ImageNet-32 (i.e.\ image resolution of $32$)~\citep{chrabaszcz2017downsampled},
and ImageNet~\citep{imagenet_cvpr09} from scratch,
with standard data augmentation and preprocessing scheme~\citep{he2016deep}.
We use VGG-11 (with width factor $1/2$ and without BN) and ResNet-20 for CIFAR-10,
ResNet-20 with width factor $2$ (noted as ResNet-20-x2) for ImageNet-32,
and ResNet-18 for ImageNet.
The width factor indicates the proportional scaling of the network width corresponding to the original neural network.
Weight initialization schemes follow~\citep{he2015delving,goyal2017accurate}.
$\bullet$ Text classification (NLP) benchmark:
we perform fine-tuning on a $4$-class classification dataset (AG News~\citep{zhang2015character}).
Unless mentioned otherwise, all our experiments are repeated over three random seeds.
We report the averaged performance of local models on the full test dataset.
\looseness=-1

\begin{table*}[!t]
	\centering
	\caption{\small
		\textbf{
			The test top-1 accuracy of different decentralized optimization algorithms
			evaluated on different degrees of non-\iid local CIFAR-10 data,
			for various neural architectures and network topologies.
		}
		The results are averaged over three random seeds,
		with learning rate tuning for each setting.
		We also include the results of centralized baseline for reference purposes,
		following the decentralized experiment configuration,
		except that the centralized baseline uses randomly partitioned local training data (i.e.\ independent of $\alpha$).\looseness=-1
	}
	\vspace{-1em}
	\label{tab:cv_main_results}
	\resizebox{1.\textwidth}{!}{%
		\begin{tabular}{ccccccccc}
			\toprule
			\multirow{2}{*}{Datasets} & \multirow{2}{*}{Neural Architectures} &   \multirow{2}{*}{Methods}                 & \multicolumn{3}{c}{Ring ($n\!=\!16$)}  & \multicolumn{3}{c}{Social Network ($n\!=\!32$)} \\ \cmidrule(lr){4-6} \cmidrule(lr){7-9}
			  &   &              & $\alpha=10$               & $\alpha=1$                & $\alpha=0.1$              & $\alpha=10$               & $\alpha=1$                & $\alpha=0.1$              \\ \midrule
			\multirow{16}{*}{CIFAR-10}   & \multirow{4}{*}{ResNet-BN-20} 			& SGDm-N (centralized) & \multicolumn{3}{c}{$92.95 \pm 0.13$}             								               &             \multicolumn{3}{c}{$92.88 \pm 0.07$}                          \\
			  &   & DSGD         & $90.94 \pm 0.15$          & $88.95 \pm 0.59$          & $54.66 \pm 3.58$          & $90.52 \pm 0.24$          & $89.22 \pm 0.35$          & $58.32 \pm 3.27$          \\
			  &   & DSGDm-N      & $92.53 \pm 0.27$          & $89.13 \pm 0.81$          & $57.19 \pm 2.65$          & $92.20 \pm 0.24$          & $90.19 \pm 0.54$          & $63.00 \pm 2.50$          \\
			  &   & \algoptsgdmn & $\textbf{92.65} \pm 0.17$ & $\textbf{91.21} \pm 0.28$ & $\textbf{58.16} \pm 3.32$ & $\textbf{92.52} \pm 0.09$ & $\textbf{91.20} \pm 0.16$ & $\textbf{64.32} \pm 2.43$ \\ \cmidrule(lr){2-9}
			& \multirow{4}{*}{ResNet-GN-20}            & SGDm-N (centralized) & \multicolumn{3}{c}{$88.06 \pm 1.12$}												               &            \multicolumn{3}{c}{$86.19 \pm 1.08$}              \\
			  &   & DSGD         & $86.86 \pm 0.37$          & $85.93 \pm 0.14$          & $73.14 \pm 3.92$          & $84.00 \pm 0.67$          & $82.98 \pm 0.47$          & $67.84 \pm 3.94$          \\
			  &   & DSGDm-N      & $89.86 \pm 0.15$          & $88.30 \pm 0.49$          & $71.86 \pm 2.22$          & $88.54 \pm 0.22$          & $86.36 \pm 0.55$          & $72.02 \pm 1.79$          \\
			  &   & \algoptsgdmn & $\textbf{90.18} \pm 0.44$ & $\textbf{89.68} \pm 0.41$ & $\textbf{82.78} \pm 2.05$ & $\textbf{88.58} \pm 0.09$ & $\textbf{88.19} \pm 0.30$ & $\textbf{83.60} \pm 1.83$ \\ \cmidrule(lr){2-9}
			& \multirow{4}{*}{ResNet-EvoNorm-20}       & SGDm-N (centralized) & \multicolumn{3}{c}{$92.18 \pm 0.19$}                                         &             \multicolumn{3}{c}{$91.92 \pm 0.33$}              \\
			  &   & DSGD         & $89.90 \pm 0.26$          & $88.88 \pm 0.26$          & $74.55 \pm 2.07$          & $89.95 \pm 0.23$          & $88.41 \pm 0.27$          & $77.56 \pm 1.65$          \\
			  &   & DSGDm-N      & $91.47 \pm 0.23$          & $89.98 \pm 0.10$          & $77.48 \pm 2.67$          & $91.17 \pm 0.11$          & $89.96 \pm 0.35$          & $80.59 \pm 2.32$          \\
			  &   & \algoptsgdmn & $\textbf{91.90} \pm 0.17$ & $\textbf{91.28} \pm 0.38$ & $\textbf{82.20} \pm 1.27$ & $\textbf{91.51} \pm 0.02$ & $\textbf{91.00} \pm 0.24$ & $\textbf{85.19} \pm 0.98$ \\ \cmidrule(lr){2-9}
			& \multirow{3}{*}{ \parbox{3.5cm}{ \centering VGG-11 \\ (w/o normalization layer) } }                  & SGDm-N (centralized) & \multicolumn{3}{c}{$88.87 \pm 0.29$}                                         &             \multicolumn{3}{c}{$87.38 \pm 0.39$}              \\
			  &   & DSGDm-N      & $88.68 \pm 0.30$          & $88.52 \pm 0.24$          & $77.45 \pm 3.15$          & $86.39 \pm 0.06$          & $85.85 \pm 0.22$          & $77.02 \pm 2.66$          \\
			  &   & \algoptsgdmn & $\textbf{89.01} \pm 0.04$ & $\textbf{89.00} \pm 0.22$ & $\textbf{83.41} \pm 2.20$ & $\textbf{86.87} \pm 0.60$ & $\textbf{86.09} \pm 0.30$ & $\textbf{84.86} \pm 0.58$ \\
			\bottomrule
		\end{tabular}%
	}
	\vspace{-.5em}
\end{table*}

\vspace{-1mm}
\paragraph{Heterogeneous distribution of client data.}
We use the Dirichlet distribution to create disjoint non-\iid
client training data~\citep{yurochkin2019bayesian,hsu2019measuring,he2020fedml}---the created client data
is fixed and never shuffled across clients during the training.
The degree of non-i.i.d.-ness is controlled by the value of $\alpha$;
the smaller $\alpha$ is, the more likely the clients hold examples from only one class.
An illustration regarding how samples are distributed among $16$ clients
on CIFAR-10 can be found in Figure~\ref{fig:resnet20_cifar10_motivation_example};
more visualizations on other datasets/scales
are shown in Appendix~\ref{appendix:non_iid_complete_visualization}.
Besides, Figure~\ref{fig:nx_davis_southern_women_graph}
in Appendix~\ref{appendix:visualize_communication_topologies}
visualizes the Social Network topology.
\looseness=-1

\vspace{-1mm}
\paragraph{Training schemes.}
Following the SOTA deep learning training scheme,
we use mini-batch SGD as the base optimizer for CV benchmark~\citep{he2016deep,goyal2017accurate},
and similarly, Adam for NLP benchmark~\citep{zhang2019adam,mosbach2021on}.
In Section~\ref{sec:main_results},
we adapt these base optimizers to different distributed variants\footnote{
	We by default use local momentum variants without buffer synchronization.
	We consider DSGDm-N as our primary competitor for CNNs,
	as Nesterov momentum is the SOTA training scheme.
	We also investigate the performance of DSGDm
	in Table~\ref{tab:ablation_study_compare_with_other_momentum_SGDs}.
}.

For the CV benchmark,
the models are trained for $300$ and $90$
epochs for CIFAR-10 and ImageNet(-32) respectively;
the local mini-batch size are set to $32$ and $64$.
All experiments use the SOTA learning rate scheme
in distributed deep learning training~\citep{goyal2017accurate,he2019bag}
with learning rate scaling and warm-up.
The learning rate is always gradually warmed up
from a relatively small value (i.e.\ $0.1$) for the first $5$ epochs.
Besides, the learning rate will be divided by $10$
when the model has accessed specified fractions of
the total number of training samples---%
$\{ \frac{1}{2}, \frac{3}{4} \}$ for CIFAR and $\{ \frac{1}{3}, \frac{2}{3}, \frac{8}{9} \}$ for ImageNet.
\\
For the NLP benchmark, we fine-tune the \textit{distilbert-base-uncased}
from HuggingFace~\citep{wolf2019huggingface}
with constant learning rate and mini-batch of size $32$ for $10$ epochs.

We fine-tune the learning rate for both CV\footnote{
	We tune the initial learning rate and warm it up from $0.1$
	(if the tuned one is above $0.1$).
} and NLP tasks; we use constant weight decay (1e{-}4).
The tuning procedure ensures that the best hyper-parameter
lies in the middle of our search grids; otherwise we extend our search grid.\\
Regarding momentum related hyper-parameters,
we follow the common practice in the community
($\beta \!=\! 0.9$ and without dampening for Nesterov/HeavyBall momentum variants,
and $\beta_1 \!=\! 0.9, \beta_2 \!=\! 0.99$ for Adam variants).

\paragraph{BN and its alternatives for distributed deep learning.}
The existence of BN layer is challenging for the SOTA distributed training,
especially for heterogeneous data setting.
To better understand the impact of different normalization schemes in distributed deep learning,
we investigate:
\begin{itemize}[nosep,leftmargin=12pt]
	\item Distributed BN implementation.
	      Our default implementation\footnote{
		      We also try the BN variant~\citep{li2021fedbn} proposed for FL,
		      but we exclude it in our comparison due to its poor performance.
	      } follows~\citep{goyal2017accurate,andreux2020siloed}
	      that computes the BN statistics independently for each client
	      while only synchronizing the BN weights.

	\item Using other normalization layers:
	      for instance on ResNet with BN layers (denoted by ResNet-BN-20),
	      we can instead use ResNet-GN
	      by replacing all BN with GN with group number of $2$,
	      as suggested in~\citep{hsieh2020non}.
	      We also examine the recently proposed S0 variant of EvoNorm~\citep{liu2020evolving} (which does not use runtime mini-batches statistics), noted as ResNet-EvoNorm.
\end{itemize}

\begin{table*}[!t]
	\centering
	\caption{\small
		\textbf{Comparison with Gradient Tracking (GT) methods} for training CIFAR-10.
		D$^2$ and D$^2_+$ do not include the momentum acceleration.
		We carefully tune the learning rate for each case, and results are averaged over three seeds where std is indicated.
	}
	\label{tab:comparison_with_gt_methods}
	\vspace{-1.em}
	\resizebox{1.\textwidth}{!}{%
		\begin{tabular}{cccccccccc}
			\toprule
			& \multicolumn{6}{c}{ResNet-EvoNorm-20 on Ring ($n\!=\!16$)} & \multicolumn{3}{c}{ResNet-EvoNorm-20 on Ring ($n\!=\!32$)} \\ \cmidrule(lr){2-7} \cmidrule(lr){8-10}
			                   & DSGD (w/ GT)     & DSGDm-N          & DSGDm-N (w/ GT)  & D$^2$   & D$^2_+$          & QG-DSGDm-N                & DSGDm-N          & DSGDm-N (w/ GT)  & QG-DSGDm-N                \\
			\midrule
			$\alpha \!=\! 1$   & $87.36 \pm 0.40$ & $89.98 \pm 0.10$ & $90.38 \pm 0.41$ & $74.89$ & $85.70 \pm 0.29$ & $\textbf{91.28} \pm 0.38$ & $88.46 \pm 0.29$ & $89.44 \pm 0.60$ & $\textbf{90.27} \pm 0.27$ \\
			$\alpha \!=\! 0.1$ & $66.16 \pm 1.05$ & $77.48 \pm 2.67$ & $78.64 \pm 1.84$ & $49.80$ & $69.18 \pm 3.30$ & $\textbf{82.20} \pm 1.27$ & $78.17 \pm 1.63$ & $79.25 \pm 2.17$ & $\textbf{83.18} \pm 1.11$ \\
			\bottomrule
		\end{tabular}%
	}
	\vspace{-0.5em}
\end{table*}

\subsection{Results} \label{sec:main_results}
\paragraph{Comments on BN and its alternatives.}
Table~\ref{tab:cv_main_results} and Table~\ref{tab:cv_imagenet_main_results}
examine the effects of BN and its alternatives
on the training quality of decentralized deep learning on CIFAR-10 and ImageNet dataset.
\emph{ResNet with EvoNorm replacement outperforms its GN counterpart}
on a spectrum of optimization algorithms, non-\iid degrees, and network topologies,
\emph{illustrating its efficacy to be a new alternative to BN in CNNs for distributed learning on heterogeneous data.} \looseness=-1

\paragraph{Superior performance of \qg.}
We evaluate \algoptsgdmn and compare it with several DSGD variants
in Table~\ref{tab:cv_main_results},
for training different neural networks on CIFAR-10 in terms of different non-\iid degrees
on Ring ($n\!=\!16$) and Social Network ($n \!=\! 32$).
\emph{\algoptsgdmn accelerates the training
	by stabilizing the oscillating optimization trajectory caused by heterogeneity
	and leads to a significant performance gain over all other strong competitors on all levels of data heterogeneity.
	The benefits of our method are further pronounced when considering a higher degree of non-i.i.d.-ness.}
These observations are consistent with the results on the challenging ImageNet(-32) dataset
in Table~\ref{tab:cv_imagenet_main_results}
(and the learning curves in Figure~\ref{fig:learning_curves_cv_tasks} in Appendix~\ref{appendix:learning_curves_cv_task}).

\begin{table}[!h]
	\centering
	\caption{\small
		\textbf{Test top-1 accuracy of different decentralized optimization algorithms
			evaluated on different degrees of non-\iid local ImageNet data}.
		The results are over three random seeds.
		We perform sufficient learning rate tuning on ImageNet-32 for each setup
		while we use the same one for ImageNet due to the computational feasibility.
		``$\star$'' indicates non-convergence.
	}
	\vspace{-1em}
	\label{tab:cv_imagenet_main_results}
	\resizebox{.475\textwidth}{!}{%
		\begin{tabular}{ccccc}
			\toprule
			\multirow{2}{*}{Datasets} & \multirow{2}{*}{ \parbox{2.cm}{ \centering Neural \\ Architectures } } &   \multirow{2}{*}{Methods}                 & \multicolumn{2}{c}{Ring ($n\!=\!16$)}  \\ \cmidrule(lr){4-5}
			  &   &              & $\alpha=1$                & $\alpha=0.1$              \\ \midrule
			\multirow{6}{*}{ \parbox{2.cm}{ \centering ImageNet-32 \\ (resolution 32) } } & \multirow{3}{*}{ \parbox{2.cm}{ \centering ResNet-20-x2 \\ (EvoNorm) } } & SGDm-N (centralized) & \multicolumn{2}{c}{$44.43 \pm 0.20$}                                                    \\
			  &   & DSGDm-N      & $30.35 \pm 0.05$          & $16.71 \pm 0.17$          \\
			  &   & \algoptsgdmn & $\textbf{31.24} \pm 0.27$ & $\textbf{19.53} \pm 0.91$ \\ \cmidrule(lr){2-5}
			& \multirow{3}{*}{ \parbox{2.cm}{ \centering ResNet-20-x2 \\ (GN) }  }      & SGDm-N (centralized) & \multicolumn{2}{c}{$37.89 \pm 0.67$}                                                      \\
			  &   & DSGDm-N      & $34.16 \pm 1.37$          & $\star$                   \\
			  &   & \algoptsgdmn & $\textbf{38.57} \pm 0.45$ & $\textbf{21.42} \pm 0.81$ \\ \midrule
			\multirow{6}{*}{ImageNet} & \multirow{3}{*}{ \parbox{2.cm}{ \centering ResNet-18 \\ (EvoNorm) } } & SGDm-N (centralized) & \multicolumn{2}{c}{$69.55 \pm 0.25$}                                                    \\
			  &   & DSGDm-N      & $68.77 \pm 0.05$          & $53.15 \pm 0.14$          \\
			  &   & \algoptsgdmn & $\textbf{69.20} \pm 0.08$ & $\textbf{56.50} \pm 0.01$ \\ \cmidrule(lr){2-5}
			& \multirow{3}{*}{ \parbox{2.cm}{ \centering ResNet-18 \\ (GN) } }      & SGDm-N (centralized) & \multicolumn{2}{c}{$62.59 \pm 0.01$}                                                      \\
			  &   & DSGDm-N      & $60.76 \pm 0.48$          & $39.57 \pm 1.22$          \\
			  &   & \algoptsgdmn & $\textbf{64.92} \pm 0.27$ & $\textbf{47.86} \pm 1.05$ \\
			\bottomrule
		\end{tabular}%
	}
	\vspace{-0.1em}
\end{table}

\begin{table}[!h]
	\centering
	\caption{\small
		\textbf{Test top-1 accuracy of different decentralized SGD algorithms evaluated on different degrees of non-i.i.d.-ness and communication topologies}, for training ResNet-EvoNorm-18 on ImageNet.
		The results are over three random seeds.
		We use the same learning rate for different experiments due to the computational feasibility.
		Centralized SGDm-N reaches $69.55 \pm 0.25$.
	}
	\vspace{-1em}
	\label{tab:cv_imagenet_main_results_other_topologies}
	\resizebox{.475\textwidth}{!}{%
		\begin{tabular}{cccc}
			\toprule
			\multirow{2}{*}{Communication Topology}  &   \multirow{2}{*}{Methods}                 & \multicolumn{2}{c}{Test Top-1 Accuracy}  \\ \cmidrule(lr){3-4}
			                                   &              & $\alpha=1$                & $\alpha=0.1$              \\ \midrule
			\multirow{2}{*}{Ring ($n\!=\!16$)} & DSGDm-N      & $68.77 \pm 0.05$          & $53.15 \pm 0.14$          \\
			                                   & \algoptsgdmn & $\textbf{69.20} \pm 0.08$ & $\textbf{56.50} \pm 0.01$ \\ \midrule
			\multirow{2}{*}{ \parbox{5.cm}{ 1-peer directed exponential graph \\ ($n\!=\!16$)~\cite{assran2019stochastic} } } & DSGDm-N      & $69.00 \pm 0.11$                     & $58.52 \pm 0.27$                     \\
			                                   & \algoptsgdmn & $\textbf{69.34} \pm 0.17$ & $\textbf{61.44} \pm 0.20$ \\
			\bottomrule
		\end{tabular}%
	}
	\vspace{-0.5em}
\end{table}

\vspace{-1mm}
\paragraph{Decentralized Adam.}
We further extend the idea of \qg to the Adam optimizer for decentralized learning, noted as \algopadam (the algorithm details are deferred to Algorithm~\ref{alg:algopadam} in Appendix~\ref{appendix:detailed_algorithm}).
We validate the effectiveness of \algopadam over D(decentralized)Adam in Table~\ref{tab:nlp_main_results},
on fine-tuning DistilBERT on AG News and training ResNet-EvoNorm-20 on CIFAR-10 from scratch: \emph{\algopadam is still preferable over DAdam}.
We leave a better adaptation and theoretical proof for future work.

\vspace{-1mm}
\paragraph{Generalizing \qg to time-varying topologies.}
The benefits of \qg are not limited to the fixed and undirected communication topologies, e.g.\ Ring and Social network in Table~\ref{tab:cv_main_results}---it also generalizes to other topologies, like the time-varying directed topology~\citep{assran2019stochastic}, as shown in Table~\ref{tab:cv_imagenet_main_results_other_topologies} for training ResNet-EvoNorm-18 on ImageNet.
These results are aligned with the findings on the influence of the consensus distance on the generalization performance of decentralized deep learning~\citep{kong2021consensus}, supporting the fact that \emph{\qg can be served as a simple plugin to further improve the performance of decentralized deep learning.}

\begin{table*}[!t]
	\centering
	\caption{\small
		\textbf{An extensive investigation for a wide spectrum of DSGD variants},
		for training ResNet-EvoNorm-20 on CIFAR-10.
		The results are averaged over three seeds, each with learning rate tuning.
		We use ``communication topology'' to synchronize the model parameters,
		while some methods involve ``extra communication'',
		with specified objective to be communicated on the given network topology.\looseness=-1
	}
	\vspace{-1em}
	\label{tab:ablation_study_compare_with_other_momentum_SGDs}
	\resizebox{.85\textwidth}{!}{%
		\begin{tabular}{cccccc}
			\toprule
			\multirow{2}{*}{Methods}          &      \multirow{2}{*}{ \parbox{2.cm}{ \centering Communication \\ Topology }}  &      \multirow{2}{*}{ \parbox{2.cm}{ \centering Extra \\ Communication} } & \multirow{2}{*}{ \parbox{2.cm}{ \centering Momentum \\ Type} } & \multicolumn{2}{c}{Test Top-1 Accuracy ($n\!=\!16$)} \\ \cmidrule(lr){5-6}
			             &          &                             &                 & $\alpha=1$                & $\alpha=0.1$              \\ \midrule
			SGDm-N & complete & - & global &             \multicolumn{2}{c}{$92.18 \pm 0.19$}                                        \\
			\midrule
			DSGDm-N      & complete & -                           & local           & $91.47 \pm 0.10$          & $71.24 \pm 3.08$          \\
			DSGDm-N      & ring     & momentum buffer (complete)  & local           & $90.96 \pm 0.33$          & $81.22 \pm 1.78$          \\
			SlowMo       & ring     & model parameters (complete) & local \& global & $91.06 \pm 0.26$          & $79.20 \pm 1.16$          \\
			\midrule
			DSGD         & ring     & -                           & -               & $88.88 \pm 0.26$          & $74.55 \pm 2.07$          \\
			DSGDm        & ring     & -                           & local           & $89.67 \pm 0.33$          & $77.66 \pm 0.95$          \\
			DSGDm-N      & ring     & -                           & local           & $89.98 \pm 0.10$          & $77.48 \pm 2.67$          \\
			DSGDm        & ring     & momentum buffer (ring)      & local           & $90.42 \pm 0.32$          & $78.69 \pm 2.39$          \\
			DSGDm-N      & ring     & momentum buffer (ring)      & local           & $90.48 \pm 0.67$          & $79.83 \pm 2.29$          \\
			DSGDm-N      & ring     & local gradients (ring)      & local           & $90.10 \pm 0.61$          & $78.58 \pm 4.12$          \\
			DMSGD        & ring     & -                           & local           & $90.06 \pm 0.04$          & $79.89 \pm 0.97$          \\
			\midrule
			\algoptsgdm  & ring     & -                           & local           & $91.22 \pm 0.41$          & $\textbf{82.24} \pm 1.05$ \\
			\algoptsgdmn & ring     & -                           & local           & $\textbf{91.28} \pm 0.38$ & $82.20 \pm 1.27$          \\
			\bottomrule
		\end{tabular}%
	}
	\vspace{-0.5em}
\end{table*}

\begin{table}[!h]
	\centering
	\caption{\small
		\textbf{Test accuracy of different decentralized optimization algorithms (with Adam), evaluated on different degrees of non-\iid local data}.
		The results are over three random seeds, with tuned learning rate.
	}
	\vspace{-1em}
	\label{tab:nlp_main_results}
	\resizebox{.475\textwidth}{!}{%
		\begin{tabular}{ccc}
			\toprule
			Models \& Datasets & Methods    & $\alpha=0.1$              \\ \midrule
			\multirow{2}{*}{ \parbox{5.cm}{ \centering Fine-tuning DistilBERT-base \\ (AG News) } }  & DAdam                & $ 87.29 \pm 0.60 $        \\
			                   & \algopadam & $\textbf{88.33} \pm 0.67$ \\ \midrule
			\multirow{2}{*}{ \parbox{5.cm}{ \centering Training ResNet-EvoNorm-20 \\ from scratch (CIFAR-10) } } & DAdam                 & $65.52 \pm 3.32$          \\
			                   & \algopadam & $\textbf{66.86} \pm 2.81$ \\
			\bottomrule
		\end{tabular}%
	}
	\vspace{-0.5em}
\end{table}

\vspace{-2mm}
\paragraph{Comparison with D$^2$ and Gradient Tracking (GT).}
As shown in Table~\ref{tab:comparison_with_gt_methods}, D$^2$~\citep{tang2018d} and GT methods~\citep{pu2020distributed,pan2020d,lu2019gnsd} cannot achieve comparable test performance on the standard deep learning benchmark, while \algoptsgdmn outperforms them significantly. Additional detailed comparisons are deferred to Appendix~\ref{appendix:comparison_with_gradient_tracking}.\\
It is non-trivial to integrate D$^2$ with momentum.
Besides, D$^2$ requires constant learning rate, which does not fit the SOTA learning rate schedules (e.g.\ stage-wise) in deep learning\footnote{D$^2$ can be rewritten as $\scriptstyle \mW ( \mX^{(t)} - \eta ( (\mX^{(t-1)} - \mX^{(t)} ) / \eta + \nabla f(\mX^{(t)}) - \nabla f(\mX^{(t-1)}) ) )$, and the update would break if the magnitude of $\scriptstyle \mX^{(t-1)} - \mX^{(t)}$ is a factor of $10 \eta$ (i.e.\ performing learning rate decay at step $t$).}.
We include an improved D$^2$ variant\footnote{The update scheme of D$^2_+$ follows $\scriptstyle \mW ( \mX^{(t)} - \eta^{(t)} ( (\mX^{(t-1)} - \mX^{(t)} ) / \eta^{(t-1)} + \nabla f(\mX^{(t)}) - \nabla f(\mX^{(t-1)}) ) )$.} (denoted as D$^2_+$) to address this learning rate decay issue in D$^2$, but the performance of D$^2_+$ still remains far behind our scheme.

\vspace{-1mm}
\paragraph{Ablation study.}
Table~\ref{tab:ablation_study_compare_with_other_momentum_SGDs}
empirically investigates a wide range of different DSGD variants,
in terms of the generalization performance on different degrees of data heterogeneity.
We can witness that
(1) DSGD variants with \qg always significantly surpass all other methods
(excluding the centralized upper bound), without introducing extra communication cost;
(2) local momentum accelerates the decentralized optimization
(c.f.\ the results of DSGD v.s.\ DSGDm and DSGDm-N),
while our \qg further improves the performance gain;
(3) synchronizing local momentum buffer or local gradients
only marginally improves the generalization performance, but the gains fall behind our \qg (as we accelerate the consensus and stabilize trajectories, as illustrated in Section~\ref{sec:understanding});
(4) the parallel work DMSGD\footnote{
	We tune both learning rate $\eta$ and weighting factor $\mu$
	(using the grid suggested in~\citet{balu2020decentralized}) for DMSGD (option I).
}~\citep{balu2020decentralized} does show some improvements,
but its performance gain is much less significant than ours.
Table~\ref{fig:tuning_momentum_factors} in the Appendix~\ref{appendix:tuning_lr_and_momentum_for_sgdmN}
further shows that tuning momentum factor for DSGDm-N cannot alleviate the training difficulty caused by data heterogeneity.

Besides, Figure~\ref{fig:cv_different_ring_scale_results} showcases the generality of \qg for achieving remarkable performance gain on different topology scales and non-i.i.d.\ degrees.

\begin{figure}[!h]
	\centering
	\vspace{-1em}
	\subfigure[$\alpha\!=\!1$.]{
		\includegraphics[width=.225\textwidth,]{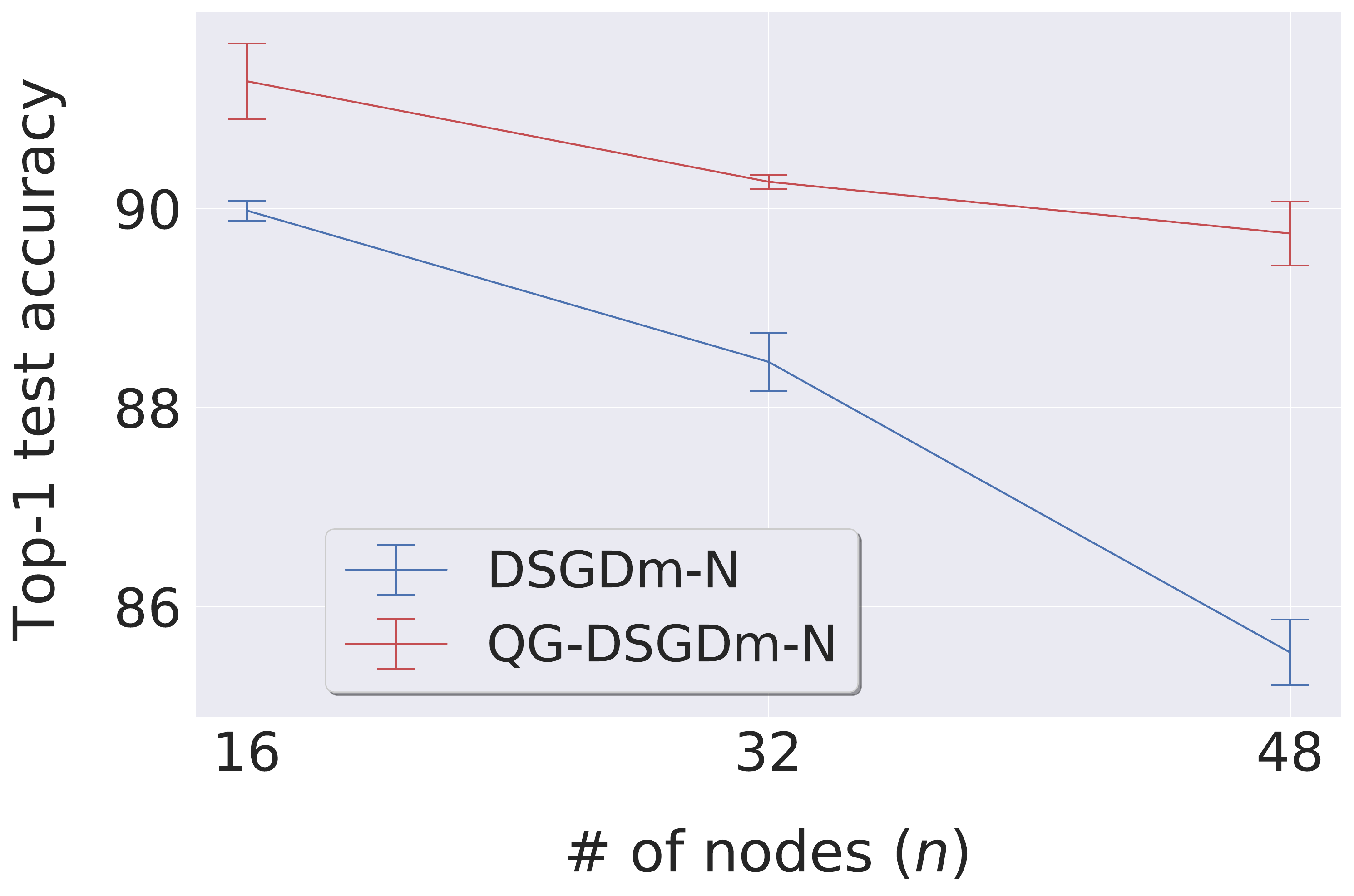}
		\label{fig:perf_vs_scales_resnet_evonorm_20_cifar10_alpha1}
	}
	\subfigure[$\alpha\!=\!0.1$.]{
		\includegraphics[width=.225\textwidth,]{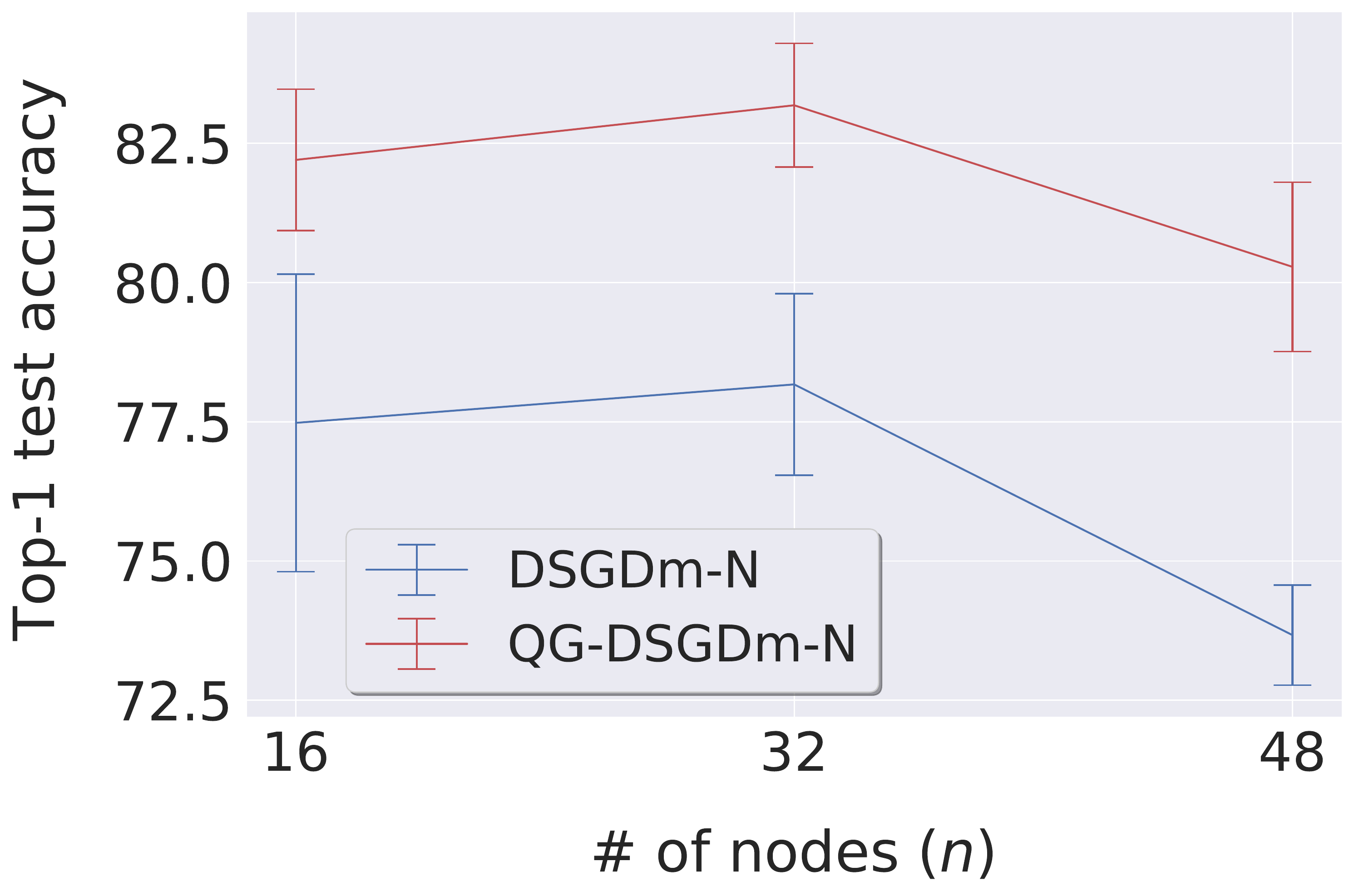}
		\label{fig:perf_vs_scales_resnet_evonorm_20_cifar10_alpha01}
	}
	\vspace{-1em}
	\caption{\small
		\textbf{Test top-1 accuracy of different decentralized algorithms
			evaluated on different topology scales and non-i.i.d.\ degrees},
		for training ResNet-EvoNorm-20 on CIFAR-10.
		The results are over three random seeds,
		each with sufficient learning rate tuning.
		Colors blue and red indicate DSGDm-N and \algoptsgdmn respectively.
		Numerical results refer to Table~\ref{tab:cv_different_ring_scale_results}
		in Appendix~\ref{appendix:cv_different_ring_scale_results}.
	}
	\label{fig:cv_different_ring_scale_results}
\end{figure}

\vspace{-0.5em}
\section*{Conclusion}
We demonstrated that heterogeneity has an out sized impact on the performance of deep learning models, leading to unstable convergence and poor performance.
We proposed a novel momentum-based algorithm to stabilize the training and established its efficacy through thorough empirical evaluations.
Our method, especially for mildly heterogeneous settings, leads to a 10--20\% increase in accuracy.
However, a gap still remains between the centralized training. Closing this gap, we believe, is critical for wider adoption of decentralized learning.

\section*{Acknowledgements}
We acknowledge funding from a Google Focused Research Award, Facebook, and European Horizon 2020 FET Proactive Project DIGIPREDICT.

\begin{small}
	\bibliography{icml2021}
	\bibliographystyle{configuration/icml2021}
\end{small}


\clearpage
\appendix
\onecolumn
{
	\hypersetup{linkcolor=black}
	\parskip=0em
	\renewcommand{\contentsname}{Contents of Appendix}
	\tableofcontents
	\addtocontents{toc}{\protect\setcounter{tocdepth}{3}}
}

\clearpage 

\section{Detailed Experimental Setup} \label{appendix:detailed_exp_setup}

\subsection{Visualization for Communication Topologies} \label{appendix:visualize_communication_topologies}
Figure~\ref{fig:nx_davis_southern_women_graph} visualizes the Social Network topology
we evaluated in the main paper.

\begin{figure*}[!h]
	\centering
	\includegraphics[width=.7\textwidth,]{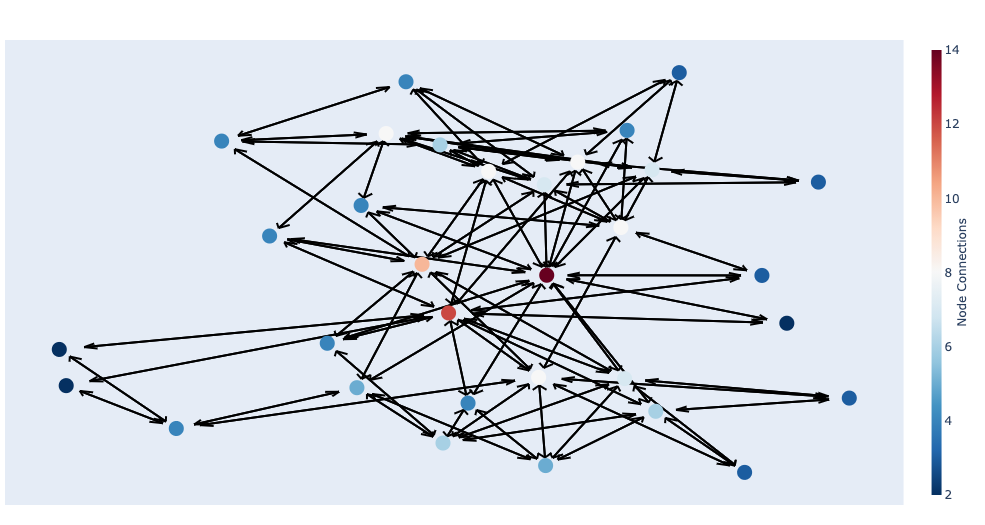}
	\caption{\small
		The visualization of the examined social topology
		(generated from ``networkx.generators.social.davis\_southern\_women\_graph'').
	}
	\label{fig:nx_davis_southern_women_graph}
\end{figure*}

\subsection{Visualization of Non-IID Local Data} \label{appendix:non_iid_complete_visualization}
\paragraph{The synthetic formulation of non-\iid client data.}
We re-iterate the partition scheme introduced and stated
in~\citep{yurochkin2019bayesian,hsu2019measuring} for completeness reasons.

Assume every client training example is drawn
independently with class labels following a categorical distribution
over $M$ classes parameterized by a vector $\qq$ ($q_i \geq 0, i \in [1, M]$ and $\norm{\qq}_1 = 1$).
To synthesize client non-\iid local data distributions,
we draw $\alpha \sim \text{Dir} (\alpha \pp)$ from a Dirichlet distribution,
where $\pp$ characterizes a prior class distribution over $M$ classes,
and $\alpha > 0$ is a concentration parameter controlling the identicalness among clients.
With $\alpha \rightarrow \infty$, all clients have identical distributions to the prior;
with $\alpha \rightarrow 0$, each client holds examples from only one random class.

To better understand the local data distribution for the datasets we considered in the experiments,
in Figure~\ref{fig:illustration_noniid_for_cv_datasets}
we visualize the partition results of CIFAR-10 and ImageNet(-32)
for various degrees of non-i.i.d.-ness and network scales;
in Figure~\ref{fig:illustration_noniid_for_nlp_datasets},
we visualize the partitioned local data on $16$ clients with $\alpha \!=\! \{ 10, 1, 0.1 \}$
for AG News and SST-2.

\begin{figure*}[!h]
	\centering
	\subfigure[\small
		CIFAR-10, $n\!=\!16$, $\alpha=10$.
	]{
		\includegraphics[width=.315\textwidth,]{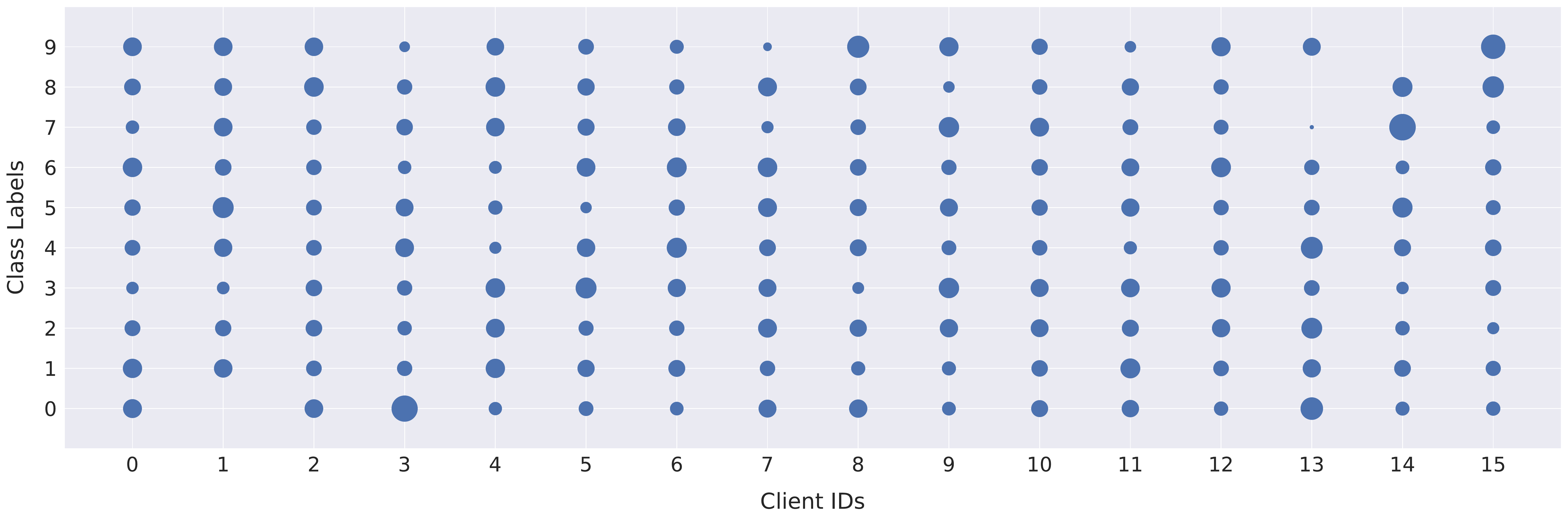}
		\label{fig:cifar10_non_iid_dirichlet_10_n16}
	}
	\hfill
	\subfigure[\small
		CIFAR-10, $n\!=\!16$, $\alpha=1$.
	]{
		\includegraphics[width=.315\textwidth,]{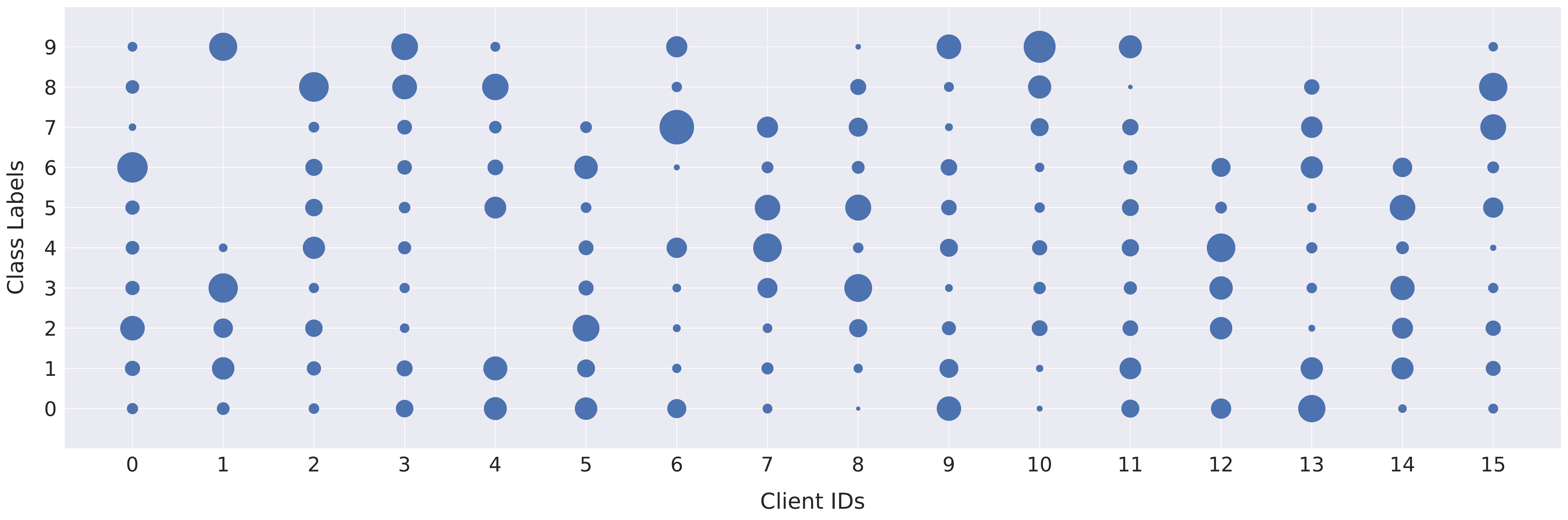}
		\label{fig:cifar10_non_iid_dirichlet_1_n16}
	}
	\hfill
	\subfigure[\small
		CIFAR-10, $n\!=\!16$, $\alpha=0.1$.
	]{
		\includegraphics[width=.315\textwidth,]{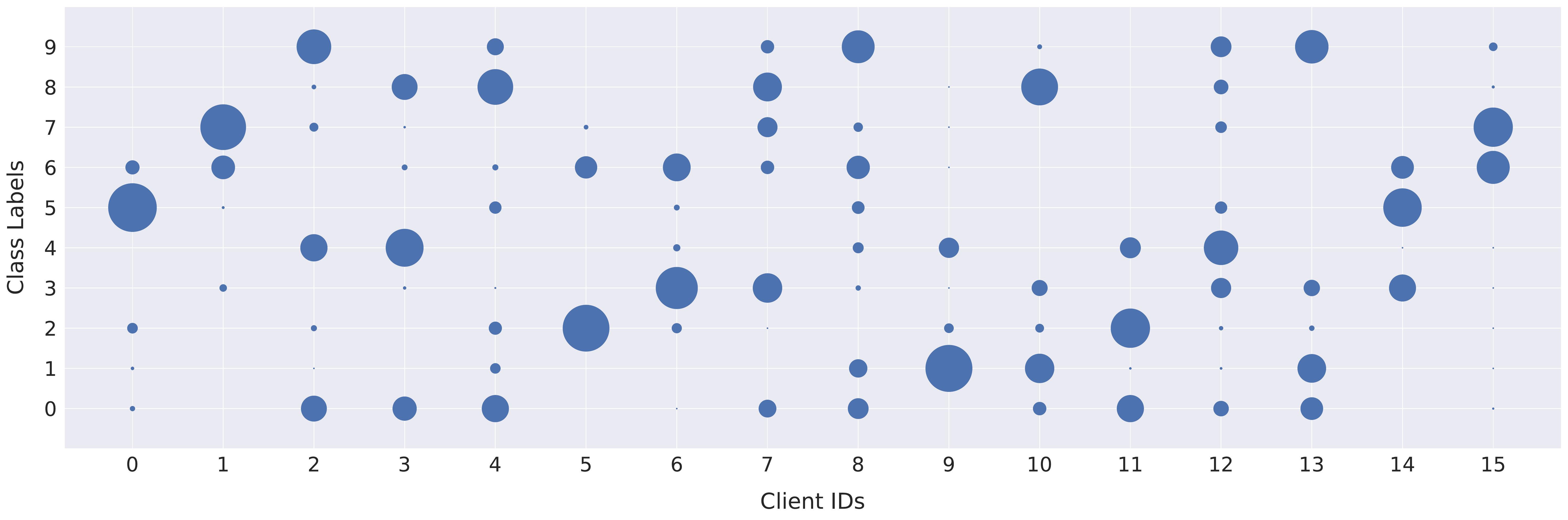}
		\label{fig:cifar10_non_iid_dirichlet_01_n16}
	}
	\hfill
	\subfigure[\small
		CIFAR-10, $n\!=\!32$, $\alpha=1$.
	]{
		\includegraphics[width=.475\textwidth,]{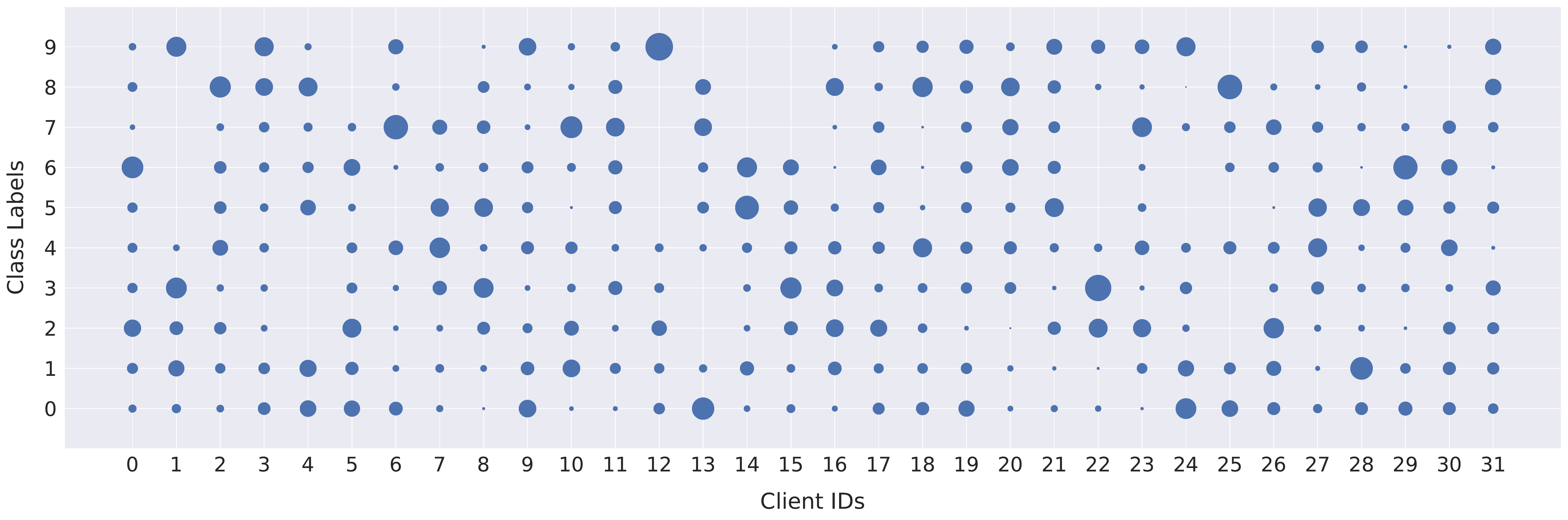}
		\label{fig:cifar10_non_iid_dirichlet_1_n32}
	}
	\hfill
	\subfigure[\small
		CIFAR-10, $n\!=\!32$, $\alpha=0.1$.
	]{
		\includegraphics[width=.475\textwidth,]{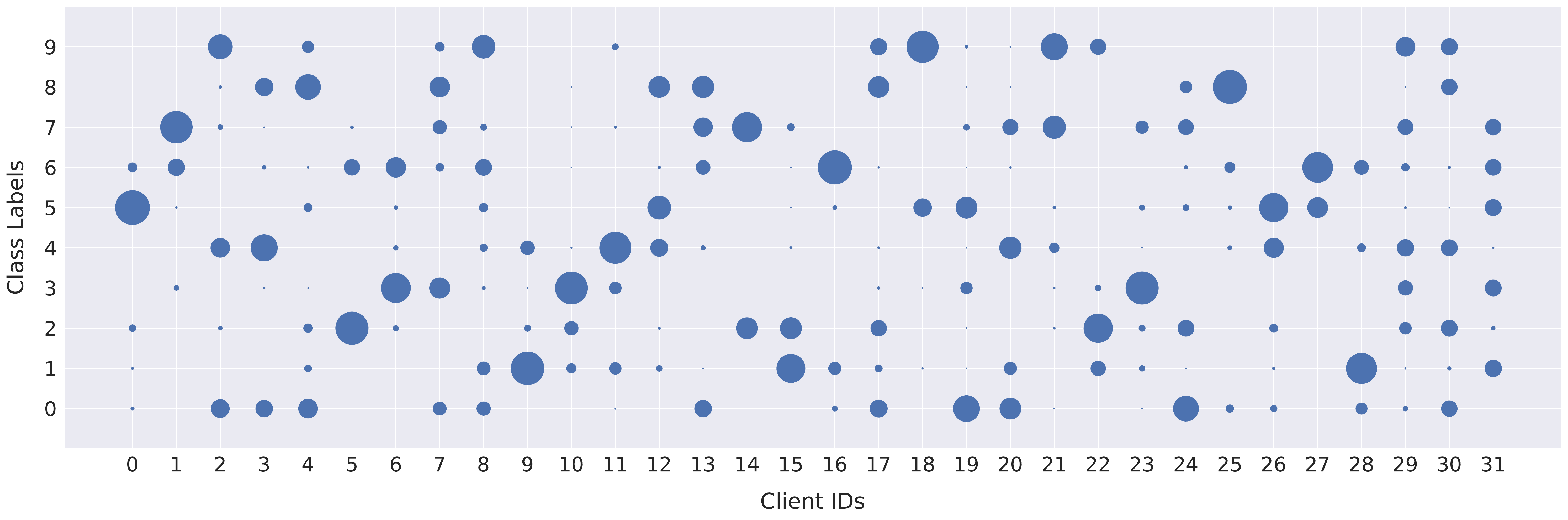}
		\label{fig:cifar10_non_iid_dirichlet_01_n32}
	}
	\hfill
	\subfigure[\small
		CIFAR-10, $n\!=\!48$, $\alpha=1$.
	]{
		\includegraphics[width=.475\textwidth,]{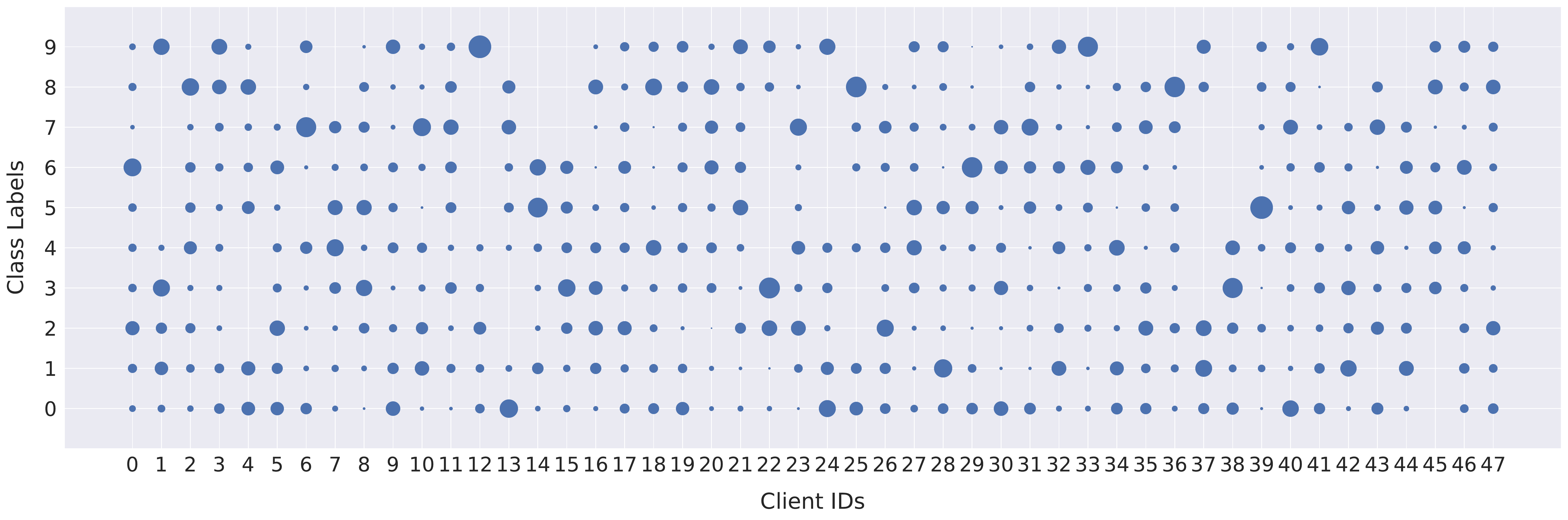}
		\label{fig:cifar10_non_iid_dirichlet_1_n48}
	}
	\hfill
	\subfigure[\small
		CIFAR-10, $n\!=\!48$, $\alpha=0.1$.
	]{
		\includegraphics[width=.475\textwidth,]{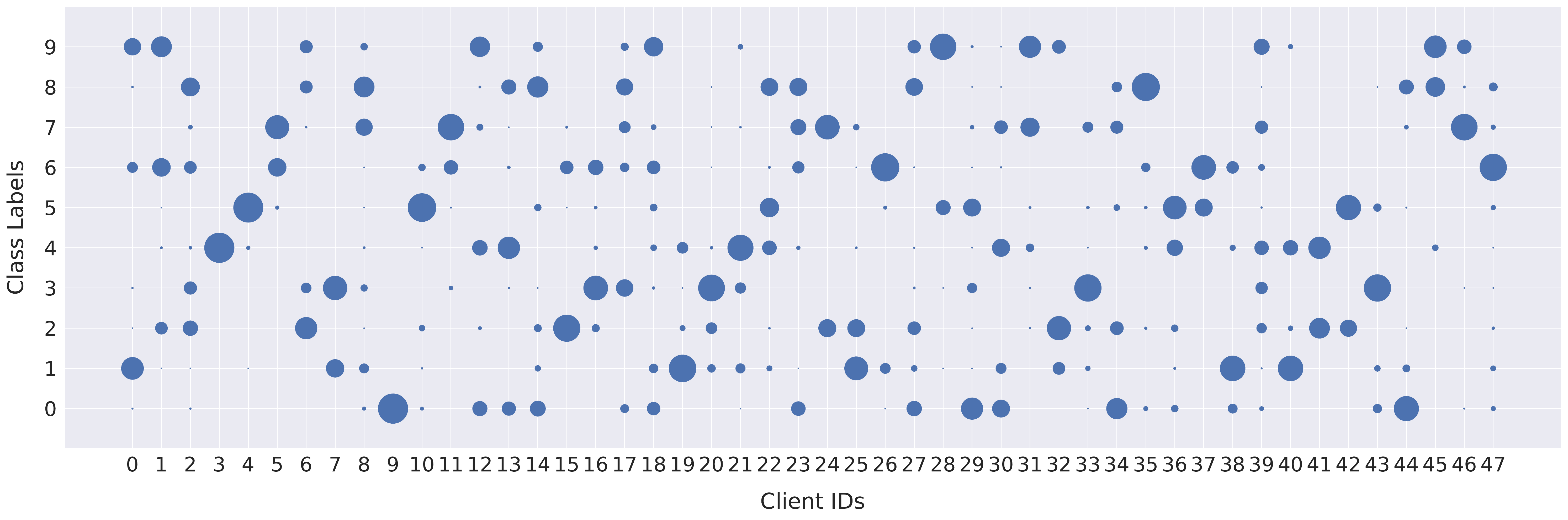}
		\label{fig:cifar10_non_iid_dirichlet_01_n48}
	}
	\hfill
	\subfigure[\small
		ImageNet, $n\!=\!16$, $\alpha=1$.
	]{
		\includegraphics[width=.475\textwidth,]{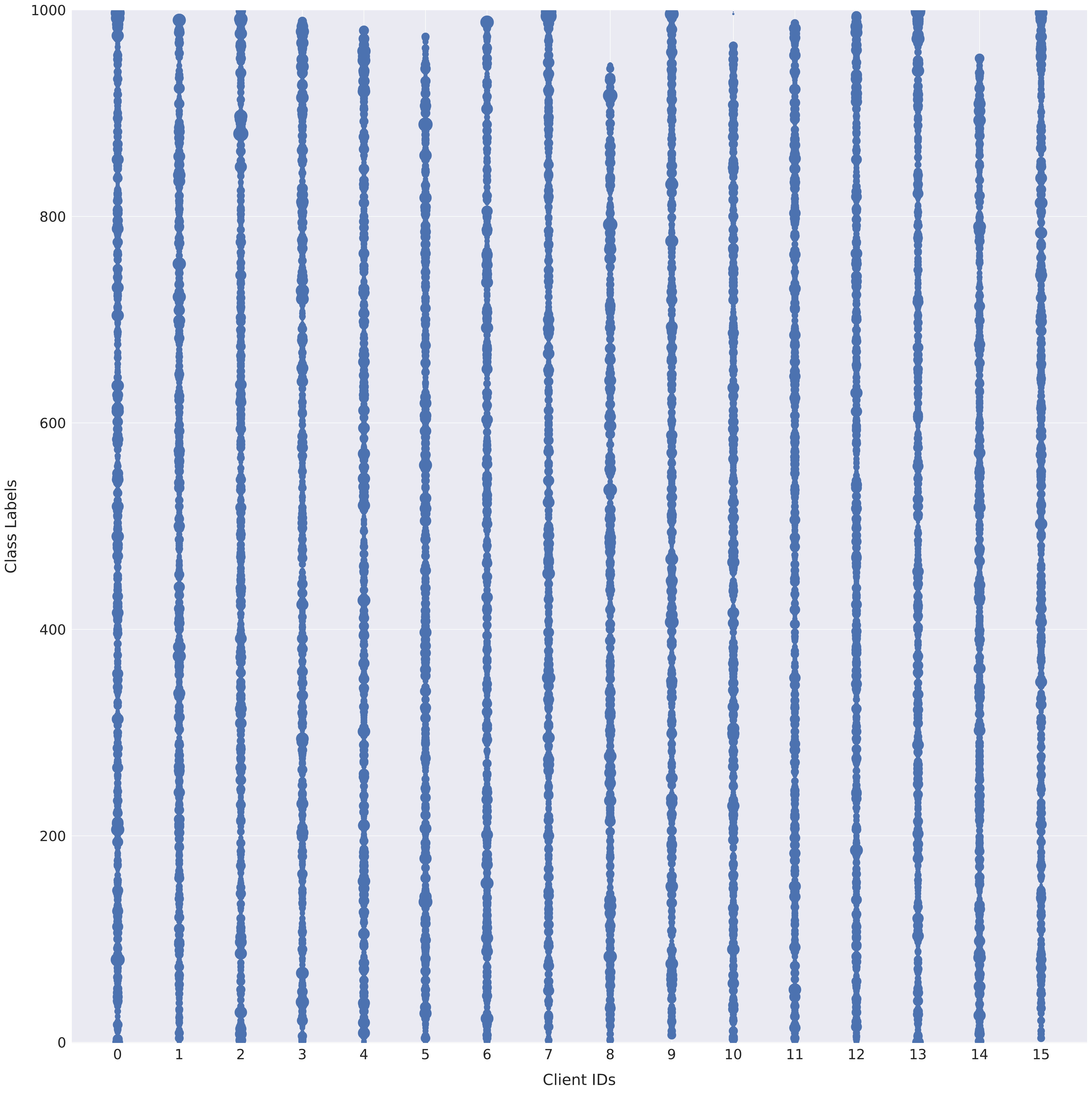}
		\label{fig:imagenet_non_iid_dirichlet_1_n16}
	}
	\hfill
	\subfigure[\small
		ImageNet, $n\!=\!16$, $\alpha=0.1$.
	]{
		\includegraphics[width=.475\textwidth,]{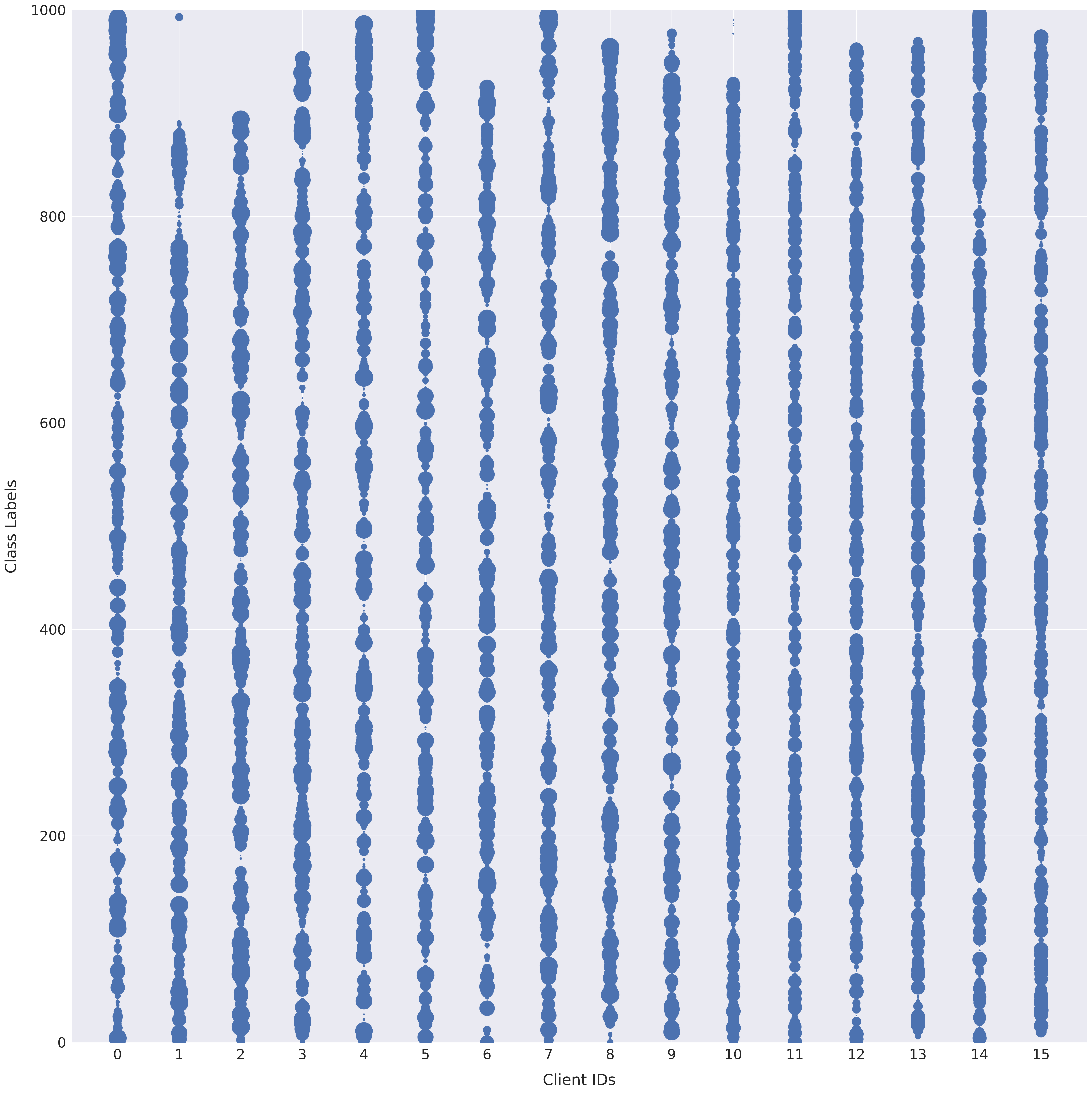}
		\label{fig:imagenet_non_iid_dirichlet_01_n16}
	}
	\vspace{-1em}
	\caption{\small
		Illustration of \# of samples per class allocated to each client (indicated by dot sizes),
		for different Dirichlet distribution $\alpha$ values on CV datasets.
	}
	\label{fig:illustration_noniid_for_cv_datasets}
\end{figure*}

\begin{figure*}[!h]
	\vspace{-1em}
	\centering
	\subfigure[\small
		AG News, $n\!=\!16$, $\alpha=10$.
	]{
		\includegraphics[width=.315\textwidth,]{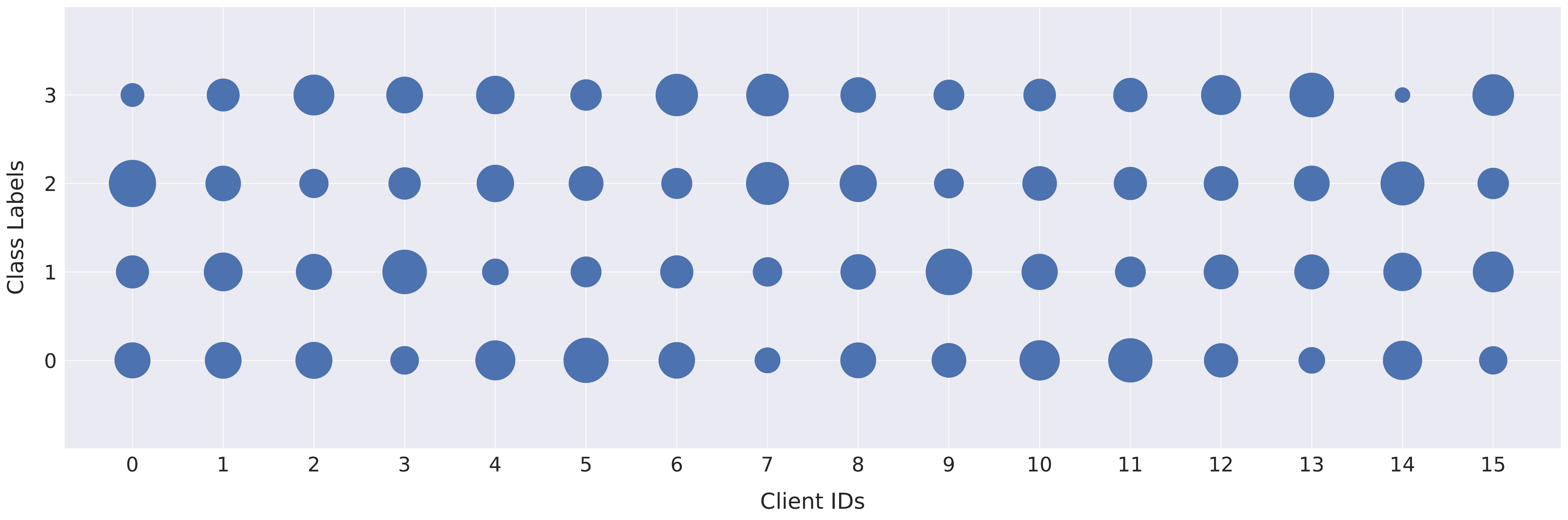}
		\label{fig:agnews_non_iid_dirichlet_10_n16}
	}
	\hfill
	\subfigure[\small
		AG News, $n\!=\!16$, $\alpha=1$.
	]{
		\includegraphics[width=.315\textwidth,]{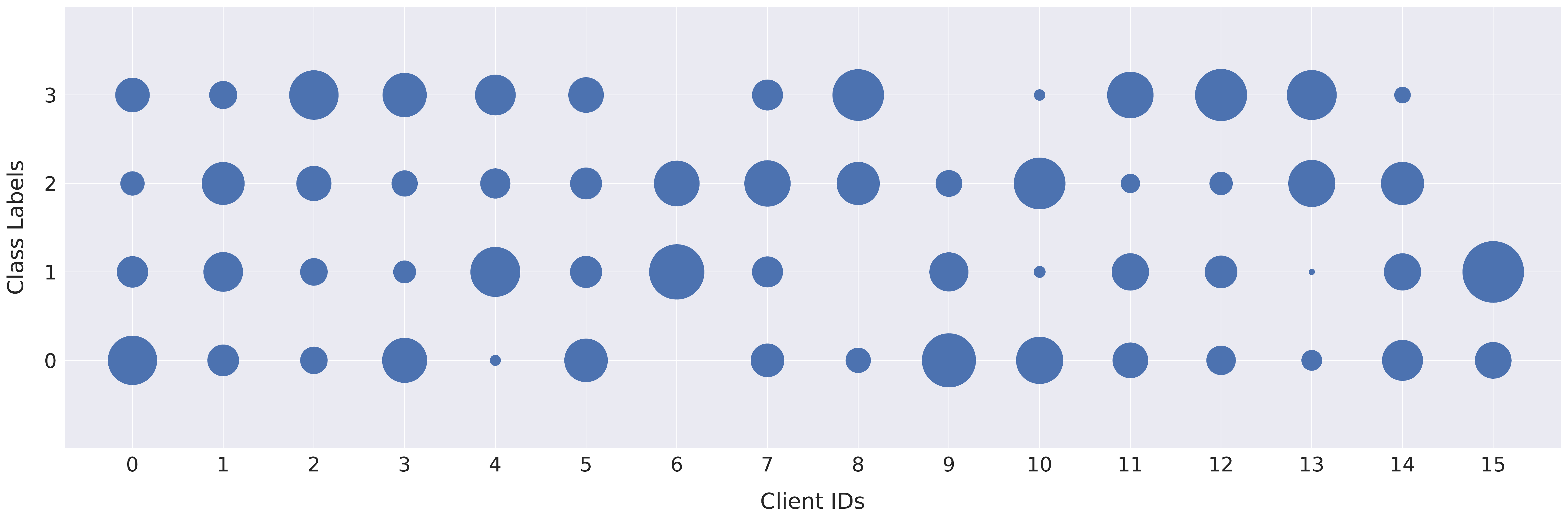}
		\label{fig:agnews_non_iid_dirichlet_1_n16}
	}
	\hfill
	\subfigure[\small
		AG News, $n\!=\!16$, $\alpha=0.1$.
	]{
		\includegraphics[width=.315\textwidth,]{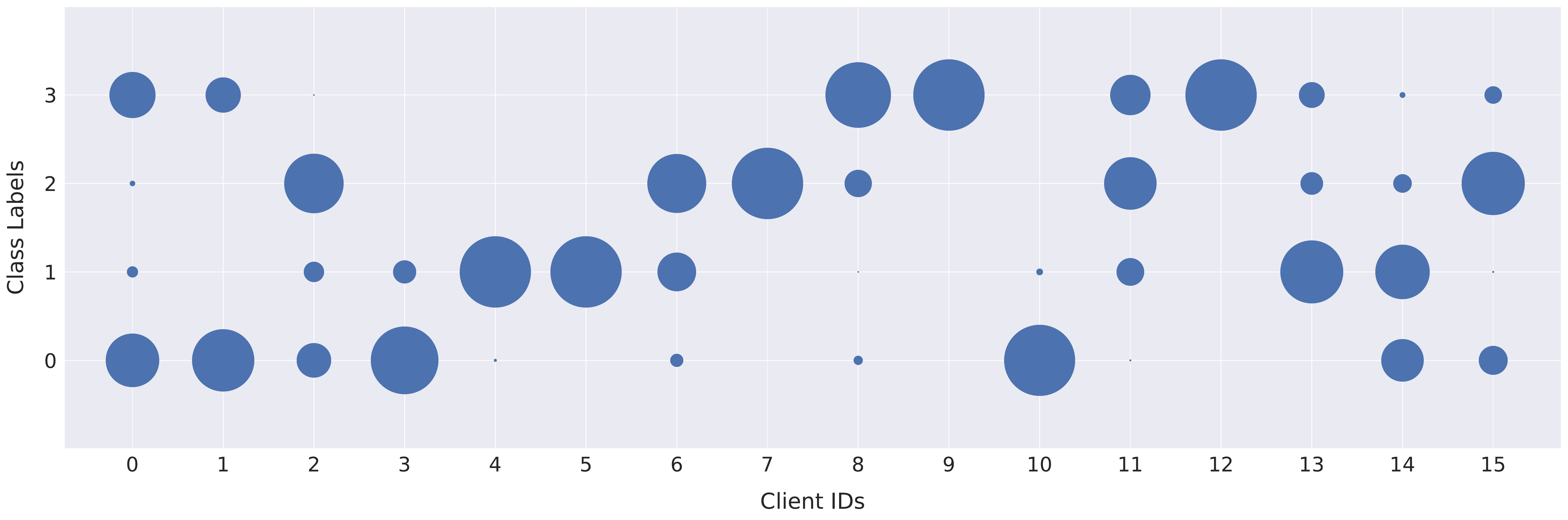}
		\label{fig:agnews_non_iid_dirichlet_01_n16}
	}
	\vspace{-1em}
	\caption{\small
		Illustration of \# of samples per class allocated to each client (indicated by dot sizes),
		for different Dirichlet distribution $\alpha$ values on NLP datasets.
	}
	\label{fig:illustration_noniid_for_nlp_datasets}
\end{figure*}

\clearpage
\section{Detailed Algorithm Description and Connections}
\subsection{Detailed Algorithm Description} \label{appendix:detailed_algorithm}
The variant of Adam with the idea of \qg is detailed in Algorithm~\ref{alg:algopadam}.

\begin{algorithm}[!h]
	\begin{algorithmic}[1]
		\Procedure{}{}
		\For{$t \in \{ 1, \ldots, T \}$}
		\myState{sample $\xi_{i}^{(t)}$ and compute $\gg_{i}^{(t)} = \nabla F_i(\xx_{i}^{(t)}, \xi_{i}^{(t)})$}
		\myState{$\mm_{i}^{(t)} = \beta_1 \hat \mm_{i}^{(t-1)} + (1 - \beta_1) \hat \gg_{i}^{(t)}$}
		\myState{$\vv_{i}^{(t)} = \beta_2 \hat \vv_{i}^{(t-1)} + (1 - \beta_2) \hat \gg_{i}^{(t)} \odot \hat \gg_{i}^{(t)}$}
		\myState{$\xx_{i}^{(t+\frac{1}{2})} = \xx_{i}^{(t)} - \eta \frac{ \mm_{i}^{(t)} }{ \sqrt{ \vv_{i}^{(t)} } + \epsilon }$}
		\myState{$\xx_{i}^{(t+1)} = \sum_{j \in \cN_{i}^{(t)}} w_{ij} \xx_{j}^{(t + \frac{1}{2})}$}
		\myState{$ \dd_i^{(t)} = \xx_{i}^{(t)} - \xx_{i}^{(t + 1)} $}
		\myState{$ \hat \dd_i^{(t)} = \frac{ \dd_i^{(t)} }{ \norm{ \dd_i^{(t)} }_2 } $}
		\myState{$ \hat \mm_{i}^{(t)} = \beta_1 \hat \mm_{i}^{(t - 1)} + (1 - \beta_1) \hat \dd_i^{(t)} $}
		\myState{$ \hat \vv_{i}^{(t)} = \beta_2 \hat \vv_{i}^{(t - 1)} + (1 - \beta_2) \hat \dd_i^{(t)} \odot \hat \dd_i^{(t)} $}
		\EndFor
		\myState{\Return $\xx_i^{(T)}$}
		\EndProcedure
	\end{algorithmic}

	\mycaptionof{algorithm}{\small
		\algopadam.
		$\hat \mm_{i}^{(0)}, \hat \vv_{i}^{(0)}$ are initialized as $\0$ for all workers.
	}
	\label{alg:algopadam}
\end{algorithm}

\begin{algorithm}[!h]
	\begin{algorithmic}[1]
		\Procedure{worker-$i$}{}
		\For{$t \in \{ 1, \ldots, T \}$}
		\myState{sample $\xi_{i}^{(t)}$ and compute $\gg_{i}^{(t)} = \nabla F_i(\xx_{i}^{(t)}, \xi_{i}^{(t)})$}
		\myState{ $ \mm_i^{(t)} = \beta \hat \mm_{i}^{(t-1)} + \gg_{i}^{(t)} $ }
		\myState{ $ \xx_{i}^{(t+\frac{1}{2})} = \xx_{i}^{(t)} - \eta \mm_i^{(t)} $}
		\myState{ $ \xx_{i}^{(t+1)} = \sum_{j \in \cN_{i}^{(t)}} w_{ij} \xx_{j}^{(t + \frac{1}{2})} $ }
		\If{$\text{mod}(t, \tau) \neq 0$}
		\myState{ $ \hat \mm_{i}^{(t)} = \hat \mm_{i}^{(t-1)} $ }
		\Else
		\myState{ $ \dd_i^{(t)} = \frac{ \xx_{i}^{(t)} - \xx_{i}^{(t + 1)}  }{\eta} $ }
		\myState{ $ \hat \mm_{i}^{(t)} = \mu \hat \mm_{i}^{(t - 1)} + (1 - \mu) \dd_i^{(t)} $ }
		\EndIf
		\EndFor
		\myState{\Return $\xx_i^{(T)}$}
		\EndProcedure
	\end{algorithmic}

	\mycaptionof{algorithm}{\small
		Multiple-step variant of \algoptsgdm.
		$\mm_{i}^{(0)} = \hat \mm_{i}^{(0)} := \0$.
		$\tau$ is the number of local steps.
	}
	\label{alg:multiple_step_algoptsgdm_varaint}
\end{algorithm}

Algorithm~\ref{alg:mimelite} depicts the general procedure of MimeLite in~\citep{karimireddy2020mime}.
For SGDm, the update step $\cU$ and the tracking step $\cV$ follow
\begin{align*}
	\begin{split}
		\cU \left( \nabla F_i(\yy_i, \xi), \ss \right)
		&:= (1 - \beta) \nabla F_i(\yy_i, \xi) + \beta \ss \\
		\cV \left( \frac{1}{\abs{\cS}} \sum_{i \in \cS} \nabla f_i (\xx), \ss \right)
		&:= (1 - \beta) \frac{1}{\abs{\cS}} \sum_{i \in \cS} \nabla f_i (\xx) + \beta \ss \,.
	\end{split}
\end{align*}

\begin{algorithm}[!h]
	\begin{algorithmic}[1]
		\Procedure{}{}
		\For{each round $t \in [T]$}
		\myState{sample subset $\cS$ of clients}
		\myState{communicate $(\xx, \ss)$ to all clients $i \in \cS$}

		\For{client $i \in \cS$ in parallel}
		\myState{initialize local model $\yy_i \leftarrow \xx$}
		\For{client $i \in \cS$ in parallel}
		\For{$k \in [\tau]$}
		\myState{sample mini-batch $\xi$ from local data}
		\myState{$\yy_i \leftarrow \yy_i - \eta \cU \left( \nabla F_i(\yy_i, \xi), \ss \right)$}
		\EndFor
		\EndFor
		\myState{compute full local-batch gradient $\nabla f_i (\xx)$}
		\myState{communicate $(\yy_i, \nabla f_i(\xx))$}
		\EndFor
		\myState{$\ss \leftarrow \cV \left( \frac{1}{\abs{\cS}} \sum_{i \in \cS} \nabla f_i (\xx), \ss \right)$} \Comment{update optimization statistics}
		\myState{$\xx \leftarrow \frac{1}{\abs{\cS}} \sum_{i \in \cS} \yy_i$} \Comment{update server parameters}
		\EndFor
		\myState{\Return $\xx_T$}
		\EndProcedure
	\end{algorithmic}

	\mycaptionof{algorithm}{\small MimeLite~\citep{karimireddy2020mime}.}
	\label{alg:mimelite}
\end{algorithm}

Algorithm~\ref{alg:slowmo} shows the pseudocode of SlowMo~\citep{wang2020slowmo}.
For our evaluation in Table~\ref{tab:ablation_study_compare_with_other_momentum_SGDs},
we follow the hyper-parameter suggestion mentioned in~\citep{wang2020slowmo}:
for CIFAR-10 dataset, we set $\alpha=1, \tau=12, \beta=0.7$.

\begin{algorithm}[!h]
	\begin{algorithmic}[1]
		\Procedure{}{}
		\For{$t \in [T]$ at worker-$i$ in parallel}
		\myState{Maintain/Average base optimizer buffers}
		\For{$k \in [\tau]$}
		\myState{Base optimizer step: $ \xx_{i, k+1}^{(t)} = \xx_{i, k}^{(t)} - \gamma^{(t)} \dd_{i, k}^{(t)} $}
		\EndFor
		\myState{Exact-Average: $\xx_{\tau}^{(t)} = \frac{1}{n} \sum_{i=1}^n \xx_{i, \tau}^{(t)} $}
		\myState{Update slow momentum: $ \mm^{(t+1)} = \beta \mm^{(t)} + \frac{1}{\gamma^{(t)}} ( \xx_{i, 0}^{(t)} - \xx_{\tau}^{(t)} ) $}
		\myState{Update outer iterates: $ \xx_{i, 0}^{(t+1)} = \xx_{i, 0}^{(t)} - \alpha \gamma^{(t)} \mm^{(t+1)}$}
		\EndFor
		\myState{\Return $\xx_T$}
		\EndProcedure
	\end{algorithmic}

	\mycaptionof{algorithm}{\small
		SlowMo~\citep{wang2020slowmo}.
		$\dd_{i, k}^{(t)}$ indicates the local update direction for communication round $t$
		at local update steps $k$.
	}
	\label{alg:slowmo}
\end{algorithm}

\clearpage
\subsection{Difference between DMSGD and \algoptsgdm} \label{appendix:connection_between_our_and_DMSGD}
We first re-iterate DMSGD~\citep{balu2020decentralized} in Algorithm~\ref{alg:original_dmsgd} with slightly adjusted notations.
\begin{algorithm}[!h]
	\begin{algorithmic}[1]
		\Procedure{worker-$i$}{}
		\For{$t \in \{ 1, \ldots, T \}$}
		\myState{ $ \vv_{i}^{(t)} = \sum_{j \in \cN_{i}^{(t)}} w_{ij} \xx_{j}^{(t)}$} \Comment{Consensus step}
		\myState{ $ \hat \mm_i^{(t)} = \mu ( \xx_i^{(t)} - \xx_i^{(t - 1)} ) + (1 - \mu) ( \vv_{i}^{(t)} - \vv_{i}^{(t - 1)} )  $ } \Comment{Momentum step}
		\myState{sample $\xi_{i}^{(t)}$ and compute $\gg_{i}^{(t)} = \nabla F_i(\xx_{i}^{(t)}, \xi_{i}^{(t)})$}
		\myState{ $ \xx_i^{(t+1)} = \vv_{i}^{(t)} - \eta \gg_{i}^{(t)} + \beta \hat \mm_i^{(t)} $ } \Comment{Option I of local gradient step}
		\myState{ $ \xx_i^{(t+1)} = \xx_{i}^{(t)} - \eta \gg_{i}^{(t)} + \beta \hat \mm_i^{(t)} $ } \Comment{Option II of local gradient step}
		\EndFor
		\EndProcedure
	\end{algorithmic}

	\mycaptionof{algorithm}{\small
		Original formulation of DMSGD.
		$\hat \mm_{i}^{(0)}$ are initialized as $\0$ for all workers.
	}
	\label{alg:original_dmsgd}
\end{algorithm}

By re-organizing, we can further simplify Algorithm~\ref{alg:original_dmsgd} to Algorithm~\ref{alg:original_dmsgd2}.
\begin{algorithm}[!h]
	\begin{algorithmic}[1]
		\Procedure{worker-$i$}{}
		\For{$t \in \{ 1, \ldots, T \}$}
		\myState{sample $\xi_{i}^{(t)}$ and compute $\gg_{i}^{(t)} = \nabla F_i(\xx_{i}^{(t-\frac{1}{2})}, \xi_{i}^{(t)})$}
		\myState{ $ \xx_i^{(t+\frac{1}{2})} = \xx_{i}^{(t)} - \eta ( \beta \hat \mm_i^{(t-1)} + \gg_{i}^{(t)} ) $ } \Comment{Option I of local gradient step}
		\myState{ $ \xx_i^{(t+\frac{1}{2})} = \xx_{i}^{(t-\frac{1}{2})} - \eta ( \beta \hat \mm_i^{(t-1)} + \gg_{i}^{(t)} ) $ } \Comment{Option II of local gradient step}
		\myState{ $ \xx_{i}^{(t+1)} = \sum_{j \in \cN_{i}^{(t)}} w_{ij} \xx_{j}^{(t+\frac{1}{2})}$} \Comment{Consensus step}
		\myState{ $ \hat \mm_i^{(t)} = \mu ( \xx_i^{(t - \frac{1}{2})} - \xx_i^{(t+\frac{1}{2})} ) + (1 - \mu) ( \xx_{i}^{(t)} - \xx_{i}^{(t+1)} ) $ } \Comment{Momentum step}
		\EndFor
		\EndProcedure
	\end{algorithmic}
	\mycaptionof{algorithm}{\small
		Re-organized formulation of DMSGD.
		$\hat \mm_{i}^{(0)}$ are initialized as $\0$ for all workers.
	}
	\label{alg:original_dmsgd2}
\end{algorithm}

For a fair comparison, we unify Algorithm~\ref{alg:original_dmsgd2} with \algoptsgdm,
as in Algorithm~\ref{alg:appendix_adapted_dmsgd}
(we slightly abuse the notations for comparison purpose).
\begin{algorithm}[!h]
	\begin{algorithmic}[1]
		\Procedure{worker-$i$}{}
		\For{$t \in \{ 1, \ldots, T \}$}
		\myState{sample $\xi_{i}^{(t)}$ and compute $\gg_{i}^{(t)} = \nabla F_i(\xx_{i}^{(t)}, \xi_{i}^{(t)})$}
		\myState{$\xx_{i}^{(t+\frac{1}{2})} = \xx_{i}^{(t)} - \eta (\beta \hat \mm_{i}^{(t-1)} + \gg_{i}^{(t)})$}
		\myState{$\xx_{i}^{(t+1)} = \sum_{j \in \cN_{i}^{(t)}} w_{ij} \xx_{j}^{(t + \frac{1}{2})}$}
		\myState{$ \hat \mm_{i}^{(t)} $ is determined by the algorithm.}
		\EndFor
		\myState{\Return $\xx_i^{(T)}$}
		\EndProcedure
	\end{algorithmic}

	\mycaptionof{algorithm}{\small
		DMSGD v.s.\ \algoptsgdm.
		$\hat \mm_{i}^{(0)}$ are initialized as $\0$ for all workers.
	}
	\label{alg:appendix_adapted_dmsgd}
\end{algorithm}

Note that in Algorithm~\ref{alg:appendix_adapted_dmsgd}
(slightly different from the $\hat \mm^{(t)}$ in Algorithm~\ref{alg:original_dmsgd2}),
$\hat \mm_i^{(t)}$ in DMSGD is defined as
\begin{align*}
	\hat \mm_i^{(t)}
	= \frac{
	\mu ( \xx_i^{(t - \frac{1}{2})} - \xx_i^{(t + \frac{1}{2})} ) + (1 - \mu) ( \xx_{i}^{(t)} - \xx_{i}^{(t + 1)} )
	}{\eta} \,,
\end{align*}
while for \algoptsgdm, we have
\begin{small}
	\begin{align*}
		\begin{split}
			\hat \mm_i^{(t)}
			= \mu \hat \mm_{i}^{(t - 1)} + (1 - \mu) \frac{ \xx_{i}^{(t)} - \xx_{i}^{(t + 1)} }{\eta} \,.
		\end{split}
	\end{align*}
\end{small}

Thus, for option I of DMSGD, we have
\begin{small}
	\begin{align*}
		\begin{split}
			&\hat \mm_{i}^{(t)}
			= \frac{ \mu ( \xx_i^{(t - \frac{1}{2})} - \xx_i^{(t + \frac{1}{2})} ) + (1 - \omega) ( \xx_{i}^{(t)} - \xx_{i}^{(t + 1)} ) }{\eta} \\
			&= \frac{\mu}{\eta} \left(
			\left( \xx_{i}^{(t-1)} - \eta (\beta \hat \mm_{i}^{(t-2)} + \gg_{i}^{(t-1)}) \right) -
			\left( \xx_{i}^{(t)} - \eta (\beta \hat \mm_{i}^{(t-1)} + \gg_{i}^{(t)}) \right)
			\right)
			+ (1 - \mu) \frac{ \xx_{i}^{(t)} - \xx_{i}^{(t + 1)} }{\eta} \\
			&= \mu \left(
			\frac{\xx_{i}^{(t-1)} - \xx_{i}^{(t)}}{\eta}
			-
			\left(
				\left( \beta \hat \mm_{i}^{(t-2)} + \gg_{i}^{(t-1)} \right)-
				\left( \beta \hat \mm_{i}^{(t-1)} + \gg_{i}^{(t)} \right)
				\right)
			\right)
			+ (1 - \mu) \frac{ \xx_{i}^{(t)} - \xx_{i}^{(t + 1)} }{\eta} \\
			&= \mu \left(  \frac{ \xx_{i}^{(t-1)} - \xx_{i}^{(t)}  }{\eta}
			- \beta ( \hat \mm_i^{(t-2)} - \hat \mm_i^{(t-1)} )
			- ( \gg_i^{(t-1)} - \gg_i^{(t)} )
			\right)
			+ (1 - \mu) \frac{ \xx_{i}^{(t)} - \xx_{i}^{(t + 1)} }{\eta} \\
			&= \mu \left(
			\beta \hat \mm_i^{(t-1)} + \gg_i^{(t)}
			+ \frac{ \xx_{i}^{(t-1)} - \xx_{i}^{(t)} }{\eta} - \beta \hat \mm_i^{(t-2)} - \gg_i^{(t-1)}
			\right)
			+ (1 - \mu) \frac{ \xx_{i}^{(t)} - \xx_{i}^{(t + 1)} }{\eta}
			\,,
		\end{split}
	\end{align*}
\end{small}
for option II of DMSGD, we have
\begin{small}
	\begin{align*}
		\begin{split}
			&\hat \mm_{i}^{(t)}
			=\frac{ \mu ( \xx_i^{(t - \frac{1}{2})} - \xx_i^{(t + \frac{1}{2})} ) + (1 - \omega) ( \xx_{i}^{(t)} - \xx_{i}^{(t + 1)} ) }{\eta} \\
			&=\mu \left( \beta \hat \mm_i^{(t-1)} + \gg_i^{(t)} \right)
			+ (1 - \mu) \frac{ \xx_{i}^{(t)} - \xx_{i}^{(t + 1)}  }{\eta} \,.
		\end{split}
	\end{align*}
\end{small}

It is obvious that the design of DMSGD is different from our \algoptsgdm:
\begin{itemize}[itemsep=0pt,leftmargin=12pt]
	\item The update scheme on the momentum buffer $\hat \mm_i$ is different, as illustrate above.
	\item DMSGD is based on the heavy-ball momentum,
	      while our scheme can generalize to heavy ball momentum SGD, Nesterov momentum SGD, and even Adam variants.
\end{itemize}

\subsection{Connections Between SGDm and \salgoptsgdm} \label{appendix:connection_for_single_worker_case}
\subsubsection{Connections Between SGDm and \algoptsgdm} \label{appendix:hd_sgd_with_single_worker_sgdm}
Note our scheme \algoptsgdm on the single worker case (i.e.\ \salgoptsgdm) has the following equation:
\begin{align*}
	\begin{split}
		\xx^{(t+1)} &= \xx^{(t)} - \eta \left( \beta \mm^{(t-1)} + \gg^{(t)} \right) \\
		\mm^t
		&= \mu \mm^{(t-1)} + (1 - \mu) \frac{ \xx^{(t)} - \xx^{(t+1)} }{\eta}
		= \left( \mu + (1 - \mu) \beta \right) \mm^{(t-1)} + (1 - \mu) \gg^{(t)} \,.
	\end{split}
\end{align*}

By letting $\hat \mm^{(t)} := \frac{\mm^{(t)}}{1 - \mu}$, we have
\begin{align*}
	\begin{split}
		\xx^{(t+1)}
		&= \xx^{(t)} - \eta \left( \beta (1 - \mu) \hat \mm^{(t-1)} + \gg^{(t)} \right) \\
		\hat \mm^{(t)}
		&= \left( \mu + (1 - \mu) \beta \right) \hat \mm^{(t-1)} + \gg^{(t)} \,.
	\end{split}
\end{align*}

We further let $\hat \beta := \mu + (1 - \mu) \beta$, then we have
\begin{align*}
	\begin{split}
		\xx^{(t+1)}
		&= \xx^{(t)} - \eta \left( \hat \beta \hat \mm^{(t-1)} + \gg^{(t)} + \left( \beta (1 - \mu) - \hat \beta \right) \hat \mm^{(t-1)} \right)
		= \xx^{(t)} - \eta \left( \hat \beta \hat \mm^{(t-1)} + \gg^{(t)} -\mu \hat \mm^{(t-1)} \right) \\
		\hat \mm^{(t)}
		&= \hat \beta \hat \mm^{(t-1)} + \gg^{(t)} \,.
	\end{split}
\end{align*}

By re-organizing, we have
\begin{align} \label{eq:appendix_qhm}
	\begin{split}
		\hat \mm^{(t)} &= \hat \beta \hat \mm^{(t-1)} + \gg^{(t)} \\
		\xx^{(t+1)}
		&= \xx^{(t)} - \eta \left( \hat \mm^{(t)} -\mu \hat \mm^{(t-1)} \right)
		= \xx^{(t)} - \eta \left( \hat \mm^{(t)} -\mu \hat \mm^{(t-1)} \right)
		= \xx^{(t)} - \eta \left( ( 1 - \frac{\mu}{\hat \beta} ) \hat \mm^{(t)} + \frac{\mu}{\hat \beta} \gg^{(t)} \right)
		\,,
	\end{split}
\end{align}
which recovers the QHM~\citep{gitman2019understanding}.

Comparing to the case of SGD with Heavy-ball Momentum (SGDm), where
\begin{align*}
	\begin{split}
		\hat \mm^{(t)} &= \beta \hat \mm^{(t-1)} + \gg^{(t)} \\
		\xx^{(t+1)}
		&= \xx^{(t)} - \eta \hat \mm^{(t)}
		\,,
	\end{split}
\end{align*}
we can witness that SGDm is only a special case of \eqref{eq:appendix_qhm} (when $\mu \!=\! 0$).

\subsubsection{Connections Between SGDm-N and \salgoptsgdm} \label{appendix:nag_with_single_worker_sgdm}
First note that our simplified version of \algoptsgdm (i.e.\ \salgoptsgdm) can recover the QHM~\citep{gitman2019understanding},
as illustrated in Appendix~\ref{appendix:hd_sgd_with_single_worker_sgdm}.
Furthermore, as pointed out in~\citep{gitman2019understanding} that,
the QHM is indeed equivalent to the original SGDm-N
with re-scaling of $\eta \rightarrow \eta / (1 - \beta)$.
Therefore, we can argue that our simplified version of \algoptsgdm (i.e.\ \salgoptsgdm or QHM)
is equivalent to the original SGDm-N with re-scaling of $\eta \rightarrow \eta / (1 - \beta)$.

For the reason of completeness, we include the derivatives below.

First of all, SGD with Nesterov Momentum (SGDm-N) can be rewritten as
\begin{align*}
	\begin{split}
		\xx^{(t + \frac{1}{2})} 	&= \xx^{(t)} + \beta \mm^{(t - 1)} \\
		\mm^{(t)} 					&= \beta \mm^{(t-1)} - \eta \nabla f(\xx^{(t + \frac{1}{2})}) \\
		\xx^{(t+1)} 				&= \xx^{(t + \frac{1}{2})} - \eta \nabla f(\xx^{(t + \frac{1}{2})}) \,,
	\end{split}
\end{align*}
and in PyTorch, we instead have
\begin{align} \label{eq:appendix_pytorch_nag}
	\begin{split}
		\xx^{(t + \frac{1}{2})} 	&= \xx^{(t)} + \beta \mm^{(t - 1)} \\
		\mm^{(t)} 					&= \beta \mm^{(t-1)} + \nabla f(\xx^{(t + \frac{1}{2})}) \\
		\xx^{(t + 1)} 				&= \xx^{(t)} - \eta \mm^{(t)} \,,
	\end{split}
\end{align}
where the above equations are equivalent for the constant $\beta$.

Then we reiterate the derivatives from~\citet{gitman2019understanding} below,
from SGDm-N to QHM (which is equivalently Equation~\eqref{eq:appendix_qhm}).
The SGDm-N in~\citet{gitman2019understanding} follows
\begin{align*}
	\begin{split}
		\mm^{(t)} &= \beta \mm^{(t-1)} - \eta \nabla f \left( \xx^{(t)} + \beta \mm^{(t-1)} \right) \\
		\xx^{(t+1)} &= \xx^{(t)} + \mm^{(t)} \,,
	\end{split}
\end{align*}
where we can move the learning rate out of the momentum into the iterates update:
\begin{align} \label{eq:appendix_qhm_nag_2}
	\begin{split}
		\mm^{(t)} 	&= \beta \mm^{(t-1)} + \nabla f \left( \xx^{(t)} - \eta \beta \mm^{(t-1)} \right) \\
		\xx^{(t+1)} &= \xx^{(t)} - \eta \mm^{(t)} \,,
	\end{split}
\end{align}
where the above two methods produce the same sequence of iterates $\xx^{(t)}$
if $\mm^{(0)}$ is initialized at $\0$.
The second equation~\eqref{eq:appendix_qhm_nag_2}
is equivalent to the Pytorch implementation in Equation~\eqref{eq:appendix_pytorch_nag}.

Let's normalize the momentum update by $1 - \beta$:
\begin{align*}
	\begin{split}
		\mm^{(t)} 	&= \beta \mm^{(t-1)} + (1 - \beta) \nabla f \left( \xx^{(t)} - \eta \beta \mm^{(t-1)} \right) \\
		\xx^{(t+1)} &= \xx^{(t)} - \eta \mm^{(t)} \,,
	\end{split}
\end{align*}
which is equivalent to the un-normalized one by re-scaling $\eta \rightarrow \frac{\eta}{1 - \beta}$
for constant parameters.
We make a change of variables $\yy^{(t)} = \xx^{(t)} - \eta \beta \mm^{(t-1)}$,
\begin{align*}
	\begin{split}
		\mm^{(t)} 	&= \beta \mm^{(t-1)} + (1 - \beta) \nabla f ( \yy^{(t)} ) \\
		\yy^{(t+1)} &= \xx^{(t+1)} - \eta \beta \mm^{(t)}
		= \xx^{(t)} - \eta \mm^{(t)} - \eta \beta \mm^{(t)} \\
		&= \yy^{(t)} + \eta \beta \mm^{(t-1)} - \eta \mm^{(t)} - \eta \beta \mm^{(t)} \\
		&= \yy^{(t)} + \eta \left( \mm^{(t)} - (1 - \beta) \nabla f ( \yy^{(t)} ) \right)
		- \eta \mm^{(t)} - \eta \beta \mm^{(t)} \\
		&= \yy^{(t)} - \eta \left(
		(1 - \beta) \nabla f ( \yy^{(t)} ) + \beta \mm^{(t)}
		\right) \,,
	\end{split}
\end{align*}
where by renaming $\yy^{(t)}$ back to $\xx^{(t)}$, we obtain the exact formula used in QHM update.

\subsubsection{The Simplification of \algoptsgdmn on Single Worker Case}
We can further simplify our scheme \algoptsgdmn on the single worker case (and obtain \salgoptsgdmn):
\begin{align*}
	\begin{split}
		\xx^{(t + \frac{1}{2})} &= \xx^{(t)} + \beta \mm^{(t-1)} 								\\
		\hat \mm^{(t)} 			&= \beta \mm^{(t-1)} + \nabla f(\xx^{(t + \frac{1}{2})}) 		\\
		\xx^{(t + 1)} 			&= \xx^{(t)} - \eta \hat \mm^{(t)} 								\\
		\mm^{(t)} 				&= \mu \mm^{(t-1)} + (1 - \mu) \frac{ \xx^{(t)} - \xx^{(t+1)} }{\eta} 	\,.
	\end{split}
\end{align*}

We rewrite $\mm^{(t)}$ as
\begin{align*}
	\mm^{(t)}
	  & = \mu \mm^{(t-1)} + (1 - \mu) \frac{ \xx^{(t)} - \xx^{(t+1)} }{\eta}
	= \mu \mm^{(t-1)} + (1 - \mu) \hat \mm^{(t)} \\
	  & = \left( \mu + \beta - \mu \beta \right) \mm^{(t-1)} + (1 - \mu) \nabla f(\xx^{(t + \frac{1}{2})}) \,.
\end{align*}

Similar to the treatment in Appendix~\ref{appendix:hd_sgd_with_single_worker_sgdm},
by letting $\hat \mm^{(t)} := \frac{ \mm^{(t)} }{1 - \mu}$, we have
\begin{align*}
	\xx^{(t + 1)}
	  & = \xx^{(t)} - \eta \hat \mm^{(t)}
	= \xx^{(t)} - \eta \left( \beta (1 - \mu) \hat \mm^{(t-1)} + \nabla f(\xx^{(t + \frac{1}{2})}) \right) \\
	\hat \mm^{(t)}
	  & = \left( \mu + \beta - \mu \beta \right) \hat \mm^{(t-1)} + \nabla f(\xx^{(t + \frac{1}{2})}) \,,
\end{align*}
and thus, in the end we have the following equations for \salgoptsgdmn
\begin{align*}
	\begin{split}
		\xx^{(t + \frac{1}{2})} 	&= \xx^{(t)} + \beta (1 - \mu) \hat \mm^{(t-1)} \\
		\hat \mm^{(t)} 				&= \hat \beta \hat \mm^{(t-1)} + \nabla f(\xx^{(t + \frac{1}{2})}) \\
		\xx^{(t+1)}
		&= \xx^{(t)} -
		\eta \left(
		( 1 - \frac{\mu}{\hat \beta} ) \hat \mm^{(t)}
		+ \frac{\mu}{\hat \beta} \nabla f(\xx^{(t + \frac{1}{2})})
		\right)
		\,,
	\end{split}
\end{align*}
where $\hat \beta := \mu + (1 - \mu) \beta$ and we can recover SGDm-N by setting $\mu = 0$.

\clearpage
\section{Global Convergence Rate Proofs}\label{appendix:convergence_rate_proofs}
We reiterate the update scheme of~\algoptsgdm in a matrix form:
\begin{align} \label{eq:appendix_our_scheme_matrix_form}
	\begin{split}
		\mX^{(t+1)} 	&= \mW \left( \mX^{(t)} - \eta \left( \beta \mM^{(t)} + \mG^{(t)} \right) \right) \\
		\mM^{(t + 1)}   &= \mu \mM^{(t)} + (1 - \mu) \frac{ \mX^{(t)} - \mX^{(t + 1)} }{\eta} \\
		&= \left( \mu + (1 - \mu) \beta \mW \right) \mM^{(t)} + (1-\mu)\mW\mG^{(t)} + \frac{1 - \mu}{\eta}(\mI-\mW)\mX^{(t)} \,,
	\end{split}
\end{align}%
where our numerical experiments by default use $\mu = \beta$.
For each matrix $\mZ$, we define an averaged vector $\bar \zz = \mZ \frac{1}{n}\1$
and matrix $\bar\mZ = \mZ \frac{1}{n}\1 \1^\top$.
Note that we use bold lower-case to indicate vectors and bold upper-case to denote matrices.

First, we state some standard definitions and regularity conditions.
\begin{assumption}\label{asm:convergence}
	We assume that the following hold:
	\begin{enumerate}
		\item The function $f(\xx)$ we are minimizing is lower bounded from below by $f^\star$,
		      and each node's loss $f_i$ is smooth satisfying
		      $\norm{\nabla f_i(\yy) - \nabla f_i(\xx)} \leq L\norm{\yy - \xx}$.

		      The $L$-smoothness implies the following quadratic upper bound on $f_i$
		      \begin{align*}
			      f_i (\yy) \leq f_i (\xx) + \langle \nabla f_i(\xx), \yy - \xx \rangle + \frac{L}{2} \norm{\yy - \xx}^2 \,.
		      \end{align*}

		      If additionally the function $ f $ is convex and $\xx^\star$ is an optimum of $f$, then it also implies
		      \begin{align*}
			      \norm{\nabla f(\xx) - \nabla f(\xx^\star)} \leq 2L \left( f(\xx) - f^\star \right) \,.
		      \end{align*}

		\item The stochastic gradients within each node satisfies $\Eb{ g_i(\xx) } = \nabla f_i(\xx)$
		      and $\E\norm{g_i(\xx) - \nabla f_i(\xx)}^2 \leq \sigma^2$.
		      The variance across the workers is also bounded as
		      $\frac{1}{n}\sum_{i=1}^n \norm{\nabla f_i(\xx) - \nabla f(\xx)}^2 \leq \zeta^2$.
		\item The mixing matrix is doubly stochastic where for the all ones vector $\1$,
		      we have $\mW \1 = \1$ and $\mW^\top\1 = \1$.
		      Further, define $\bar \mZ = \mZ \frac{1}{n}\1 \1^\top$ for any matrix $\mZ \in  \R^{d \times n}$.
		      Then, the mixing matrix satisfies
		      $\E_\mW \norm{ \mZ \mW - \bar{\mZ} }_F^2 \leq (1 - \rho) \norm{\mZ - \bar{\mZ}}_F^2$.
	\end{enumerate}
\end{assumption}

\paragraph{Average parameters.}
Let us examine the effect of updates in \eqref{eq:appendix_our_scheme_matrix_form}
on $\bar\xx^{(t)}$ which is the parameters averaged across the nodes.
Note that since $\mW$ is doubly stochastic, we can simplify the updates as follows:
\begin{align} \label{eq:appendix_our_scheme_averaged_form}
	\begin{split}
		\bar\xx^{(t+1)} 	&= \bar\xx^{(t)} - \eta \left( \beta \bar\mm^{(t)} + \bar\gg^{(t)} \right) \text{, and } \\
		\bar \mm^{(t + 1)}   &= \mu \bar \mm^{(t)} + (1 - \mu) \frac{ \bar \xx^{(t)} - \bar\xx^{(t + 1)} }{\eta} \\
		&= (1 - (1 - \mu)(1- \beta)) \bar \mm^{(t)} + (1 - \mu)\bar\gg^{(t)}\,.
	\end{split}
\end{align}%
Here, $\bar \gg^{(t)} := \frac{1}{n}\sum_{i=1}^n \nabla F_i(\xx_i^{(t)}, \xi_i^{(t)})$
is the average of the stochastic gradients across the nodes.

\paragraph{Virtual Sequence.}
Now we define a virtual sequence of parameters $\{\hat\xx^{(t)}\}$ which has a simple SGD style update,
and an error sequence which will be easy to analyze:
\begin{equation}\label{eq:appendix_virtual_sequence}
	\begin{split}
		\hat\xx^{(t+1)} &= \hat\xx^{(t)} - \frac{\eta}{1-\beta}\bar\gg^{(t)}\text{, and}\\
		\bar\ee^{(t)} &:= \hat\xx^{(t)} - \bar\xx^{(t)}\,.
	\end{split}
\end{equation}
Our strategy for the analysis will be to analyze the virtual sequence $\{\hat\xx^{(t)}\}$
and prove that the real sequence of iterates remains close i.e. that the $\ee^{(t)}$ remains small.

\paragraph{Single-step progress of virtual update.}
We will show that every step we make some progress, but have to balance three sources of error:
i) the stochastic error which depends on $\sigma^2$ due to using stochastic gradients,
ii) consensus error which depends on $\mX^{t} - \bar\mX^{t}$, and finally
iii) momentum error due to using momentum which depends on $\ee^{(t)}$.

\begin{lemma}[Non-convex one step progress]\label{lem:non-convex-progress}
	Given Assumption~\ref{asm:convergence},
	the sequence of iterates generated by~\eqref{eq:appendix_our_scheme_matrix_form}
	using $\eta\leq \frac{1 - \beta}{4L}$ satisfy
	\[
		\E f(\hat\xx^{(t+1)}) \leq
		\E f(\hat\xx^{(t)})
		- \frac{\tilde\eta}{4}\norm{\nabla f(\bar\xx^{(t)})}^2
		- \frac{\tilde\eta}{4}\E\norm{\frac{1}{n}\sum_{i=1}^n \nabla f_i(\xx_i^{(t)})}^2
		+ \frac{L \tilde\eta^2 \sigma^2}{n} + \frac{3 L^2 \tilde\eta}{2}\norm{\ee^{(t)}}^2
		+ \frac{3L^2 \tilde\eta}{n}\norm{\mX^{(t)} - \bar\mX^{(t)}}^2_F\,,
	\]
	where we define $\tilde\eta := \frac{\eta}{1-\beta}$.
\end{lemma}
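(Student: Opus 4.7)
The plan is to treat the virtual iterate $\hat\xx^{(t+1)} = \hat\xx^{(t)} - \tilde\eta \bar\gg^{(t)}$ as a single SGD step on $f$ and bound its progress via the standard descent lemma, then rewrite the resulting inner product in terms of quantities that appear in the bound (namely $\nabla f(\bar\xx^{(t)})$ and the per-node drifts). First I would apply $L$-smoothness of $f$ to get
\begin{equation*}
f(\hat\xx^{(t+1)}) \leq f(\hat\xx^{(t)}) - \tilde\eta \lin{\nabla f(\hat\xx^{(t)}), \bar\gg^{(t)}} + \frac{L\tilde\eta^2}{2}\norm{\bar\gg^{(t)}}^2,
\end{equation*}
and then take conditional expectation on $\mX^{(t)}$. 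Unbiasedness gives $\E\bar\gg^{(t)} = \bar{\nabla f}^{(t)} := \frac{1}{n}\sum_i \nabla f_i(\xx_i^{(t)})$, and since the stochastic noise across the $n$ workers is independent with variance at most $\sigma^2$, we get $\E\norm{\bar\gg^{(t)}}^2 \leq \norm{\bar{\nabla f}^{(t)}}^2 + \sigma^2/n$, producing the $L\tilde\eta^2\sigma^2/n$ noise term.

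Next I would decompose the inner product. Writing $\nabla f(\hat\xx^{(t)}) = \nabla f(\bar\xx^{(t)}) + \bigl(\nabla f(\hat\xx^{(t)}) - \nabla f(\bar\xx^{(t)})\bigr)$, the main piece $\lin{\nabla f(\bar\xx^{(t)}), \bar{\nabla f}^{(t)}}$ is handled by the polarization identity $\lin{a,b} = \tfrac{1}{2}(\norm{a}^2 + \norm{b}^2 - \norm{a-b}^2)$ with $a = \nabla f(\bar\xx^{(t)})$, $b = \bar{\nabla f}^{(t)}$. The error term is controlled by convex combinations plus $L$-smoothness: $\norm{a - b}^2 = \norm{\frac{1}{n}\sum_i(\nabla f_i(\bar\xx^{(t)}) - \nabla f_i(\xx_i^{(t)}))}^2 \leq \frac{L^2}{n}\norm{\mX^{(t)} - \bar\mX^{(t)}}_F^2$, which yields the consensus error term in the lemma. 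The residual inner product $\lin{\nabla f(\hat\xx^{(t)}) - \nabla f(\bar\xx^{(t)}), \bar{\nabla f}^{(t)}}$ is absorbed by Young's inequality of the form $\lin{u,v} \leq \tfrac{1}{2\alpha}\norm{u}^2 + \tfrac{\alpha}{2}\norm{v}^2$ together with smoothness $\norm{\nabla f(\hat\xx^{(t)}) - \nabla f(\bar\xx^{(t)})} \leq L\norm{\ee^{(t)}}$, contributing the $L^2\norm{\ee^{(t)}}^2$ term.

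Finally I would collect terms. The second-order term $\frac{L\tilde\eta^2}{2}\norm{\bar{\nabla f}^{(t)}}^2$ coming from smoothness is dominated by $\frac{\tilde\eta}{4}\norm{\bar{\nabla f}^{(t)}}^2$ precisely under the stated step-size constraint $\tilde\eta \leq \frac{1}{4L}$ (equivalently $\eta \leq \frac{1-\beta}{4L}$), which is what forces that form of the constraint. The leftover half of $\norm{\nabla f(\bar\xx^{(t)})}^2$ and $\norm{\bar{\nabla f}^{(t)}}^2$ from the polarization identity become the two negative descent terms; the Young parameters in the $\ee^{(t)}$ and consensus steps must then be tuned to leave at least $\tfrac{1}{4}$ on each of these two quadratics, which determines the constants $3L^2/2$ and $3L^2/n$ in front of the error terms.

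\textbf{Main obstacle.} The derivation is essentially mechanical, but the delicate bookkeeping step is the constant tuning: I have to pick the Young-inequality weights so that the leftover coefficients on both $\norm{\nabla f(\bar\xx^{(t)})}^2$ and $\norm{\bar{\nabla f}^{(t)}}^2$ are at least $\tilde\eta/4$ simultaneously, and the smoothness second-order term $\frac{L\tilde\eta^2}{2}\norm{\bar{\nabla f}^{(t)}}^2$ must fit inside the $\tilde\eta/4$ slack on $\norm{\bar{\nabla f}^{(t)}}^2$; this is exactly where the step-size condition $\eta \leq (1-\beta)/(4L)$ enters, and missing a factor of two here would change the prefactors stated in the lemma.
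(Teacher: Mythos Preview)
Your proposal is correct and follows essentially the same route as the paper: smoothness descent lemma on the virtual sequence, variance bound $\sigma^2/n$ from independent noise, polarization identity on the inner product, Young's inequality plus $L$-smoothness to convert gradient differences into $\norm{\ee^{(t)}}^2$ and the consensus Frobenius term, and the step-size condition $\tilde\eta \leq 1/(4L)$ to absorb the second-order term. The only cosmetic difference is ordering: you split $\nabla f(\hat\xx^{(t)}) = \nabla f(\bar\xx^{(t)}) + (\text{difference})$ first and then polarize on the main piece, whereas the paper polarizes $\lin{\nabla f(\hat\xx^{(t)}), \bar{\nabla f}^{(t)}}$ directly and afterwards splits both $\norm{\nabla f(\hat\xx^{(t)})}^2$ and $\norm{\bar{\nabla f}^{(t)} - \nabla f(\hat\xx^{(t)})}^2$ via $\bar\xx^{(t)}$; the resulting constants and the use of the step-size bound are the same.
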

\begin{proof}
	Starting from the smoothness of $f$, we have
	\begin{align*}
		\E f(\hat\xx^{(t+1)}) & \leq \E f(\hat\xx^{(t)}) + \E\lin{\nabla f(\hat\xx^{(t)}) , \hat\xx^{(t+1)} - \hat\xx^{(t)}} + \frac{L}{2}\E\norm{\hat\xx^{(t+1)} - \hat\xx^{(t)}}^2                                                                                                                    \\
		                      & = \E f(\hat\xx^{(t)}) - \frac{\eta}{1-\beta} \frac{1}{n}\sum_{i=1}^n\E\lin{\nabla f(\hat\xx^{(t)}) , \nabla f_i(\xx_i^{(t)})} + \frac{L\eta^2}{(1-\beta)^2}\E\norm{\bar \gg^{(t)}}^2                                                                                    \\
		                      & \leq \E f(\hat\xx^{(t)}) - \tilde\eta \frac{1}{n}\sum_{i=1}^n\E\lin{\nabla f(\hat\xx^{(t)}) , \nabla f_i(\xx_i^{(t)})} + L\tilde\eta^2 \E\norm{\frac{1}{n}\sum_{i=1}^n \nabla f_i(\xx_i^{(t)})}^2 + \frac{L\tilde\eta^2 \sigma^2}{n}                                    \\
		                      & = \E f(\hat\xx^{(t)})+ L\tilde\eta^2 \E\norm{\frac{1}{n}\sum_{i=1}^n \nabla f_i(\xx_i^{(t)})}^2 + \frac{L\tilde\eta^2 \sigma^2}{n}                                                                                                                                      \\
		                      & \hspace{0.5cm} - \frac{\tilde\eta}{2}\E\norm{\frac{1}{n}\sum_{i=1}^n \nabla f_i(\xx_i^{(t)})}^2 - \frac{\tilde\eta}{2}\E\norm{\nabla f(\hat\xx^{(t)})}^2 +  \frac{\tilde\eta}{2}\E\norm{\frac{1}{n}\sum_{i=1}^n \nabla f_i(\xx_i^{(t)}) - \nabla f(\hat\xx^{(t)})}^2    \\
		                      & \leq \E f(\hat\xx^{(t)})+ (L\tilde\eta^2 -\tilde\eta/2) \E\norm{\frac{1}{n}\sum_{i=1}^n \nabla f_i(\xx_i^{(t)})}^2 + \frac{L\tilde\eta^2 \sigma^2}{n}                                                                                                                   \\
		                      & \hspace{0.5cm} - \frac{\tilde\eta}{4}\E\norm{\nabla f(\bar\xx^{(t)})}^2 + \frac{\tilde\eta}{2}\E\norm{\nabla f(\hat\xx^{(t)}) - \nabla f(\bar\xx^{(t)})}^2  +  \frac{\tilde\eta}{2}\E\norm{\frac{1}{n}\sum_{i=1}^n \nabla f_i(\xx_i^{(t)}) - \nabla f(\hat\xx^{(t)})}^2 \\
		                      & \leq \E f(\hat\xx^{(t)}) - \frac{\tilde\eta}{4}\E\norm{\nabla f(\bar\xx^{(t)})}^2   + \frac{L\tilde\eta^2 \sigma^2}{n} - \tilde\eta/4 \E\norm{\frac{1}{n}\sum_{i=1}^n \nabla f_i(\xx_i^{(t)})}^2                                                                        \\
		                      & \hspace{0.5cm} + \frac{3\tilde\eta}{2}\E\norm{\nabla f(\hat\xx^{(t)}) - \nabla f(\bar\xx^{(t)})}^2  +  \frac{\tilde\eta}{n}\sum_{i=1}^n \E\norm{ \nabla f_i(\xx_i^{(t)}) - \nabla f_i(\bar\xx^{(t)})}^2
		\,.
	\end{align*}
	In the last inequality we used our bound on the step-size that $\eta\leq \frac{1 - \beta}{4L}$.
	In the rest of the inequalities, we repeatedly use the identity that $2ab = -a^2 - b^2 + (a-b)^2$.
	Finally, using the smoothness of the function $f$ and the definition $\tilde\eta := \frac{\eta}{1-\beta}$, we get
	\begin{align*}
		\E f(\hat\xx^{(t+1)}) & \leq \E f(\hat\xx^{(t)}) - \frac{\tilde\eta}{4}\norm{\nabla f(\bar\xx^{(t)})}^2  + \frac{L \tilde\eta^2 \sigma^2}{n} + \frac{3L^2 \tilde\eta}{2}\norm{\bar\xx^{(t)} - \hat\xx^{(t)}}^2 + \frac{L^2 \tilde\eta}{n} \sum_{i=1}^n\norm{\xx_i^{(t)} - \bar\xx^{(t)}}^2\,.
	\end{align*}
	Recalling the definition of $\ee^{(t)}$ and $\mX^{(t)}$ yields the lemma.
\end{proof}

\begin{lemma}[Strongly-convex one step progress]\label{lem:convex-progress}
	Suppose that the set of functions $\{f_i\}$ are $\mu$-strongly convex in addition to Assumption~\ref{asm:convergence}. Then the sequence of iterates generated by \eqref{eq:appendix_our_scheme_matrix_form} using $\eta \leq \frac{1-\beta}{4L}$ satisfy for $\tilde\eta := \frac{\eta}{1 - \beta}$,
	\[
		\E \norm{\hat\xx^{(t+1)} - \xx^\star}^2
		\leq (1-\mu\tilde\eta/2)\E \norm{\hat\xx^{(t)} - \xx^\star}^2
		+ \frac{\tilde\eta^2\sigma^2}{n} - 3\tilde\eta/4(f(\bar\xx^{(t)}) - f^\star)
		+ 8L\tilde\eta \norm{\ee^{(t)}}^2 + \frac{5L\tilde\eta}{n}\norm{\mX^{t} - \bar\mX^{t}}^2_F\,.
	\]
\end{lemma}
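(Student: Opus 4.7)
The plan is to control the distance of the virtual sequence $\hat\xx^{(t)}$ to a minimizer $\xx^\star$, and to trade off each source of slack (stochastic noise, consensus drift across nodes, and virtual-vs-actual momentum error) against the corresponding right-hand-side term. The argument mirrors the non-convex one-step lemma (Lemma~\ref{lem:non-convex-progress}) but tracks $\|\hat\xx^{(t+1)}-\xx^\star\|^2$ instead of $f(\hat\xx^{(t+1)})$, so that strong convexity yields both the $(1-\mu\tilde\eta/2)$ contraction and a negative functional-gap term.

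First I would expand using the virtual update $\hat\xx^{(t+1)} = \hat\xx^{(t)} - \tilde\eta \bar\gg^{(t)}$ from~\eqref{eq:appendix_virtual_sequence}, take conditional expectation over the fresh mini-batches to kill the zero-mean stochastic noise, and use the bounded-variance assumption to obtain
\[
\E\|\hat\xx^{(t+1)}-\xx^\star\|^2 \leq \|\hat\xx^{(t)}-\xx^\star\|^2 - 2\tilde\eta \Big\langle \hat\xx^{(t)}-\xx^\star,\; \tfrac{1}{n}\sum_i \nabla f_i(\xx_i^{(t)}) \Big\rangle + \tilde\eta^2 \Big\|\tfrac{1}{n}\sum_i \nabla f_i(\xx_i^{(t)})\Big\|^2 + \frac{\tilde\eta^2\sigma^2}{n}.
\]
Next I would decompose the inner product by inserting $\xx_i^{(t)}$ and $\bar\xx^{(t)}$:
\[
\Big\langle \hat\xx^{(t)}-\xx^\star, \nabla f_i(\xx_i^{(t)}) \Big\rangle = \Big\langle \xx_i^{(t)}-\xx^\star, \nabla f_i(\xx_i^{(t)}) \Big\rangle + \Big\langle \bar\xx^{(t)}-\xx_i^{(t)}, \nabla f_i(\xx_i^{(t)}) \Big\rangle + \Big\langle \ee^{(t)}, \nabla f_i(\xx_i^{(t)}) \Big\rangle.
\]
Strong convexity of $f_i$ applied to the first piece yields $f_i(\xx_i^{(t)}) - f_i(\xx^\star) + \tfrac{\mu}{2}\|\xx_i^{(t)}-\xx^\star\|^2$. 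After averaging over $i$, a convexity/Young step converts $\tfrac{1}{n}\sum_i (f_i(\xx_i^{(t)}) - f_i(\xx^\star))$ into $f(\bar\xx^{(t)}) - f^\star$ up to an $O(L)$ consensus correction; this is the source of the $-\tfrac{3\tilde\eta}{4}(f(\bar\xx^{(t)}) - f^\star)$ term (the $3/4$ coefficient is what survives after Young bleed-off). The two remaining inner products I would bound via $2\lin{a,b} \leq \alpha\|a\|^2 + \alpha^{-1}\|b\|^2$ with small parameters, producing terms proportional to $\|\ee^{(t)}\|^2$ and $\tfrac{1}{n}\|\mX^{(t)}-\bar\mX^{(t)}\|_F^2$ together with squared-gradient pieces $\|\nabla f_i(\xx_i^{(t)})\|^2$ that I would convert via $L$-smoothness and convexity into $2L(f_i(\xx_i^{(t)}) - f_i^\star)$ and absorb into the negative functional gap.

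To close the recursion, I still need to transfer the strong-convexity contraction from $\|\xx_i^{(t)}-\xx^\star\|^2$ back to $\|\hat\xx^{(t)}-\xx^\star\|^2$; a Young-style bound $\|\xx_i^{(t)}-\xx^\star\|^2 \geq \tfrac{1}{2}\|\hat\xx^{(t)}-\xx^\star\|^2 - \|\hat\xx^{(t)}-\xx_i^{(t)}\|^2$ gives the desired factor $(1-\mu\tilde\eta/2)$ at the cost of an additional $\|\ee^{(t)}\|^2$ plus a consensus term. The residual $\tilde\eta^2 \|\tfrac{1}{n}\sum_i \nabla f_i(\xx_i^{(t)})\|^2$ term is killed by the same smoothness-to-function-gap conversion and absorbed using the step-size condition $L\tilde\eta \leq 1/4$, i.e.\ $\eta \leq (1-\beta)/(4L)$.

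The main obstacle I anticipate is purely bookkeeping: every Young parameter has to be tuned so that (i) each positive squared-gradient residual is fully absorbed into the negative $-\tfrac{3\tilde\eta}{4}(f(\bar\xx^{(t)}) - f^\star)$ slack, and (ii) the coefficients on $\|\ee^{(t)}\|^2$ and $\tfrac{1}{n}\|\mX^{(t)}-\bar\mX^{(t)}\|_F^2$ come out exactly $8L\tilde\eta$ and $5L\tilde\eta/n$ rather than slightly worse constants. The step-size bound $\eta\leq(1-\beta)/(4L)$ is the single lever that makes this absorption tight, exactly as in the non-convex case.
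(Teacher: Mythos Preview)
Your plan is essentially the paper's: expand $\|\hat\xx^{(t+1)}-\xx^\star\|^2$ from the virtual update, split the cross term $\langle\nabla f_i(\xx_i^{(t)}),\hat\xx^{(t)}-\xx^\star\rangle$ into the three pieces $\xx_i-\xx^\star$, $\bar\xx-\xx_i$, $\hat\xx-\bar\xx=\ee$, apply strong convexity to the first, shift the contraction back to $\hat\xx$ via Young, and absorb residuals using $L\tilde\eta\le 1/4$.

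One step in your sketch does not close as stated. For the $\langle\nabla f_i(\xx_i),\bar\xx-\xx_i\rangle$ piece you propose Young's inequality, which produces a per-node term $\tfrac{1}{n}\sum_i\|\nabla f_i(\xx_i)\|^2$; you then want to convert this via $\|\nabla f_i(\xx_i)\|^2\le 2L(f_i(\xx_i)-f_i^\star)$ and absorb into the negative functional gap. That absorption fails: $\tfrac{1}{n}\sum_i(f_i(\xx_i)-f_i^\star)$ is not controlled by $f(\bar\xx)-f^\star$ without paying a heterogeneity penalty (since $f_i^\star\le f_i(\xx^\star)$, the slack scales with $\zeta^2$), and the lemma has no $\zeta^2$ term. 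The paper sidesteps this by never creating per-$i$ gradient norms here: it applies the smoothness descent inequality directly,
\[
\langle\nabla f_i(\xx_i),\bar\xx-\xx_i\rangle \ge f_i(\bar\xx)-f_i(\xx_i)-\tfrac{L}{2}\|\xx_i-\bar\xx\|^2,
\]
so that the $f_i(\xx_i)$ values cancel exactly against those coming from the strong-convexity piece, leaving only $f(\bar\xx)-f^\star$ plus consensus. Similarly, for the $\ee$-cross term and the residual $\tilde\eta^2\|\cdot\|^2$, the paper centers at $\bar\xx$ first and applies the global bound $\|\nabla f(\bar\xx)\|^2\le 2L(f(\bar\xx)-f^\star)$ only to the \emph{averaged} gradient. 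With this one correction your bookkeeping goes through exactly as you outline.
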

\begin{proof}
	Starting from the update rule for $\hat\xx^{(t+1)}$, and expanding very similar to the steps performed in the previous lemma we get,
	\begin{align*}
		\E \norm{\hat\xx^{(t+1)} - \xx^\star}^2 & = \E \norm{\hat\xx^{(t)} - \xx^\star}^2 + 2\E\lin{\hat\xx^{(t+1)} - \hat\xx^{(t)}, \hat\xx^{(t)} - \xx^\star} + \E\norm{\hat\xx^{(t+1)} - \hat\xx^{(t)}}^2                                          \\
		                                        & \leq  \E \norm{\hat\xx^{(t)} - \xx^\star}^2 - \frac{2\tilde\eta}{n}\sum_{i=1}^n\lin{\nabla f_i(\xx_i^{(t)}) , \hat\xx^{(t)} - \xx^\star}                                                            \\
		                                        & \hspace*{2cm}+ \frac{\tilde\eta^2}{n}\sum_{i=1}^n \E\norm{\nabla f_i(\xx_i^{(t)}) - \nabla f_i(\bar\xx^{(t)})}^2 + \tilde\eta^2\E\norm{\nabla f(\bar\xx^{(t)})}^2 + \frac{\tilde\eta^2 \sigma^2}{n} \\
		                                        & \leq  \E \norm{\hat\xx^{(t)} - \xx^\star}^2 -2\tilde\eta \underbrace{\tfrac{1}{n}\sum_{i=1}^n\lin{\nabla f_i(\xx_i^{(t)}) , \hat\xx^{(t)} - \xx^\star}}_{\cT_1}                                     \\
		                                        & \hspace*{2cm}+ \frac{\tilde\eta^2 L^2}{n}\sum_{i=1}^n \E\norm{\xx_i^{(t)} - \bar\xx^{(t)}}^2 + 2L\tilde\eta^2\E(f(\bar\xx^{(t)}) - f^\star) + \frac{\tilde\eta^2 \sigma^2}{n}
		\,.
	\end{align*}
	We will examine the term $\cT_1$ now. Using strong convexity and smoothness of each of the functions $\{f_i\}$, we have
	\begin{align*}
		\frac{1}{n}\sum_{i=1}^n\lin{\nabla f_i(\xx_i), \xx_i - \xx^\star} & \geq \frac{1}{n}\sum_{i=1}^n f_i(\xx_i) - f(\xx^\star) + \frac{\mu}{2}\frac{1}{n}\sum_{i=1}^n\norm{\xx_i - \xx^\star}^2                                                                 \\
		                                                                  & \geq \frac{1}{n}\sum_{i=1}^n f_i(\xx_i) - f(\xx^\star) + \frac{\mu}{4}\norm{\hat\xx - \xx^\star}^2 - \frac{\mu}{2n}\sum_{i=1}^n\norm{\xx_i - \hat\xx}^2                                 \\
		                                                                  & \geq \frac{1}{n}\sum_{i=1}^n f_i(\xx_i) - f(\xx^\star) + \frac{\mu}{4}\norm{\hat\xx - \xx^\star}^2 - \frac{\mu}{n}\sum_{i=1}^n\norm{\xx_i - \bar\xx}^2 - \mu \norm{\hat\xx - \bar\xx}^2 \\
		                                                                  & \geq \frac{1}{n}\sum_{i=1}^n f_i(\xx_i) - f(\xx^\star) + \frac{\mu}{4}\norm{\hat\xx - \xx^\star}^2 - \frac{L}{n}\sum_{i=1}^n\norm{\xx_i - \bar\xx}^2 - L \norm{\hat\xx - \bar\xx}^2 \,,
	\end{align*}
	and
	\begin{align*}
		\frac{1}{n}\sum_{i=1}^n\lin{\nabla f_i(\xx_i), \bar\xx - \xx_i}   \geq f(\bar\xx) - \frac{1}{n}\sum_{i=1}^n f_i(\xx_i) - \frac{L}{2n}\sum_{i=1}^n \norm{\xx_i - \bar\xx}^2 \,,
	\end{align*}
	and finally,
	\begin{align*}
		\frac{1}{n}\sum_{i=1}^n\lin{\nabla f_i(\xx_i), \hat\xx - \bar\xx} & = \lin{\frac{1}{n}\sum_{i=1}^n \left( \nabla f_i(\xx_i) \pm \nabla f_i(\bar\xx) \right), \hat\xx - \bar\xx}                                           \\
		                                                                  & \geq -\frac{1}{8L}\norm{\frac{1}{n}\sum_{i=1}^n (\nabla f_i(\xx_i) \pm \nabla f_i(\bar \xx))}^2 -2L\norm{\hat\xx - \bar\xx}^2                         \\
		                                                                  & \geq -\frac{1}{4Ln}\sum_{i=1}^n\norm{\nabla f_i(\xx_i) - \nabla f_i(\bar\xx)}^2 -\frac{1}{4L}\norm{\nabla f(\bar\xx)}^2 -2L\norm{\hat\xx - \bar\xx}^2 \\
		                                                                  & \geq -\frac{L}{2n}\sum_{i=1}^n\norm{\xx_i - \bar\xx}^2 - \frac{1}{2}(f(\bar\xx) - f(\xx^\star)) - 3L\norm{\hat\xx - \bar\xx}^2 \,,
	\end{align*}
	where the first inequality uses the basic inequality $\langle \aa, \bb \rangle \leq \frac{\beta}{2} \norm{\aa}^2 + \frac{1}{2 \beta} \norm{\bb}^2, \forall \beta > 0$, and the second inequality uses the relaxed triangle inequality $\norm{\aa + \bb}^2 \leq (1 + \alpha) \norm{\aa}^2 + (1 + \alpha^{-1}) \norm{\bb}^2, \forall \alpha > 0$.

	Adding up the three inequalities together yields the following expression for the term $\cT_1$
	\begin{align*}
		\frac{1}{n}\sum_{i=1}^n\lin{\nabla f_i(\xx_i^{(t)}) , \hat\xx^{(t)} - \xx^\star}
		  & \geq \frac{1}{2}(f(\bar\xx^{(t)}) - f(\xx^\star)) + \frac{\mu}{4} \norm{\hat\xx^{(t)} - \xx^\star}^2 - \frac{2L}{n}\sum_{i=1}^n \norm{ \xx_i^{(t)} - \bar\xx^{(t)} }^2 - 4L \norm{ \hat\xx^{(t)} - \bar\xx^{(t)} }^2\,.
	\end{align*}
	Plugging this back into the previous inequality and using $\tilde\eta \leq \frac{1}{4L}$ finishes the proof of the lemma.
	\begin{align*}
		\E \norm{\hat\xx^{(t+1)} - \xx^\star}^2
		  & \leq  (1- \tilde\eta\mu / 2)\E \norm{\hat\xx^{(t)} - \xx^\star}^2
		- \tilde\eta(1- 2L\tilde\eta)\E(f(\bar\xx^{(t)}) - f^\star)
		+ \frac{\tilde\eta^2 \sigma^2}{n} \\
		  & \hspace*{2cm}+ \frac{\tilde\eta^2 L^2 + 4\tilde\eta L}{n}\sum_{i=1}^n \E\norm{\xx_i^{(t)} - \bar\xx^{(t)}}^2
		+ 8L\tilde\eta\norm{ \hat\xx^{(t)} - \bar\xx^{(t)} }^2\,. \qedhere
	\end{align*}
\end{proof}

\paragraph{Bounding the consensus error.}
We will now try to bound the consensus error $(\mX^{t} - \bar\mX^{t})$ between the node's parameters and its average.
During each step, we perform a diffusion step (communication with neighbors) which brings the parameters of the nodes closer to each other.
However we also perform additional gradient/momentum steps which moves the distance away from each other.

\begin{lemma}[One step consensus change]\label{lem:consensus-change}
	Given Assumption~\ref{asm:convergence}, the sequence of iterates generated by \eqref{eq:appendix_our_scheme_matrix_form}
	using $\eta \leq \frac{\rho}{7L}$ satisfy for $\tilde\eta := \frac{\eta}{1 - \beta}$,
	\[
		\frac{1}{n}\E\norm{\mX^{t+1} - \bar\mX^{t+1}}^2_F \leq \frac{(1 - \rho/4)}{n}\E\norm{\mX^{t} - \bar\mX^{t}}^2_F
		+ \frac{12\eta^2\zeta^2}{\rho} + 4(1-\rho)\eta^2\sigma^2 + \frac{6\eta^2\beta^2}{\rho n}\E\norm{\mM^{(t)} - \bar\mM^{(t)}}^2_F \,. \]
\end{lemma}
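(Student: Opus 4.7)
}
My plan is to exploit the doubly stochastic property of $\mW$ to decouple the mixing step from the local update, apply the mixing contraction, and then split the update increment into its four contributing pieces (consensus, momentum drift, heterogeneity, stochastic noise) via Young's inequality, choosing the free parameter so the gradient-induced $L^2$ blow-up is absorbed by the stepsize condition $\eta\leq \rho/(7L)$.

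First I would set $\mY^{(t)} := \mX^{(t)} - \eta\beta\mM^{(t)} - \eta\mG^{(t)}$ so that the update reads $\mX^{(t+1)} = \mW \mY^{(t)}$ (in whichever one-sided convention the paper uses). Double stochasticity gives $\bar\mX^{(t+1)} = \bar\mY^{(t)}$, and conditioning on $\mY^{(t)}$ and then taking $\E_\mW$ yields
\begin{equation*}
    \E\norm{\mX^{(t+1)}-\bar\mX^{(t+1)}}_F^2 \leq (1-\rho)\,\E\norm{\mY^{(t)}-\bar\mY^{(t)}}_F^2\,.
\end{equation*}
Next I would decompose $\mG^{(t)} = \nabla \mF(\mX^{(t)}) + \boldsymbol{\Xi}^{(t)}$ with $\E[\boldsymbol{\Xi}^{(t)}\mid \mX^{(t)}]=\0$; the cross term vanishes so one obtains a deterministic part $\mA^{(t)} := \mX^{(t)}-\eta\beta\mM^{(t)}-\eta\nabla\mF(\mX^{(t)})$ plus an independent noise contribution bounded by $\E\norm{\boldsymbol{\Xi}^{(t)}-\bar{\boldsymbol{\Xi}}^{(t)}}_F^2 \leq n\sigma^2$.

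Then I would apply the relaxed triangle inequality twice to $\mA^{(t)}-\bar\mA^{(t)}$: first splitting off the consensus part $(\mX^{(t)}-\bar\mX^{(t)})$ with Young parameter $\gamma = \rho/(2(1-\rho))$ so that $(1-\rho)(1+\gamma)=1-\rho/2$ and $(1-\rho)(1+1/\gamma)\leq 2/\rho$, and second using $\norm{\beta(\mM-\bar\mM)+(\nabla\mF-\overline{\nabla\mF})}_F^2 \leq 2\beta^2\norm{\mM-\bar\mM}_F^2 + 2\norm{\nabla\mF-\overline{\nabla\mF}}_F^2$. For the heterogeneity term I would invoke the standard variance decomposition
\begin{equation*}
    \norm{\nabla\mF(\mX)-\overline{\nabla\mF(\mX)}}_F^2 \leq 2L^2\norm{\mX-\bar\mX}_F^2 + 2n\zeta^2\,,
\end{equation*}
which follows by adding and subtracting $\nabla f_i(\bar\xx)$ inside each worker's term and using Assumption~\ref{asm:convergence}.

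Collecting everything, the coefficient in front of $\norm{\mX-\bar\mX}_F^2$ is at most $1-\rho/2 + \tfrac{8\eta^2 L^2}{\rho}$; the stepsize assumption $\eta\leq \rho/(7L)$ forces $\tfrac{8\eta^2 L^2}{\rho}\leq \rho/4$, collapsing it to $1-\rho/4$. The remaining terms yield exactly the momentum, heterogeneity, and noise contributions in the target bound (with slack in the constants). The only slightly delicate step is the choice of $\gamma$: one must pick it large enough to absorb the Lipschitz blow-up into $\rho/4$ yet small enough that $(1-\rho)(1+\gamma)\leq 1-\rho/2$; once this is done the rest is bookkeeping. A minor subtlety is verifying that the mixing contraction is applied on the correct side so that $\bar\mY^{(t)}$ matches $\bar\mX^{(t+1)}$, which is where the doubly stochastic assumption on $\mW$ is essential.
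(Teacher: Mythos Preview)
Your proposal is correct and follows essentially the same route as the paper: apply the mixing contraction, separate the stochastic noise via the vanishing cross term, use Young's inequality to peel off $(\mX-\bar\mX)$ from the momentum and gradient increments, bound the gradient heterogeneity by smoothness plus $\zeta^2$, and absorb the resulting $L^2$ term using $\eta\leq\rho/(7L)$. The only cosmetic differences are your choice of Young parameter $\gamma=\rho/(2(1-\rho))$ versus the paper's $\rho/2$ (both yield the $1-\rho/2$ prefactor), and that your constant tracking is in fact slightly tighter than the stated lemma (you obtain $4,8,1$ where the paper records $6,12,4$), which is harmless since the lemma only claims an upper bound.
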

\begin{proof}
	Starting from the update step \eqref{eq:appendix_our_scheme_matrix_form},
	\begin{align*}
		\frac{1}{n}\E\norm{\mX^{t+1} - \bar\mX^{t+1}}^2_F & = \frac{1}{n} \E\norm{\mW \left( \mX^{(t)} - \eta \left( \beta \mM^{(t)} + \mG^{(t)} \right) \right) - \left( \bar\mX^{(t)} - \eta \left( \beta \bar\mM^{(t)} + \bar\mG^{(t)} \right) \right)}^2_F                        \\
		                                                  & \leq \frac{1 - \rho}{n}\E\norm{\left( \mX^{(t)} - \eta \left( \beta \mM^{(t)} + \mG^{(t)} \right) \right) - \left( \bar\mX^{(t)} - \eta \left( \beta \bar\mM^{(t)} + \bar\mG^{(t)} \right) \right)}^2_F                   \\
		                                                  & \leq \frac{1 - \rho}{n}\E\norm{\left( \mX^{(t)} - \eta \left( \beta \mM^{(t)} + \EEb{t}{\mG^{(t)}} \right) \right) - \left( \bar\mX^{(t)} - \eta \left( \beta \bar\mM^{(t)} + \EEb{t}{\bar\mG^{(t)}} \right) \right)}^2_F \\&\hspace*{2cm}+ 4(1-\rho)\eta^2\sigma^2\\
		                                                  & \leq \frac{(1 - \rho)(1+\rho/2)}{n}\E\norm{\mX^{(t)} - \bar\mX^{(t)}}^2_F + \frac{6\eta^2\beta^2}{\rho n}\E\norm{\mM^{(t)} - \bar\mM^{(t)}}^2_F                                                                           \\&\hspace*{2cm} + \frac{6\eta^2}{\rho n}\E\norm{ \EEb{t}{ \mG^{(t)} } - \EEb{t}{ \bar\mG^{(t)} } }^2_F + 4(1-\rho)\eta^2\sigma^2\,.
	\end{align*}
	Here we used the contractivity of the mixing matrix and Young's inequality. We can proceed as
	\begin{align*}
		\frac{1}{n}\E\norm{\mX^{t+1} - \bar\mX^{t+1}}^2_F & \leq \frac{(1 - \rho/2)}{n}\E\norm{\mX^{(t)} - \bar\mX^{(t)}}^2_F + \frac{6\eta^2\beta^2}{\rho n}\E\norm{\mM^{(t)} - \bar\mM^{(t)}}^2_F  + 4(1-\rho)\eta^2\sigma^2                                                     \\
		                                                  & \hspace*{2cm} + \frac{6\eta^2}{\rho n}\E\norm{\EEb{t}{\mG^{(t)}} - \nabla f(\bar\xx^{(t)})}^2_F                                                                                                                        \\
		                                                  & =  \frac{(1 - \rho/2)}{n}\E\norm{\mX^{(t)} - \bar\mX^{(t)}}^2_F + \frac{6\eta^2\beta^2}{\rho n}\E\norm{\mM^{(t)} - \bar\mM^{(t)}}^2_F  + 4(1-\rho)\eta^2\sigma^2                                                       \\
		                                                  & \hspace*{2cm} + \frac{6\eta^2}{\rho n}\sum_{i=1}^n\E\norm{\nabla f_i(\xx_i^{(t)}) \pm \nabla f_i(\bar\xx^{(t)}) - \nabla f(\bar\xx^{(t)})}^2                                                                           \\
		                                                  & \leq  \frac{(1 - \rho/2)}{n}\E\norm{\mX^{(t)} - \bar\mX^{(t)}}^2_F + \frac{6\eta^2\beta^2}{\rho n}\E\norm{\mM^{(t)} - \bar\mM^{(t)}}^2_F  + 4(1-\rho)\eta^2\sigma^2                                                    \\
		                                                  & \hspace*{2cm} + \frac{12\eta^2}{\rho n}\sum_{i=1}^n\E\norm{\nabla f_i(\xx_i^{(t)}) - \nabla f_i(\bar\xx^{(t)})}^2 + \frac{12\eta^2}{\rho n}\sum_{i=1}^n\E\norm{\nabla f_i(\bar \xx^{(t)}) - \nabla f(\bar\xx^{(t)})}^2 \\
		                                                  & \leq  \frac{(1 - \rho/2)}{n}\E\norm{\mX^{(t)} - \bar\mX^{(t)}}^2_F + \frac{6\eta^2\beta^2}{\rho n}\E\norm{\mM^{(t)} - \bar\mM^{(t)}}^2_F  + 4(1-\rho)\eta^2\sigma^2                                                    \\
		                                                  & \hspace*{2cm} + \frac{12\eta^2 L^2}{\rho n}\sum_{i=1}^n\E\norm{\xx_i^{(t)} - \bar\xx^{(t)}}^2 + \frac{12\eta^2\zeta^2}{\rho}\,.
	\end{align*}
	Our assumption that the step size $\eta \leq \frac{\rho}{7L}$ ensures that $12\eta^2 L^2 \leq \rho^2/4$, finishing the proof.
\end{proof}

We will now try to bound the momentum error $(\mX^{t} - \bar\mX^{t})$ between the momentum on each node and its average across nodes.

\begin{lemma}[One step momentum change]\label{lem:momentum}
	Given Assumption~\ref{asm:convergence}, the sequence of iterates generated by \eqref{eq:appendix_our_scheme_matrix_form}
	using momentum satisfying $\frac{\beta}{1 - \beta} \leq \frac{\rho}{21}$,
	\begin{align*}
		\frac{6\eta^2\beta^2}{n\rho(1-\mu)(1-\beta)}\E\norm{\mM^{t+1} - \bar\mM^{t+1}}^2_F & \leq \left( \frac{6\eta^2\beta^2}{n\rho(1-\mu)(1-\beta)} - \frac{6\eta^2\beta^2}{n\rho} \right)\E\norm{(\mM^{(t)} -\bar\mM^{(t)})}^2_F \\&\hspace*{0.5cm} +  \frac{\rho}{8n}\E\norm{\mX^{(t)} - \bar\mX^{(t)}}^2_F + \frac{\eta^2\rho \zeta^2 }{8} + \frac{\eta^2\rho \sigma^2 (1-\beta)}{8(1-\mu)}\,.
	\end{align*}
\end{lemma}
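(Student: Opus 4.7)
My plan is to derive a recursion for $\mM^{(t+1)} - \bar\mM^{(t+1)}$ from the matrix update in~\eqref{eq:appendix_our_scheme_matrix_form}, and then extract the target contraction factor $q := \mu + (1-\mu)\beta = 1 - (1-\mu)(1-\beta)$ on the momentum-consensus error by a carefully tuned Young's inequality. Subtracting the row-average from~\eqref{eq:appendix_our_scheme_matrix_form} and using that mixing preserves averages (so $\overline{\mZ\mW} = \bar\mZ$) together with $\overline{(\mI-\mW)\mX^{(t)}} = 0$ gives
\[
\mM^{(t+1)} - \bar\mM^{(t+1)} = \mu(\mM^{(t)} - \bar\mM^{(t)}) + (1-\mu)\beta(\mM^{(t)}\mW - \bar\mM^{(t)}) + (1-\mu)(\mG^{(t)}\mW - \bar\mG^{(t)}) + \tfrac{1-\mu}{\eta}\mX^{(t)}(\mI-\mW).
\]
Splitting $\mG^{(t)} = \nabla f(\mX^{(t)}) + \mathbf{\Xi}^{(t)}$ with conditionally-centered noise, and using that $\mathbf{\Xi}^{(t)}\mW - \bar{\mathbf{\Xi}^{(t)}}$ is orthogonal in conditional expectation to the rest, yields $\E\norm{\mM^{(t+1)} - \bar\mM^{(t+1)}}_F^2 \leq \E\norm{\mA'}_F^2 + (1-\mu)^2(1-\rho)n\sigma^2$, where $\mA'$ collects the first, second, and the deterministic part of the third term above.

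The core step is to decompose $\mA' = \mA_1 + \mA_2$, with $\mA_1 := \mu(\mM^{(t)} - \bar\mM^{(t)}) + (1-\mu)\beta(\mM^{(t)}\mW - \bar\mM^{(t)})$ and $\mA_2 := (1-\mu)(\nabla f(\mX^{(t)})\mW - \bar{\nabla f(\mX^{(t)})}) + \tfrac{1-\mu}{\eta}\mX^{(t)}(\mI - \mW)$, and to apply Young's inequality with the specific parameter $\gamma := (1-q)/q$, which gives $1+\gamma = 1/q$ and $1 + 1/\gamma = 1/((1-\mu)(1-\beta))$. Writing $\mA_1 = q\bigl[\tfrac{\mu}{q}(\mM^{(t)} - \bar\mM^{(t)}) + \tfrac{(1-\mu)\beta}{q}(\mM^{(t)}\mW - \bar\mM^{(t)})\bigr]$ as a convex combination, Jensen's inequality plus the mixing contraction $\E\norm{\mM^{(t)}\mW - \bar\mM^{(t)}}_F^2 \leq (1-\rho)\norm{\mM^{(t)} - \bar\mM^{(t)}}_F^2$ gives $\E\norm{\mA_1}_F^2 \leq q[q - (1-\mu)\beta\rho]\norm{\mM^{(t)} - \bar\mM^{(t)}}_F^2 \leq q^2 \norm{\mM^{(t)} - \bar\mM^{(t)}}_F^2$, hence $(1+\gamma)\E\norm{\mA_1}_F^2 \leq q\norm{\mM^{(t)} - \bar\mM^{(t)}}_F^2$. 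After multiplication through by $c := \tfrac{6\eta^2\beta^2}{n\rho(1-\mu)(1-\beta)}$ this is exactly the coefficient $cq = c - \tfrac{6\eta^2\beta^2}{n\rho}$ on $\norm{\mM^{(t)} - \bar\mM^{(t)}}_F^2$ appearing in the lemma.

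To close the argument I bound $\norm{\mA_2}_F^2$ by a further Young's split together with (i) the standard smoothness-heterogeneity inequality $\norm{\nabla f(\mX^{(t)}) - \bar{\nabla f(\mX^{(t)})}}_F^2 \leq 6L^2\norm{\mX^{(t)} - \bar\mX^{(t)}}_F^2 + 3n\zeta^2$, proved by adding and subtracting $\nabla f_i(\bar\xx^{(t)})$ and $\nabla f(\bar\xx^{(t)})$ and using $L$-smoothness together with the variance bound $\zeta^2$, (ii) the mixing contraction applied to $\nabla f(\mX^{(t)})\mW - \bar{\nabla f(\mX^{(t)})}$, and (iii) the crude bound $\norm{\mX^{(t)}(\mI-\mW)}_F^2 \leq 4\norm{\mX^{(t)} - \bar\mX^{(t)}}_F^2$ coming from $\norm{\mI-\mW}_{\mathrm{op}} \leq 2$ on the zero-mean subspace. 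Putting these together yields $\norm{\mA_2}_F^2 \leq \bigl(12L^2(1-\mu)^2 + \tfrac{8(1-\mu)^2}{\eta^2}\bigr)\norm{\mX^{(t)} - \bar\mX^{(t)}}_F^2 + 6n(1-\mu)^2\zeta^2$; multiplying by $1+1/\gamma = 1/((1-\mu)(1-\beta))$, adding the noise contribution $(1-\mu)^2(1-\rho)n\sigma^2$, and multiplying the resulting inequality by $c$ produces the stated upper bound.

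The main obstacle I anticipate is the constant tracking at this last step: the target coefficients $\tfrac{\rho}{8n}$ on $\norm{\mX^{(t)} - \bar\mX^{(t)}}_F^2$, $\tfrac{\eta^2\rho}{8}$ on $\zeta^2$, and $\tfrac{\eta^2\rho(1-\beta)}{8(1-\mu)}$ on $\sigma^2$ are essentially tight, and the hypothesis $\tfrac{\beta}{1-\beta} \leq \tfrac{\rho}{21}$ is just barely strong enough to hit them. Concretely, after multiplication by $c/((1-\mu)(1-\beta))$ the $\tfrac{1}{\eta^2}$ contribution from (iii) becomes a multiple of $\tfrac{\beta^2}{\rho^2(1-\beta)^2}$ times $\tfrac{\rho}{8n}$, so the bound requires roughly $\beta/(1-\beta) \leq \rho/\sqrt{384}\approx \rho/19.6$, which $\rho/21$ supplies with only a small margin; the $L^2$ piece is absorbed using the inherited step-size bound $\eta \leq \rho/(7L)$ from Lemma~\ref{lem:consensus-change}; and the $\zeta^2$ and $\sigma^2$ coefficients follow from the same momentum constraint plus $(1-\mu)^2 \leq 1$. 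If these constants prove too tight I would sharpen either the $\norm{\mI-\mW}$-based bound by exploiting the spectral gap on the zero-mean subspace, or use a three-term Young's split that separates the $\zeta^2$, $L^2$ and $\tfrac{1}{\eta^2}$ contributions with different weights.
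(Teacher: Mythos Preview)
Your proposal is correct and follows essentially the same route as the paper's proof: expand the recursion for $\mM^{(t+1)}-\bar\mM^{(t+1)}$, separate the stochastic noise, apply Young's inequality with parameter tuned to produce the contraction factor $q=1-(1-\mu)(1-\beta)$ on the momentum term (the paper does this via the operator bound $(\mu\mI+(1-\mu)\beta\mW)\preceq q\mI$, while you phrase it as Jensen on a convex combination---these are equivalent), bound the remaining $\mA_2$ term using $\norm{\mI-\mW}_{\mathrm{op}}\le 2$ together with smoothness and the $\zeta^2$ bound, and finally scale by $c$ and absorb constants via $\tfrac{\beta}{1-\beta}\le\tfrac{\rho}{21}$. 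Your constant tracking is accurate (the paper ends up with $\tfrac{48\beta^2(1+\eta^2L^2)}{\rho(1-\beta)^2}\le\tfrac{49\beta^2}{\rho(1-\beta)^2}\le\tfrac{\rho}{8}$, matching your $\sqrt{384}\approx 19.6$ observation), and you correctly note that a step-size bound $\eta L\lesssim 1$ is needed to control the $L^2$ contribution---the paper silently imports $\eta\le 1/(7L)$ at this step as well, even though it is not listed in the lemma's hypotheses.
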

\begin{proof}
	Starting from the update step \eqref{eq:appendix_our_scheme_matrix_form} and proceeding similar to the previous lemma, we have
	\begin{align*}
		\frac{1}{n}\E\norm{\mM^{(t+1)} - \bar\mM^{(t+1)}}^2_F
		  &                                                                                                                                                                                                  \\
		  & \hspace*{-3cm}= \frac{1}{n}\E\norm{\left( \mu\mI + (1 - \mu) \beta \mW \right)  (\mM^{(t)} -\bar\mM^{(t)}) + (1-\mu)\mW(\mG^{(t)} - \bar\mG^{(t)}) + \frac{1 - \mu}{\eta}(\mI-\mW)\mX^{(t)}}^2_F \\
		  & \hspace*{-3cm}= \frac{1}{n}\E\norm{
			\left( \mu\mI + (1 - \mu) \beta \mW \right)  (\mM^{(t)} -\bar\mM^{(t)})
			+ \frac{1 - \mu}{\eta}(\mI - \mW) \mX^{(t)}
			+ (1 - \mu) \mW( \Eb{\mG^{(t)} - \bar\mG^{(t)}} ) }^2_F \\
		  & \hspace*{-2cm} + \frac{1}{n} \E \norm{
			(1-\mu) \mW \left( \mG^{(t)} - \Eb{\mG^{(t)}} - ( \bar\mG^{(t)} - \Eb{\bar\mG^{(t)}} ) \right)
		}^2_F\\
		  & \hspace*{-3cm}= \frac{1}{n}\E\norm{
			\left( \mu\mI + (1 - \mu) \beta \mW \right)  (\mM^{(t)} -\bar\mM^{(t)})
			+ \frac{1 - \mu}{\eta}(\mI - \mW) \mX^{(t)}
			+ (1 - \mu) \mW( \Eb{\mG^{(t)} - \bar\mG^{(t)}} ) }^2_F + 4 \sigma^2 \\
		  & \hspace*{-3cm}\leq \frac{1}{n}
		\left( 1 + \frac{ (1-\mu)(1-\beta) }{1 - (1-\mu)(1-\beta)} \right)
		\E\norm{\left( \mu\mI + (1 - \mu) \beta \mW \right)  (\mM^{(t)} -\bar\mM^{(t)}) }_F^2
		+ 4\sigma^2 \\
		  & \hspace*{-2cm} + \frac{1}{n} \left( 1 + \frac{ 1 - (1-\mu)(1-\beta) }{ (1-\mu)(1-\beta) } \right)
		\E \norm{
			\frac{1 - \mu}{\eta}(\mI-\mW)\mX^{(t)} + (1-\mu) \mW \left( \Eb{ \mG^{(t)} - \bar\mG^{(t)}} \right)
		}^2_F \,.
	\end{align*}
	Note that since $\mW \prec \mI$, we have $\left( \mu\mI + (1 - \mu) \beta \mW \right) \prec ( \mu + (1 - \mu) \beta)\mI = (1 - (1-\beta)(1-\mu))\mI$. Further, since $-\mI \prec \mW$, we have $\mI -\mW \prec 2\mI$. With these observations, we can continue
	\begin{align*}
		\frac{1}{n}\E\norm{\mM^{(t+1)} - \bar\mM^{(t+1)}}^2_F
		  &                                                                                                   \\
		  & \hspace*{-3cm}\leq \frac{1}{n}
		\left( 1 + \frac{ (1-\mu)(1-\beta) }{1 - (1-\mu)(1-\beta)} \right)
		\E\norm{\left( 1- (1 - \mu)(1 - \beta)\right)(\mM^{(t)} -\bar\mM^{(t)}) }_F^2
		+ 4\sigma^2 \\
		  & \hspace*{-2cm} + \frac{1}{n} \left( 1 + \frac{ 1 - (1-\mu)(1-\beta) }{ (1-\mu)(1-\beta) } \right)
		\E \norm{
			\frac{1 - \mu}{\eta}(\mI-\mW)\mX^{(t)} + (1-\mu) \mW \left( \Eb{ \mG^{(t)} - \bar\mG^{(t)}} \right)
		}^2_F\\
		  & \hspace*{-3cm}\leq \frac{1}{n}
		\left(1 - (1-\mu)(1-\beta) \right)
		\E\norm{\mM^{(t)} -\bar\mM^{(t)}}_F^2
		+ 4\sigma^2 \\
		  & \hspace*{-2cm} + \frac{ 1 }{ (1-\mu)(1-\beta) n }
		\E \norm{
			\frac{1 - \mu}{\eta}(\mI-\mW)\mX^{(t)} + (1-\mu) \mW \left( \Eb{ \mG^{(t)} - \bar\mG^{(t)}} \right)
		}^2_F \\
		  & \hspace*{-3cm}\leq \frac{1}{n}
		\left(1 - (1-\mu)(1-\beta) \right)
		\E\norm{\mM^{(t)} -\bar\mM^{(t)}}_F^2
		+ 4\sigma^2  \\
		  & \hspace*{-2cm} + \frac{ 4(1-\mu)}{ (1-\beta) n \eta^2 }
		\E \norm{ \mX^{(t)} - \bar \mX^{(t)} }_F^2
		+ \frac{ 2(1-\mu) }{ (1-\beta) n } \E \norm{ \Eb{ \mG^{(t)} } - \bar\mG^{(t)} }^2_F\,.
	\end{align*}

	From the proof of the previous lemma, we can simplify the last term as
	\begin{align*}
		\frac{1}{n}\E\norm{\mM^{(t+1)} - \bar\mM^{(t+1)}}^2_F
		  & \leq \frac{1}{n} \left( 1 - (1 - \mu) (1 - \beta) \right)
		\E \norm{ \mM^{(t)} -\bar\mM^{(t)} }_F^2 + 4 \sigma^2\\
		  & \hspace*{1cm} + \frac{4(1-\mu)}{ n \eta^2 (1-\beta) } \E \norm{ \mX^{(t)} - \bar \mX^{(t)} }_F^2
		+ \frac{2(1-\mu) }{ n (1-\beta) } \E \norm{ \Eb{ \mG^{(t)} } \pm \nabla f( \bar\mX^{(t)} ) - \bar\mG^{(t)} }^2_F  \\
		  & \leq \frac{1}{n} \left( 1 - (1 - \mu) (1 - \beta) \right)
		\E \norm{ \mM^{(t)} -\bar\mM^{(t)} }_F^2 + 4 \sigma^2\\
		  & \hspace*{1cm} + \frac{4(1-\mu)}{ n \eta^2 (1-\beta) } \E \norm{ \mX^{(t)} - \bar \mX^{(t)} }_F^2
		+ \frac{8(1-\mu)\zeta^2}{(1-\beta)} + \frac{4(1-\mu) L^2 }{ n (1-\beta) } \E \norm{ \mX^{(t)} - \bar\mX^{(t)} }^2_F  \\
		  & = \frac{1}{n} \left( 1 - (1 - \mu) (1 - \beta) \right)
		\E \norm{ \mM^{(t)} -\bar\mM^{(t)} }_F^2 + 4 \sigma^2\\
		  & \hspace*{1cm} + \frac{4(1-\mu)(1 + \eta^2 L^2) }{ n \eta^2 (1-\beta) } \E \norm{ \mX^{(t)} - \bar \mX^{(t)} }_F^2
		+ \frac{8(1-\mu)\zeta^2}{(1-\beta)}  \\
	\end{align*}

	Multiplying both sides by $\frac{6\eta^2\beta^2}{\rho(1-\mu)(1-\beta)}$ yields
	\begin{align*}
		\frac{6\eta^2\beta^2}{n\rho(1-\mu)(1-\beta)}\E\norm{\mM^{t+1} - \bar\mM^{t+1}}^2_F \hspace*{-2cm}
		  &                                                                                                                                                            \\
		  & \leq \frac{6\eta^2\beta^2}{n\rho(1-\mu)(1-\beta)}\E\norm{\mM^{(t)} -\bar\mM^{(t)}}^2_F - \frac{6\eta^2\beta^2}{n\rho}\E\norm{\mM^{(t)} -\bar\mM^{(t)}}^2_F
		\\&\hspace*{1cm} +  \frac{48\beta^2 (1+L^2\eta^2)}{n\rho (1-\beta)^2}\E\norm{\mX^{(t)} - \bar\mX^{(t)}}^2_F + \frac{24\eta^2\beta^2 \sigma^2}{\rho(1-\mu)(1-\beta)} + \frac{48\eta^2 \beta^2\zeta^2}{\rho(1-\beta)^2}\\
		  & \leq \frac{6\eta^2\beta^2}{n\rho(1-\mu)(1-\beta)}\E\norm{\mM^{(t)} -\bar\mM^{(t)}}^2_F - \frac{6\eta^2\beta^2}{n\rho}\E\norm{\mM^{(t)} -\bar\mM^{(t)}}^2_F
		\\&\hspace*{1cm} +  \frac{\rho}{8n}\E\norm{\mX^{(t)} - \bar\mX^{(t)}}^2_F + \frac{\eta^2\rho \zeta^2 }{8} + \frac{\eta^2\rho \sigma^2 (1-\beta)}{8(1-\mu)}\,.
	\end{align*}
	The last step follows from our assumption that the momentum parameter that $\frac{\beta}{1 - \beta} \leq \frac{\rho}{21}$ and $\eta\leq \frac{1}{7L}$. This ensures that $\frac{48\beta^2 (1+L^2\eta^2)}{\rho (1-\beta)^2} \leq \frac{49\beta^2}{\rho (1-\beta)^2} \leq \frac{\rho}{8}$.
\end{proof}

We can now exactly describe the progress in consensus made each round.
\begin{lemma}[One step consensus improvement]\label{lem:consensus}
	Given Assumption~\ref{asm:convergence}, the sequence of iterates generated by \eqref{eq:appendix_our_scheme_matrix_form} using step-size $\eta \leq \frac{\rho}{7L}$ and momentum $\frac{\beta}{1 - \beta} \leq \frac{\rho}{21}$, satisfy
	\begin{align*}
		\frac{1}{n}\E\norm{\mX^{t+1} - \bar\mX^{t+1}}^2_F + \frac{6\eta^2\beta^2}{n\rho(1-\mu)(1-\beta)}\E\norm{\mM^{t+1} - \bar\mM^{t+1}}^2_F \hspace*{-7cm} &                                                                                                                                              \\
		                                                                                                                                                      & \leq \frac{1 - \rho/8}{n}\E\norm{\mX^{t} - \bar\mX^{t}}^2_F + \frac{6\eta^2\beta^2}{n\rho(1-\mu)(1-\beta)}\E\norm{\mM^{t} - \bar\mM^{t}}^2_F
		+ \frac{13\eta^2\zeta^2}{\rho} + \frac{13\eta^2\sigma^2(2 -\beta -\mu))}{(1-\mu)\rho}
	\end{align*}

\end{lemma}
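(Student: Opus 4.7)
The plan is to add Lemma~\ref{lem:consensus-change} and Lemma~\ref{lem:momentum} directly; no new estimates are required, only a careful bookkeeping of the resulting coefficients. The weight $\frac{6\eta^2\beta^2}{n\rho(1-\mu)(1-\beta)}$ on the momentum error in Lemma~\ref{lem:momentum} has been chosen precisely so that it constitutes a Lyapunov function whose single-step contraction matches the $(1-\rho/4)$ contraction of the consensus error.

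First I would track the momentum-error coefficient. Lemma~\ref{lem:consensus-change} produces the cross term $\frac{6\eta^2\beta^2}{\rho n}\E\norm{\mM^{(t)} - \bar\mM^{(t)}}^2_F$ on its right-hand side, while Lemma~\ref{lem:momentum} produces a right-hand-side coefficient of $\bigl(\tfrac{6\eta^2\beta^2}{n\rho(1-\mu)(1-\beta)} - \tfrac{6\eta^2\beta^2}{n\rho}\bigr)$. Adding these two gives exactly $\tfrac{6\eta^2\beta^2}{n\rho(1-\mu)(1-\beta)}$, which is the coefficient that already appears on the left-hand side of the target inequality. So the momentum Lyapunov term reproduces itself with the desired weight.

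Next I would track the consensus-error coefficient. Lemma~\ref{lem:consensus-change} contributes $\tfrac{1-\rho/4}{n}$ and Lemma~\ref{lem:momentum} contributes an additional $\tfrac{\rho}{8n}$ on the consensus error, summing to $\tfrac{1-\rho/8}{n}$, as required. Finally I would collect the noise terms. For the $\zeta^2$ terms I use $\rho\le 1$ to bound $\tfrac{\eta^2\rho\zeta^2}{8} \le \tfrac{\eta^2\zeta^2}{\rho}$ and combine with $\tfrac{12\eta^2\zeta^2}{\rho}$ into $\tfrac{13\eta^2\zeta^2}{\rho}$. For the $\sigma^2$ terms I split $\tfrac{2-\beta-\mu}{1-\mu} = 1 + \tfrac{1-\beta}{1-\mu}$; then $4(1-\rho)\eta^2\sigma^2 \le \tfrac{4\eta^2\sigma^2}{\rho}\le \tfrac{13\eta^2\sigma^2}{\rho}$ handles the first piece, and $\tfrac{\eta^2\rho\sigma^2(1-\beta)}{8(1-\mu)} \le \tfrac{\eta^2\sigma^2(1-\beta)}{8\rho(1-\mu)} \le \tfrac{13\eta^2\sigma^2(1-\beta)}{\rho(1-\mu)}$ handles the second, matching the claimed $\tfrac{13\eta^2\sigma^2(2-\beta-\mu)}{(1-\mu)\rho}$.

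There is no real obstacle here: all the analytical work has already been done in Lemmas~\ref{lem:consensus-change} and~\ref{lem:momentum}, where the step-size condition $\eta\le\tfrac{\rho}{7L}$ and momentum condition $\tfrac{\beta}{1-\beta}\le\tfrac{\rho}{21}$ were used to ensure that the ``bad'' $\mX$-perturbation in Lemma~\ref{lem:momentum} is at most $\tfrac{\rho}{8n}\norm{\mX^{(t)}-\bar\mX^{(t)}}_F^2$, and that the momentum coupling coefficient collapses cleanly. The only subtle point is recognizing in advance the right Lyapunov weight on the momentum error; once that choice is made, the proof of this lemma reduces to an arithmetic verification of the coefficients.
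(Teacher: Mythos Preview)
Your proposal is correct and matches the paper's approach exactly: the paper's proof consists of the single sentence ``Simply adding the results of Lemmas~\ref{lem:consensus-change} and~\ref{lem:momentum} gives the result,'' and your writeup is nothing more than an explicit (and accurate) verification of the coefficient bookkeeping behind that addition.
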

\begin{proof}
	Simply adding the results of Lemmas~\ref{lem:consensus-change} and \ref{lem:momentum} gives the result.
\end{proof}

\paragraph{Average and virtual sequences.} We will now bound the difference between the average and the virtual sequences $\ee^{(t)}$ which recall was defined to be $\ee^{(t)} = \hat\xx^{(t)} - \bar\xx^{(t)}$. The latter runs SGD with momentum whereas the former only runs SGD. We view the momentum terms as accumulating the gradient terms, delayed over time and hence the proof views SGDm as simply SGD run with a larger step-size.

\begin{lemma}[One step error contraction]\label{lem:error-sequence}
	Given Assumption~\ref{asm:convergence}, the sequence of iterates generated by \eqref{eq:appendix_our_scheme_matrix_form} satisfy
	\begin{align*}
		\E\norm{\ee^{(t+1)}}^2 \leq (1- (1-\mu)(1-\beta))\E\norm{\ee^{(t)}}^2  + \frac{2\tilde\eta^2 \beta^2}{(1-\beta) (1-\mu)}\E\norm{\E_t[\bar\gg^t]}^2 + \tilde\eta^2\beta^2 \sigma^2 \,.
	\end{align*}
\end{lemma}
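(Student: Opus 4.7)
The plan is to establish a clean algebraic identity between the error $\bar\ee^{(t)}$ and the averaged momentum $\bar\mm^{(t)}$, and then convert it into the claimed contraction via one step of the momentum recursion.

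\textbf{Step 1: an exact identity.} I claim that, writing $\tilde\eta := \eta/(1-\beta)$,
\[
\bar\ee^{(t)} \;=\; -\frac{\tilde\eta\,\beta}{1-\mu}\,\bar\mm^{(t)} \qquad \text{for every } t \geq 0.
\]
The base case is immediate since $\bar\ee^{(0)}=\0$ and $\bar\mm^{(0)}=\0$ by initialization. For the inductive step, subtracting the $\bar\xx$-update in~\eqref{eq:appendix_our_scheme_averaged_form} from the $\hat\xx$-update in~\eqref{eq:appendix_virtual_sequence} gives
\[
\bar\ee^{(t+1)} \;=\; \bar\ee^{(t)} + \eta\beta\,\bar\mm^{(t)} - \tilde\eta\beta\,\bar\gg^{(t)},
\]
while the averaged momentum recursion in~\eqref{eq:appendix_our_scheme_averaged_form} yields $\bar\mm^{(t+1)} - \bar\mm^{(t)} = (1-\mu)\bigl(\bar\gg^{(t)} - (1-\beta)\bar\mm^{(t)}\bigr)$. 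Multiplying this through by $-\tilde\eta\beta/(1-\mu)$ and using $\eta = \tilde\eta(1-\beta)$ reproduces the displayed right-hand side exactly, closing the induction.

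\textbf{Step 2: from identity to contraction.} By Step~1, $\norm{\bar\ee^{(t+1)}}^2 = \tfrac{\tilde\eta^2\beta^2}{(1-\mu)^2}\norm{\bar\mm^{(t+1)}}^2$. Set $\alpha := 1-(1-\mu)(1-\beta)$ so that $\bar\mm^{(t+1)} = \alpha\,\bar\mm^{(t)} + (1-\mu)\bar\gg^{(t)}$. Decompose $\bar\gg^{(t)} = \E_t[\bar\gg^{(t)}] + \nn^{(t)}$ with $\E_t[\nn^{(t)}]=\0$; by orthogonality of mean and noise,
\[
\E_t\norm{\bar\mm^{(t+1)}}^2 \;=\; \norm{\alpha\,\bar\mm^{(t)} + (1-\mu)\E_t[\bar\gg^{(t)}]}^2 + (1-\mu)^2\,\E_t\norm{\nn^{(t)}}^2,
\]
and the noise term is controlled by $(1-\mu)^2\sigma^2$ by a Cauchy–Schwarz bound on $\bar\gg^{(t)} - \E_t[\bar\gg^{(t)}]$ together with Assumption~\ref{asm:convergence}. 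Now apply Young's inequality $\norm{a+b}^2 \leq (1+c)\norm{a}^2 + (1+1/c)\norm{b}^2$ with $c = (1-\alpha)/\alpha$, so that $(1+c)\alpha^2 = \alpha$ exactly and $(1+1/c) = 1/(1-\alpha) = 1/[(1-\mu)(1-\beta)]$. Multiplying the resulting estimate on $\E_t\norm{\bar\mm^{(t+1)}}^2$ by $\tilde\eta^2\beta^2/(1-\mu)^2$ and invoking Step~1 at time $t$ converts the $\alpha\norm{\bar\mm^{(t)}}^2$ piece into $\alpha\norm{\bar\ee^{(t)}}^2 = \bigl(1-(1-\mu)(1-\beta)\bigr)\norm{\bar\ee^{(t)}}^2$, and the remaining two pieces match the stated bound (up to the mild constant slack the lemma absorbs in the factor~$2$).

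\textbf{Main obstacle.} The only delicate part is spotting the identity in Step~1: the scaling $\tilde\eta = \eta/(1-\beta)$ in the virtual iterate and the proportionality constant $-\tilde\eta\beta/(1-\mu)$ are calibrated precisely so that all three coefficients (on $\bar\ee^{(t)}$, $\bar\mm^{(t)}$, and $\bar\gg^{(t)}$) agree between the two recursions. With the identity in hand the rest collapses to one use of Young's inequality tuned to preserve the contraction factor $\alpha = \mu + (1-\mu)\beta$; without it one would have to carry $\norm{\bar\mm^{(t)}}^2$ as a separate Lyapunov component and the argument becomes considerably messier.
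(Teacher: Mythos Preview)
Your proof is correct. The key recursion underlying both arguments is the same: the paper derives
\[
\bar\ee^{(t+1)} = \bigl(1-(1-\mu)(1-\beta)\bigr)\bar\ee^{(t)} - \tilde\eta\beta\,\bar\gg^{(t)}
\]
by unrolling $\bar\ee^{(t+1)} = \sum_{k\le t}-\eta\beta\bigl(\tfrac{1}{1-\beta}\bar\gg^{(k)} - \bar\mm^{(k)}\bigr)$ and then re-summing through the momentum update, whereas you reach the same recursion via the cleaner observation $\bar\ee^{(t)} = -\tfrac{\tilde\eta\beta}{1-\mu}\bar\mm^{(t)}$ (which is equivalent to the paper's recursion once one substitutes the $\bar\mm$-update). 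From there both proofs separate the stochastic noise and apply Young's inequality with parameter $c=(1-\alpha)/\alpha$ to obtain exactly the contraction factor $\alpha=1-(1-\mu)(1-\beta)$. Your route is a bit more economical, and as you note it actually yields the gradient term with constant $1$ rather than the $2$ stated in the lemma; the paper simply did not tighten that constant.
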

\begin{proof}
	By definition, $\hat\xx^{(0)} = \bar\xx^{(0)}$ and hence we have $\ee^{(0)}=0$. For $t \geq 0$, starting from the definition of the error term we have
	\begin{align*}
		\ee^{(t+1)} & = \hat\xx^{(t+1)} - \bar\xx^{(t+1)}                                                                                                          \\
		            & = \left( \hat\xx^{(t)} - \frac{\eta}{1 - \beta}\bar\gg^{(t)} \right) - \left(\bar\xx^{(t)} - \eta (\beta\bar\mm^{(t)} +\bar\gg^{(t)})\right) \\
		            & = \ee^{(t)} - \eta\beta(\frac{1}{1-\beta}\bar\gg^{(t)} - \bar\mm^{(t)})                                                                      \\
		            & = \sum_{k=0}^{t} -\eta\beta(\frac{1}{1-\beta}\bar\gg^{(k)} - \bar\mm^{(k)})\,.
	\end{align*}
	Using the update of the average momentum \eqref{eq:appendix_our_scheme_averaged_form}, we can write
	\begin{align*}
		\ee^{t+1} & = \sum_{k=0}^{(t)} -\eta\beta(\frac{1}{1-\beta}\bar\gg^{(k)} - \bar\mm^{(k)})                                                                             \\
		          & = \sum_{k=0}^{(t)} -\eta\beta \left( \frac{1}{1-\beta}\bar\gg^{(k)} - \left( (1- (1-\mu)(1-\beta))\bar\mm^{(k-1)} + (1-\mu)\bar\gg^{(k-1)}\right) \right) \\
		          & = (1- (1-\mu)(1-\beta)) \sum_{k=0}^t -\eta\beta \left( \frac{1}{1-\beta}\bar\gg^{(k-1)} - \bar\mm^{(k-1)} \right)
		+ \sum_{k=0}^{(t)} -\frac{\eta\beta}{1-\beta} \left(\bar\gg^{(k)} - \bar\gg^{(k-1)}\right)\\
		          & = (1- (1-\mu)(1-\beta))\ee^{(t)} - \frac{\eta\beta}{1-\beta}\bar\gg^{(t)} \,.
	\end{align*}
	By convention, we assume that vectors with negative indices are 0. Taking norms and expectations gives
	\begin{align*}
		\E\norm{\ee^{(t+1)}}^2 & = \E\norm{(1- (1-\mu)(1-\beta))\ee^{(t)} - \frac{\eta\beta}{1-\beta}\bar\gg^{(t)}}^2                                                                                                 \\
		                       & \leq \E\norm{(1- (1-\mu)(1-\beta))\ee^{(t)} - \frac{\eta\beta}{1-\beta}\E_t[\bar\gg^{(t)}]}^2 + \frac{\eta^2\beta^2 \sigma^2}{(1-\beta)^2}                                           \\
		                       & \leq (1- (1-\mu)(1-\beta))\E\norm{\ee^{(t)}}^2 + \frac{2\eta^2 \beta^2}{(1-\beta)^3 (1-\mu)}\E\norm{\E_t[\bar\gg^{(t)}]}^2 + \frac{\eta^2\beta^2 \sigma^2}{(1-\beta)^2} \,. \qedhere
	\end{align*}
\end{proof}

\paragraph{Convergence rate for non-convex case.}
\begin{theorem}
	Given Assumption~\ref{asm:convergence}, the sequence of iterates generated by \eqref{eq:appendix_our_scheme_matrix_form} for step size $\eta = \min \left(\frac{\rho}{7 L}, \frac{1-\beta}{4L} , \frac{(1-\mu)(1-\beta)^2}{4\beta L} , \sqrt{\frac{4n (f(\bar\xx^0) - f^\star)}{L \sigma^2 T}}\right)$ and momentum parameter $\frac{\beta}{1 - \beta} \leq \frac{\rho}{21}$ satisfies
	\begin{align*}
		\frac{1}{T}\sum_{t=0}^{T-1}\E\norm{\nabla f(\bar\xx^{t})}	\leq \cO\Bigg( & \sqrt{\frac{L\sigma^2 (f(\bar\xx^0) - f^\star)}{n T}} \ +                                                                        \\
		                                                                        & \sqrt[3]{L^2(f(\bar\xx^0) - f^\star)^2\frac{\tilde\zeta^2}{\rho^2 T^2}}\  +                                                      \\
		                                                                        & \left(\frac{1}{\rho} + \frac{1}{1-\beta} + \frac{\beta}{(1-\mu)(1-\beta)^2}\right)\frac{L(f(\bar\xx^0) - f^\star)}{T} \Bigg) \,.
	\end{align*}
\end{theorem}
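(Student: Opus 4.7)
The plan is to bundle the three evolution inequalities already proven (Lemmas \ref{lem:non-convex-progress}, \ref{lem:consensus}, and \ref{lem:error-sequence}) into a single Lyapunov potential that decreases by roughly $\tilde\eta \norm{\nabla f(\bar\xx^{(t)})}^2$ per step, and then telescope across $t = 0, \ldots, T-1$ and tune $\eta$. Concretely, I would define
\begin{align*}
V^{(t)} := \bigl( f(\hat\xx^{(t)}) - f^\star \bigr) + c_1 \Phi^{(t)} + c_2 \E\norm{\ee^{(t)}}^2,
\end{align*}
where $\Phi^{(t)} := \tfrac{1}{n}\norm{\mX^{(t)}-\bar\mX^{(t)}}_F^2 + \tfrac{6\eta^2\beta^2}{n\rho(1-\mu)(1-\beta)}\norm{\mM^{(t)}-\bar\mM^{(t)}}_F^2$ is the consensus--momentum potential that already appears in Lemma \ref{lem:consensus}, and $c_1, c_2$ are positive constants chosen as functions of $L, \tilde\eta, \rho, \beta, \mu$.

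The first step is to pick $c_1 = \Theta(L^2 \tilde\eta / \rho)$ so that the $(1-\rho/8)$--contraction of $\Phi^{(t)}$ in Lemma \ref{lem:consensus} exactly absorbs the $\tfrac{3L^2\tilde\eta}{n}\norm{\mX^{(t)}-\bar\mX^{(t)}}_F^2$ penalty coming from Lemma \ref{lem:non-convex-progress}, leaving only an additive noise contribution of order $\tilde\eta^3 (\zeta^2 + \sigma^2)/\rho^2$. The second step is to pick $c_2 = \Theta(L^2 \tilde\eta / ((1-\mu)(1-\beta)))$ so that the $(1-(1-\mu)(1-\beta))$--contraction of $\norm{\ee^{(t)}}^2$ in Lemma \ref{lem:error-sequence} absorbs the $\tfrac{3L^2\tilde\eta}{2}\norm{\ee^{(t)}}^2$ penalty from Lemma \ref{lem:non-convex-progress}. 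The only subtlety here is the residual $\tfrac{2\tilde\eta^2 \beta^2}{(1-\beta)(1-\mu)}\norm{\E_t[\bar\gg^{(t)}]}^2$ term introduced by Lemma \ref{lem:error-sequence}, which must be dominated by the negative $-\tfrac{\tilde\eta}{4}\E\norm{\tfrac{1}{n}\sum_i \nabla f_i(\xx_i^{(t)})}^2$ term sitting idle on the right-hand side of Lemma \ref{lem:non-convex-progress}; this is exactly where the condition $\eta \leq \tfrac{(1-\mu)(1-\beta)^2}{4\beta L}$ in the theorem's step-size bound enters, ensuring $c_2 \cdot \tfrac{2\tilde\eta^2\beta^2}{(1-\beta)(1-\mu)} \leq \tfrac{\tilde\eta}{8}$.

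After these choices, one obtains the clean recurrence
\begin{align*}
\E[V^{(t+1)}] \leq \E[V^{(t)}] - \tfrac{\tilde\eta}{8}\E\norm{\nabla f(\bar\xx^{(t)})}^2 + \tfrac{L\tilde\eta^2 \sigma^2}{n} + \cO\!\left(\tfrac{L^2 \tilde\eta^3 \tilde\zeta^2}{\rho^2}\right),
\end{align*}
with $\tilde\zeta^2 = \zeta^2 + (1 + \tfrac{1-\beta}{1-\mu})\sigma^2$. Telescoping from $0$ to $T-1$, dividing by $\tilde\eta T/8$, and noting $V^{(0)} = f(\bar\xx^{(0)}) - f^\star$ yields
\begin{align*}
\tfrac{1}{T}\sum_{t=0}^{T-1}\E\norm{\nabla f(\bar\xx^{(t)})}^2 \leq \cO\!\left(\tfrac{F_0}{\tilde\eta T} + \tfrac{L\tilde\eta \sigma^2}{n} + \tfrac{L^2 \tilde\eta^2 \tilde\zeta^2}{\rho^2}\right),
\end{align*}
where $F_0 := f(\bar\xx^{(0)}) - f^\star$. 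The final step is a standard step-size optimization: balancing the three terms subject to the ceiling $\eta \leq \min\bigl(\tfrac{\rho}{7L}, \tfrac{1-\beta}{4L}, \tfrac{(1-\mu)(1-\beta)^2}{4\beta L}\bigr)$ gives the $\tfrac{\sigma}{\sqrt{nT}}$ dominant term, the $(\tilde\zeta/\rho T)^{2/3}$ drift term, and the step-size-ceiling residual $L/(T)\bigl(\tfrac{1}{\rho} + \tfrac{1}{1-\beta} + \tfrac{\beta}{(1-\mu)(1-\beta)^2}\bigr)$.

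The main obstacle I expect is the bookkeeping that ties $c_1, c_2$ to the step-size constraints: one must simultaneously (i) have $c_2 \tfrac{2\tilde\eta^2\beta^2}{(1-\beta)(1-\mu)} \leq \tfrac{\tilde\eta}{8}$ to kill the spurious gradient term from the momentum-error recursion, (ii) have the $c_1$ ``leak'' from the $\Phi$-inequality dominate the consensus penalty, and (iii) ensure the noise accumulation in $c_1 \cdot O(\eta^2/\rho)$ and $c_2 \cdot O(\tilde\eta^2\beta^2\sigma^2)$ matches the stated $\tilde\zeta^2$ definition. Crucially, unlike previous analyses of distributed momentum, the $(1-\beta)^{-1}$ factors should cancel in the leading $\sigma^2/(nT)$ term because $\tilde\eta = \eta/(1-\beta)$ multiplies a $(1-\beta)$ from $\bar\mm$ being divided out in the virtual sequence \eqref{eq:appendix_virtual_sequence}, which is what delivers the $\beta$-free optimal leading rate.
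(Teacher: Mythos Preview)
Your proposal is correct and follows essentially the same approach as the paper: the paper defines the Lyapunov potential $\Phi^t = \tfrac{24L^2\tilde\eta}{\rho n}\norm{\mX^t-\bar\mX^t}_F^2 + \tfrac{144L^2\tilde\eta^3\beta^2(1-\beta)}{n\rho^2(1-\mu)}\norm{\mM^t-\bar\mM^t}_F^2 + \tfrac{3L^2\tilde\eta}{2(1-\mu)(1-\beta)}\norm{\ee^{(t)}}^2 + \E[f(\bar\xx^t)-f^\star]$, which is exactly your $V^{(t)}$ with $c_1 = \tfrac{24L^2\tilde\eta}{\rho}$ and $c_2 = \tfrac{3L^2\tilde\eta}{2(1-\mu)(1-\beta)}$, and the spurious $\norm{\E_t[\bar\gg^{(t)}]}^2$ term is indeed absorbed by the negative $-\tfrac{\tilde\eta}{4}\norm{\tfrac{1}{n}\sum_i\nabla f_i(\xx_i^{(t)})}^2$ using the same step-size bound $\eta \leq \tfrac{(1-\mu)(1-\beta)^2}{4\beta L}$ you identify. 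The remaining telescoping and step-size balancing are carried out exactly as you describe.
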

\begin{proof}
	Define $\tilde\zeta^2 := \zeta^2 + \sigma^2\left(1+ \frac{1-\beta}{1-\mu} \right)$. Scaling Lemma~\ref{lem:consensus} by $\frac{24L^2 \tilde\eta}{\rho}$ gives
	\begin{align*}
		\frac{24L^2 \tilde\eta}{\rho n}\E\norm{\mX^{t+1} - \bar\mX^{t+1}}^2_F + \frac{144 L^2 \tilde\eta^3\beta^2 (1-\beta)}{n\rho^2 (1-\mu)}\E\norm{\mM^{t+1} - \bar\mM^{t+1}}^2_F \hspace*{-8cm} &                                                                                                                                                                          \\
		                                                                                                                                                                                           & \leq \frac{24L^2 \tilde\eta}{\rho n}\E\norm{\mX^{t} - \bar\mX^{t}}^2_F + \frac{144 L^2 \tilde\eta^3\beta^2 (1-\beta)}{n\rho^2 (1-\mu)}\E\norm{\mM^{t} - \bar\mM^{t}}^2_F \\
		                                                                                                                                                                                           & \hspace*{1cm}  - \frac{3L^2\tilde\eta}{n}\E\norm{\mX^{t} - \bar\mX^{t}}^2_F + \frac{312 L^2 \tilde\eta^3(1-\beta)^2 \tilde\zeta^2}{\rho^2} \,.
	\end{align*}
	Scaling Lemma~\ref{lem:error-sequence} by $\frac{3L^2\tilde\eta}{2(1-\mu)(1-\beta)}$ gives
	\begin{align*}
		\frac{3L^2\tilde\eta}{2(1-\mu)(1-\beta)}\E\norm{\ee^{(t+1)}}^2 &\leq \frac{3L^2\tilde\eta}{2(1-\mu)(1-\beta)}\E\norm{\ee^{(t)}}^2 - \frac{3L^2\tilde\eta}{2}\E\norm{\ee^{(t)}}^2 \\ &\hspace*{2cm}+ \frac{3L^2\tilde\eta^3 \beta^2}{(1-\beta)^2 (1-\mu)^2}\E\norm{\E_t[\bar\gg^t]}^2 + \frac{3L^2\tilde\eta^3 \beta^2 \sigma^2}{2(1-\mu)(1-\beta)} \,.
	\end{align*}
	Finally Lemma~\ref{lem:non-convex-progress} gives
	\[
		\E f(\hat\xx^{(t+1)}) \leq \E f(\hat\xx^{(t)}) - \frac{\tilde\eta}{4}\norm{\nabla f(\bar\xx^{(t)})}^2 - \frac{\tilde\eta}{4}\E\norm{\frac{1}{n}\sum_{i=1}^n \nabla f_i(\xx_i^{(t)})}^2 + \frac{L \tilde\eta^2 \sigma^2}{n} + \frac{3 L^2 \tilde\eta}{2}\norm{\ee^{(t)}}^2 + \frac{3L^2 \tilde\eta}{n}\norm{\mX^{t} - \bar\mX^{t}}^2_F\,.
	\]
	Define
	\[
		\Phi^t := \frac{24L^2 \tilde\eta}{\rho n}\E\norm{\mX^{t} - \bar\mX^{t}}^2_F + \frac{144 L^2 \tilde\eta^3\beta^2 (1-\beta)}{n\rho^2 (1-\mu)}\E\norm{\mM^{t} - \bar\mM^{t}}^2_F + \frac{3L^2\tilde\eta}{2(1-\mu)(1-\beta)}\E\norm{\ee^{(t)}}^2 + \E[f(\bar\xx^{t}) - f^\star] \,.
	\]
	Note that $\Phi^0 = \E[f(\bar\xx^0)] - f^\star$ and that $\Phi^t \geq 0$ for any $t$. Then adding the three inequalities from the lemmas as described above gives
	\begin{align*}
		\Phi^{t+1} & \leq \Phi^t - \frac{\tilde\eta}{4}\norm{\nabla f(\bar\xx^{(t)})}^2 + \left(  \frac{3L^2\tilde\eta^3 \beta^2}{(1-\beta)^2 (1-\mu)^2} - \frac{\tilde\eta}{4}\right) \E\norm{\frac{1}{n}\sum_{i=1}^n \nabla f_i(\xx_i^{(t)})}^2 \\&\hspace*{2cm}+ \frac{L \tilde\eta^2 \sigma^2}{n} + \frac{3L^2\tilde\eta^3 \beta^2 \sigma^2}{2(1-\mu)(1-\beta)}  + \frac{312 L^2 \tilde\eta^3(1-\beta)^2 \tilde\zeta^2}{\rho^2} \,.
	\end{align*}
	Since, $\eta \leq \frac{(1-\mu)(1-\beta)^2}{4\beta L}$, we have that $\frac{3L^2\tilde\eta^3 \beta^2}{(1-\beta)^2 (1-\mu)^2} \leq \frac{\tilde \eta}{4}$. Rearranging the terms and averaging over $t$, we get
	\begin{align*}
		\frac{1}{T}\sum_{t=0}^{T-1}\E\norm{\nabla f(\xx^{(t)})}^2 & \leq \frac{4}{\tilde\eta T}(\Phi^0 - \Phi^{T}) + \frac{L \tilde\eta \sigma^2}{n} + \frac{6L^2\tilde\eta^2 \beta^2 \sigma^2}{(1-\mu)} + \frac{1248 L^2 \tilde\eta^2(1-\beta)^2 \tilde\zeta^2}{\rho^2}                                  \\
		                                                          & \leq \frac{1}{\tilde\eta T}4(f(\bar\xx^0) - f^\star) + \tilde\eta\left( \frac{L\sigma^2}{n}\right) + \tilde\eta^2 \left( \frac{L^2 \rho^2 \sigma^2 (1-\beta)}{(1-\mu)} + \frac{1248 L^2 (1-\beta)^2 \tilde\zeta^2}{\rho^2} \right)\,.
		\\
		                                                          & \leq \frac{1}{\tilde\eta T}4(f(\bar\xx^0) - f^\star) + \tilde\eta\left( \frac{L\sigma^2}{n}\right) + \tilde\eta^2 \left( \frac{L^2\sigma^2(1-\beta)}{(1-\mu)} + \frac{1248 L^2 \tilde\zeta^2}{\rho^2} \right)                         \\
		                                                          & \leq \frac{1}{\tilde\eta T}4(f(\bar\xx^0) - f^\star) + \tilde\eta\left( \frac{L\sigma^2}{n}\right) + \tilde\eta^2 \left(\frac{1249 L^2 \tilde\zeta^2}{\rho^2} \right)\,.
	\end{align*}
	Choosing an appropriate steps-size $\eta$ proves the theorem.
\end{proof}

\clearpage
\section{Additional Results} \label{appendix:additional_results}
\subsection{Results on Distributed Average Consensus Problem} \label{appendix:more_on_consensus_averaging_problem}
Figure~\ref{fig:more_understanding_via_consensus_averaging}
illustrates the results for average consensus problem on other communication topologies and topology scales.

\begin{figure*}[!h]
	\vspace{-1em}
	\centering
	\subfigure[\small
		torus, $n\!=\!16$.
	]{
		\includegraphics[width=.315\textwidth,]{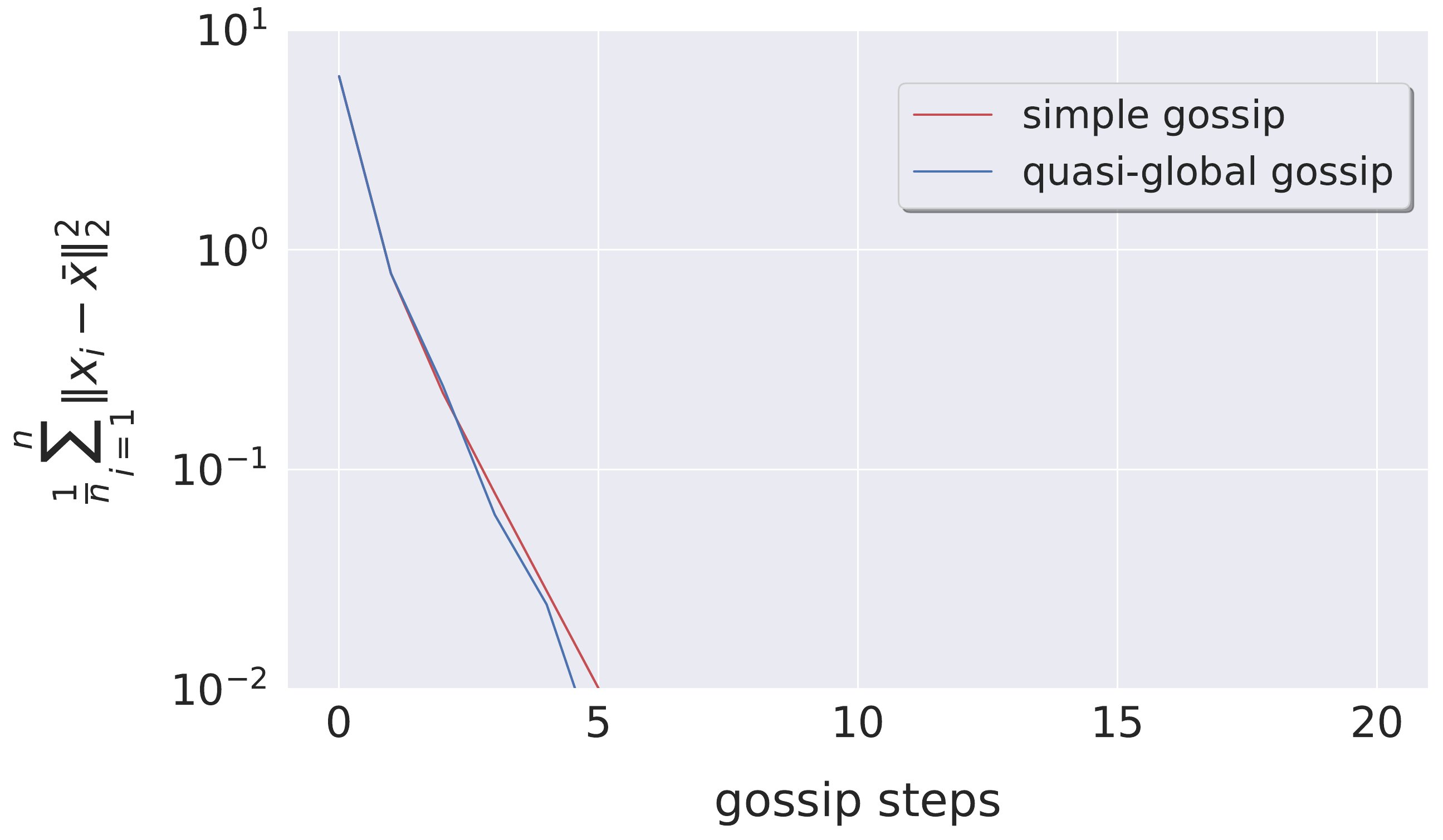}
		\label{fig:understanding_via_consensus_averaging_n16_128}
	}
	\hfill
	\subfigure[\small
		torus, $n\!=\!64$.
	]{
		\includegraphics[width=.315\textwidth,]{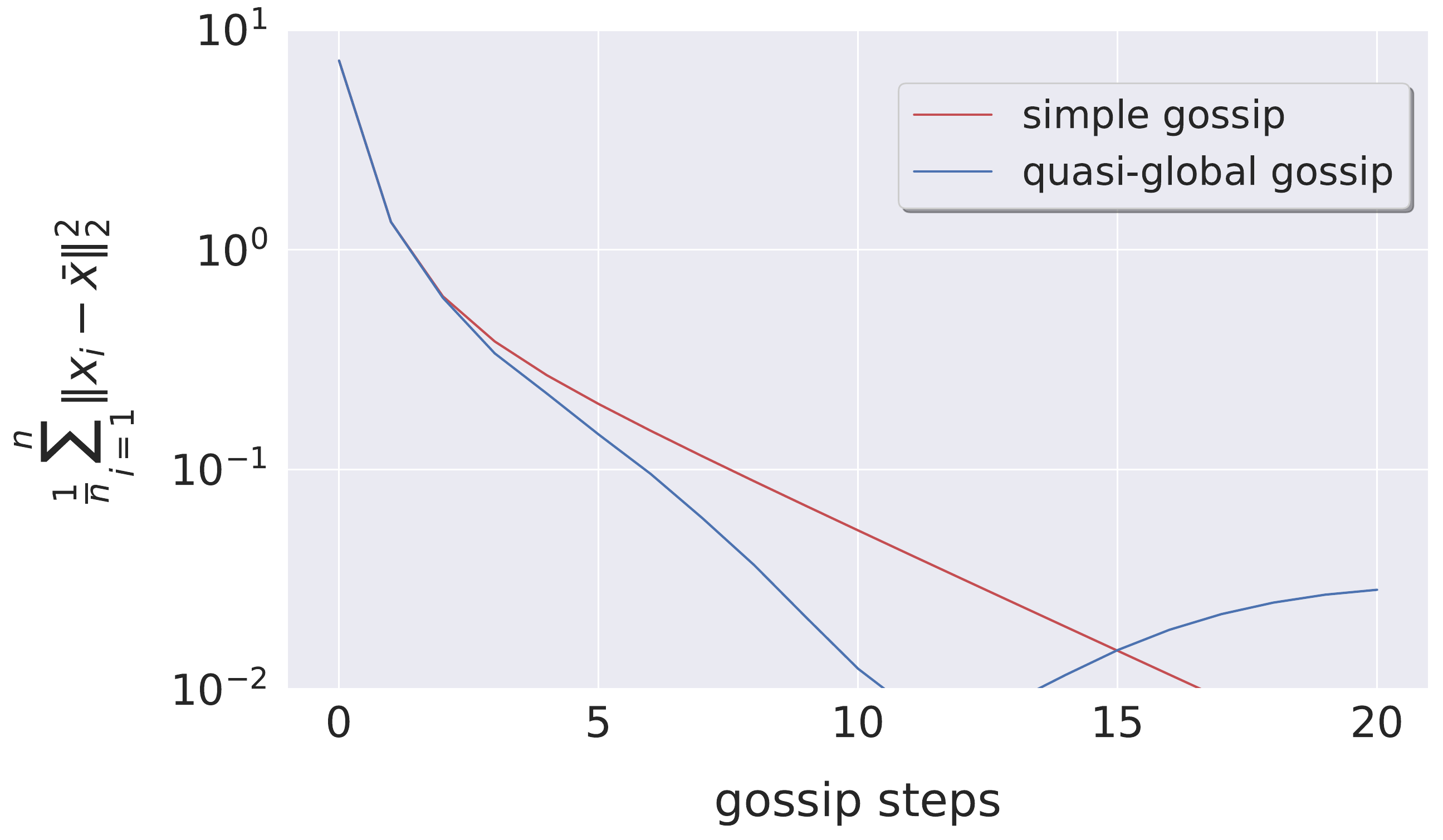}
		\label{fig:understanding_via_consensus_averaging_n16_128}
	}
	\hfill
	\subfigure[\small
		torus, $n\!=\!100$.
	]{
		\includegraphics[width=.315\textwidth,]{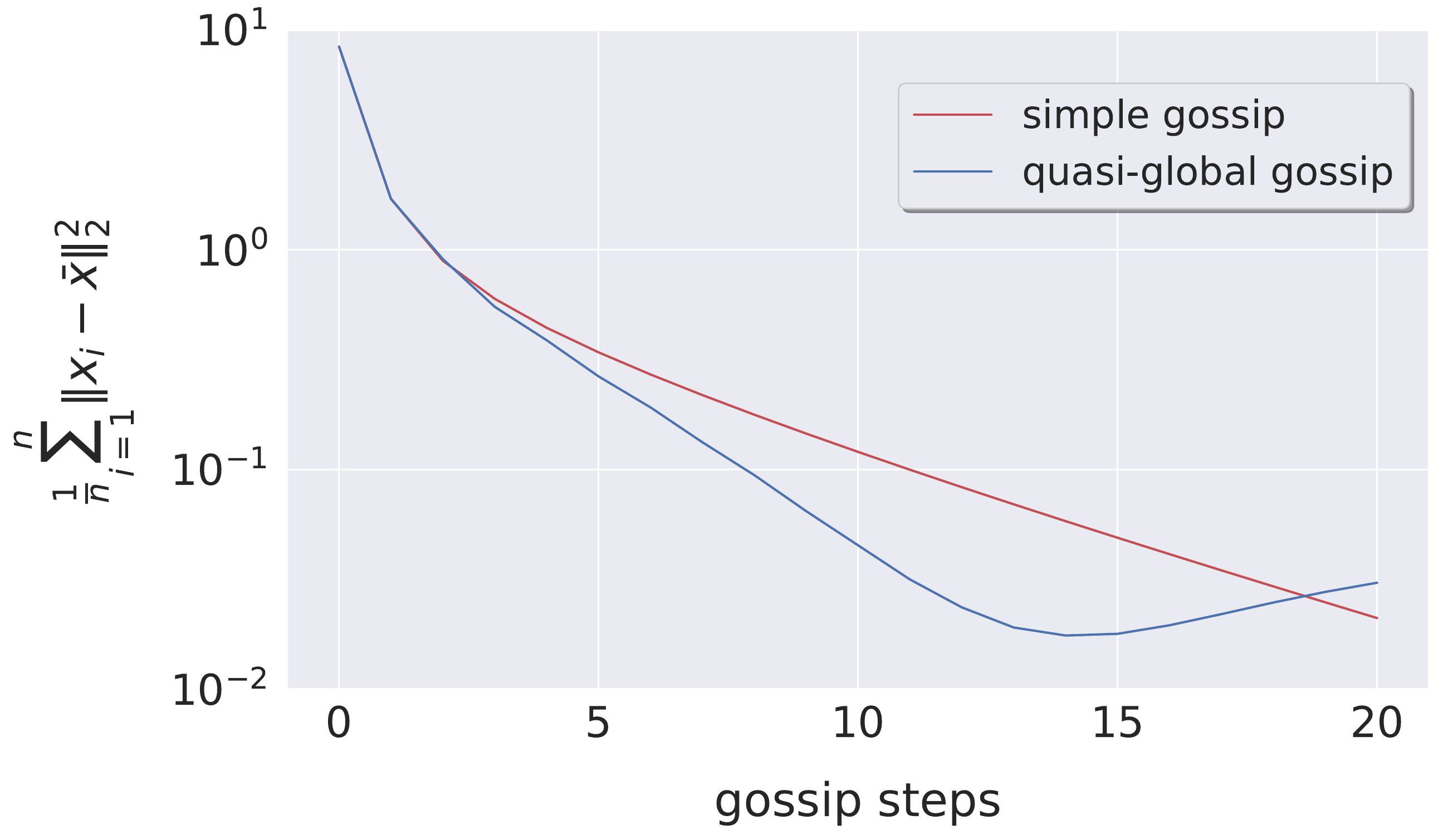}
		\label{fig:understanding_via_consensus_averaging_n16_128}
	}
	\hfill
	\subfigure[\small
		torus, $n\!=\!256$.
	]{
		\includegraphics[width=.315\textwidth,]{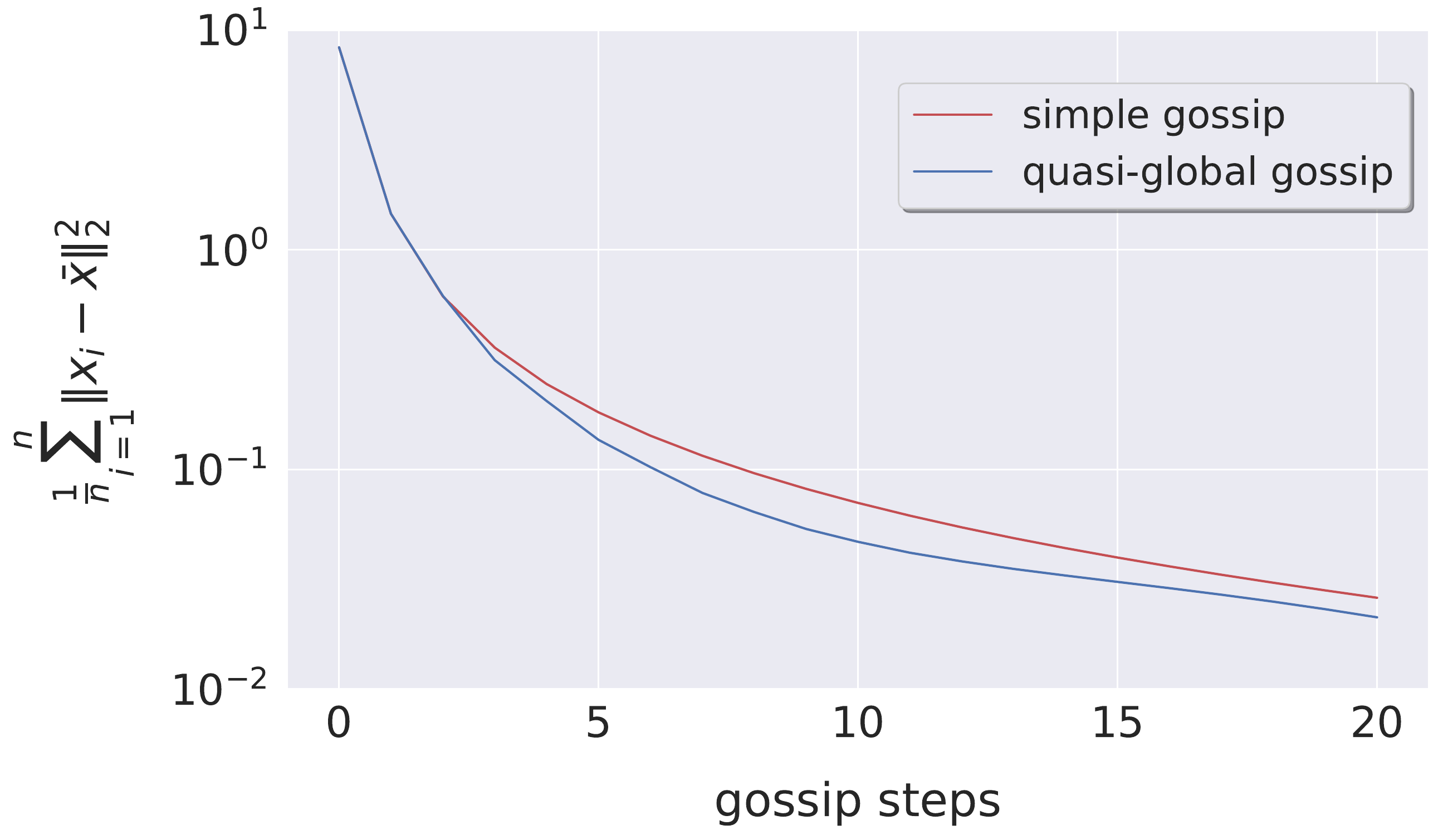}
		\label{fig:understanding_via_consensus_averaging_n16_128}
	}
	\subfigure[\small
		social network, $n\!=\!32$.
	]{
		\includegraphics[width=.315\textwidth,]{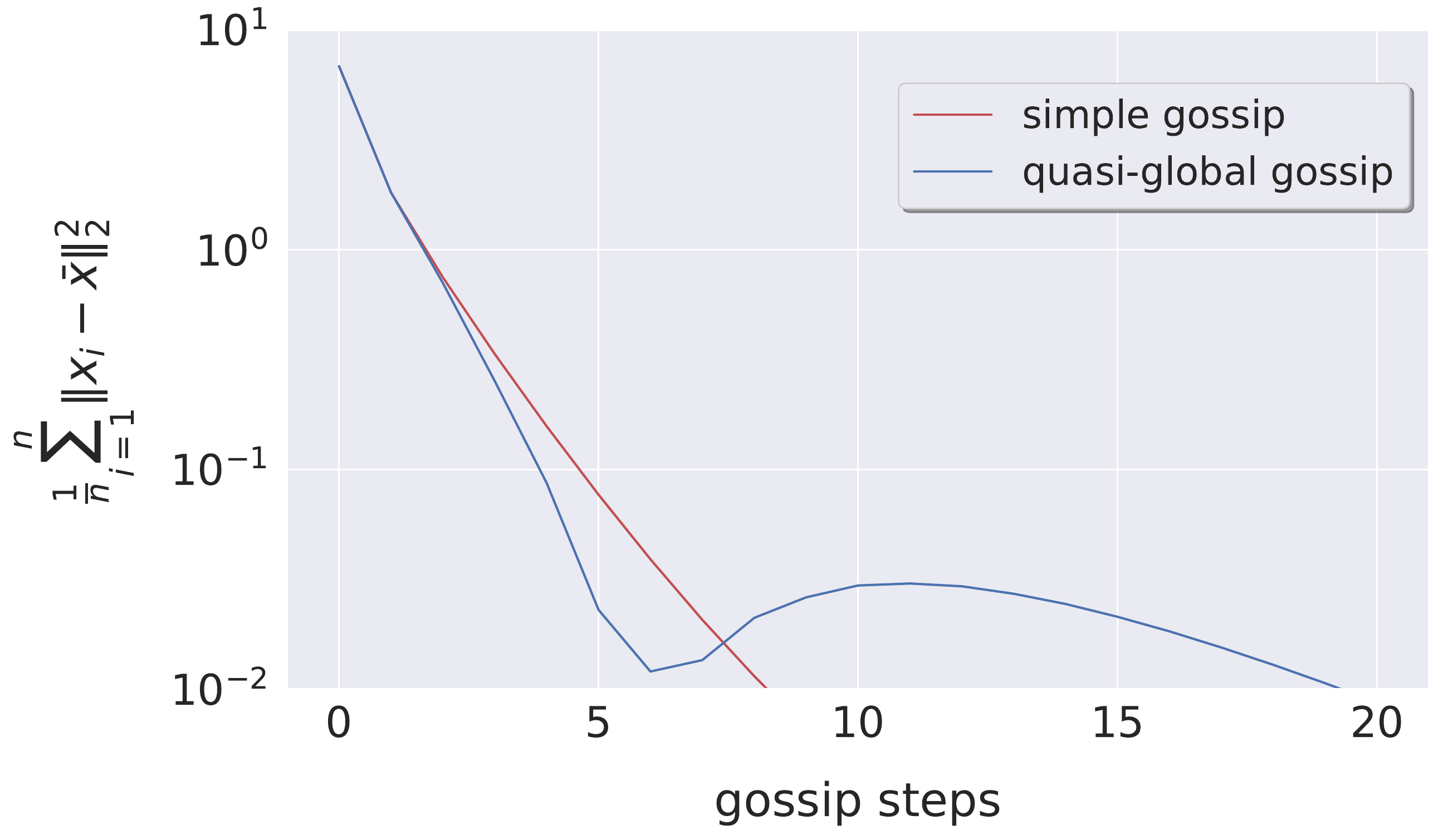}
		\label{fig:understanding_via_consensus_averaging_n16_128}
	}
	\hfill
	\vspace{-1em}
	\caption{\small
		More results on understanding \algoptsgdm through the aspect of distributed consensus averaging problem
		on different communication topologies and scales.
		\algoptsgdm without gradient update step (as in \eqref{eq:update_without_stochastic_gradients})
		still presents faster convergence (to a relative high precision) than the normal gossip algorithm.
	}
	\label{fig:more_understanding_via_consensus_averaging}
\end{figure*}

\subsection{Results on 2D Illustration} \label{appendix:more_on_2D_illustration}
Following the 2D illustration in Figure~\ref{fig:2d_illustration_procedures},
we elaborate below the different choices of momentum factor for SGDm and \salgoptsgdm
in Figure~\ref{fig:more_2d_illustration_procedures}.
We can witness that the effectiveness of local momentum (oscillation) in SGDm
is always impacted by the data heterogeneity,
no matter the choices of momentum factor.
While for \salgoptsgdm, there exists a trade-off between stabilized optimization and fast convergence,
controlled by the momentum factor $\beta$.
Note that we always set $\mu := \beta$ in \salgoptsgdm, which may result in undesirable behavior.

\begin{figure}[!h]
	\centering
	\subfigure[\small
		SGDm, $\beta=0.25$.
	]{
		\includegraphics[width=.225\textwidth,]{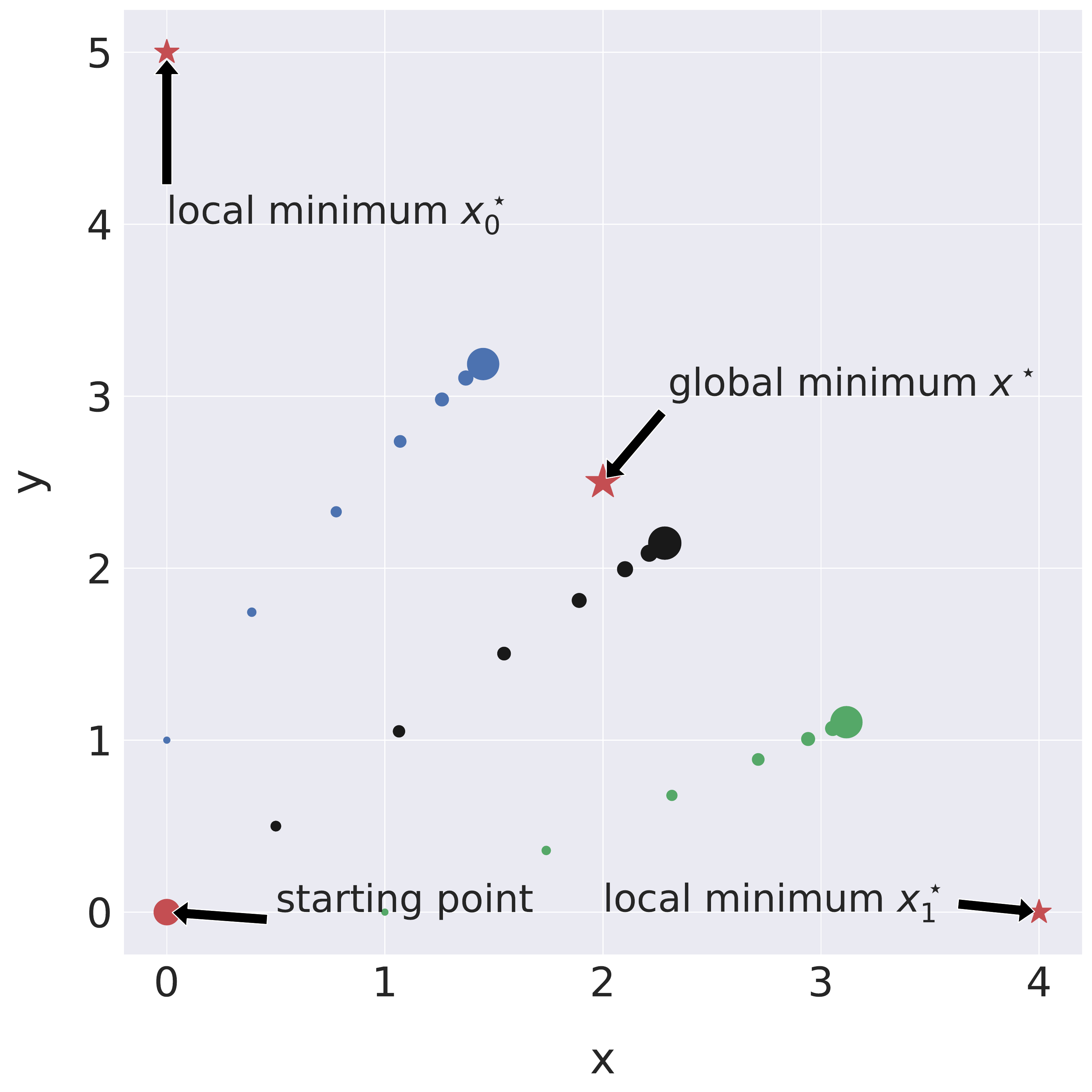}
	}
	\subfigure[\small
		\salgoptsgdm, $\beta=0.25$.
	]{
		\includegraphics[width=.225\textwidth,]{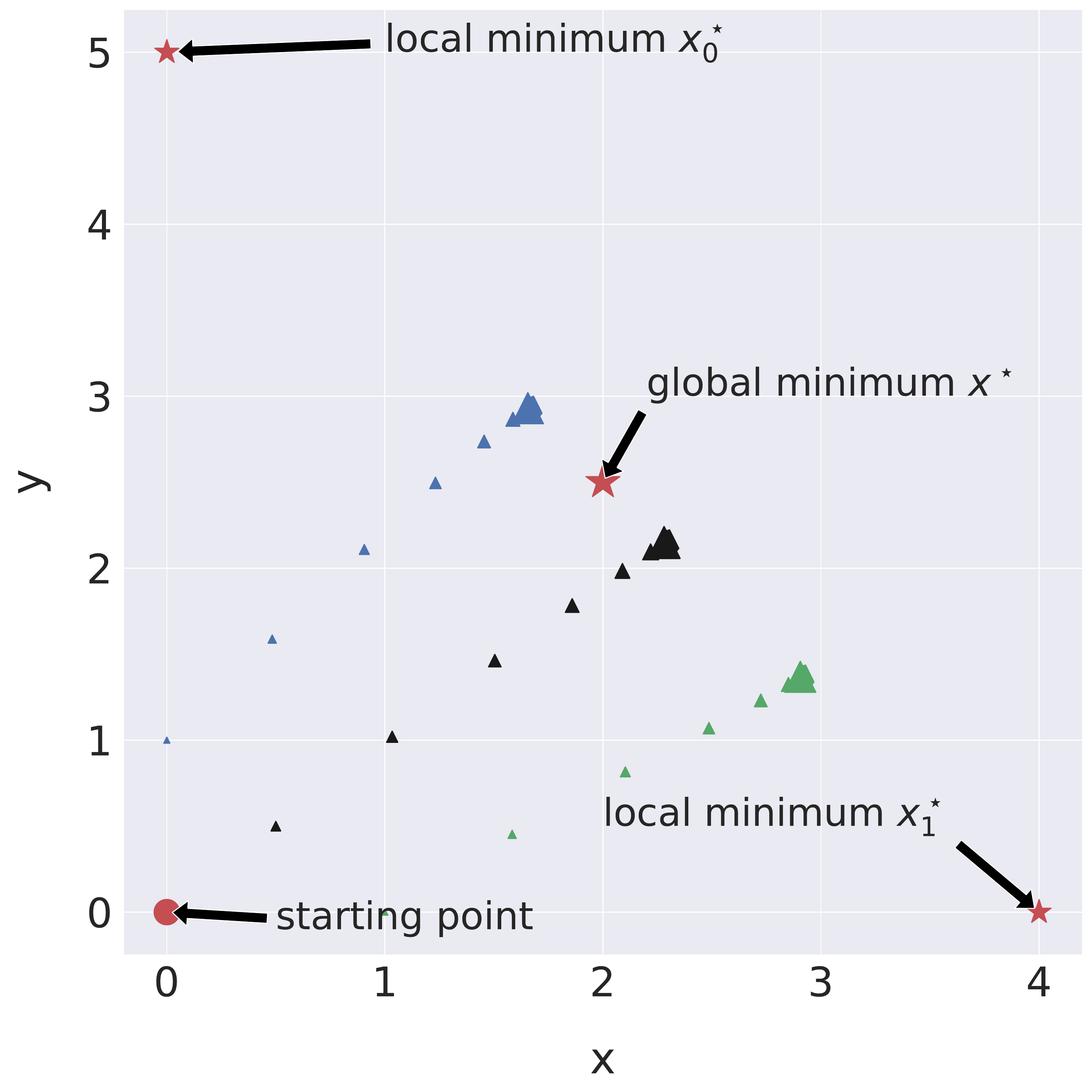}
	}
	\subfigure[\small
		SGDm, $\beta=0.5$.
	]{
		\includegraphics[width=.225\textwidth,]{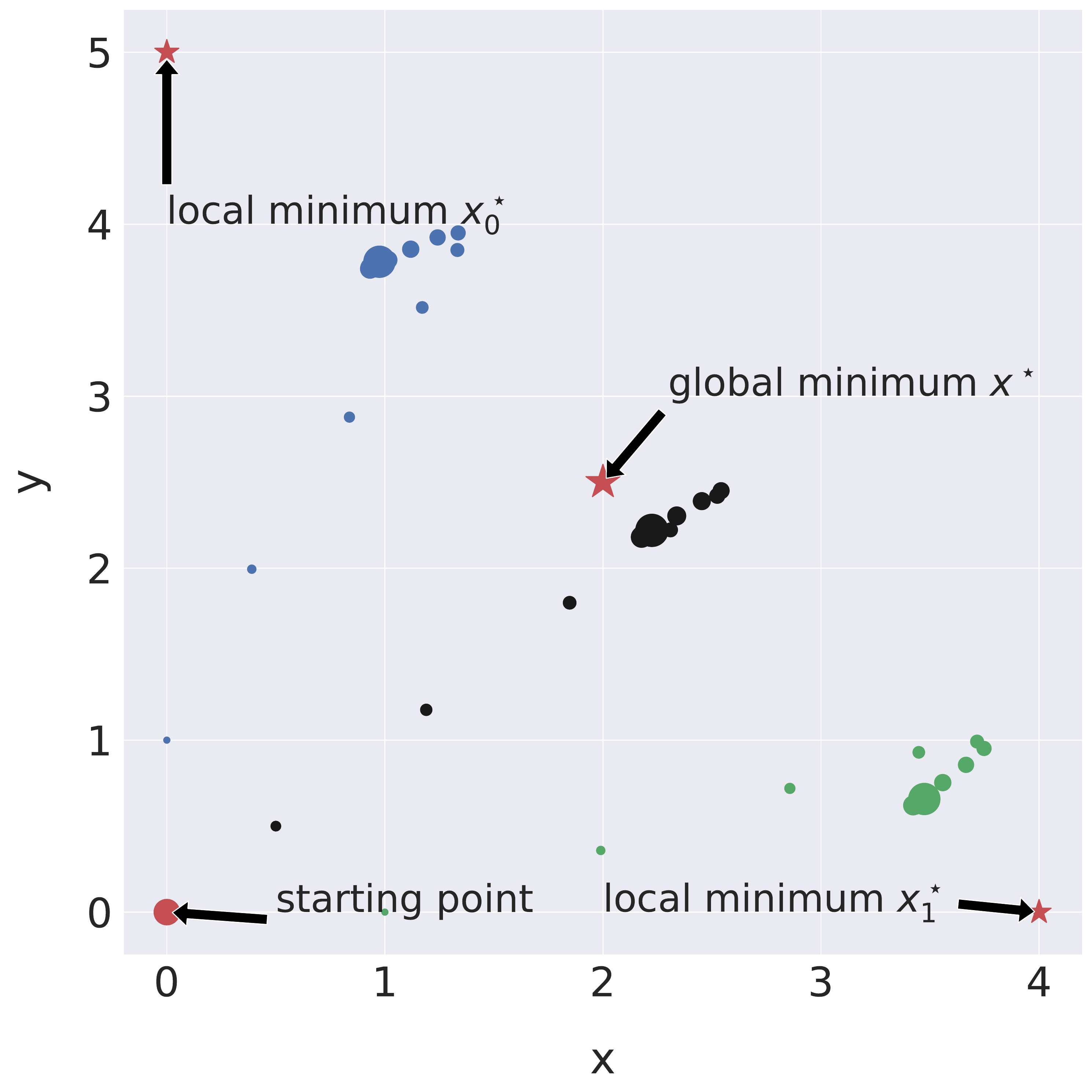}
	}
	\subfigure[\small
		\salgoptsgdm, $\beta=0.5$.
	]{
		\includegraphics[width=.225\textwidth,]{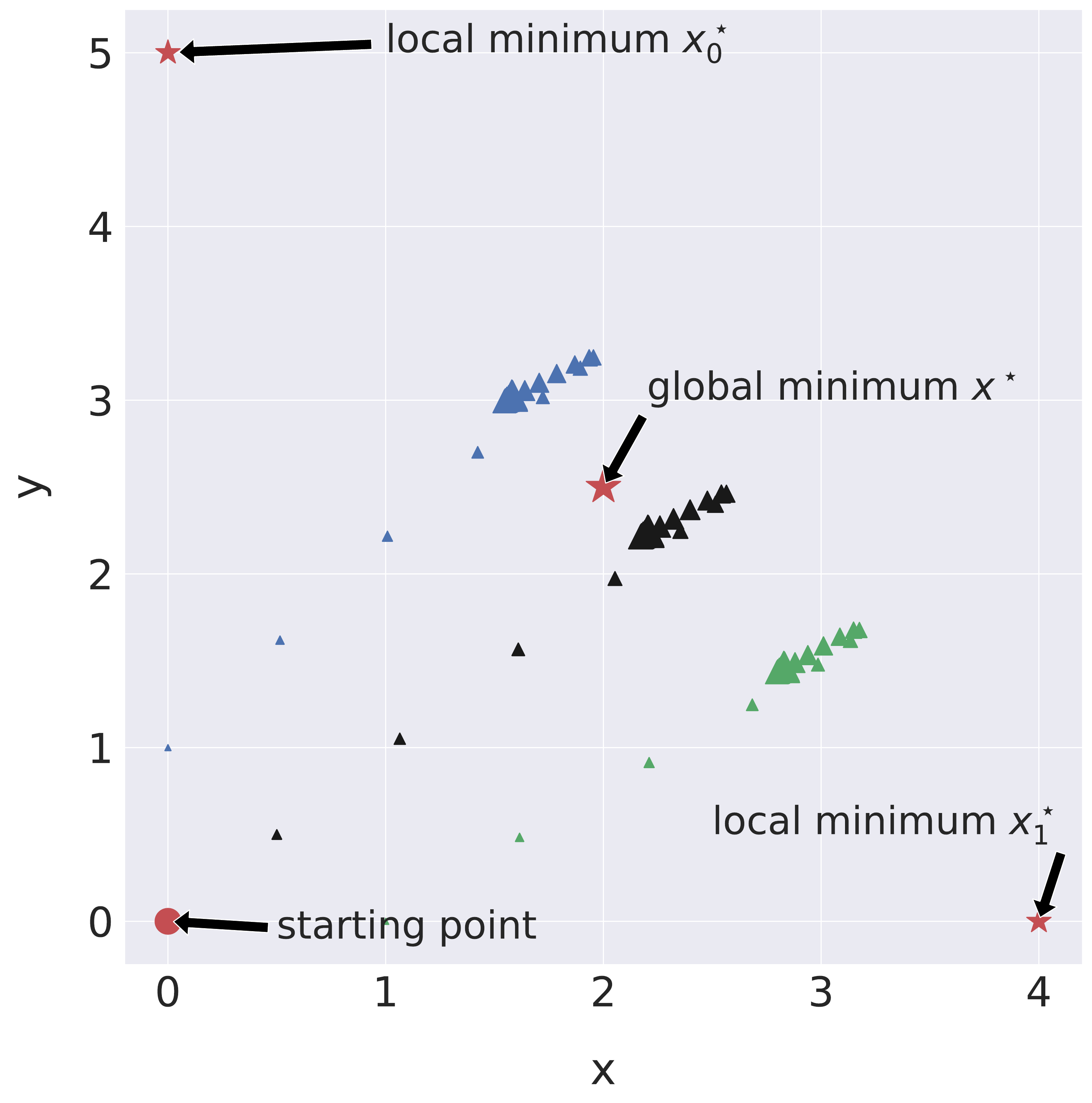}
	}
	\subfigure[\small
		SGDm, $\beta=0.75$.
	]{
		\includegraphics[width=.225\textwidth,]{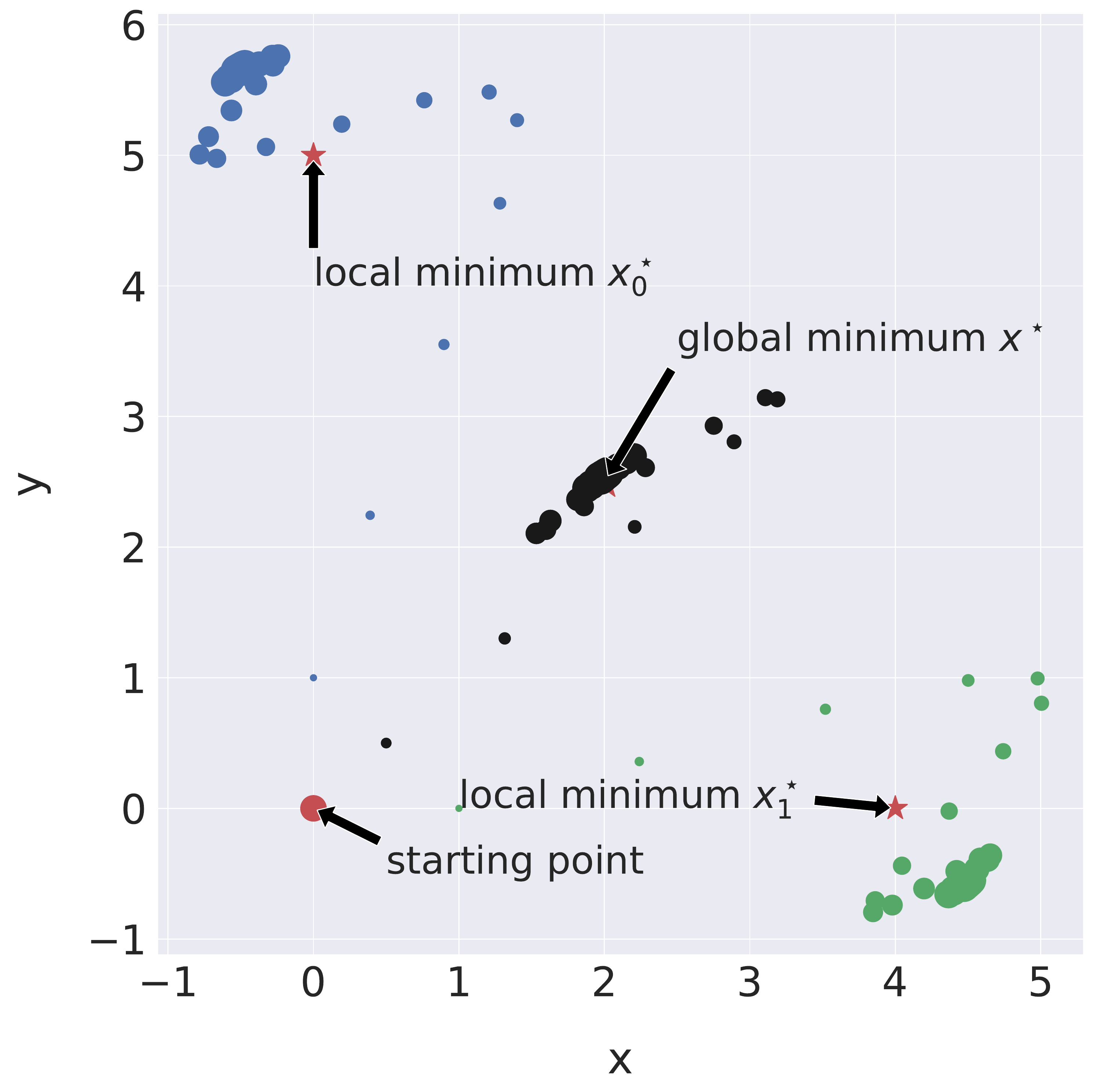}
	}
	\subfigure[\small
		\salgoptsgdm, $\beta=0.75$.
	]{
		\includegraphics[width=.225\textwidth,]{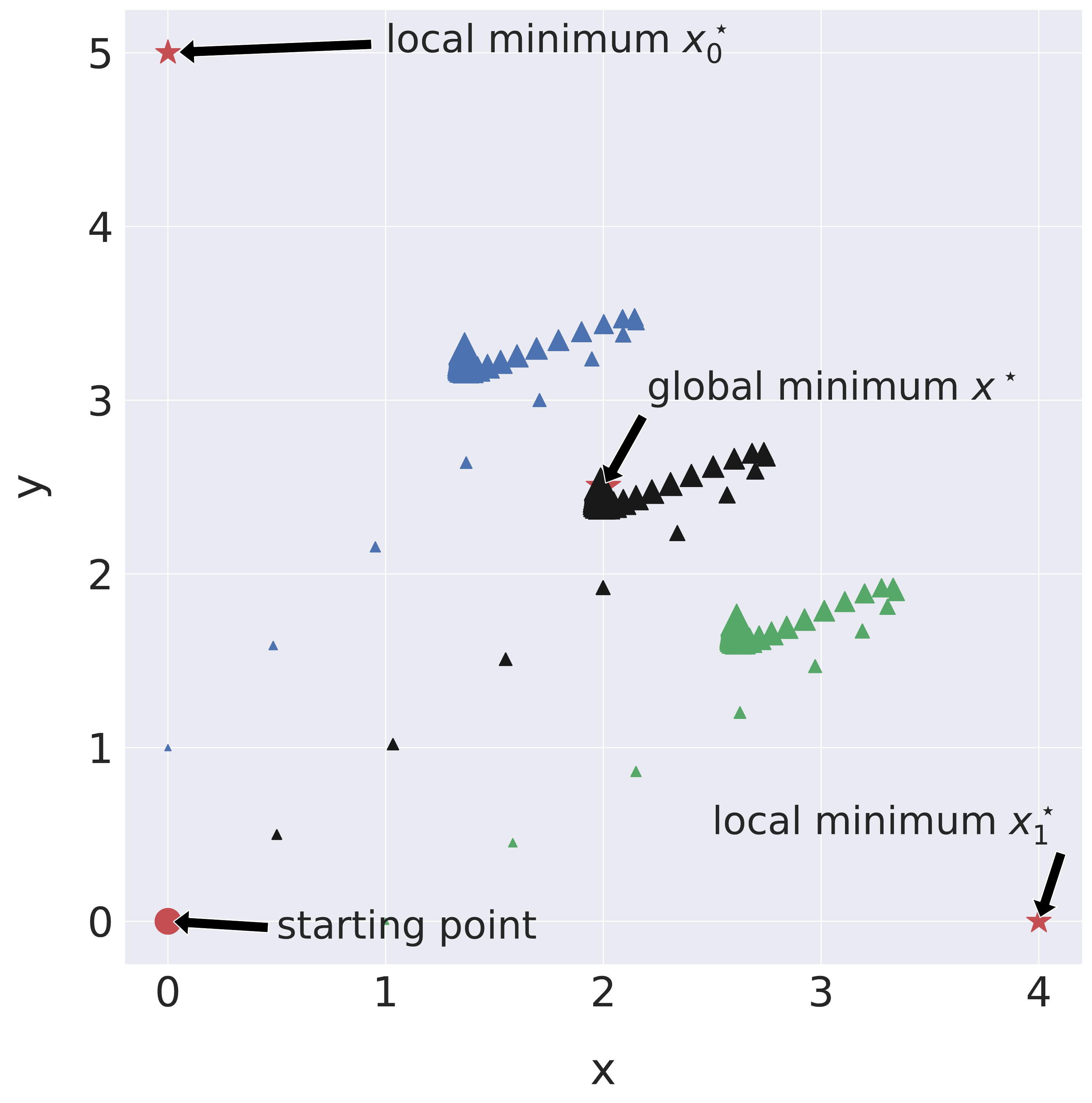}
	}
	\subfigure[\small
		SGDm, $\beta=0.90$.
	]{
		\includegraphics[width=.225\textwidth,]{figures/2d_illustration/procedure0.pdf}
	}
	\subfigure[\small
		\salgoptsgdm, $\beta=0.90$.
	]{
		\includegraphics[width=.225\textwidth,]{figures/2d_illustration/procedure1.pdf}
	}
	\subfigure[\small
		SGDm, $\beta=0.95$.
	]{
		\includegraphics[width=.225\textwidth,]{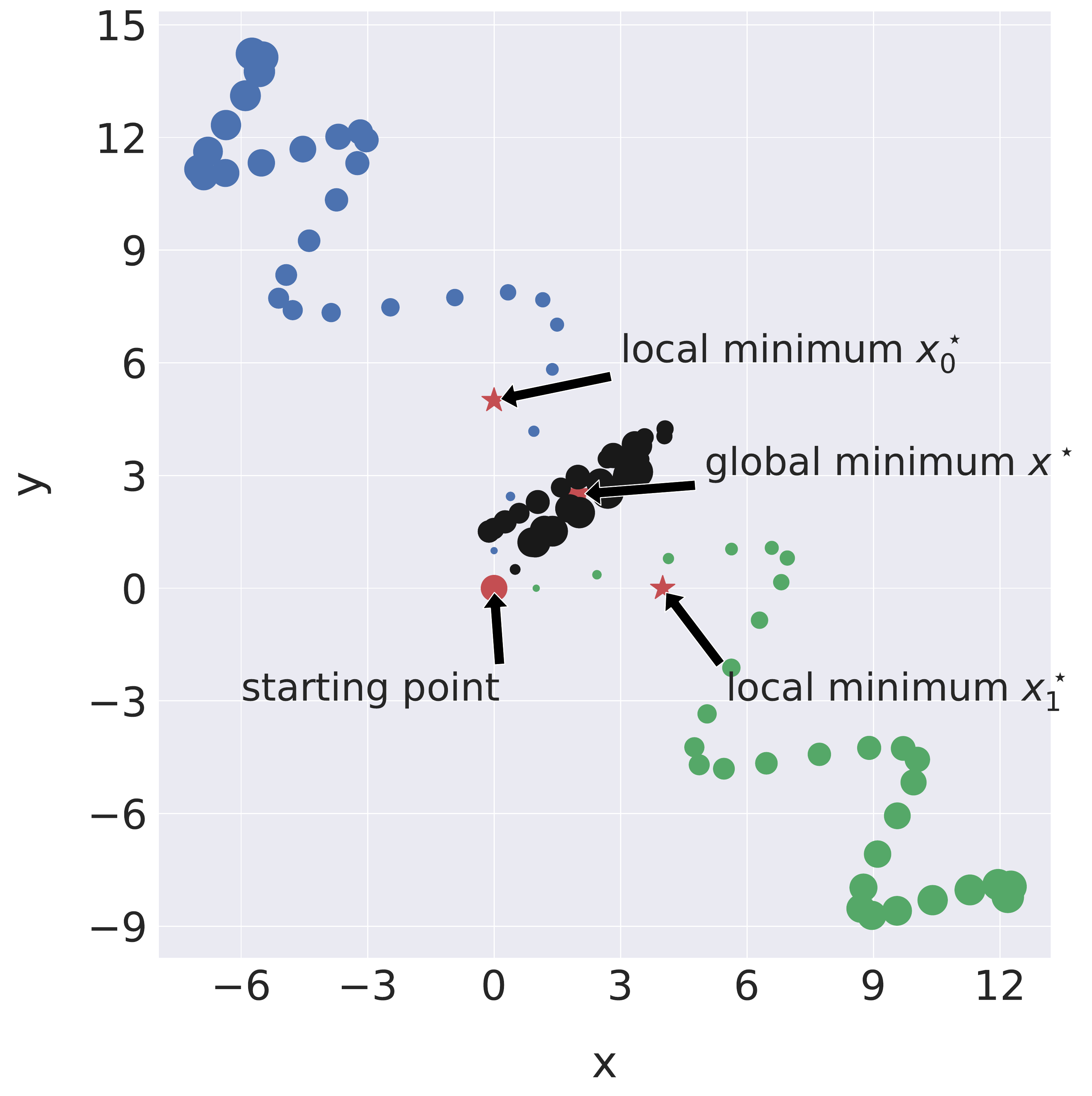}
	}
	\subfigure[\small
		\salgoptsgdm, $\beta=0.95$.
	]{
		\includegraphics[width=.225\textwidth,]{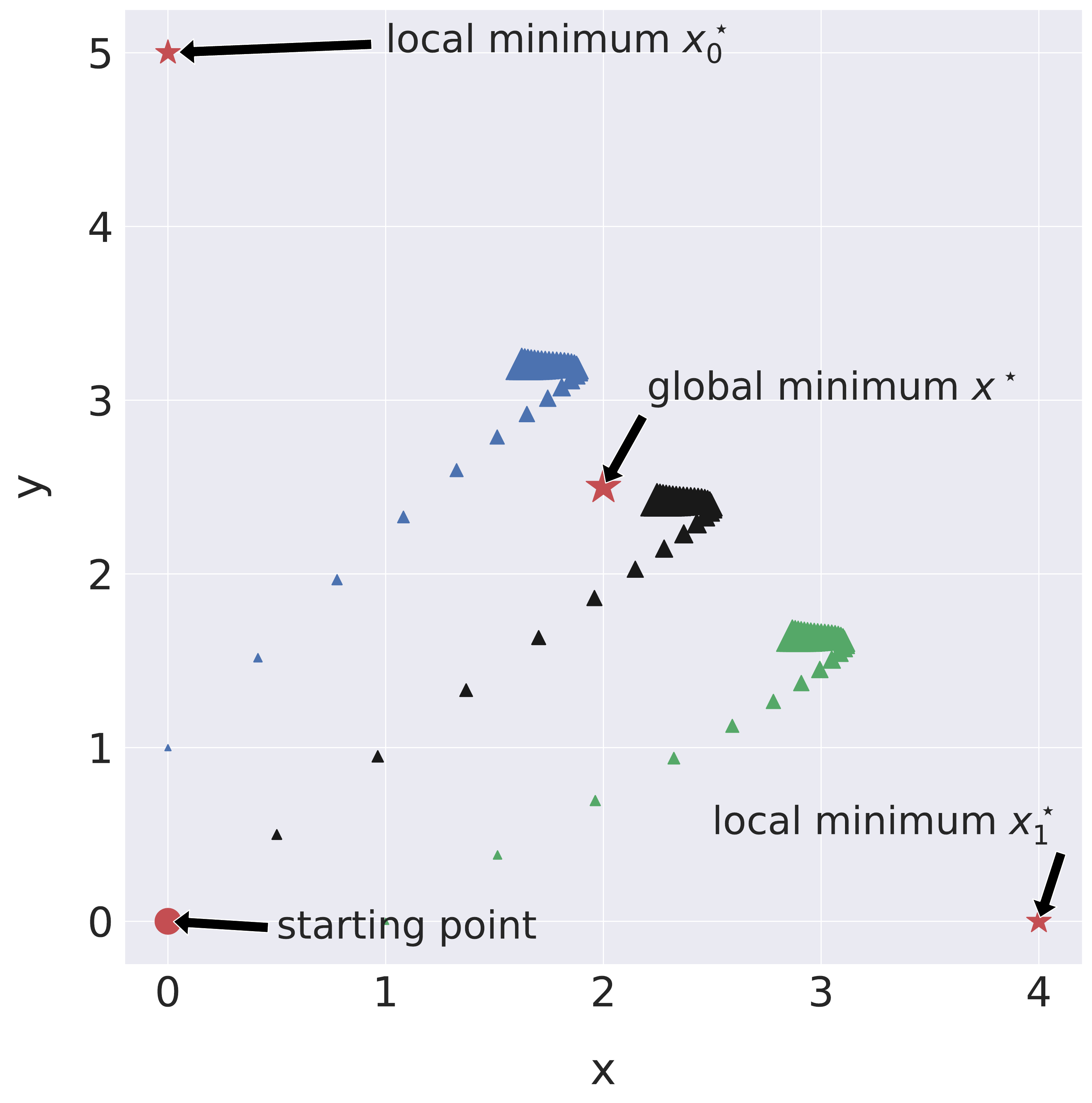}
	}
	\subfigure[\small
		SGDm, $\beta=0.99$.
	]{
		\includegraphics[width=.225\textwidth,]{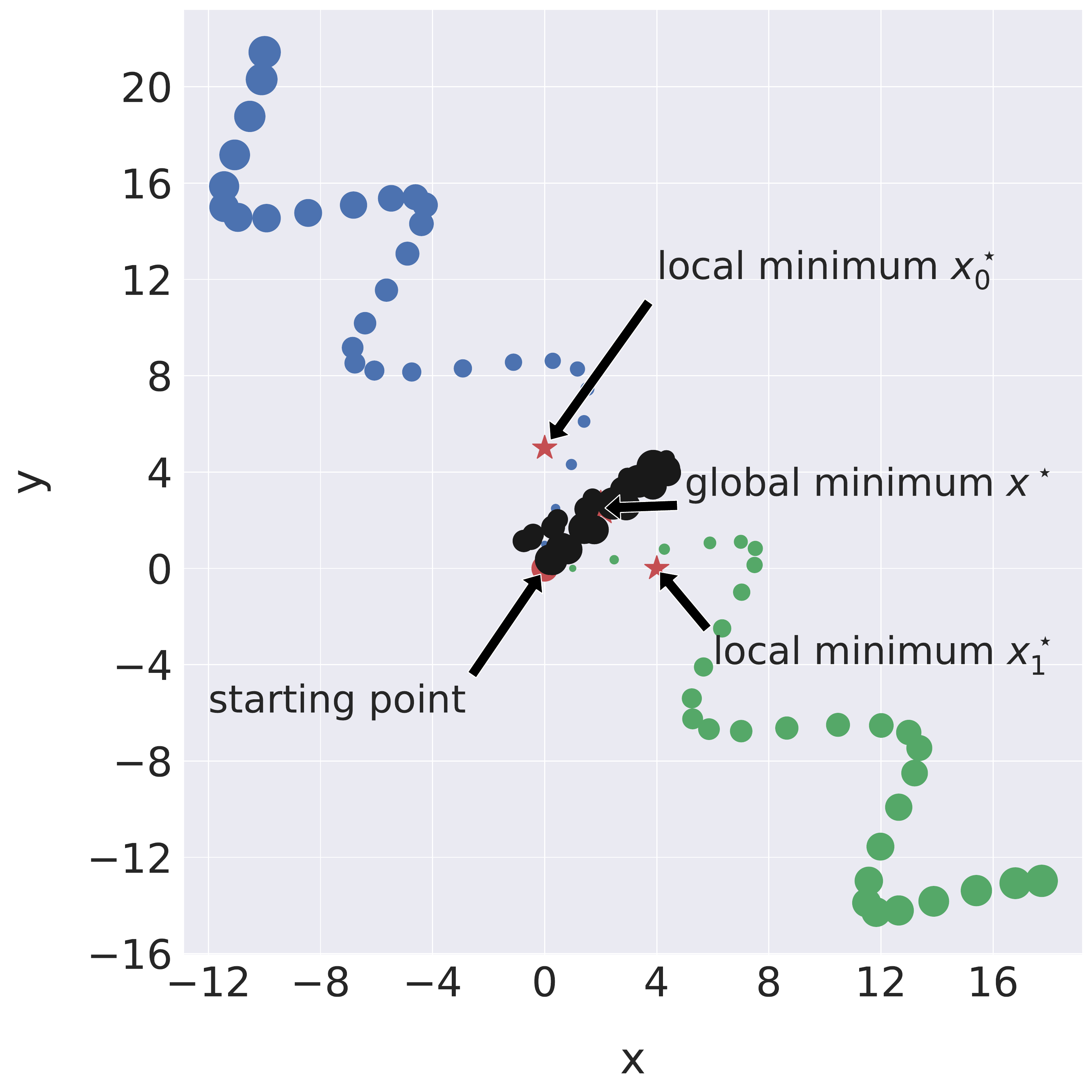}
	}
	\subfigure[\small
		\salgoptsgdm, $\beta=0.99$.
	]{
		\includegraphics[width=.225\textwidth,]{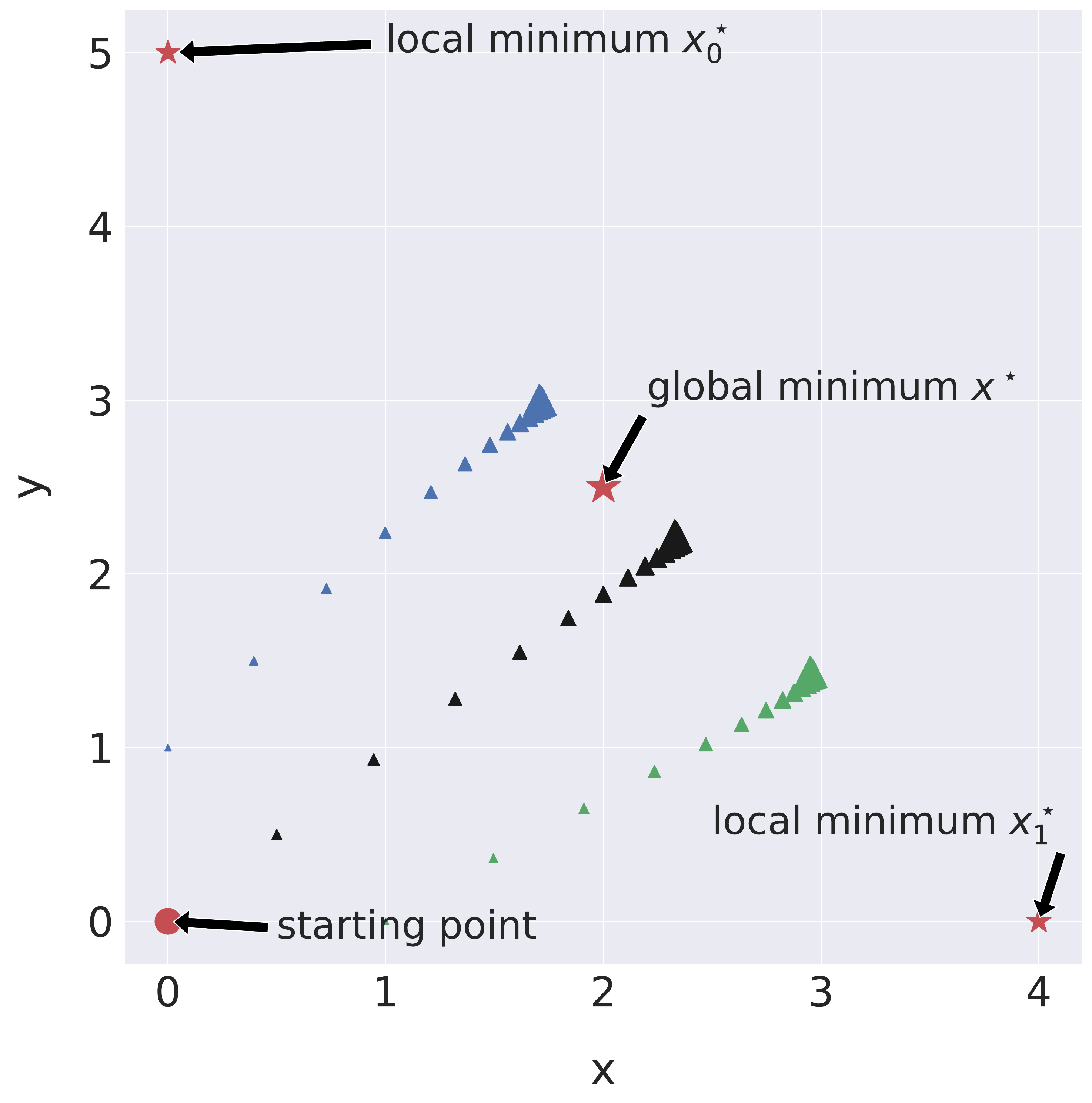}
	}
	\vspace{-1em}
	\caption{\small
		\textbf{The ineffectiveness of local momentum acceleration under heterogeneous data setup}:
		the local momentum buffer accumulates ``biased'' gradients,
		causing unstable and oscillation behaviors.
		The gradient is estimated by
		the direction from a given model to the local minimum
		with a constant update magnitude.
		The size of marker will increase by the number of update steps;
		colors blue and green indicate the local models of two workers (after performing local update),
		while color black is the synchronized global model.
		Uniform weight averaging is performed after each update step,
		and the new gradients will be computed on the averaged model.
	}
	\label{fig:more_2d_illustration_procedures}
\end{figure}

\clearpage
\subsection{Understanding \algoptsgdm and \algoptsgdmn on the Single Worker Case}
\label{appendix:understanding_single_worker}
Recall that the single worker case of \algoptsgdm and \algoptsgdmn
refers to \salgoptsgdm and \salgoptsgdmn.

Figure~\ref{fig:understanding_optsgd_on_n1_complete}
studies the learning behavior (learning curves for both training loss and top-1 test accuracy, as well as final best test accuracy)
of \salgoptsgdm and \salgoptsgdmn on two different normalization methods (BN and GN) for ResNet-20 on CIFAR-10.
In general Nesterov momentum variants outperforms that of HeavyBall momentum,
and we can witness a larger performance gain when the optimization is challenging
(e.g.\ in the of using GN replacement).

Figure~\ref{fig:understanding_optsgd_on_n1_and_wo_weight_decay}
further investigates the impact of weight decay on Nesterov momentum variants.
Excluding weight decay from the training procedure is detrimental to the final generalization performance.
We also notice larger benefits of \salgoptsgdmn when the optimization procedure is fragile/unstable.

Figure~\ref{fig:n1_resnet20_cifar10_understanding_from_deep_learning_aspect_complete}
in addition illustrates the curves of weight norm and effective step-size during the optimization procedure,
to interpret the potential causes of the performance gain.

\begin{figure*}[!h]
	\centering
	\vspace{-1em}
	\subfigure[\small
		ResNet-BN-20 (training loss)
	]{
		\includegraphics[width=.475\textwidth,]{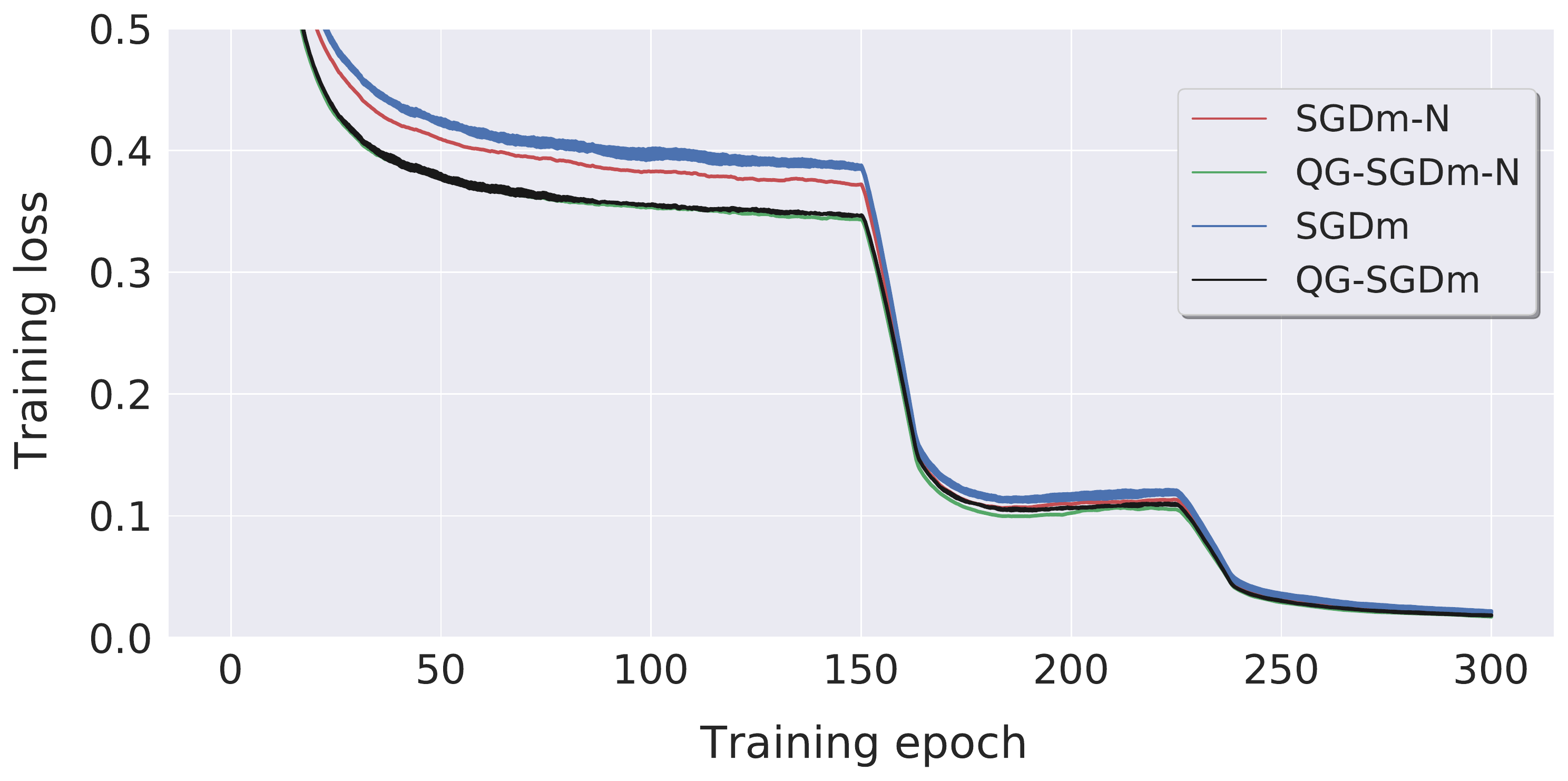}
		\label{fig:n1_resnet20bn_cifar10_understanding_method_itself_on_training_loss_complete}
	}
	\hfill
	\subfigure[\small
		ResNet-BN-20 (test top-1)
	]{
		\includegraphics[width=.475\textwidth,]{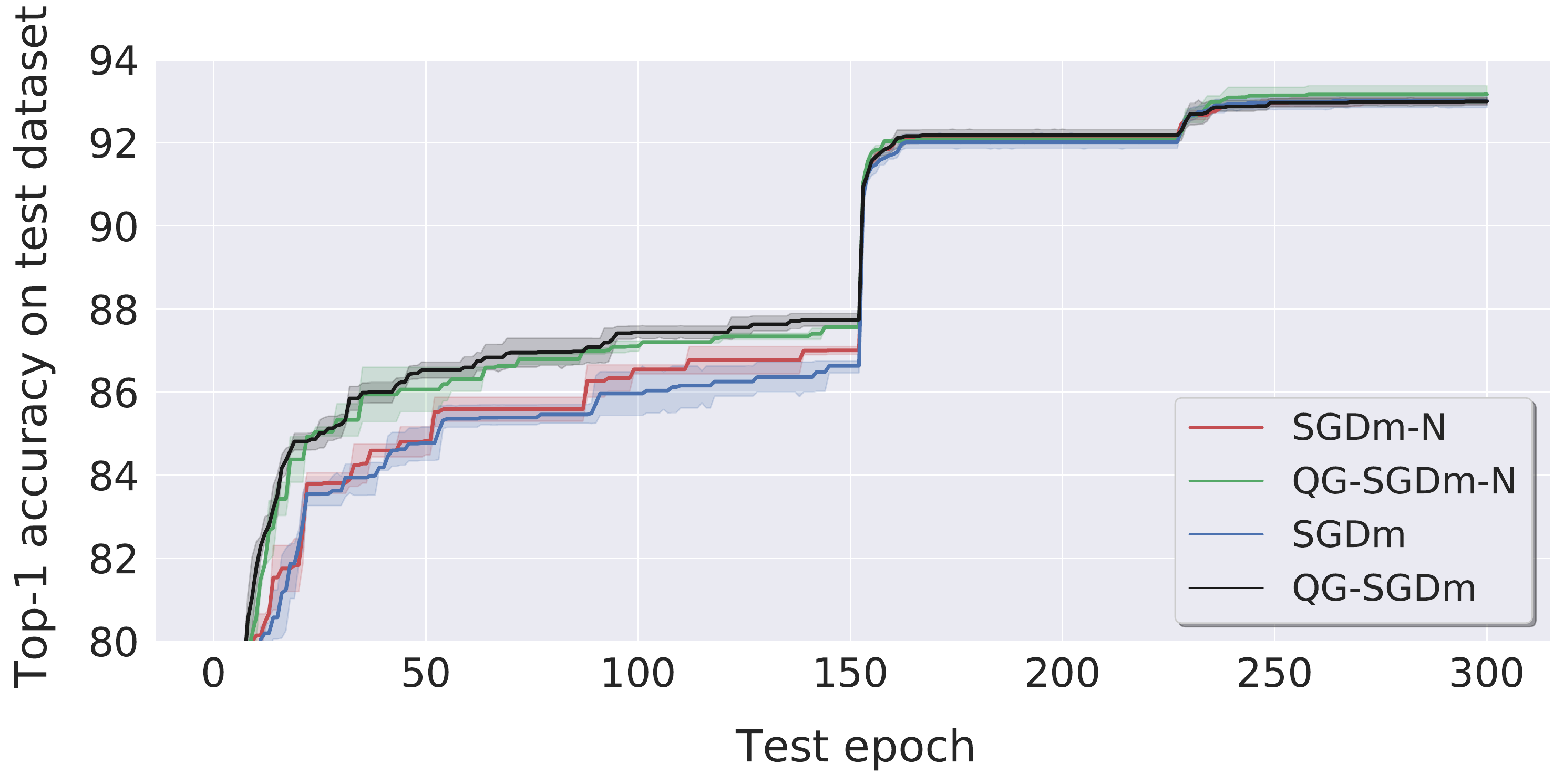}
		\label{fig:n1_resnet20bn_cifar10_understanding_method_itself_complete}
	}
	\hfill
	\subfigure[\small
		ResNet-GN-20 (training loss)
	]{
		\includegraphics[width=.475\textwidth,]{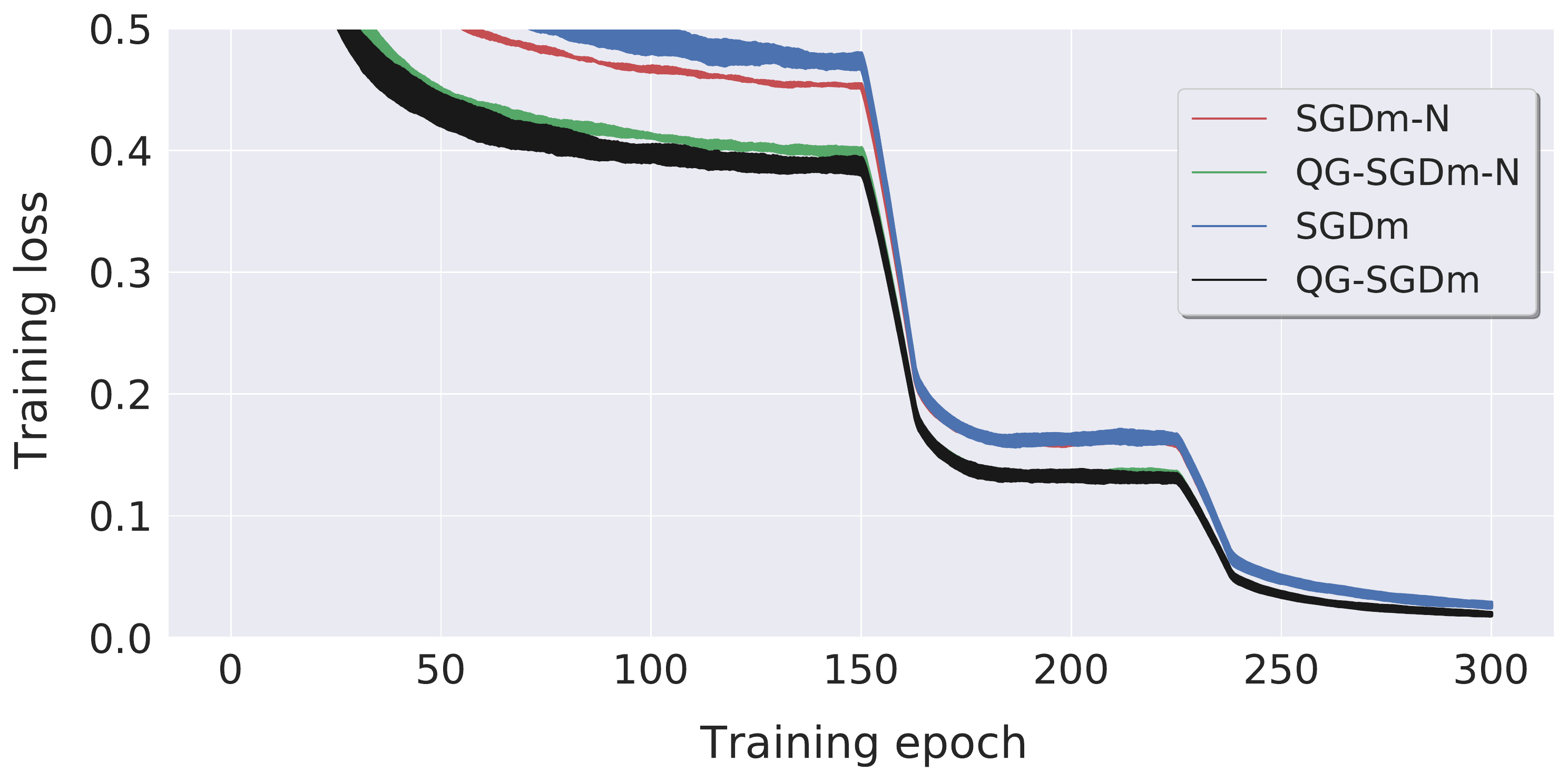}
		\label{fig:n1_resnet20gn_cifar10_understanding_method_itself_on_training_loss_complete}
	}
	\hfill
	\subfigure[\small
		ResNet-GN-20 (test top-1)
	]{
		\includegraphics[width=.475\textwidth,]{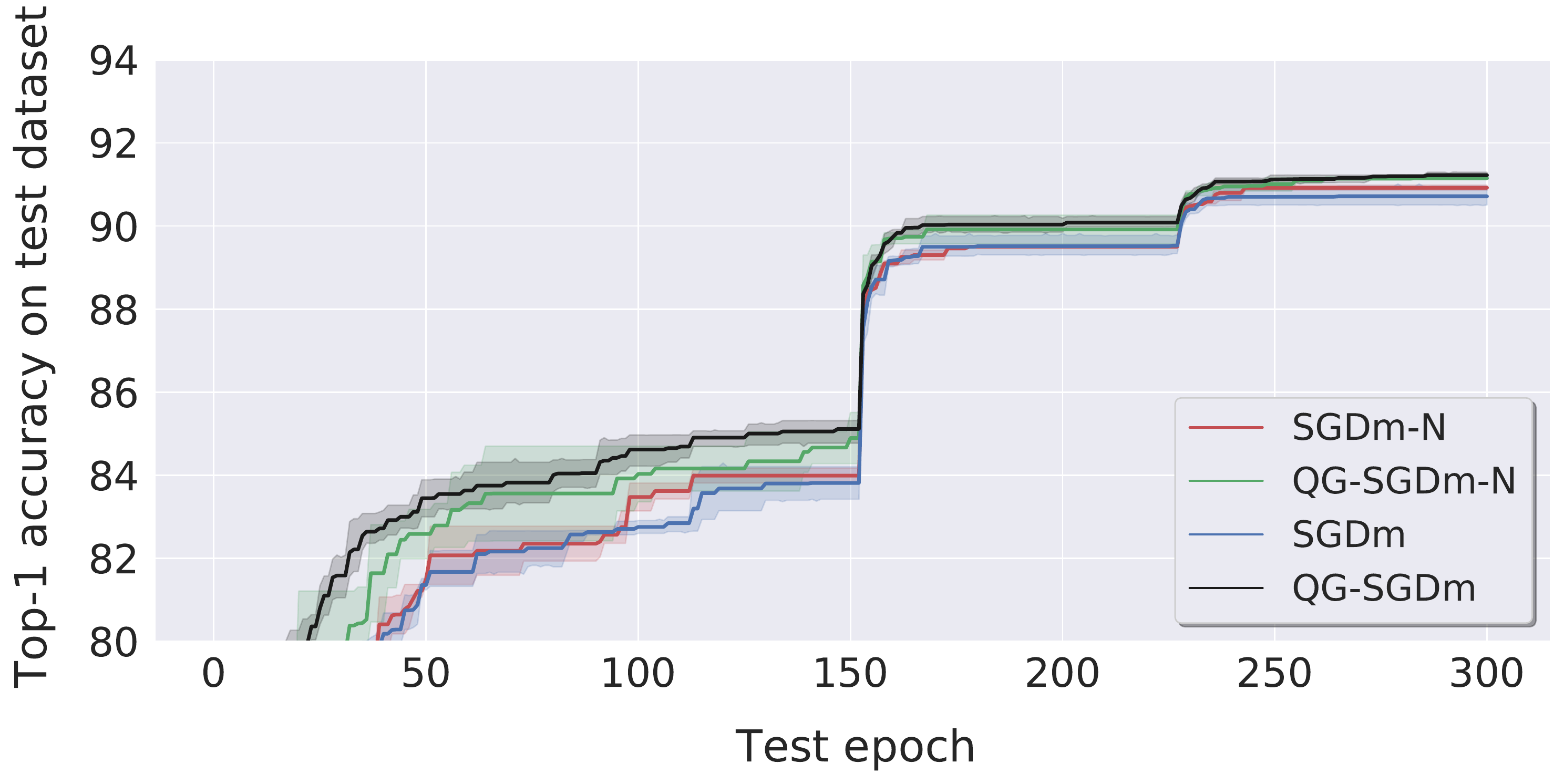}
		\label{fig:n1_resnet20gn_cifar10_understanding_method_itself_complete}
	}
	\resizebox{.6\textwidth}{!}{%
		\begin{tabular}{c|cc|cc}
			\toprule
			   & SGDm-N           & \salgoptsgdmn             & SGDm             & \salgoptsgdm              \\ \midrule
			BN & $93.01 \pm 0.12$ & $\textbf{93.17} \pm 0.30$ & $92.90 \pm 0.15$ & $\textbf{92.95} \pm 0.13$ \\
			GN & $90.92 \pm 0.08$ & $\textbf{91.15} \pm 0.06$ & $90.61 \pm 0.15$ & $\textbf{91.23} \pm 0.01$ \\ \bottomrule
		\end{tabular}
	}
	\caption{\small
		Understanding the learning behavior of \salgoptsgdm and \salgoptsgdmn,
		for training ResNet-20 on CIFAR-10 with mini-batch size of $32$.
	}
	\label{fig:understanding_optsgd_on_n1_complete}
\end{figure*}

\begin{figure*}[!h]
	\centering
	\vspace{-1em}
	\subfigure[\small
		ResNet-BN-20.
	]{
		\includegraphics[width=.45\textwidth,]{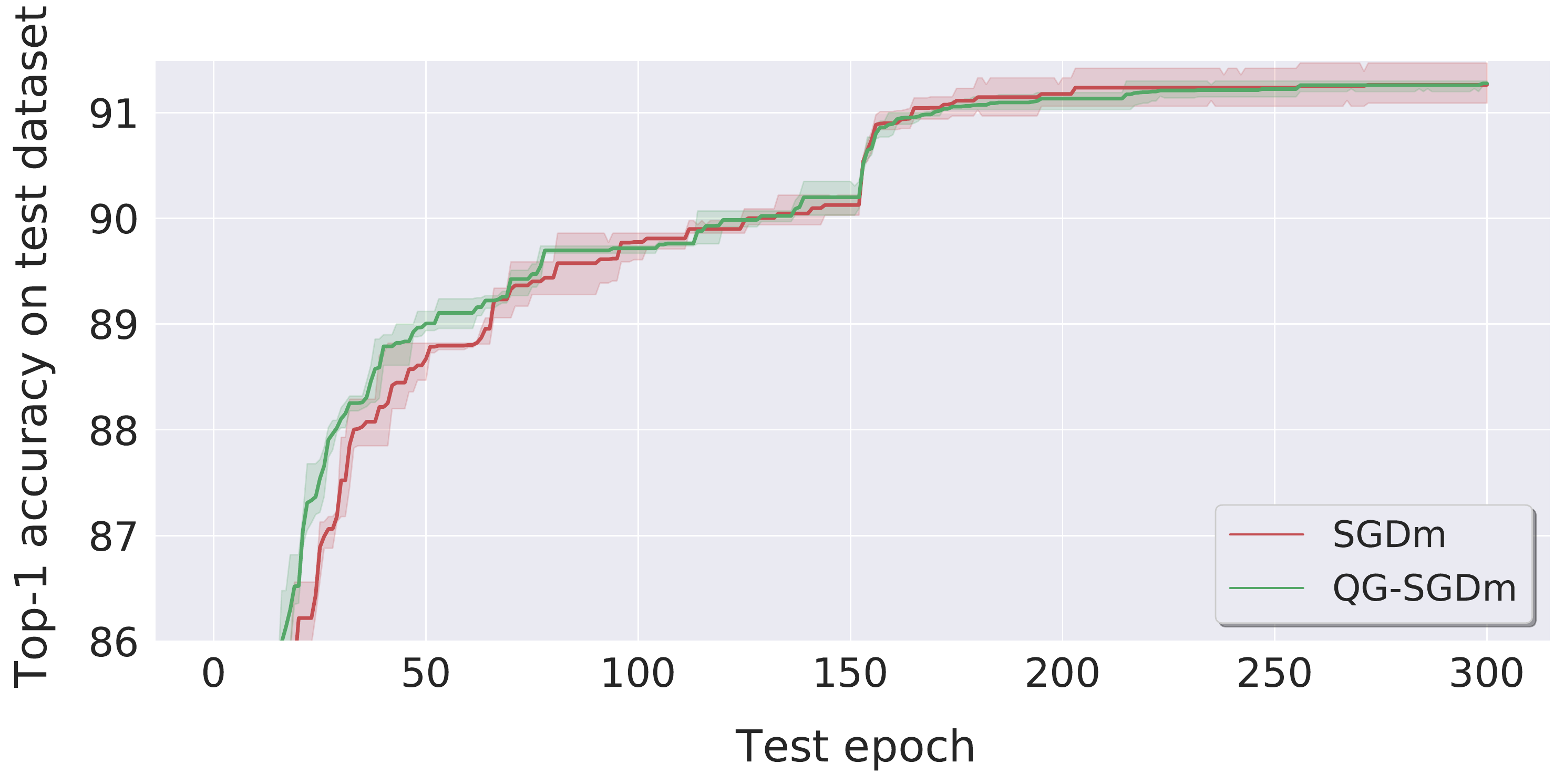}
		\label{fig:n1_resnet20bn_cifar10_understanding_method_itself_w_nesterov_momentum_wo_weight_decay}
	}
	\hfill
	\subfigure[\small
		ResNet-GN-20.
	]{
		\includegraphics[width=.45\textwidth,]{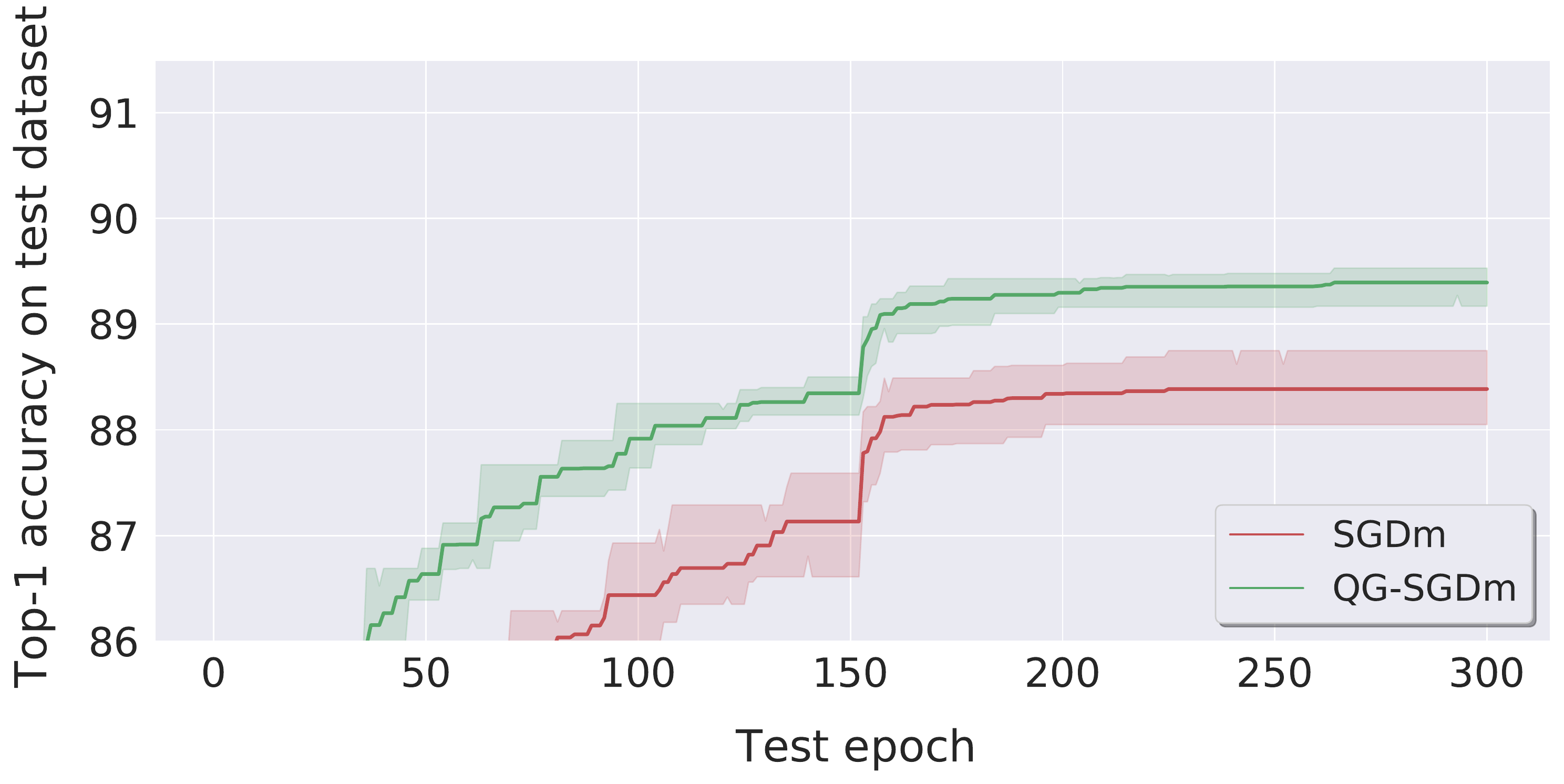}
		\label{fig:n1_resnet20gn_cifar10_understanding_method_itself_w_nesterov_momentum_wo_weight_decay}
	}
	\resizebox{.6\textwidth}{!}{%
		\begin{tabular}{c|cc|cc}
			\toprule
			& \multicolumn{2}{c|}{w/ weight decay} & \multicolumn{2}{c}{w/o weight decay} \\ \cmidrule(lr){2-3} \cmidrule(lr){4-5}
			   & SGDm-N           & \salgoptsgdmn             & SGDm-N           & \salgoptsgdmn             \\ \midrule
			BN & $93.01 \pm 0.12$ & $\textbf{93.17} \pm 0.30$ & $91.26 \pm 0.19$ & $\textbf{91.28} \pm 0.03$ \\
			GN & $90.92 \pm 0.08$ & $\textbf{91.15} \pm 0.06$ & $88.39 \pm 0.35$ & $\textbf{89.39} \pm 0.20$ \\ \bottomrule
		\end{tabular}
	}
	\vspace{-1em}
	\caption{\small
		Understanding the impact of weight decay on \salgoptsgdmn,
		for training ResNet-20 on CIFAR-10 with mini-batch size of $32$,
	}
	\label{fig:understanding_optsgd_on_n1_and_wo_weight_decay}
\end{figure*}

\begin{figure*}[!h]
	\centering
	\vspace{-1em}
	\subfigure[\small
		Weight norm $ \norm{ \xx_t }_2 $.
	]{
		\includegraphics[width=1.\textwidth,]{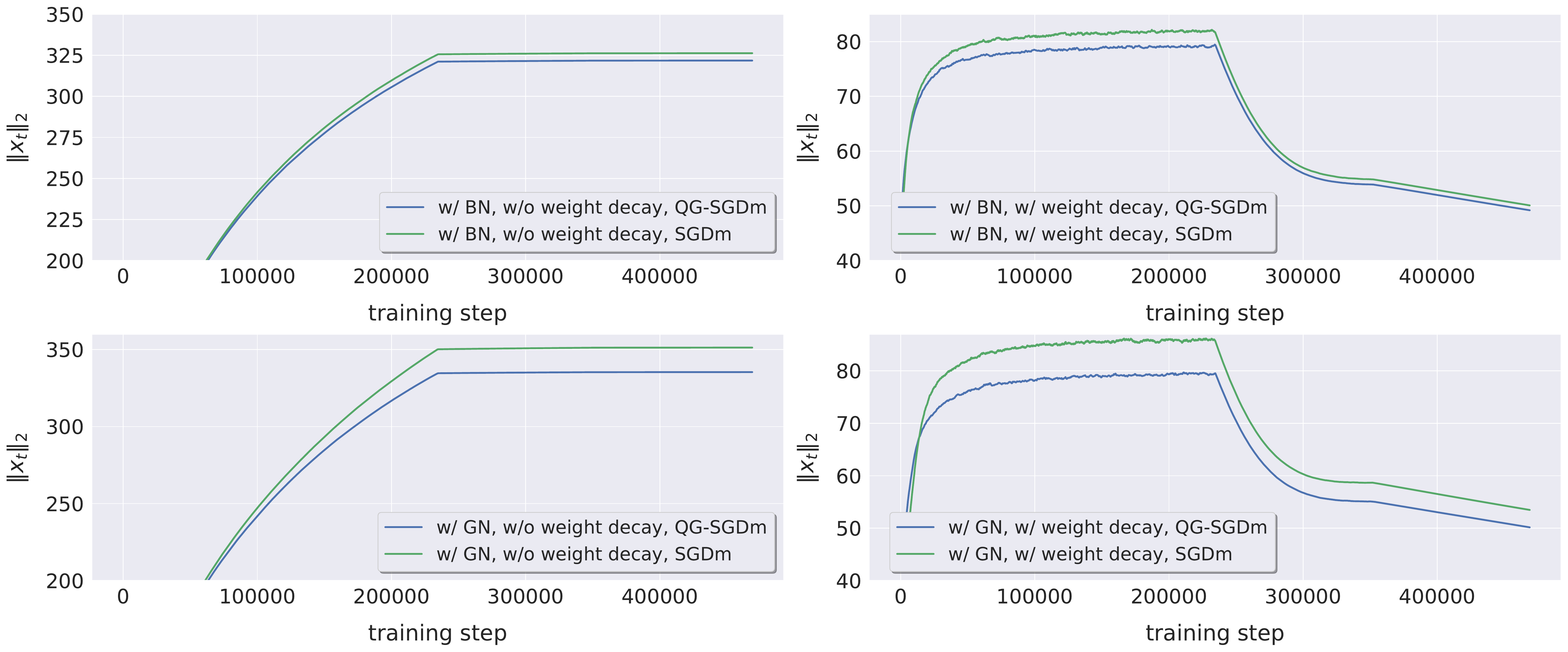}
		\label{fig:n1_resnet20_cifar10_understanding_weight_norm_heavyball_momentum_complete}
	}
	\hfill
	\subfigure[\small
		Effective stepsize $ \frac{\eta}{\norm{ \xx_t }_2^2} $.
	]{
		\includegraphics[width=1.\textwidth,]{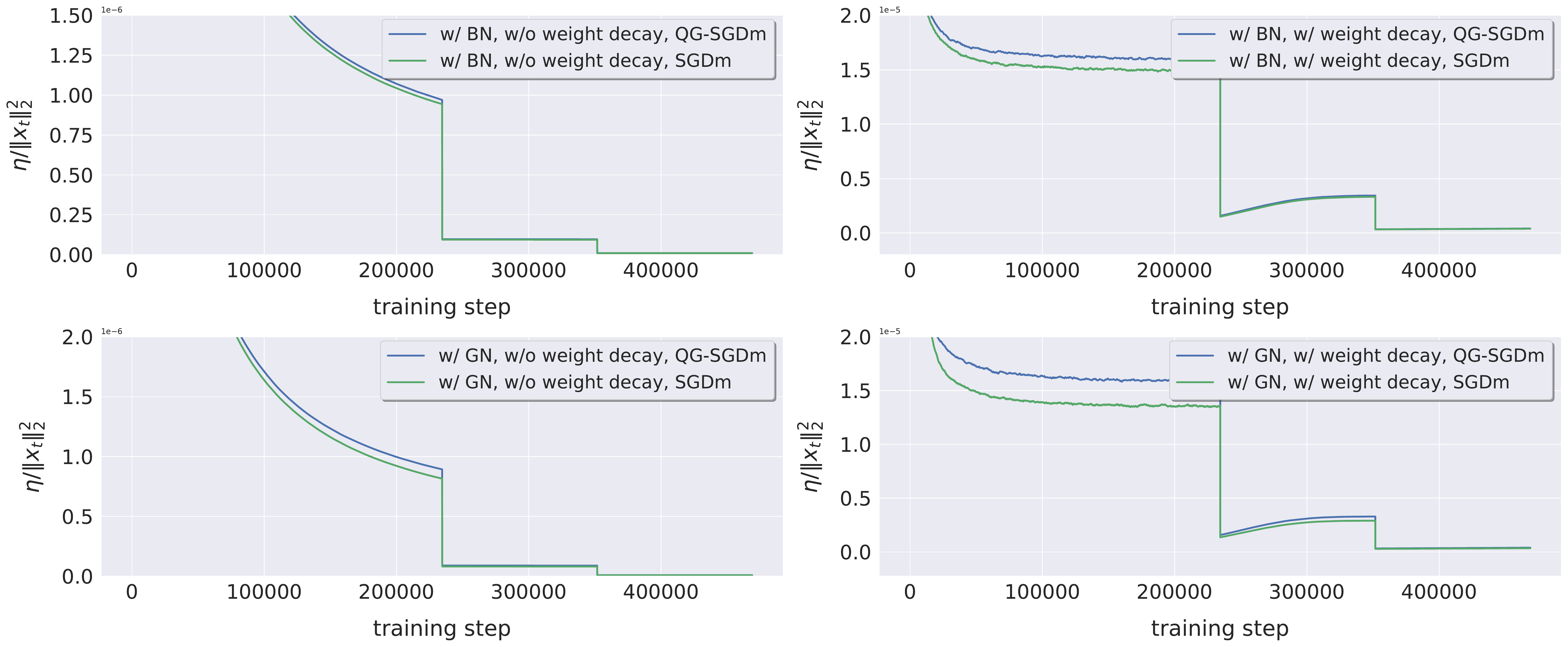}
		\label{fig:n1_resnet20_cifar10_understanding_effective_stepsize_heavyball_momentum_complete}
	}
	\caption{\small
		Understanding \salgoptsgdm through the lens of the effective step-size,
		for training ResNet-20 on CIFAR-10 with mini-batch size of $32$.
	}
	\label{fig:n1_resnet20_cifar10_understanding_from_deep_learning_aspect_complete}
\end{figure*}

\clearpage
\subsection{Understanding \algoptsgdm on the Single Worker Case via Toy Function} \label{appendix:toy_problem_trajectory}
Similar to~\citet{lucas2018aggregated}, we first optimize the Rosenbrock function,
defined as $f(x, y) = (y - x^2)^2 + 100 (x - 1)^2$.

Figure~\ref{fig:understanding_on_new_2d_toy_example}
illustrates the stabilized optimization trajectory in \salgoptsgdm (much less oscillation than SGDm).

\begin{figure*}[!h]
	\centering
	\vspace{-1em}
	\subfigure[\small $\beta = 0.9, \eta = 0.001$, initial point $(0, 0)$.]{
		\includegraphics[width=.475\textwidth,]{figures/understanding_via_new_toy_2d_example/trajectory_momentum09_lr0001_pos_bl.pdf}
		\label{fig:trajectory_momentum09_lr0001_pos_bl}
	}
	\hfill
	\subfigure[\small $\beta = 0.9, \eta = 0.001$, initial point $(2, 0)$.]{
		\includegraphics[width=.475\textwidth,]{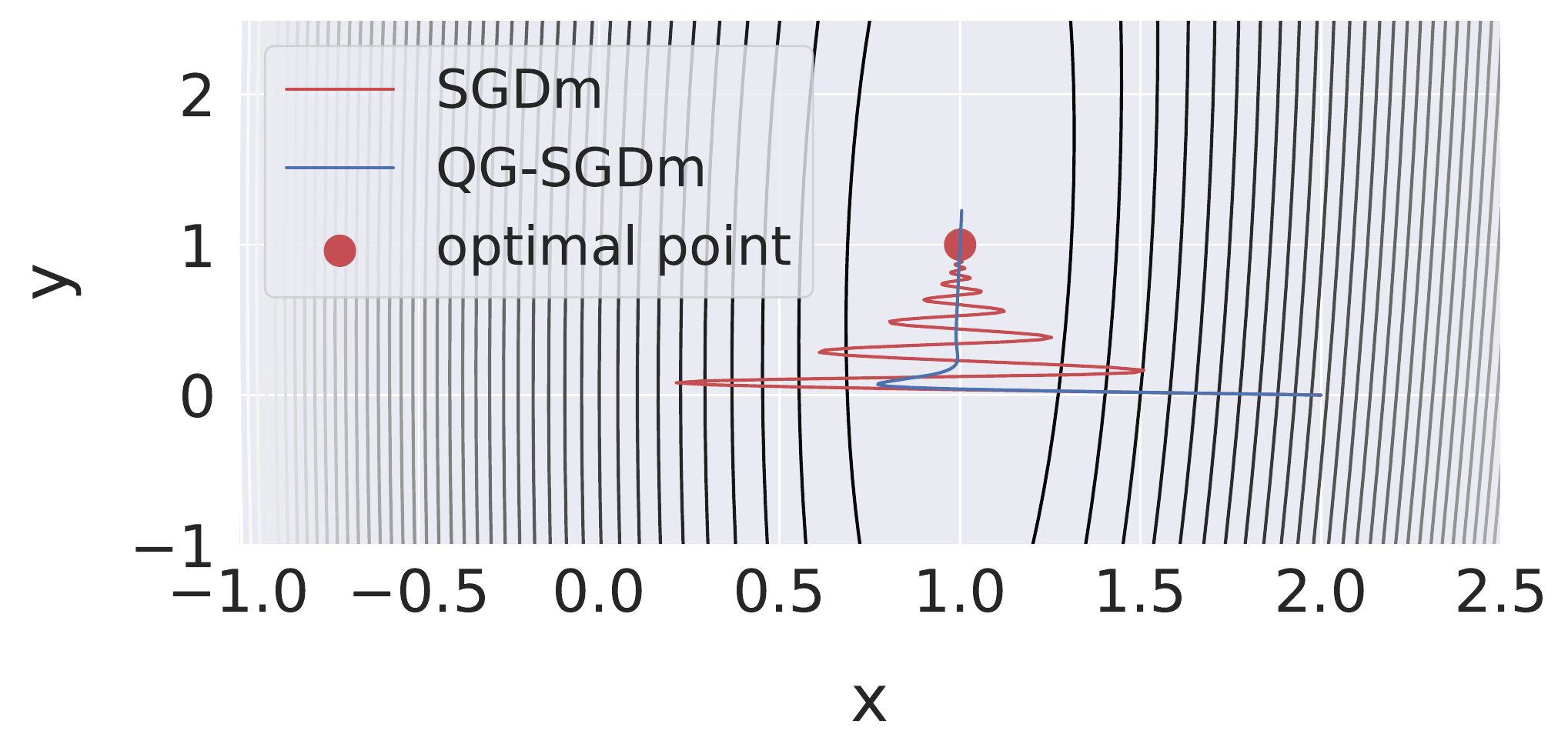}
		\label{fig:trajectory_momentum09_lr0001_pos_br}
	}
	\hfill
	\subfigure[\small $\beta = 0.9, \eta = 0.001$, initial point $(0, 2)$.]{
		\includegraphics[width=.475\textwidth,]{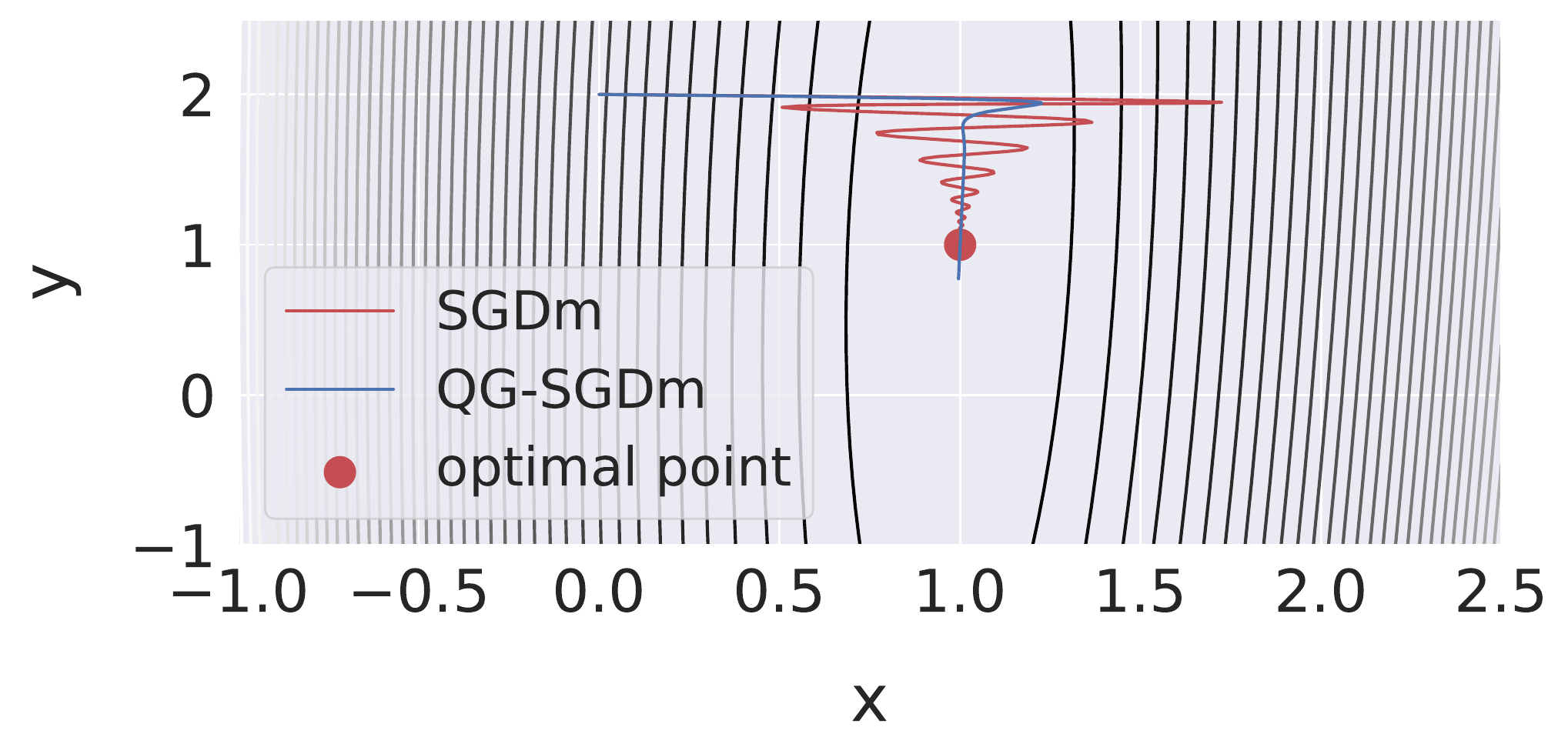}
		\label{fig:trajectory_momentum09_lr0001_pos_tl}
	}
	\hfill
	\subfigure[\small $\beta = 0.9, \eta = 0.001$, initial point $(2, 2)$.]{
		\includegraphics[width=.475\textwidth,]{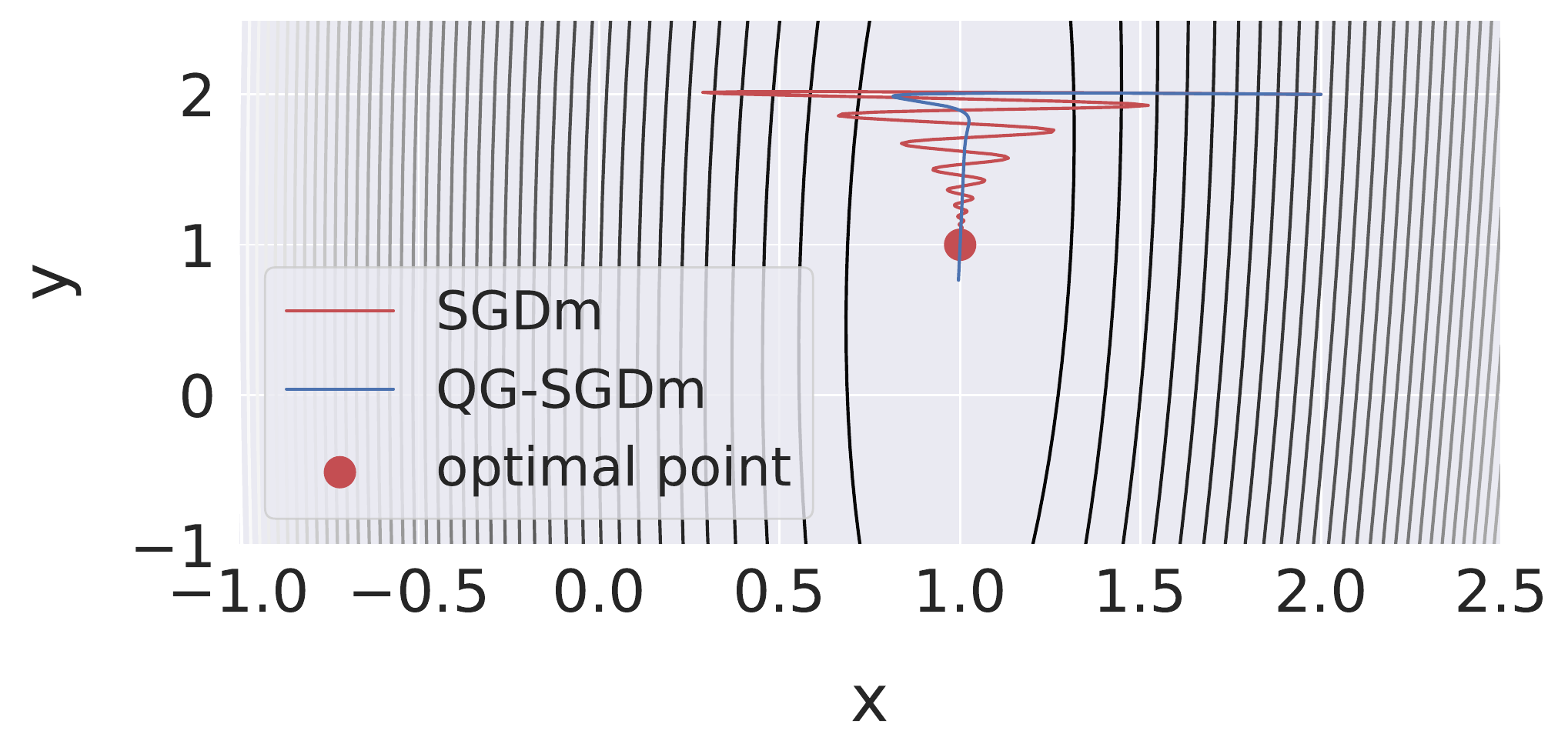}
		\label{fig:trajectory_momentum09_lr0001_pos_tr}
	}
	\hfill
	\subfigure[\small $\beta = 0.99, \eta = 0.001$, initial point $(0, 0)$.]{
		\includegraphics[width=.475\textwidth,]{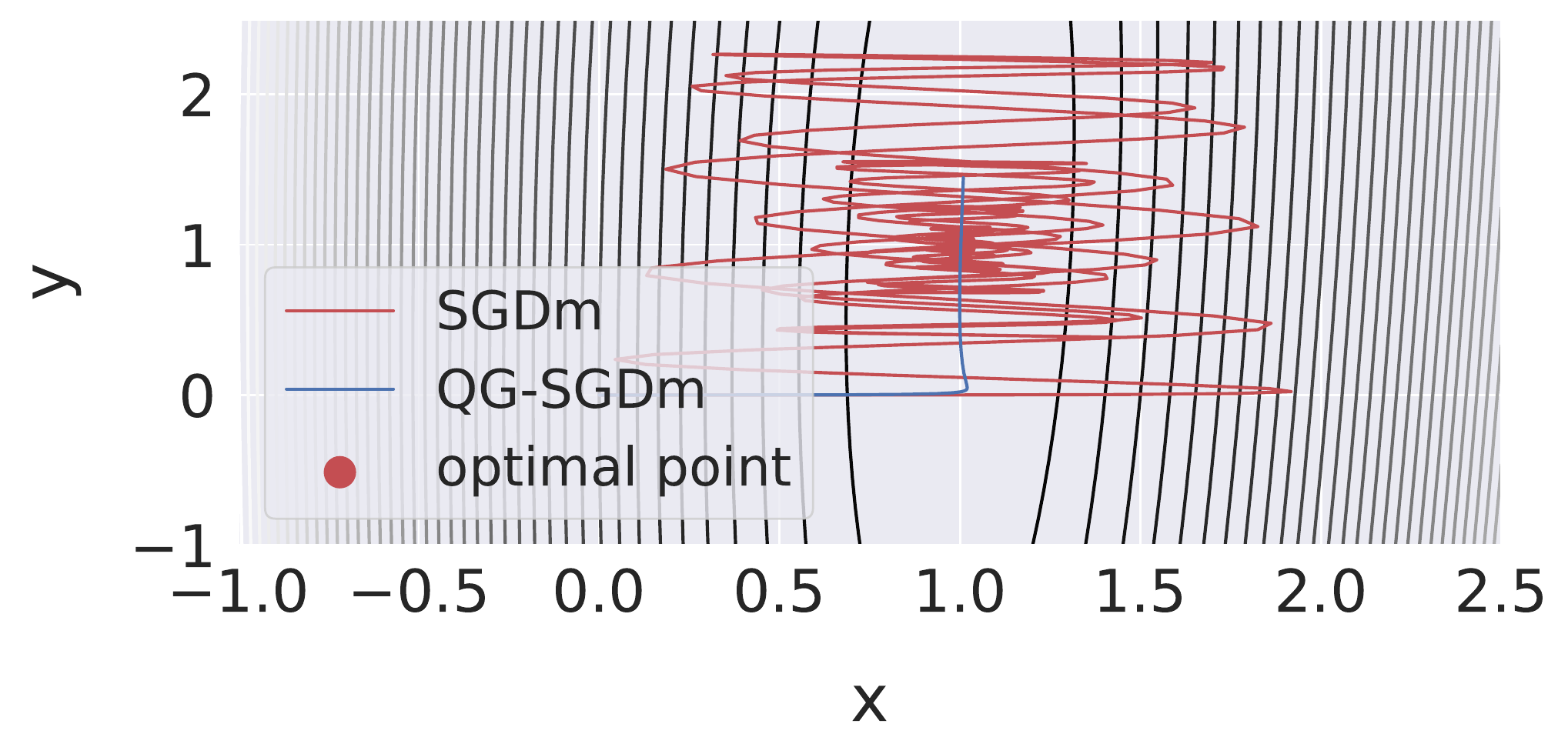}
		\label{fig:trajectory_momentum099_lr0001_pos_bl}
	}
	\hfill
	\subfigure[\small $\beta = 0.99, \eta = 0.001$, initial point $(2, 0)$.]{
		\includegraphics[width=.475\textwidth,]{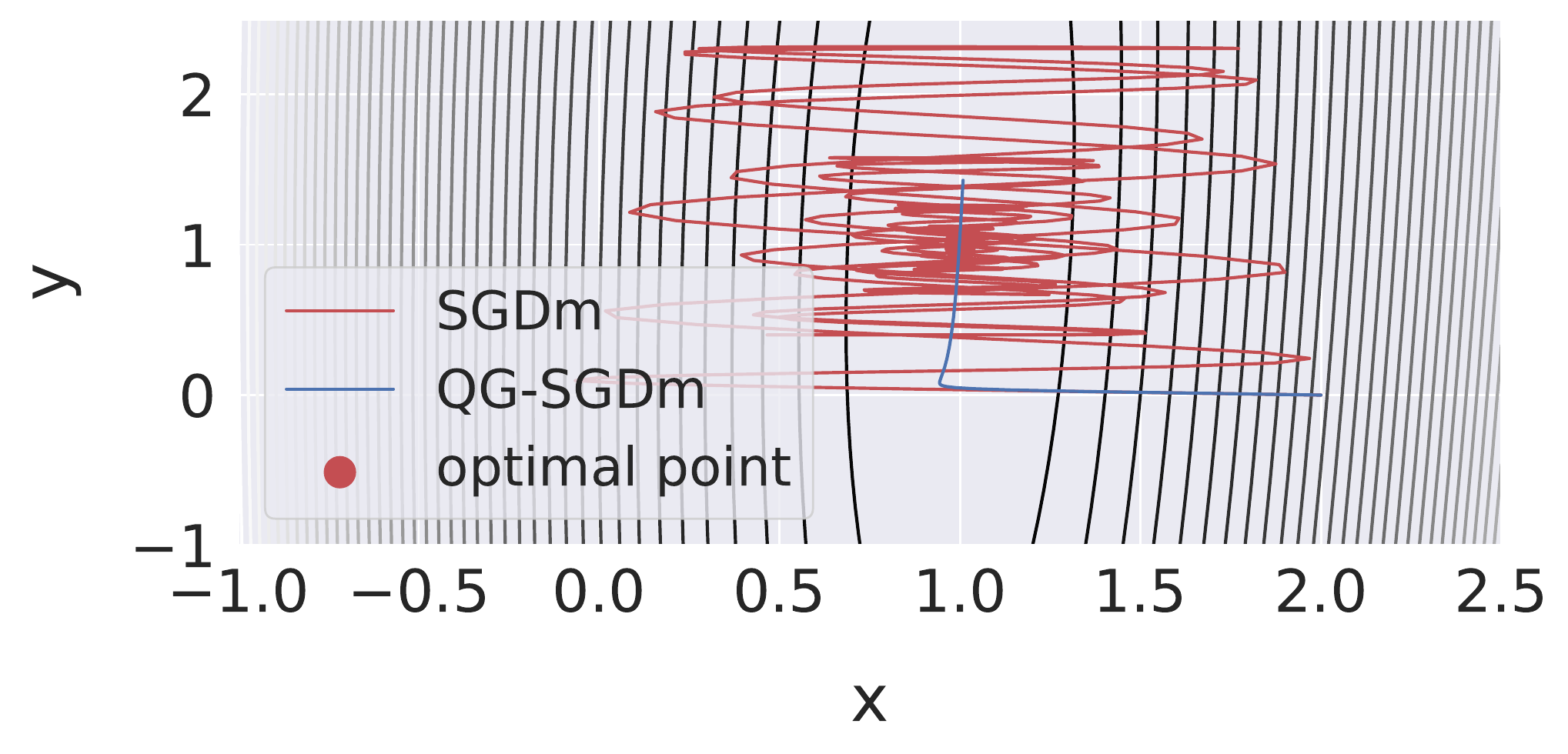}
		\label{fig:trajectory_momentum099_lr0001_pos_br}
	}
	\hfill
	\subfigure[\small $\beta = 0.99, \eta = 0.001$, initial point $(0, 2)$.]{
		\includegraphics[width=.475\textwidth,]{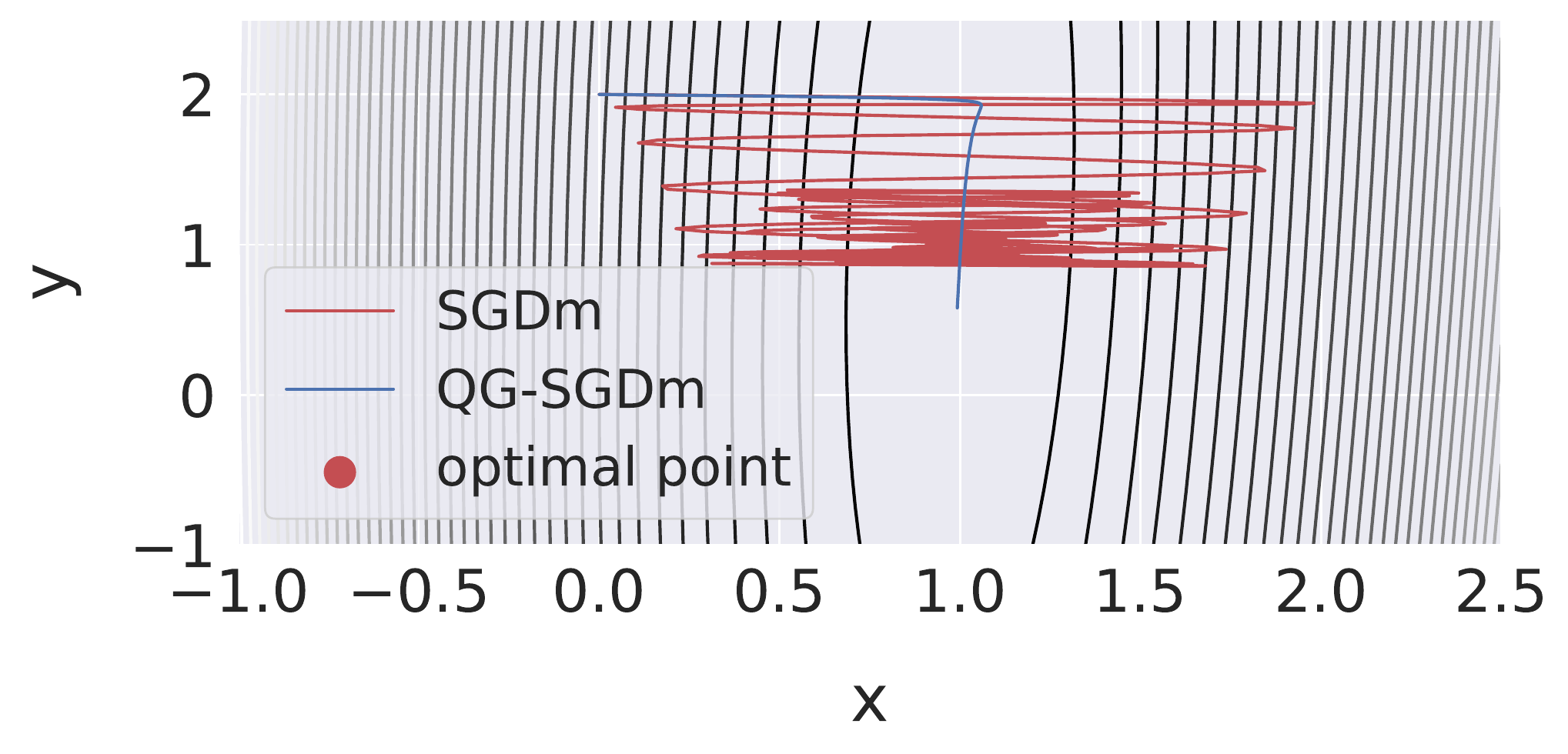}
		\label{fig:trajectory_momentum099_lr0001_pos_tl}
	}
	\hfill
	\subfigure[\small $\beta = 0.99, \eta = 0.001$, initial point $(2, 2)$.]{
		\includegraphics[width=.475\textwidth,]{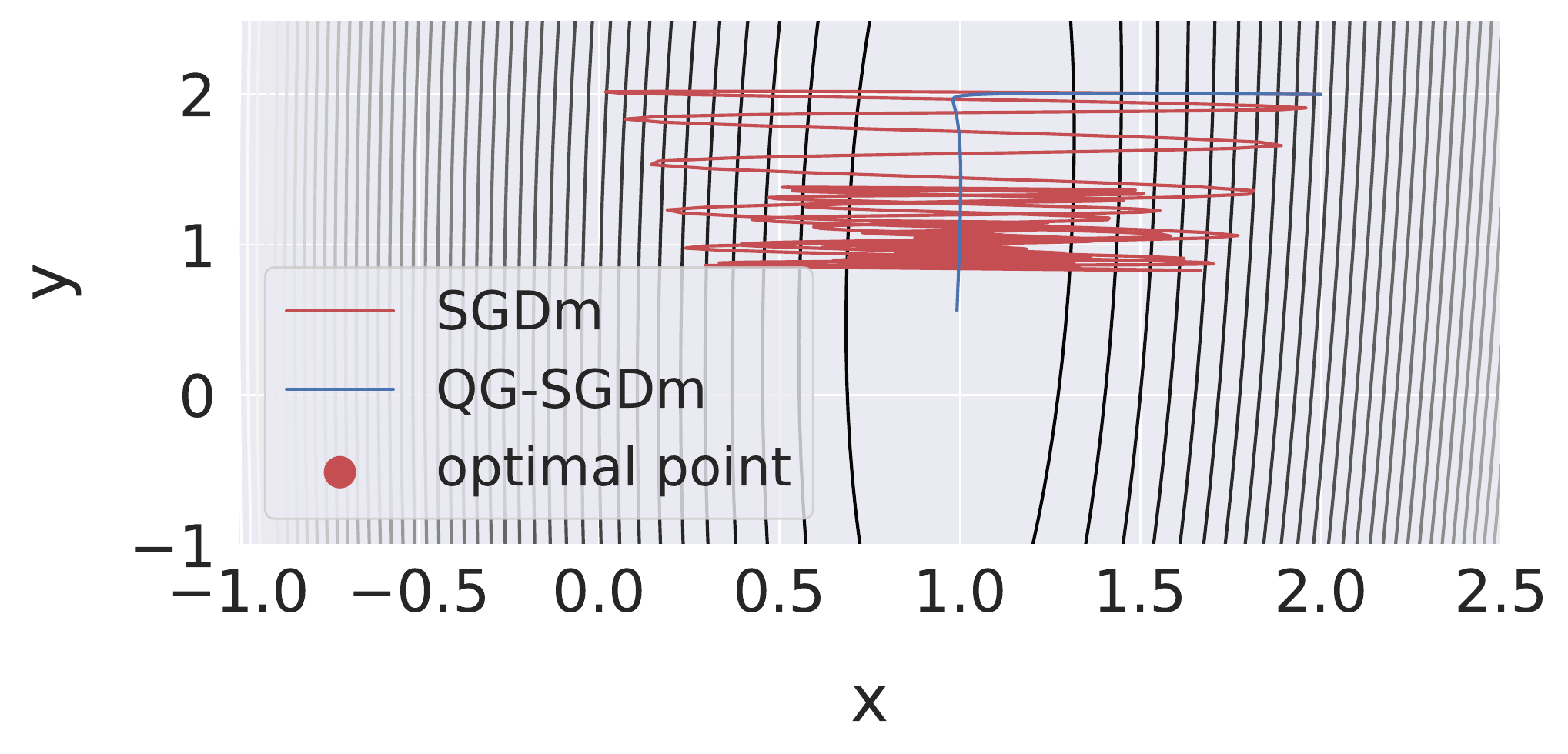}
		\label{fig:trajectory_momentum099_lr0001_pos_tr}
	}
	\hfill
	\caption{\small
		Understanding the optimization trajectory of \salgoptsgdm and Heavy-ball momentum SGD (SGDm),
		via a 2D toy function $f(x, y) = (y - x^2)^2 + 100 (x - 1)^2$.
		This function has a global minimum at $(x, y) = (1, 1)$.
		Red line corresponds to SGDm and blue line indicates \salgoptsgdm.
		Red line illustrates larger oscillation than \salgoptsgdm on the optimization trajectory.
	}
	\label{fig:understanding_on_new_2d_toy_example_complete}
\end{figure*}

We further study a simple non-convex toy problem~\citet{lucas2018aggregated}:
\begin{align*}
	f(x, y) = \log(e^x + e^{-x})
	+ 10 \log \left( e^{e^x \left( y - sin(a x) \right)} + e^{-e^x \left( y - sin(a x) \right)} \right) \,,
\end{align*}%
in Figure~\ref{fig:understanding_on_2d_toy_example}.
In our experiments, we choose $a = 8$ and $b = 10$,
and initialize the optimizer at $(x, y) = (-2, 0)$.

\begin{figure*}[!h]
	\centering
	\vspace{-1em}
	\subfigure[\small $\beta = 0.9, \eta = 0.05$.]{
		\includegraphics[width=.315\textwidth,]{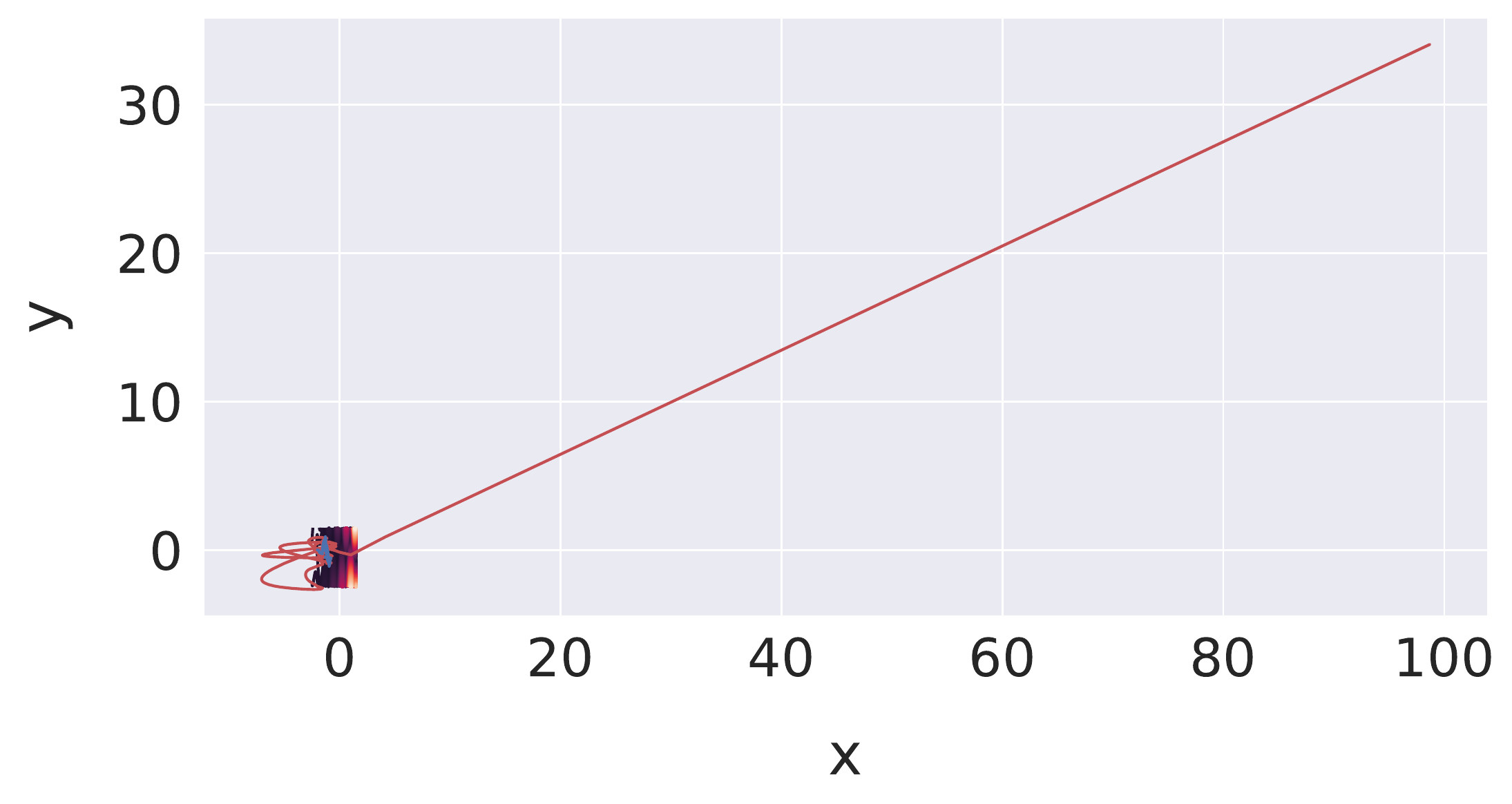}
		\label{fig:trajectory_momentum09_lr005}
	}
	\hfill
	\subfigure[\small $\beta = 0.9, \eta = 0.01$.]{
		\includegraphics[width=.315\textwidth,]{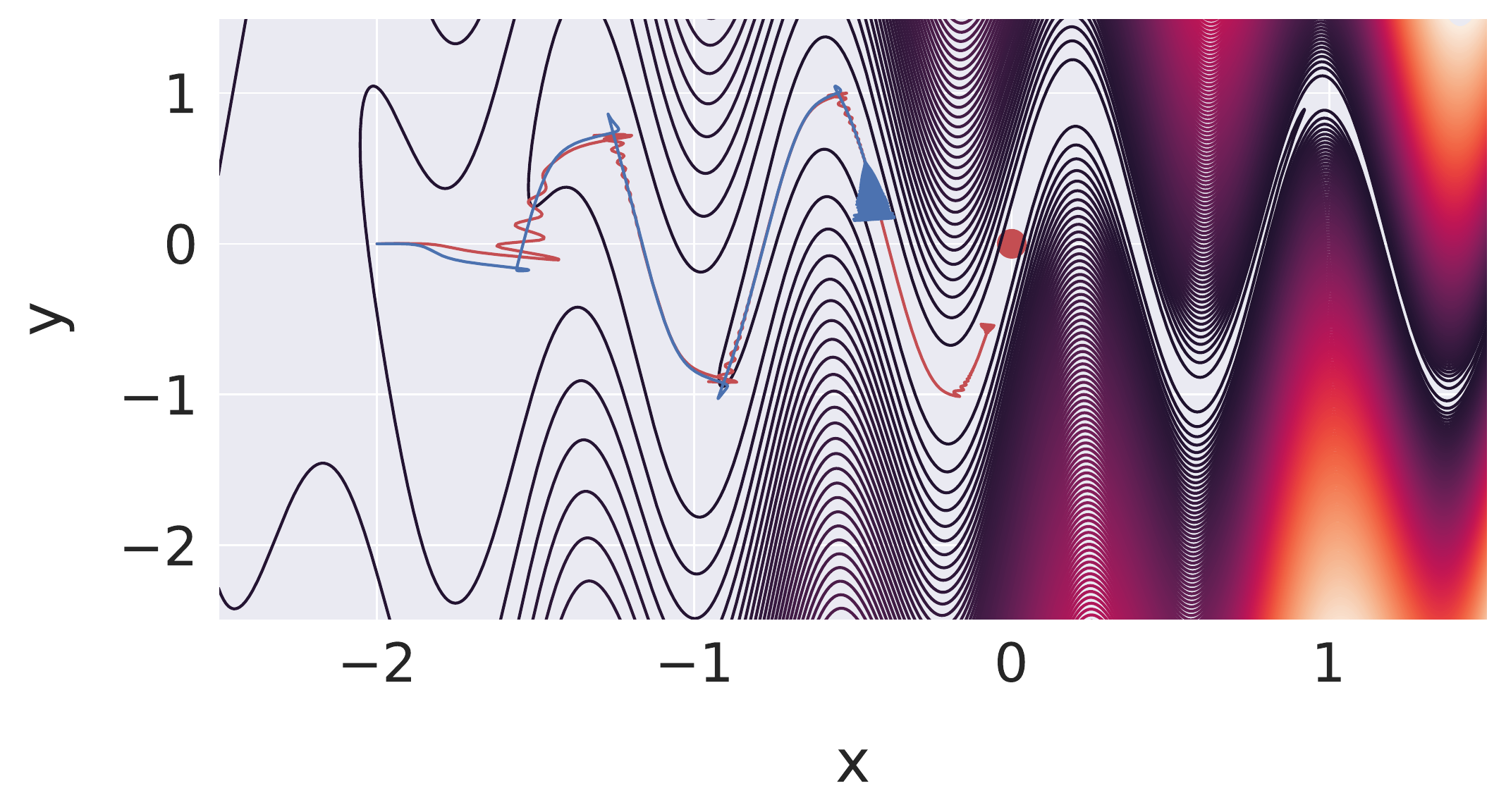}
		\label{fig:trajectory_momentum09_lr001}
	}
	\hfill
	\subfigure[\small $\beta = 0.9, \eta = 0.005$.]{
		\includegraphics[width=.315\textwidth,]{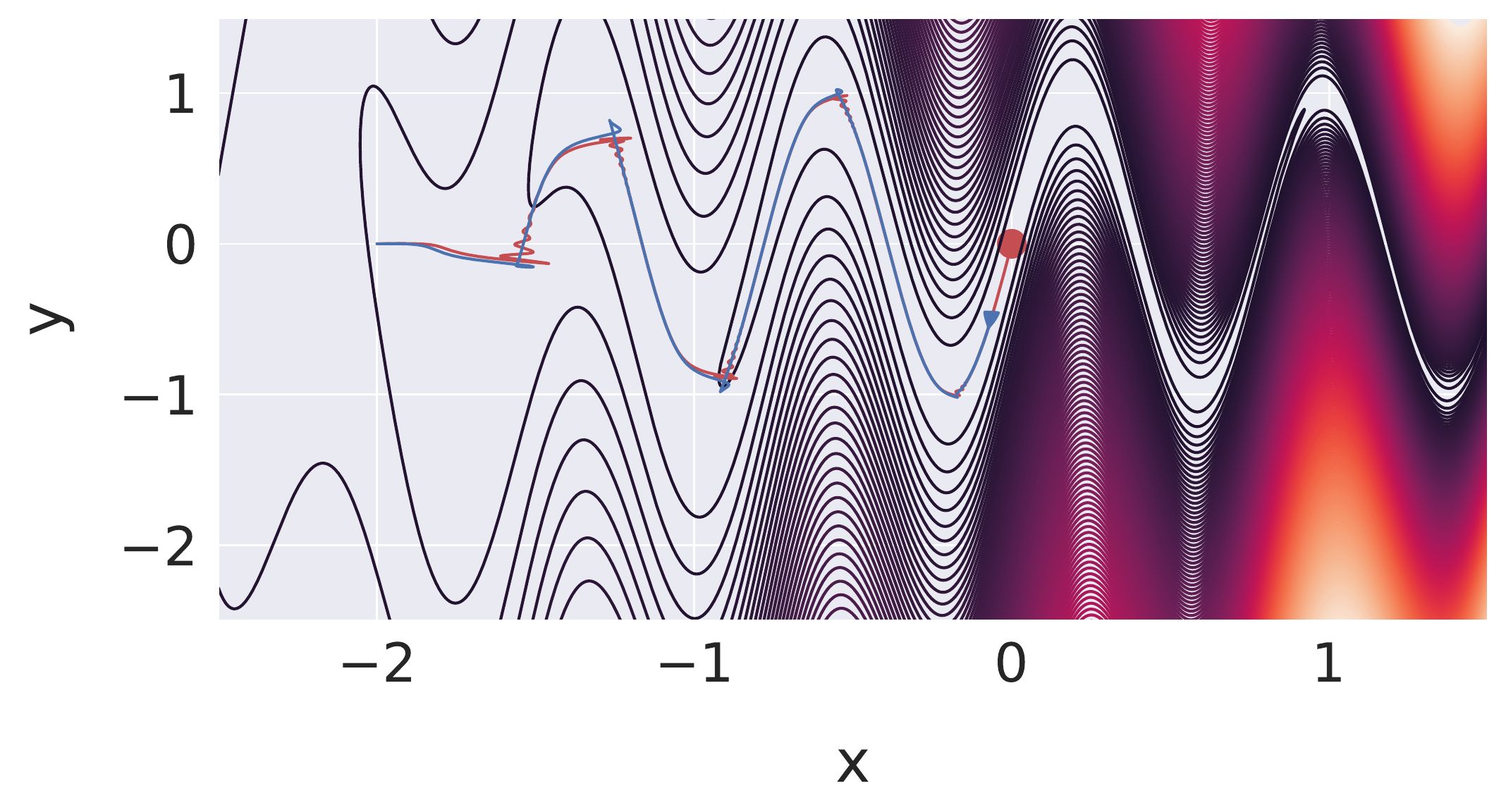}
		\label{fig:trajectory_momentum09_lr0005}
	}
	\hfill
	\subfigure[\small $\beta = 0.99, \eta = 0.01$.]{
		\includegraphics[width=.315\textwidth,]{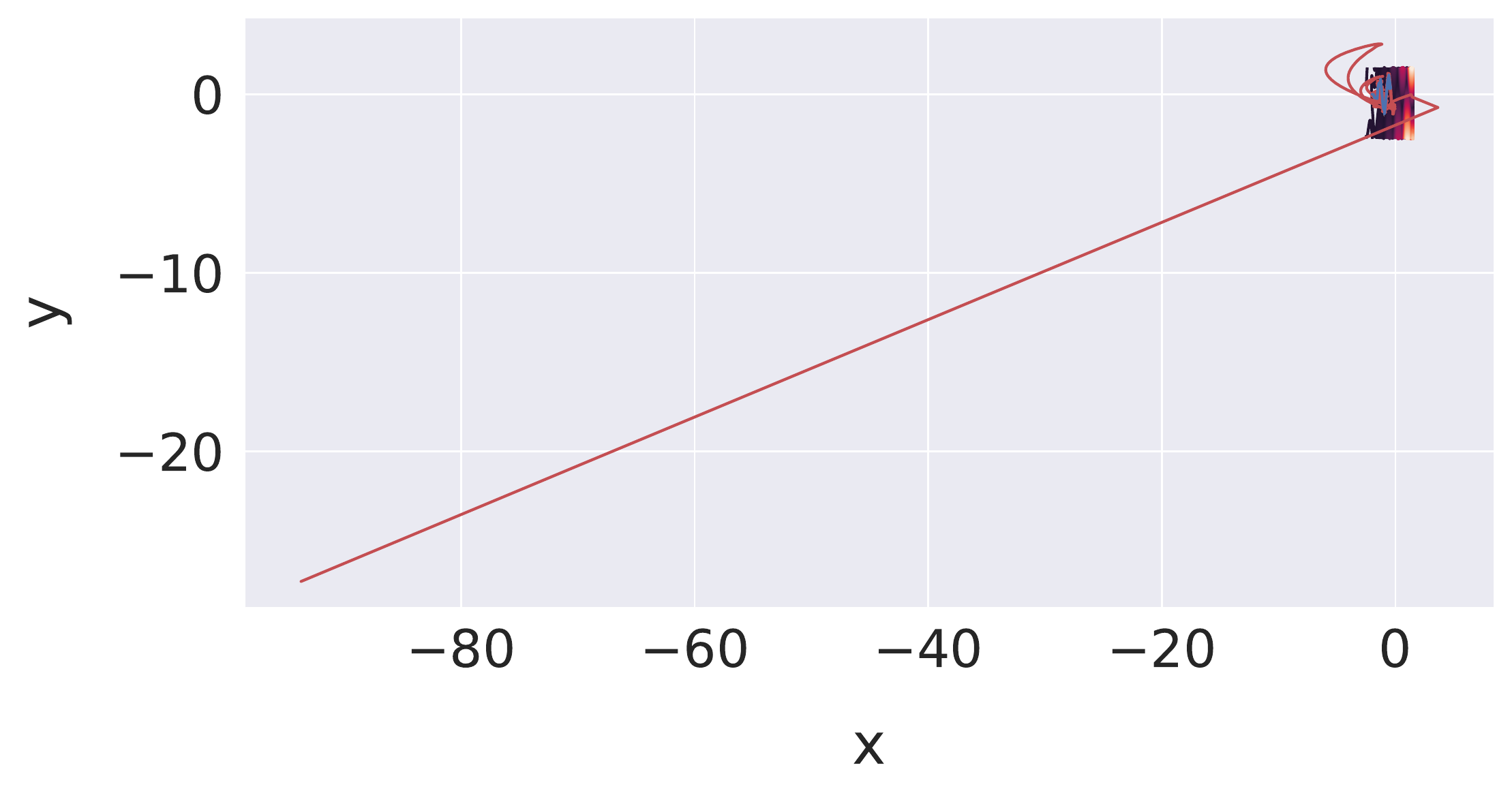}
		\label{fig:trajectory_momentum09_lr0005}
	}
	\hfill
	\subfigure[\small $\beta = 0.99, \eta = 0.005$.]{
		\includegraphics[width=.325\textwidth,]{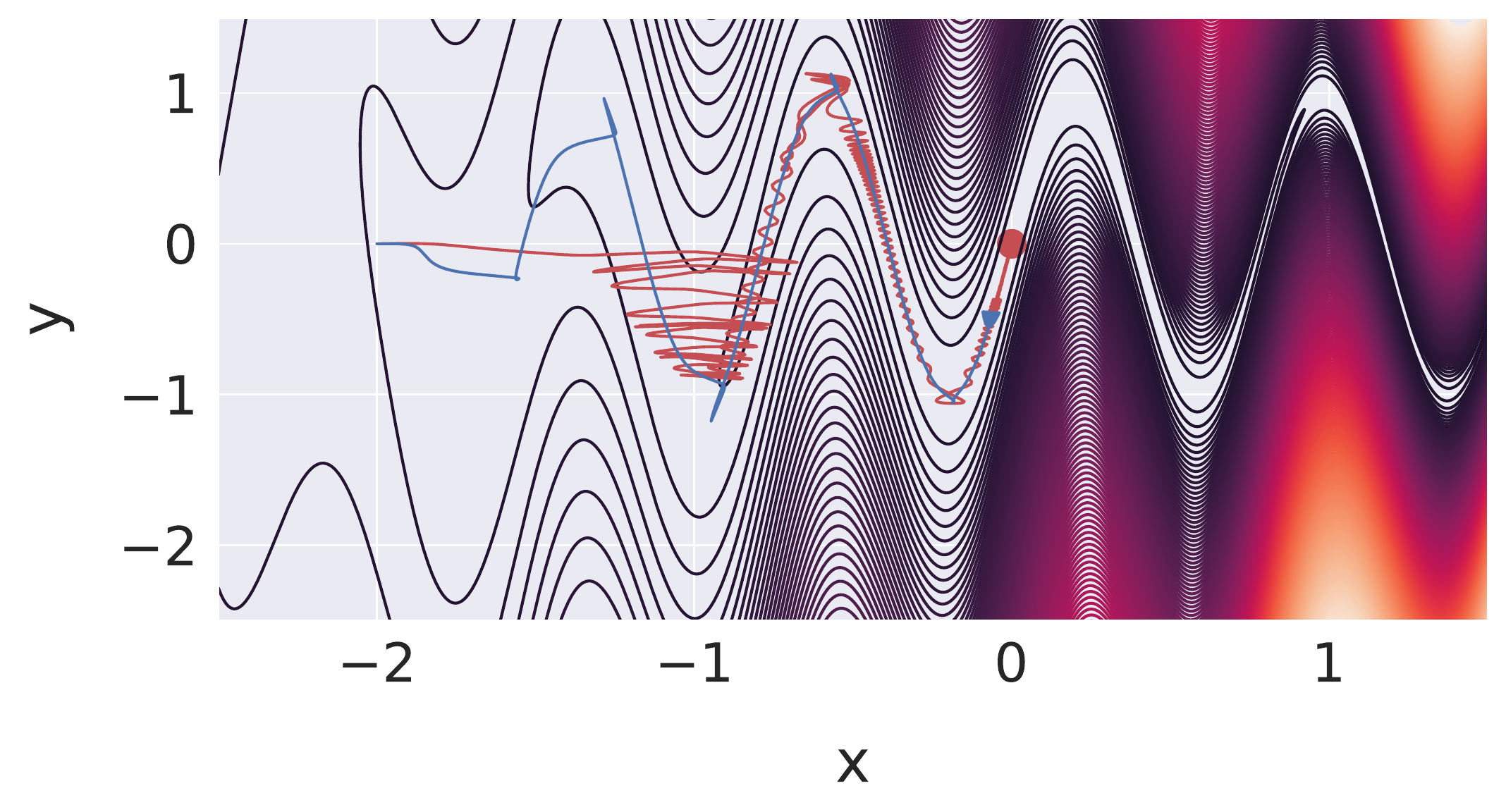}
		\label{fig:trajectory_momentum09_lr0005}
	}
	\caption{\small
		Understanding the optimization trajectory of \salgoptsgdm and Heavy-ball momentum SGD (SGDm),
		via a 2D toy function
		$
			f(x, y) = \log(e^x + e^{-x})
			+ 10 \log \left( e^{e^x \left( y - sin(8 x) \right)} + e^{-e^x \left( y - sin(8 x) \right)} \right)
		$.
		This function has an optimal value at $(x, y) = (0, 0)$.
		Red line corresponds to SGDm and blue line indicates \salgoptsgdm.
		Red line illustrates larger oscillation than \salgoptsgdm on the optimization trajectory.
	}
	\label{fig:understanding_on_2d_toy_example}
\end{figure*}



\subsection{The Learning Curves on CV tasks} \label{appendix:learning_curves_cv_task}
Figure~\ref{fig:learning_curves_cv_tasks} visualizes the learning curves
for training ResNet-EvoNorm-20 on CIFAR-10,
in terms of different degrees of non-i.i.d.-ness and network topologies (Ring and Social topology).

\begin{figure*}[!h]
	\vspace{-1em}
	\centering
	\subfigure[\small
		Training ResNet-EvoNorm-20 on CIFAR-10 with Social topology for $n\!=\!32$ ($\alpha\!=\!10$).
	]{
		\includegraphics[width=.315\textwidth,]{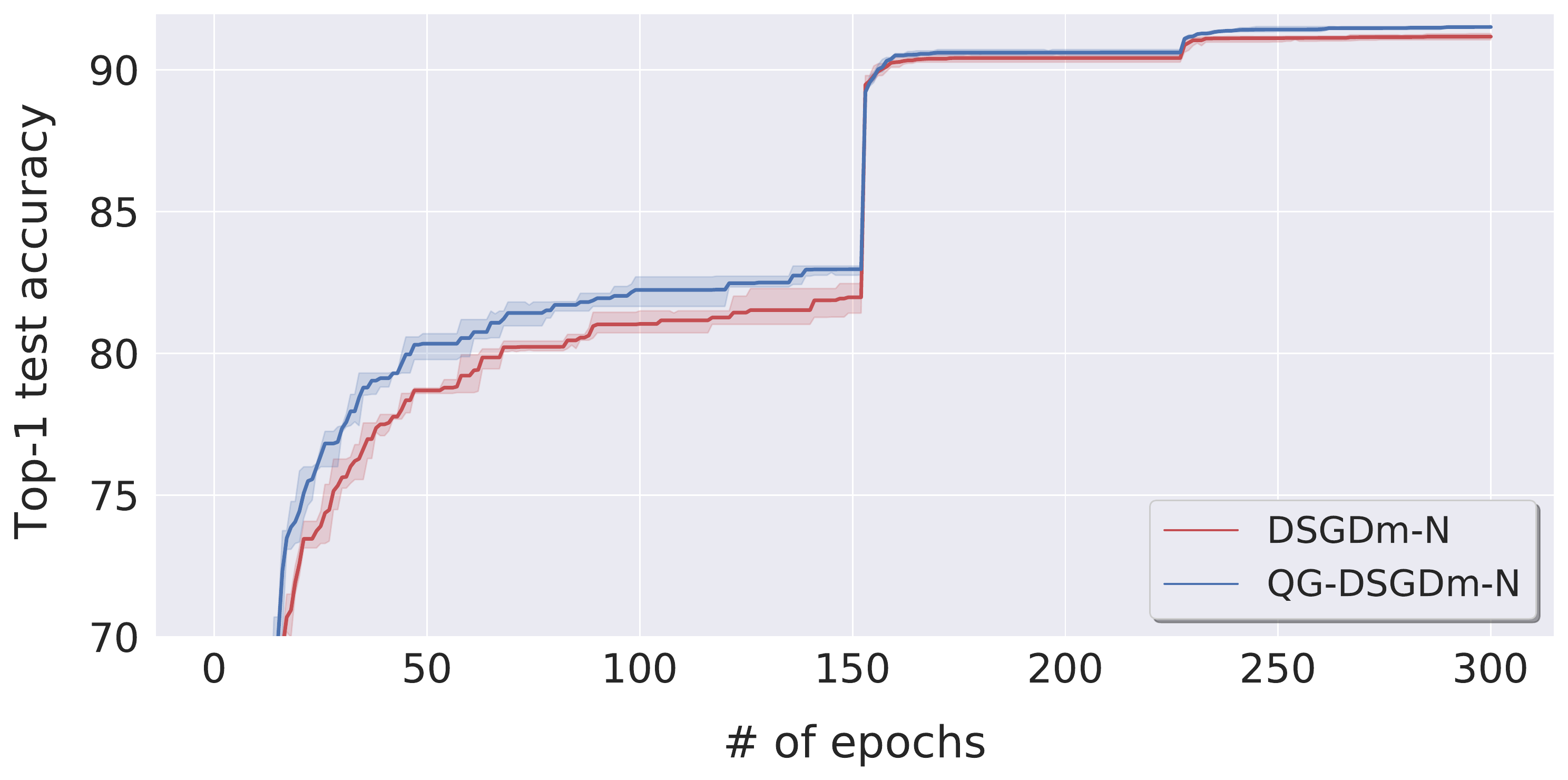}
		\label{fig:resnet_evonorm20_cifar10_k32_social_non_iid_alpha10_test_top1}
	}
	\hfill
	\subfigure[\small
		Training ResNet-EvoNorm-20 on CIFAR-10 with Social topology for $n\!=\!32$ ($\alpha\!=\!1$).
	]{
		\includegraphics[width=.315\textwidth,]{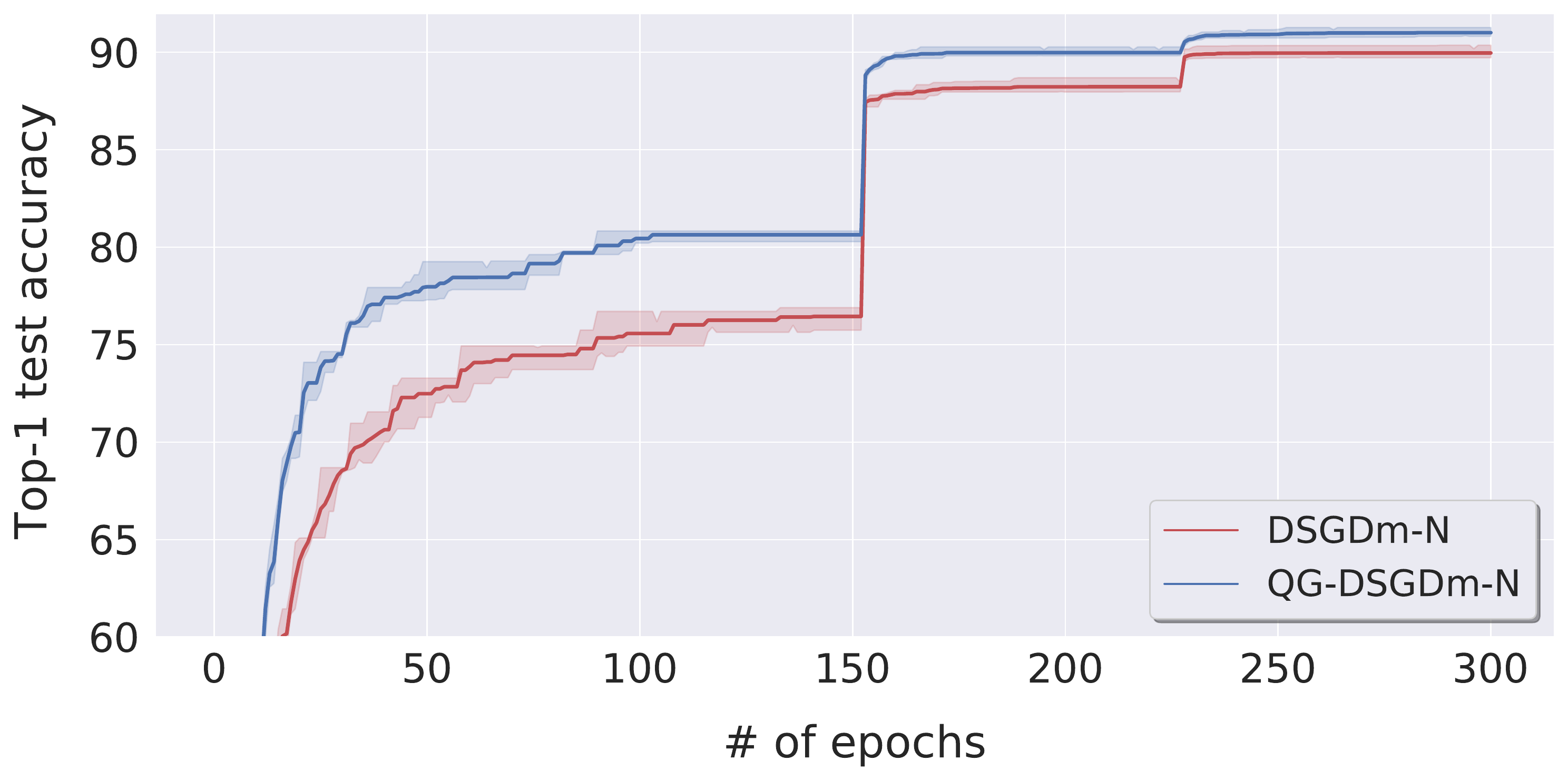}
		\label{fig:resnet_evonorm20_cifar10_k32_social_non_iid_alpha1_test_top1}
	}
	\hfill
	\subfigure[\small
		Training ResNet-EvoNorm-20 on CIFAR-10 with Social topology for $n\!=\!32$ ($\alpha\!=\!0.1$).
	]{
		\includegraphics[width=.315\textwidth,]{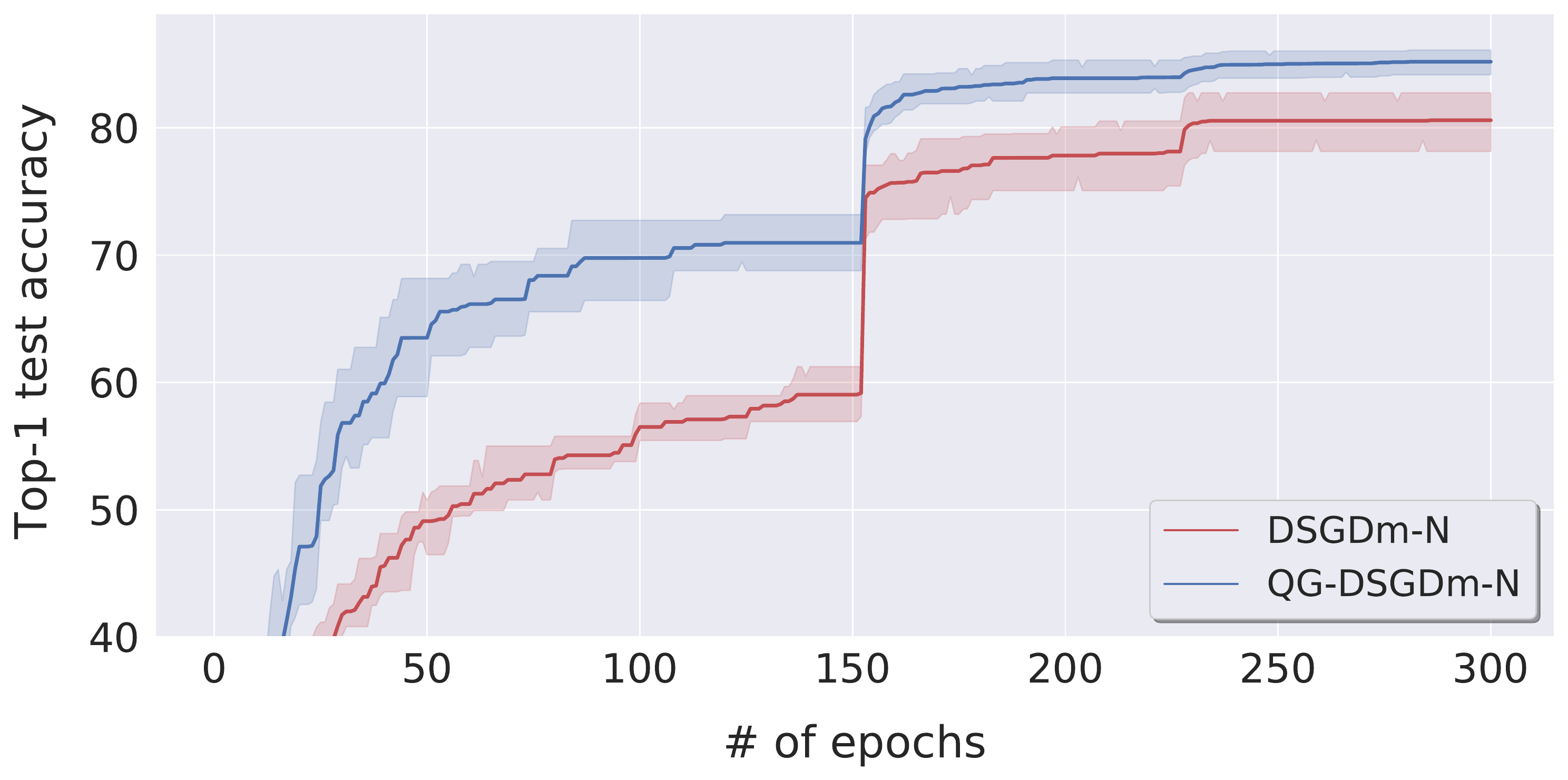}
		\label{fig:resnet_evonorm20_cifar10_k32_social_non_iid_alpha01_test_top1}
	}
	\subfigure[\small
		Training ResNet-EvoNorm-20 on CIFAR-10 with Ring topology for $n\!=\!16$ ($\alpha\!=\!10$).
	]{
		\includegraphics[width=.315\textwidth,]{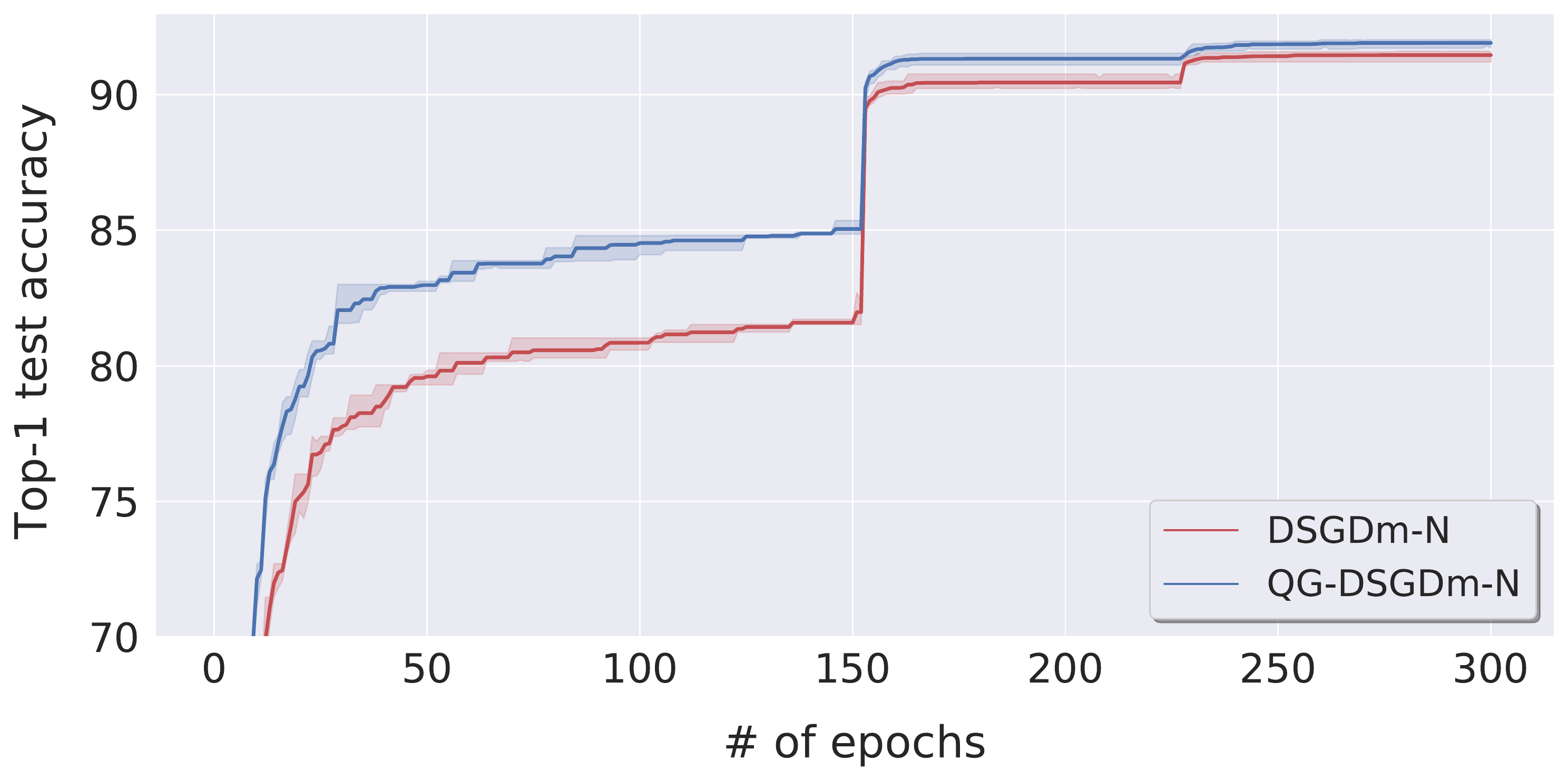}
		\label{fig:resnet_evonorm20_cifar10_k32_social_non_iid_alpha10_test_top1}
	}
	\hfill
	\subfigure[\small
		Training ResNet-EvoNorm-20 on CIFAR-10 with Ring topology for $n\!=\!16$ ($\alpha\!=\!1$).
	]{
		\includegraphics[width=.315\textwidth,]{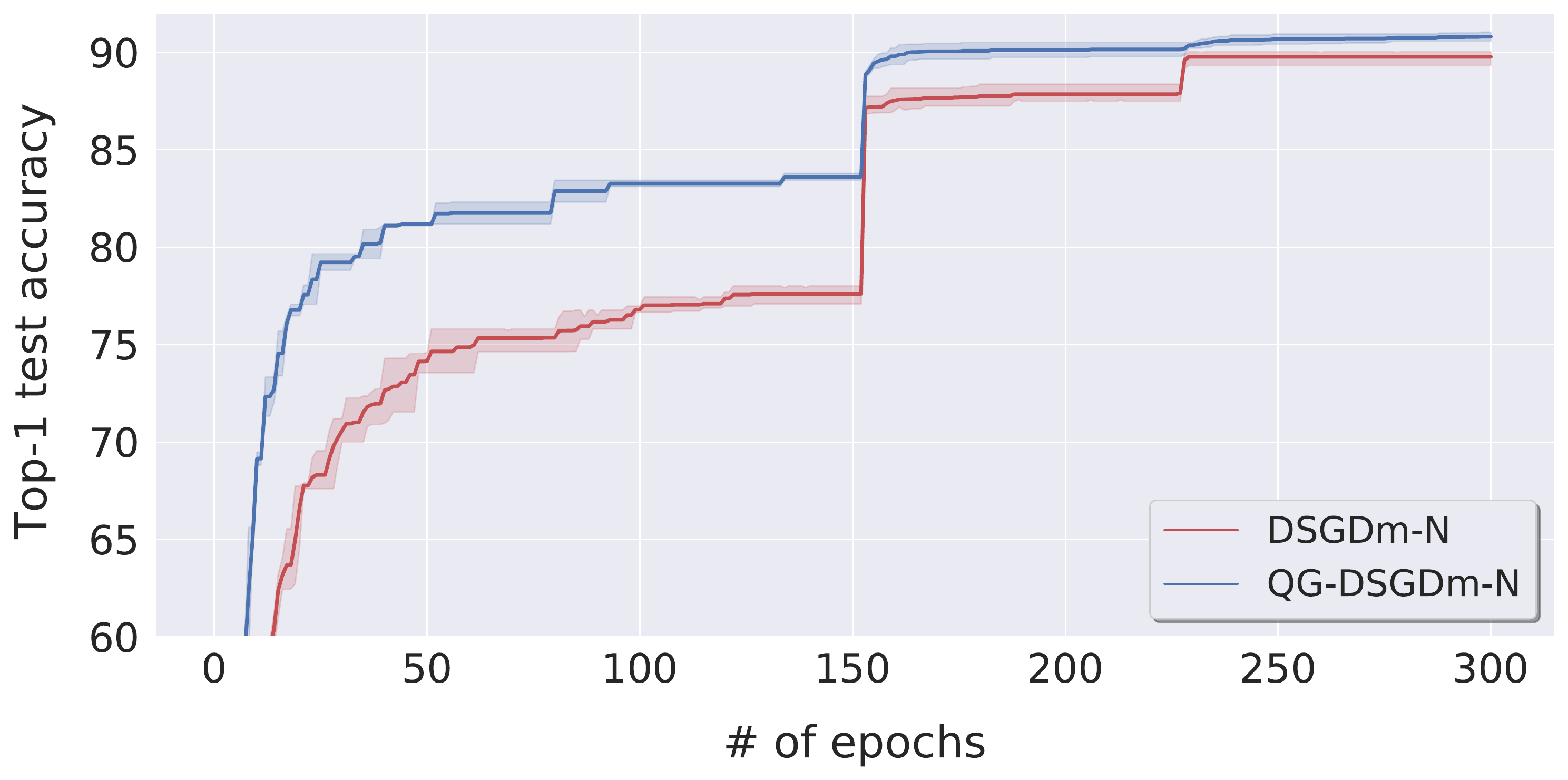}
		\label{fig:resnet_evonorm20_cifar10_k32_social_non_iid_alpha1_test_top1}
	}
	\hfill
	\subfigure[\small
		Training ResNet-EvoNorm-20 on CIFAR-10 with Ring topology for $n\!=\!16$ ($\alpha\!=\!0.1$).
	]{
		\includegraphics[width=.315\textwidth,]{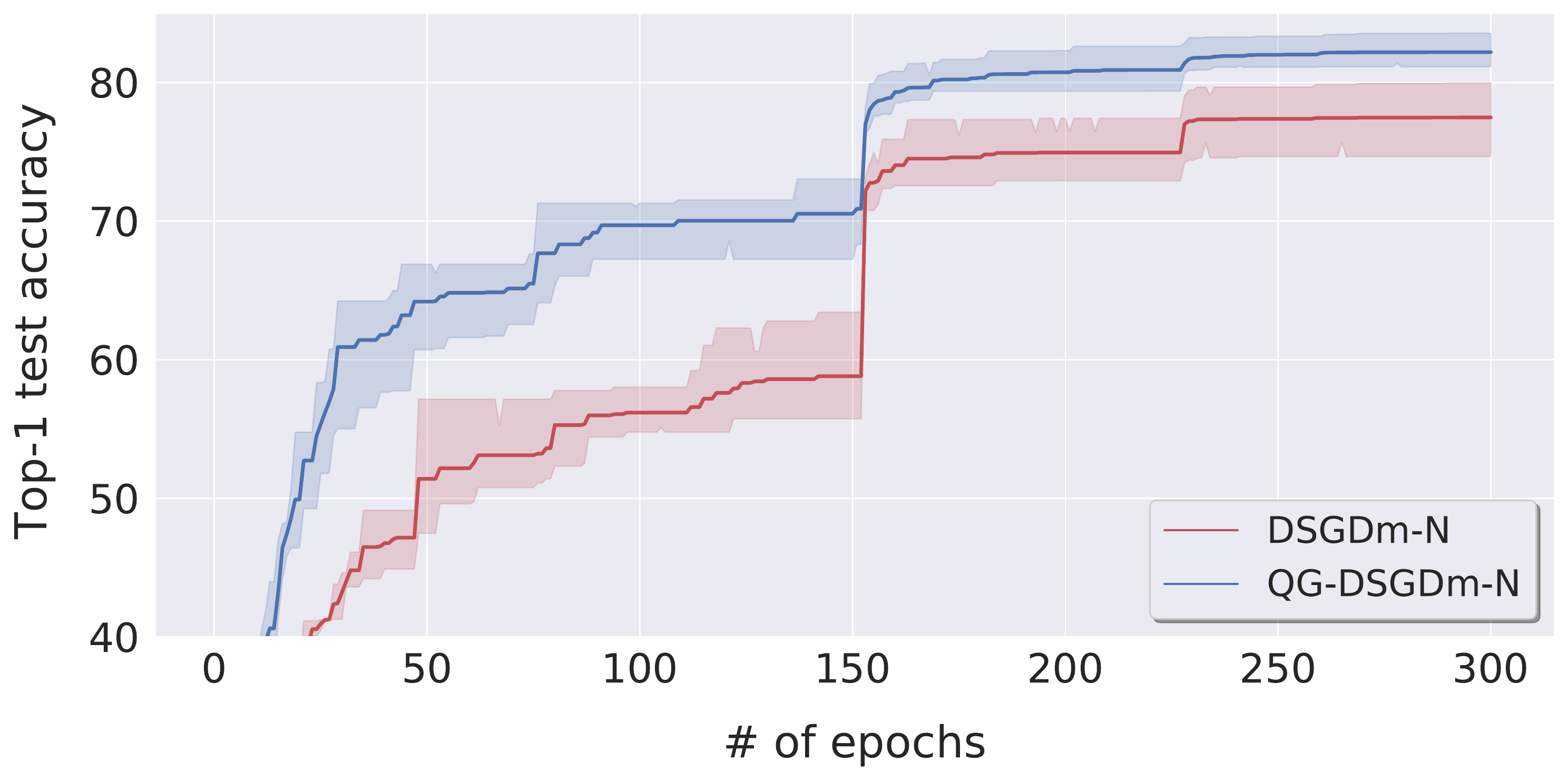}
		\label{fig:resnet_evonorm20_cifar10_k32_social_non_iid_alpha01_test_top1}
	}
	\hfill
	\subfigure[\small
		Training ResNet-EvoNorm-18 on ImageNet with Ring topology for $n\!=\!16$ ($\alpha\!=\!1$).
	]{
		\includegraphics[width=.475\textwidth,]{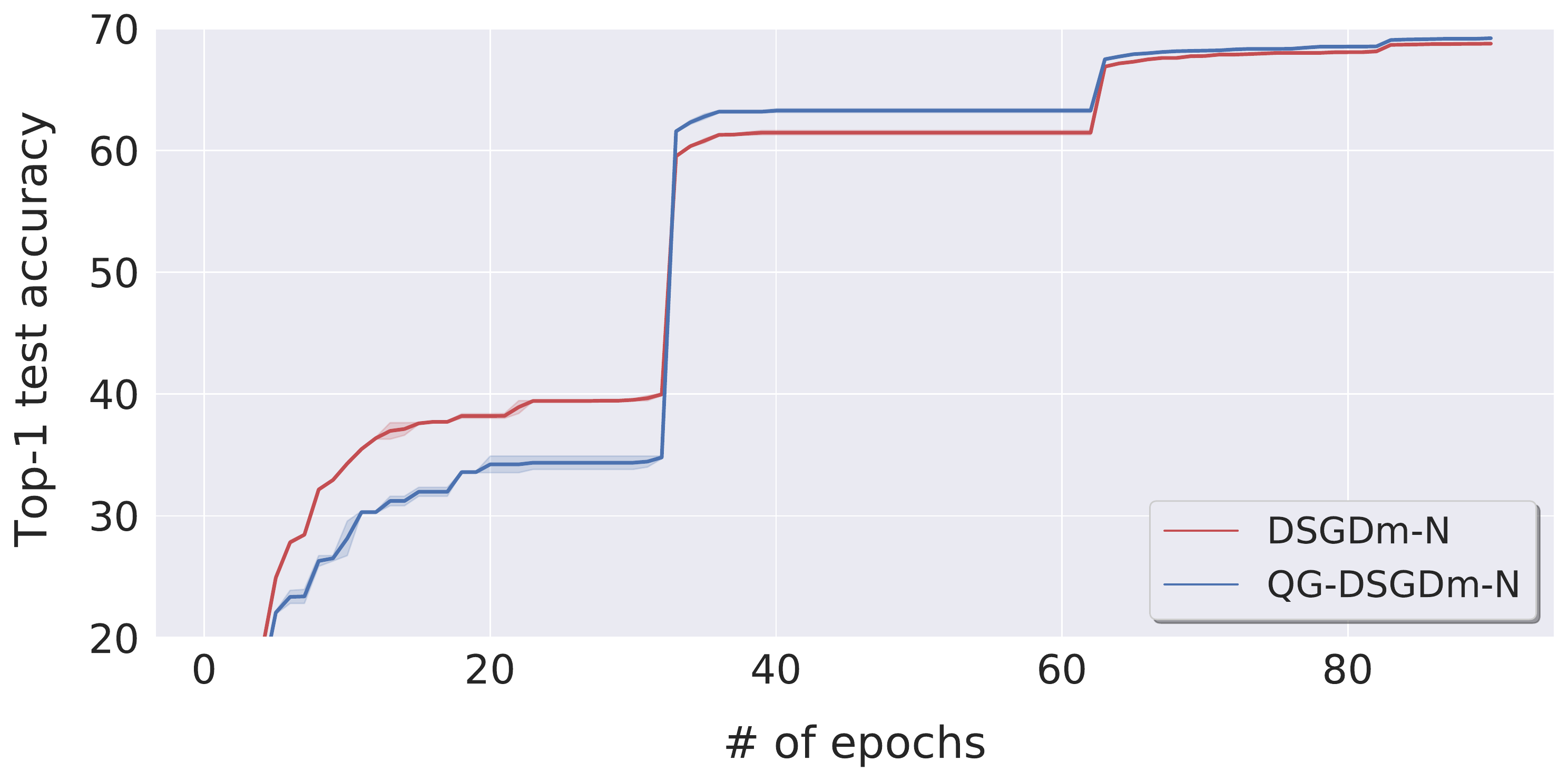}
		\label{fig:resnet_evonorm18_imagenet_k16_ring_non_iid_alpha1_test_top1}
	}
	\hfill
	\subfigure[\small
		Training ResNet-EvoNorm-18 on ImageNet with Ring topology for $n\!=\!16$ ($\alpha\!=\!0.1$).
	]{
		\includegraphics[width=.475\textwidth,]{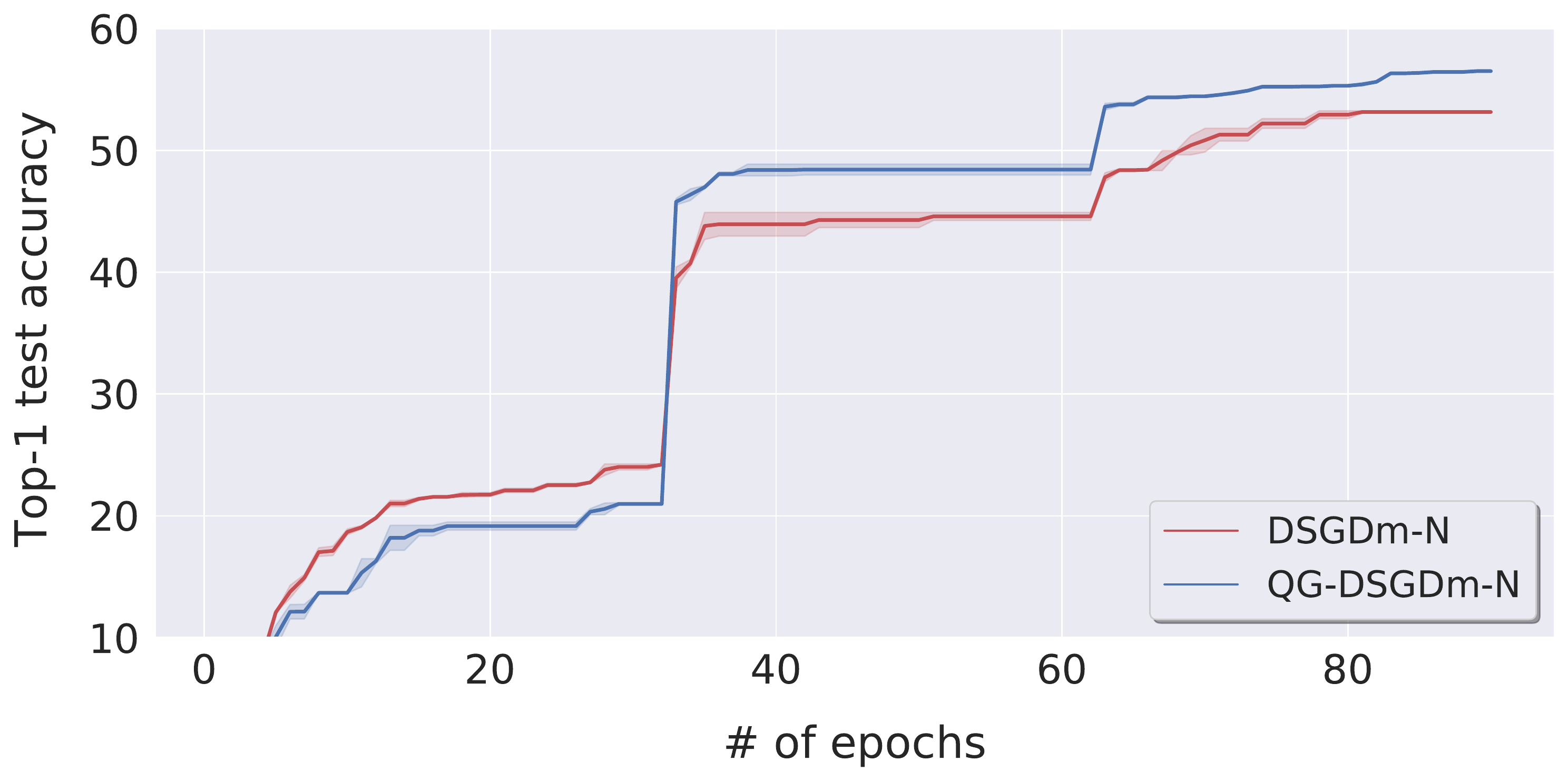}
		\label{fig:resnet_evonorm18_imagenet_k16_ring_non_iid_alpha01_test_top1}
	}
	\vspace{-1em}
	\caption{\small
		Learning curves for cv tasks.
	}
	\label{fig:learning_curves_cv_tasks}
\end{figure*}

\subsection{The Ineffectiveness of Tuning Momentum Factor for DSGDm-N}
\label{appendix:tuning_lr_and_momentum_for_sgdmN}

Table~\ref{fig:tuning_momentum_factors}
shows that tuning momentum factor for DSGDm-N cannot alleviate the training difficulty caused by heterogeneity.

\begin{figure*}[!h]
	\vspace{-1em}
	\centering
	\subfigure[\small
	]{
		\includegraphics[width=.475\textwidth,]{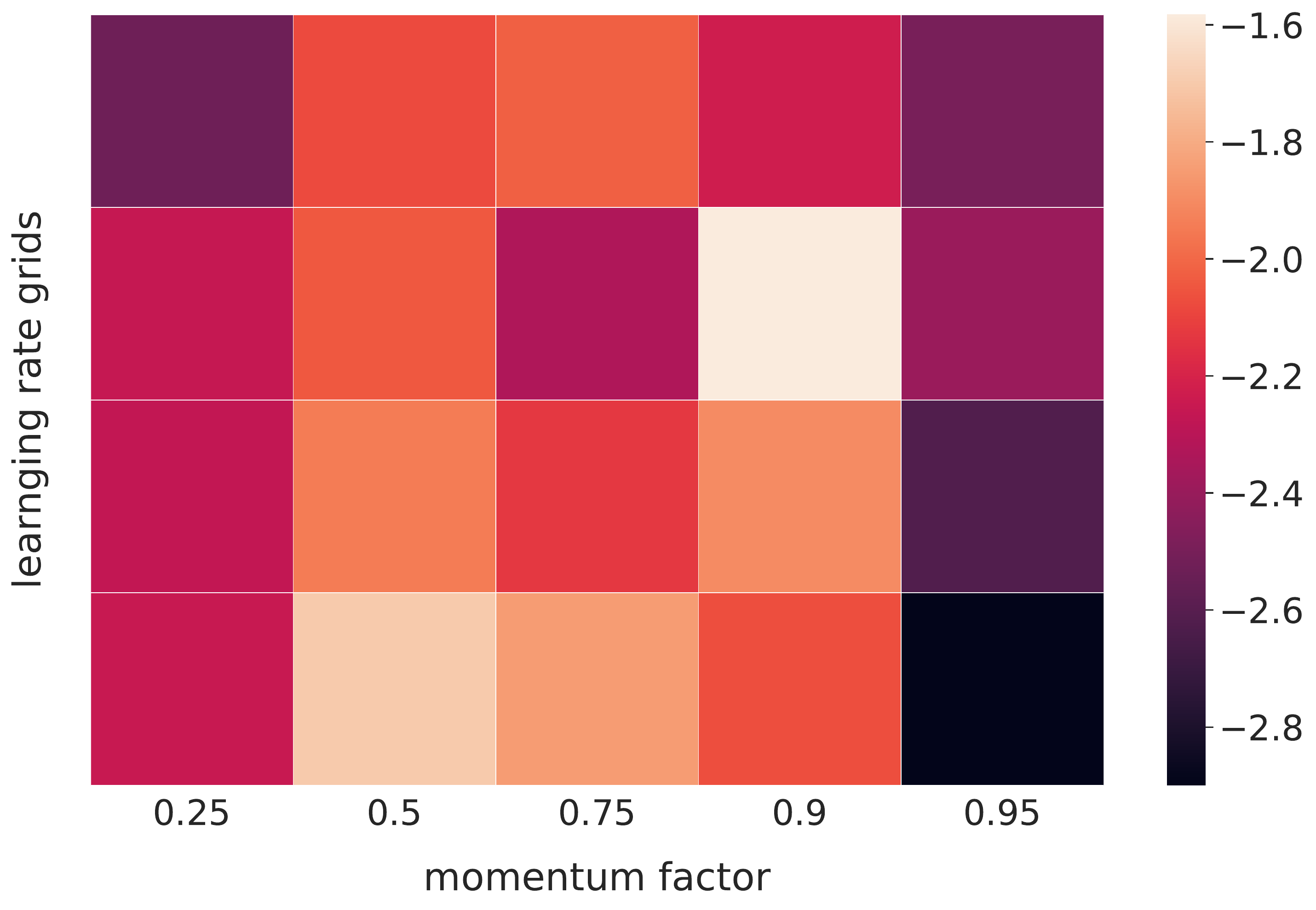}
		\label{fig:tuning_momentum_factors_alpha1}
	}
	\hfill
	\subfigure[\small
	]{
		\includegraphics[width=.475\textwidth,]{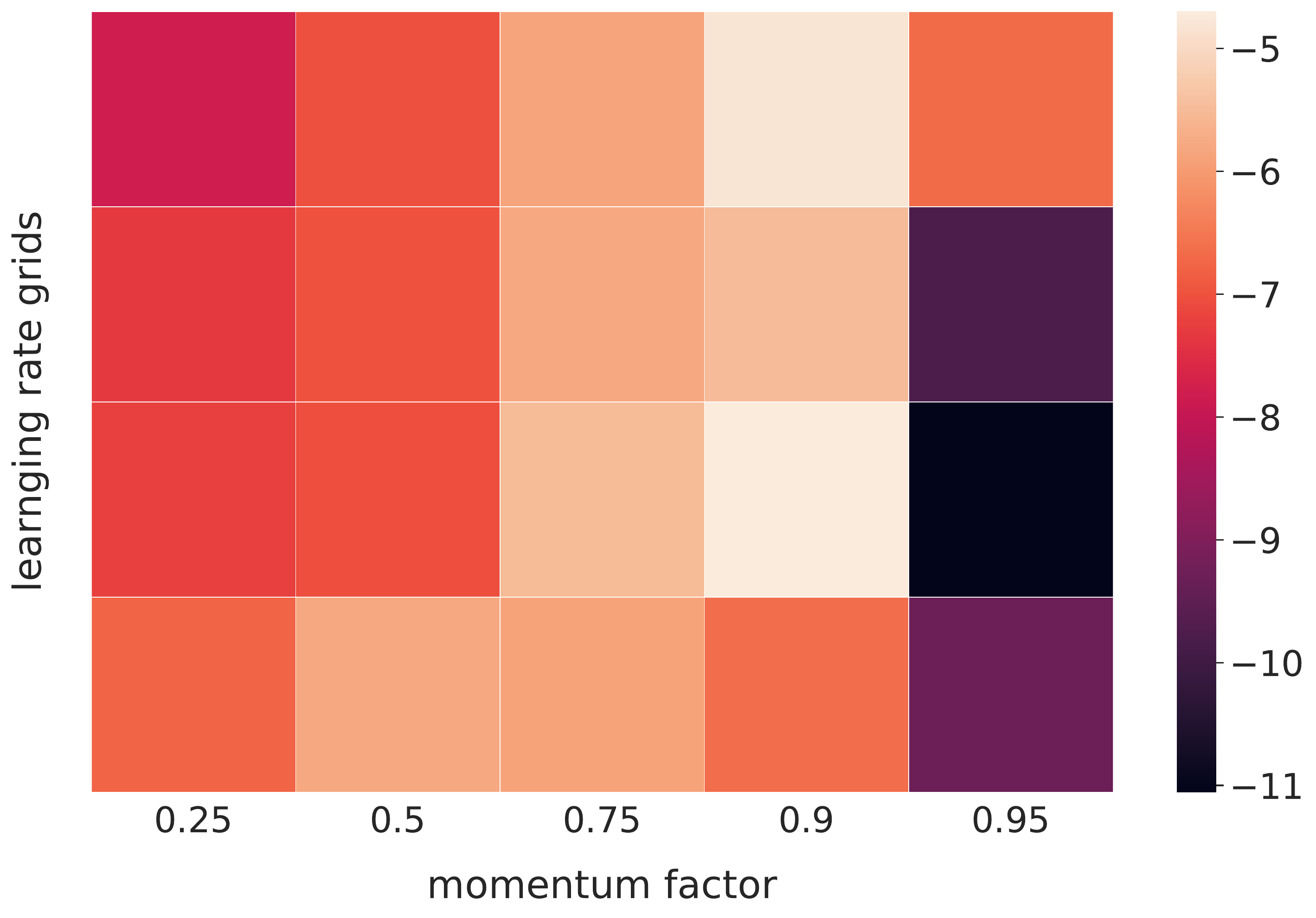}
		\label{fig:tuning_momentum_factors_alpha01}
	}
	\vspace{-1em}
	\caption{\small
		The ineffectiveness of tuning momentum factors for DSGDm-N,
		for training ResNet-EvoNorm-20 on CIFAR-10.
		We illustrate the performance gap between
		DSGDm-N (different combination of learning rate and momentum factor)
		and \algoptsgdmn (tuned learning rate from the grid with default momentum factor $0.9$).
	}
	\label{fig:tuning_momentum_factors}
\end{figure*}

\subsection{The Superior Performance of \algoptsgdmn Generalize Different Topology Scales}
\label{appendix:cv_different_ring_scale_results}

Table~\ref{tab:cv_different_ring_scale_results}
further showcases the generality of the predominant performance gain of \qg
on different topology scales ($n$).

\begin{table}[!h]
	\centering
	\caption{\small
		\textbf{The test top-1 accuracy of different decentralized algorithms
			evaluated on different topology scales and non-i.i.d.-ness},
		for training ResNet-EvoNorm-20 on CIFAR-10.
		The results are over three random seeds,
		with sufficient learning rate tuning.
		The table corresponds to Figure~\ref{fig:cv_different_ring_scale_results} in the main paper.
	}
	\vspace{-1em}
	\label{tab:cv_different_ring_scale_results}
	\resizebox{1.\textwidth}{!}{%
		\begin{tabular}{lcccccccc}
			\toprule
			\multirow{2}{*}{Methods}                 & \multicolumn{2}{c}{Ring ($n\!=\!16$)}  & \multicolumn{2}{c}{Ring ($n\!=\!32$)} & \multicolumn{2}{c}{Ring ($n\!=\!48$)} \\ \cmidrule(lr){2-3} \cmidrule(lr){4-5} \cmidrule(lr){6-7}
			             & $\alpha=1$                & $\alpha=0.1$              & $\alpha=1$                & $\alpha=0.1$              & $\alpha=1$                & $\alpha=0.1$              \\ \midrule
			SGDm-N (centralized) &            \multicolumn{2}{c}{$92.18 \pm 0.19$}                           & \multicolumn{2}{c}{$91.92 \pm 0.33$}              &            \multicolumn{2}{c}{$91.63 \pm 0.25$}              \\
			DSGDm-N      & $89.98 \pm 0.10$          & $77.48 \pm 2.67$          & $88.46 \pm 0.29$          & $78.17 \pm 1.63$          & $85.54 \pm 0.33$          & $73.67 \pm 0.90$          \\
			\algoptsgdmn & $\textbf{91.28} \pm 0.38$ & $\textbf{82.20} \pm 1.27$ & $\textbf{90.27} \pm 0.07$ & $\textbf{83.18} \pm 1.11$ & $\textbf{89.75} \pm 0.32$ & $\textbf{80.28} \pm 1.52$ \\
			\bottomrule
		\end{tabular}%
	}
\end{table}

\subsection{Multiple-step \algoptsgdmn variant} \label{appendix:multiple_step_algoptsgdmn}
Table~\ref{tab:ablation_study_for_multiple_step_algoptsgdmn}
illustrates the performance for the multiple-step variant of \algoptsgdmn.
We can witness that tuning the value of $\tau$ cannot lead to a significant performance gain.

\begin{table}[!h]
	\centering
	\caption{\small
		\textbf{Ablation study for the variant of multiple-step \algoptsgdmn }
		(illustrated in Algorithm~\ref{alg:multiple_step_algoptsgdm_varaint}),
		for training ResNet-EvoNorm-20 on CIFAR-10.
		The results are averaged over three seeds with tuned learning rate.
	}
	\vspace{-1em}
	\label{tab:ablation_study_for_multiple_step_algoptsgdmn}
	\resizebox{.5\textwidth}{!}{%
		\begin{tabular}{lcc}
			\toprule
			\multirow{2}{*}{Methods}                 & \multicolumn{2}{c}{Ring ($n\!=\!16$)} \\ \cmidrule(lr){2-3}
			                          & $\alpha=1$       & $\alpha=0.1$     \\ \midrule
			SGDm-N (centralized) &             \multicolumn{2}{c}{$92.18 \pm 0.19$}                                        \\
			\midrule
			DSGD                      & $88.88 \pm 0.26$ & $74.55 \pm 2.07$ \\
			DSGDm-N                   & $89.98 \pm 0.10$ & $77.48 \pm 2.67$ \\
			\midrule
			\algoptsgdmn ($\tau = 1$) & $91.28 \pm 0.38$ & $82.20 \pm 1.27$ \\
			\algoptsgdmn ($\tau = 2$) & $91.11 \pm 0.18$ & $82.25 \pm 1.68$ \\
			\algoptsgdmn ($\tau = 3$) & $91.04 \pm 0.01$ & $81.57 \pm 2.21$ \\
			\algoptsgdmn ($\tau = 4$) & $91.26 \pm 0.25$ & $82.55 \pm 1.55$ \\
			\bottomrule
		\end{tabular}%
	}
\end{table}

\subsection{Comparison with D$^2$ and Gradient Tracking (GT) methods} \label{appendix:comparison_with_gradient_tracking}
We comment on GT methods below (including D$^2$~\citep{tang2018d}), in order to 1) highlight the distinctions between different algorithms, and 2) justify the comparison with existing GT methods.
\begin{itemize}[nosep, leftmargin=12pt]
	\item Distinctions between algorithms:
	      1) Both D$^2$ and GT do not consider momentum in their algorithm design and theoretical analysis, while one of our main contributions is the design of quasi-global momentum---a simple yet effective approach for the SOTA decentralized deep learning training;
	      2) It is unclear how to integrate D$^2$ with momentum, given the original design intuition of D$^2$;
	      3) \algoptsgdm is different from D$^2$, where the updates of \algoptsgdm and D$^2$ follow $\scriptstyle \mW ( ( 1 + \beta \frac{\eta^{(t)}}{\eta^{(t-1)}} ) \mX^{(t)} - \beta \frac{\eta^{(t)}}{\eta^{(t-1)}} \mX^{(t-1)} - \eta^{(t)} \nabla f(\mX^{(t)}) )$ and $\scriptstyle \mW ( 2 \mX^{(t)} - \mX^{(t-1)} - \eta ( \nabla f(\mX^{(t)}) - \nabla f(\mX^{(t-1)}) ) )$ respectively (we simplify the comparison by letting $\mu \!=\! 0$ in \algoptsgdm);
	      4) Compared to \algoptsgdm, GT requires extra one communication step per update to approximate the global average of local gradients.

	\item D$^2$ cannot achieve comparable test performance on the standard deep learning benchmark.
	      \begin{itemize}[nosep, leftmargin=12pt]
		      \item D$^2$ requires a constant learning rate, which does not fit the SOTA learning rate schedule (e.g.\ stage-wise) in deep learning.
		            Note that D$^2$ can be rewritten as $\scriptstyle \mW ( \mX^{(t)} - \eta ( (\mX^{(t-1)} - \mX^{(t)} ) / \eta + \nabla f(\mX^{(t)}) - \nabla f(\mX^{(t-1)}) ) )$, and the update would break if the magnitude of $\scriptstyle \mX^{(t-1)} - \mX^{(t)}$ is a factor of $10 \eta$ (i.e. performing learning rate decay at step $t$).
		      \item It is non-trivial to improve D$^2$ for the SOTA deep learning training.
		            To support our argument, Table~\ref{tab:comparison_with_gt_methods} compares to an improved D$^2$ variant (noted as D$^2_+$) to address the issue of learning rate decay in D$^2$ (though it breaks the design intuition of D$^2$); the performance of D$^2_+$ is far behind our scheme.
		            The update of D$^2_+$ follows $\scriptstyle \mW ( \mX^{(t)} - \eta^{(t)} ( (\mX^{(t-1)} - \mX^{(t)} ) / \eta^{(t-1)} + \nabla f(\mX^{(t)}) - \nabla f(\mX^{(t-1)}) ) )$.
		      \item The numerical results of D$^2$ in~\citet{tang2018d,pan2020d,lu2019gnsd} cannot support the practicability of D$^2$:
		            1) The experiments of~\citet{tang2018d,pan2020d} only consider a very small scale setup (\# of nodes $n \!= \! 5$ or $8$) for CIFAR-10, while \cite{lu2019gnsd} only evaluates on the toy MNIST dataset---these setups are much less challenging than ours;
		            2) Only training loss curves are reported in~\citet{tang2018d,pan2020d,lu2019gnsd}, and the final training loss values of~\citet{tang2018d,pan2020d} are much higher than $0$ (i.e.\ not converge to a local minimum).
	      \end{itemize}
	\item
	      As suggested by anonymous reviewers, we compare with GT methods in Table~\ref{tab:comparison_with_gt_methods}: \algoptsgdmn outperforms GT by a large margin.
	      We would like to point out that
	      1) the observations of the marginal performance gain in GT are aligned with prior works, e.g.\ similar numerical results in Figure 3 \& 4 \& 5 of~\citet{xin2020variance};
	      2) GT may have more benefits in the extreme low training loss regime (e.g.\ less than $1e{-}4$) where gains might increase when combining with variance reduction techniques~\citep{xin2020variance}---however, we focus on the test performance for deep learning~\citep{defazio2018ineffectiveness};
	      3) recent work~\citep{yuan2020can} also proves that gradient tracking methods are in general much more sensitive than diffusion-based methods.
\end{itemize}

\end{document}